%% file: main.tex
\documentclass{article}
\pdfoutput=1
\usepackage{microtype}
\usepackage{graphicx}
\usepackage{subcaption}
\usepackage{booktabs} 

\usepackage{hyperref}

\usepackage[accepted]{icml2026}

\usepackage{amsmath}
\usepackage{amssymb}
\usepackage{mathtools}
\usepackage{amsthm}
\usepackage{fmtcount}

\usepackage[capitalize,noabbrev]{cleveref}

\theoremstyle{plain}
\newtheorem{theorem}{Theorem}[section]
\newtheorem{proposition}[theorem]{Proposition}
\newtheorem{lemma}[theorem]{Lemma}

\theoremstyle{definition}
\newtheorem{definition}[theorem]{Definition}
\newtheorem{assumption}[theorem]{Assumption}
\theoremstyle{remark}
\newtheorem{remark}[theorem]{Remark}

\def\R{\mathbb{R}}
\def\fR{\mathfrak{R}}
\def\A{\mathcal{A}}
\def\E{\mathbb{E}}
\def\P{\mathbb{P}}
\def\I{\mathbb{I}}
\def\B{\mathcal{B}}
\def\C{\mathcal{C}}
\def\D{\mathcal{D}}

\def\W{\mathcal{W}}
\def\G{\mathcal{G}}
\def\K{\mathcal{K}}
\def\X{\mathcal{X}}
\def\Y{\mathcal{Y}}
\def\Z{\mathcal{Z}}
\def\H{\mathcal{H}}
\def\F{\mathcal{F}}
\def\N{\mathcal{N}}
\def\F{\mathcal{F}}
\def\L{\mathcal{L}}
\def\x{\mathbf{x}}
\def\y{\mathbf{y}}
\def\w{\mathbf{w}}
\def\p{\mathbf{p}}
\def\q{\mathbf{q}}
\def\u{\mathbf{u}}
\def\v{\mathbf{v}}
\def\z{\mathbf{z}}
\def\e{\mathbf{\epsilon}}
\DeclareMathOperator*{\argmin}{arg\,min}
\DeclareMathOperator{\oce}{oce}
\usepackage[textsize=tiny]{todonotes}

\icmltitlerunning{Statistical Consistency and Generalization of Contrastive Representation Learning}

\begin{document}

\twocolumn[
  \icmltitle{Statistical Consistency and Generalization of Contrastive Representation Learning}



  \icmlsetsymbol{equal}{*}

  \begin{icmlauthorlist}
    \icmlauthor{Yuanfan Li}{yyy}
    \icmlauthor{Xiyuan Wei}{comp}
    \icmlauthor{Tianbao Yang}{comp}
    \icmlauthor{Yiming Ying}{yyy}
  \end{icmlauthorlist}

  \icmlaffiliation{yyy}{University of Sydney}
  \icmlaffiliation{comp}{Texas A\&M University}

  \icmlcorrespondingauthor{Yiming Ying}{yiming.ying@sydney.edu.au}

  \icmlkeywords{Machine Learning, ICML}

  \vskip 0.3in
]



\printAffiliationsAndNotice{}  
\begin{abstract}
Contrastive representation learning (CRL) underpins many modern foundation models.
Despite recent theoretical progress, existing analyses suffer from several key
limitations: (i) the statistical consistency of CRL remains poorly understood;
(ii) available generalization bounds deteriorate as the number of negative samples
increases, contradicting the empirical benefits of large negative sets; and (iii)
the retrieval performance of CRL has received limited theoretical attention.
In this paper, we develop a unified statistical learning theory for CRL. For
downstream tasks, we evaluate retrieval quality using an AUC-type population
criterion and show that the contrastive loss is \emph{statistically consistent}
with optimal ranking. We further establish a \emph{calibration-style inequality}
that quantitatively relates excess contrastive risk to excess retrieval
suboptimality. For upstream training, we study both supervised and self-supervised
contrastive objectives and derive generalization bounds of order
$O(1/m + 1/\sqrt{n})$ and $O(1/\sqrt{m} + 1/\sqrt{n})$, respectively, where $m$
denotes the number of negative samples and $n$ the number of anchor points. These
bounds not only explain the empirical advantages of large negative sets but also
reveal an explicit trade-off between $m$ and $n$. Extensive experiments on
large-scale vision--language models corroborate our theoretical predictions.

\end{abstract}
\input{intro}

\input{problem_formulation}
\input{consistency}

\input{generalization}

\input{experiment_conclusion}




\section*{Acknowledgments}
The authors would like to thank anonymous reviewers and area chair for useful comments.
T.Y. Yang and X.Y. Wei were partially supported by NSF award 2306572. The work of Y.~Ying and Y.~Li was partially supported by the Australian Research Council (ARC) Discovery Project under   DP250101359.



\section*{Impact Statement}

This paper presents work whose goal is to advance the field of Machine
Learning. There are many potential societal consequences of our work, none
which we feel must be specifically highlighted here.

\nocite{langley00}

\bibliography{example_paper}
\bibliographystyle{icml2026}

\newpage
\appendix
\onecolumn

\input{appendix/lemmas}

\input{appendix/proof_consistency}
\input{appendix/proof_SCRL}

\input{appendix/proof_SSCRL}
\input{appendix/zero-shot}
\input{appendix/OCE}
\end{document}

%% file: intro.tex
\section{Introduction}\label{sec:intro}
Foundation models have emerged as a prominent paradigm in artificial intelligence, achieving remarkable success across diverse domains. A wide class of foundation models is representation models \citep{pmlr-v139-radford21a,jia2021scaling,karpukhin2020dense}. Such models are trained at scale on large and heterogeneous datasets, often using self-supervision, to learn general-purpose representations that can be adapted to a wide range of downstream tasks. 

A central paradigm underlying many successful representation models is \emph{contrastive representation learning} (CRL). For two modalities $\X$ and $\Y$, CRL aims to learn a scoring function parameterized by $\w$, $s_\w:\X\times\Y\mapsto \R$, that assigns higher scores to positive pairs than to negative pairs. By pulling together semantically related pairs $(\x,\y)$ and pushing apart mismatched pairs $(\x,\y')$ in the embedding space, CRL learns transferable and task-agnostic representations. This mechanism can be implemented by minimizing the following objective \citep{wang2020understanding}:
\begin{equation}\label{eq:con}
\resizebox{\dimexpr\columnwidth-2.2em\relax}{!}{$
\mathcal{L}(s_{\w})
=
\mathbb{E}_{\x}
\mathbb{E}_{\y\sim p_\x^+}
\,\tau\log
\mathbb{E}_{\y'\sim p_\x^-}
\exp\!\left(
\frac{\Delta_{\w}(\x,\y,\y')}{\tau}
\right),
$}
\end{equation}

where $p_\x^+$ and $p_\x^-$ denote the positive and negative data distributions for the anchor point $\x$, respectively, and $\Delta_\w (\x,\y,\y')\coloneq s_\w(\x,\y')-s_\w(\x,\y)$ measures the difference in the similarity score between a negative pair and a positive pair.
Typically the scoring function can be the inner product of the neural network encoder/representation $f_\w$, i.e., $s_\w(\x,\y) = f_\w(\x)^\top f_\w(\y).$ 

CRL has demonstrated remarkable empirical success in computer vision, natural language processing, and multimodal learning \cite{an2023sparse,chen2020simple,he2020momentum,pmlr-v139-radford21a}. In particular, several large-scale vision--language and multimodal foundation models adopt contrastive objectives as their core pretraining mechanism, enabling superior zero-shot and few-shot generalization \cite{alayrac2022flamingo,ramesh2022hierarchical}. These results suggest that CRL serves not merely as a task-specific technique but as a \emph{foundational learning paradigm} for building general-purpose representations. Despite its impressive empirical success, a unified learning theory that rigorously characterizes the statistical behavior of CRL-based pretraining and its implications for downstream generalization remains limited.

A fundamental question is how would upstream pretraining affect downstream tasks.
In a prominent line of research \citep{pmlr-v97-saunshi19a,ash2022investigating,bao2022surrogate,nozawa2020pac}, the downstream task is evaluated by the so-called 
{\em surrogate gap}, i.e., the discrepancy between the population risk of the contrastive objective and the mean supervised loss of the downstream task. Existing results establish that a small contrastive risk yields a small mean supervised loss in the downstream task, when using a linear classifier. However, these guarantees fall short of \emph{statistical consistency}: whether, as the sample size grows, minimizers of the contrastive objective converge to predictors achieving the \emph{least possible} downstream population risk.

In practice, Eq. \eqref{eq:con} is usually estimated through finite samples, i.e., $n$ positive pairs and $m$ negative examples for each anchor point. Therefore, another important theoretical problem is to give a generalization analysis that captures the role of $m$ and $n$.
Most existing generalization analyses for CRL 
typically scale as $O({m}/{\sqrt{n}})$~\cite{pmlr-v97-saunshi19a} or
$O({\log m}/{\sqrt{n}})$~\cite{pmlr-v202-lei23a}$,$ which suggest
that the generalization performance deteriorates as the number of negative samples
$m$ increases. Such theoretical results are at odds with empirical practice, where using a
large number of negative samples is known to significantly improve performance.
For example, SimCLR~\citep{chen2020simple} employs up to $8192$ negative examples,
while CLIP~\citep{pmlr-v139-radford21a} further increases this number to $32{,}768$. 
Consequently, existing theory fails to provide a satisfactory explanation for the
strong generalization behavior observed in modern CRL methods.


\textbf{Our first main contribution} is to establish statistical consistency and a
calibration-style inequality for CRL, which are among the most fundamental
problems in the framework of Statistical Learning Theory (SLT)
\cite{bartlett2006convexity,bousquet2003,zhang2004statistical,vapnik2013nature}. To motivate our study,
we note that the contrastive objective in Eq.~\eqref{eq:con} is inherently a
pairwise ranking loss, encouraging relevant items to receive higher scores than
irrelevant ones for a given query.
This observation naturally motivates us to evaluate CRL through a retrieval-based
criterion rather than classification metrics (e.g. misclassification error or cross entropy loss). Specifically, we consider the following performance measure:
\begin{equation}\label{eq:auc_type}
\begin{aligned}
      \mathcal{E}(s)
    =& \mathbb{E}_{\x}\,
      \mathbb{E}_{\y \sim p_\x^+,\, \y' \sim p_\x^-}
      \big[ \mathbb{I}[ s(\x, \y) > s(\x, \y') ] \big]\\
      +&\frac{1}{2}\mathbb{E}_{\x}\,
      \mathbb{E}_{\y \sim p_\x^+,\, \y' \sim p_\x^-}
      \big[ \mathbb{I}[ s(\x, \y) = s(\x, \y') ] \big],
\end{aligned}
\end{equation}
which corresponds to an AUC-type  measure \citep{agarwal2005generalization,clemenccon2008ranking,cortes2003auc,ying2016stochastic}  and represents the expected
probability that a relevant item $\y$ is ranked above an irrelevant item $\y'$
given a query $\x$. A higher value of $ \mathcal{E}(s)$ corresponds to better retrieval performance in the downstream task.

In this paper, we establish the {\em statistical consistency} of contrastive representation
learning in the following sense. For any sequence of scoring functions
$\{s_n\}_{n\ge 1}$,
\begin{equation}\label{eq:stat-consistency}
\begin{aligned}
\mathcal{L}(s_n) &\to \mathcal{L}^* \coloneq \inf_{s}\mathcal{L}(s)
\\
&\Longrightarrow\;
\mathcal{E}(s_n) \to \mathcal{E}^* \coloneq \sup_{s}\mathcal{E}(s).
\end{aligned}
\end{equation}
where the infimum and supremum are taken over all measurable scoring functions
\(s:\mathcal{X}\times\mathcal{Y}\to\mathbb{R}\).
This result shows that convergence of the contrastive risk guarantees convergence to the
optimal achievable retrieval performance in the downstream task.

Moreover, we establish a \emph{calibration-style inequality} of the form
\[
\mathcal{E}^*-\mathcal{E}(s)
\;\lesssim\;
\sqrt{\mathcal{L}(s)-\mathcal{L}^*},
\qquad \forall\, s ,
\]
which quantitatively links excess contrastive risk to suboptimality in retrieval
performance and implies statistical consistency defined by  Eq. \eqref{eq:stat-consistency}. The above results provide a principled theoretical explanation for why minimizing
the contrastive objective yields statistically optimal ranking performance, addressing a
fundamental question that has remained largely unexplored in the existing literature.

\noindent\textbf{Our second main contribution} is a comprehensive generalization analysis that reveals how the performance of CRL relies on the number of anchor points and negative examples.
In this paper, we consider two learning regimes widely used in practice. In supervised CRL, $m$ negative examples are sampled for each anchor point independently. While in self-supervised CRL, $m$ negative examples  are \emph{shared} across all of anchor points. 
To analyze the generalization performance, 
note that the contrastive loss in Eq. \eqref{eq:con} exhibit a {\em compositional structure}: the {\em outer} structure centers on positive sample pairs $(\x,\y)$, while the {\em inner} structure is formed by the negative samples used to contrast against each positive pair. Inspired by this,
we 
decompose the generalization gap into inner error and outer error. The outer error is mainly responsible for the sampling of $n$ positive pairs and is of the order $1/\sqrt{n}$. The inner error is introduced 
to
quantify the statistical discrepancy arising from comparing each anchor with
only $m$ negative samples rather than the full population of negative examples, and
reflects the bias induced by sampling a finite number of negative examples.
To control the inner error, we apply uniform convergence theory to derive a bound of $O(1/\sqrt{m})$ for self-supervised CRL. For the inner error in supervised CRL, 
we reformulate contrastive losses as stochastic minimization problems, which enables us to analyze the resulting estimators using
algorithmic stability theory for empirical risk minimization (ERM)
\cite{bousquet2002stability}, yielding an $O(1/m)$ bound.
Combining these analyses yield overall generalization bounds of
$
O({1}/{\sqrt{m}}+{1}/{\sqrt{n}})$ for self-supervised CRL and $O(1/m+1/\sqrt{n})$ for supervised CRL.
Our result has two important implications. First, it formally explains why
generalization performance in CRL \emph{improves} as the number of negative samples
$m$ increases. Second, it reveals an explicit trade-off between $m$ and $n$.

\subsection{Related Work}\label{sec:works}  
 
A foundational framework for analyzing the generalization error of contrastive
representation learning (CRL) was introduced by
\citet{pmlr-v97-saunshi19a}, who derived bounds scaling linearly with the number
of negative examples $m$ using Rademacher complexity techniques
\cite{bartlett2002rademacher}. Subsequent work improved this dependence to
$O(\log m)$ \cite{pmlr-v202-lei23a}. \citet{hieu2025generalization} further derived generalization bounds for deep neural networks.
Related generalization results have also been established for Transformer-based
CRL \cite{oko2025statisticaltheorycontrastivepretraining} and adversarial
contrastive learning \citep{JMLR:v24:22-0866}.

Complementary to upstream generalization bounds, the surrogate gap was
studied by \citet{pmlr-v97-saunshi19a} and is further refined in
\citep{bao2022surrogate,ash2022investigating,nozawa2021understanding}.
These works show that a small contrastive risk implies a small  mean supervised loss under linear evaluation. However, such results do not address \emph{statistical
consistency}, namely whether minimizing the contrastive objective yields
\emph{the least} downstream performance in the population limit. In contrast, our
work establishes statistical consistency and derives calibration-style
inequalities that directly relate upstream and downstream excess risks.

CRL has also been studied from alternative perspectives, including PAC-Bayesian
generalization \cite{nozawa2020pac},
spectral analysis \cite{haochen2021provable}, optimization algorithms
\cite{pmlr-v162-yuan22b,qiu2023not}, and information-theoretic or geometric viewpoints
\citep{tsai2021selfsupervisedlearningmultiviewperspective}.
Recent work has further explored CRL beyond linear evaluation, including
zero-shot prediction and vision–language models
\citep{chen2024understanding,
pmlr-v267-mehta25a,
oko2025statisticaltheorycontrastivepretraining}.
CRL can be regarded as conditional (compositional) stochastic optimization, for which generalization bounds have been studied for different structural assumptions \citep{hu2020sample,yang2024stability}.

Statistical consistency itself is a fundamental topic in SLT \cite{lin2004note,zhang2004statistical,bartlett2006convexity}, including
extensions to top-$k$ classification
\citep{NIPS2017_6c524f9d,yang2020consistency,pmlr-v202-zhu23o}
and ranking losses \citep{calauzenes2013calibration,gao2014consistencyaucpairwiseoptimization}.
Our work differs from this literature in three key respects:  
(i) we study consistency in the context of modern foundation models trained via
contrastive pretraining;  
(ii) our downstream task is retrieval rather than label prediction; and  
(iii) the performance measure $\mathcal{E}(s)$ has a compositional, pairwise
ranking structure over triples $(\x,\y,\y')$, going beyond the standard
instance–label setting.

%% file: problem_formulation.tex
\begin{table}[t]
\centering
\caption{Summary of main notations used in this paper.}
\label{tab:notation}
\begin{tabular}{ll}
\toprule
Symbol & Description \\
\midrule
$\mathcal{X},\Y$ & Two data spaces of views or modalities\\
 $(\x,\y)$& A data pair with $\x\in\X,\y\in\Y$\\
$p$   & Joint distribution of $(\x,\y)$\\
$p_\X,p_\Y$ & Marginal distributions for $\X,\Y$.\\
$p_\x^+(\y)$ & Distribution for positive example\\
$p_\x^-(\y)$ & Distribution for negative example\\
$z$ & A binary variable indicating the relevance \\
&between $\x$ and $\y$\\
$\L$ & Population risk for general contrastive\\
&representation learning\\
$\L^*$ & $\inf_s\L(s)$ over all measurable $s:\!\X\!\times\!\Y\to\! \R$\\
$\L_\mathrm{S}$ & Population risk for supervised\\
&contrastive representation learning\\
$\widehat{\L}_\mathrm{S}$ &Empirical risk for  supervised\\
&contrastive representation learning\\
$\L_{\mathrm{SS}}$ &Population risk for  self-supervised\\
&contrastive representation learning\\
$\widehat{\L}_{\mathrm{SS}}$ &Empirical risk for  self-supervised\\
&contrastive representation learning\\
$\I$ &The indicator function such that an event\\
&$E$ holds true if $\I[E]=1$\\
$\mathcal{E}$ & The AUC-type retrieval measure\\ 
$\mathcal{E}^*$ & $\sup_{s} \mathcal{E}(s)$ over all measurable $s\!:\!\X\!\times\! \Y\!\to\! \R$\\ 
\bottomrule
\end{tabular}
\end{table}
\vspace{-2em}
\section{Problem Formulation}\label{sec:proformulation}
We illustrate the CRL framework that encompasses both supervised and self-supervised settings. Let $\X,\Y$ be two data spaces of views or modalities. Denote by $(\x, \y)$ a data pair with $\x \in \mathcal{X}$ and $\y \in \mathcal{Y}$. Given an anchor $\x$, we refer to $\y$ as positive if it is semantically relevant to $\x$, and negative otherwise. We denote by $p_\x^+$ and $p_\x^-$ the conditional distributions of positive and negative samples given $\x$, respectively. Let $p$ be the joint distribution of $\x,\y$ and $p_\X,p_\Y$ be the marginal distributions of $\X,\Y$, respectively. Denote by $\I$ the indicator function such that an event
$E$ holds true if $\I[E]=1$.

Depending on how $p_\x^+$ and $p_\x^-$ are defined and consequently how the empirical risk is constructed, we obtain two distinct instantiations of CRL: supervised and self-supervised learning settings.
\subsection{Supervised CRL (SCRL)}
In the supervised learning setting, relevance between $\x$ and $\y$ is explicitly indicated by a binary label $z \in \{-1, 1\}$, where $z = 1$ signifies that $\y$ is positive for $\x$, and $z = -1$ indicates negativity. Accordingly, we define the positive and negative conditional distributions as
\begin{equation}\label{eq:p_x_SCRL}
    \begin{aligned}
    &p_\x^+(\y) = p_+(\y | \x) \coloneqq p(\y | \x, z = 1),\\
    &p_\x^-(\y) = p_-(\y | \x) \coloneqq p(\y | \x, z = -1).
\end{aligned}
\end{equation}
By the law of total probability, the marginal conditional distribution of $\y$ given $\x$ decomposes as
\begin{align*}
    p(\y |\x) = p(z =1 | \x)\, p_+(\y |\x) + p(z = -1 | \x)\, p_-(\y |\x).
\end{align*}

The population risk for SCRL is then defined as
\begin{align}\label{eq:SCRL}
&\mathcal{L}_{\mathrm{S}}(s_{\w})\\
&=\! \mathbb{E}_{\x}
\mathbb{E}_{\y \sim p_+(\cdot \mid \x)}
\!\Big[
\!\tau \log\!
\mathbb{E}_{\y' \sim p_-(\cdot \mid \x)}
\exp\!\left(
\frac{\Delta_{\w}(\x,\y,\y')}{\tau}
\!\right)\!
\Big].\notag
\end{align}

To estimate this risk from data, we draw $n$ i.i.d. anchor points $\x_i \sim p_{\mathcal{X}}$ for $i \in [n]$. For each anchor $\x_i$, we sample one positive example $\y_i \sim p_+(\cdot | \x_i)$ and then  $m$ negative examples $\{\y'_{ij} \sim p_-(\cdot | \x_i): j \in [m]\}$. The model parameters $\w$ are learned by minimizing the following empirical risk
\begin{equation}\label{eq:emp_con}
\!\widehat{\mathcal{L}}_{\mathrm{S}}(s_{\w})
\!=\! \frac{1}{n}\sum_{i=1}^n \!\tau\log\!\Bigl(\frac{1}{m}\sum_{j=1}^m
\exp\!\Bigl(\!\frac{\Delta_{\w}(\x_i,\y_i,\y'_{ij})}{\tau}\!\Bigr)\!\Bigr).
\end{equation}

\paragraph{Multi-class classification.}
Our general framework takes multi-class setting as a special case\cite{pmlr-v97-saunshi19a,pmlr-v202-lei23a,hieu2025generalizationanalysissupervisedcontrastive}.
Let $\Y=\X$ and $\mathcal{C}$ be the finite set of all classes. Suppose that $\rho$ is the discrete probability measure over $\mathcal{C}$. For any class $c \in \mathcal{C}$, we denote $\mathcal{D}_c$ as the class-conditional distribution of input vectors over $\mathcal{X}$ given that the vectors belong to class $c$. On the other hand, we define $\bar{\mathcal{D}}_c$ as the distribution of input vectors in $\mathcal{X}$ given that the vectors do not belong to class $c$:
\begin{equation*}
\bar{\mathcal{D}}_c(\x) = \frac{\sum_{c' \in \mathcal{C}, c' \neq c} \rho(c')\mathcal{D}_{c'}(\x)}{1 - \rho(c)}, \quad \forall \x \in \mathcal{X}.
\end{equation*}
In this case, $z=1$ when $\x,\y$ have same labels. When the label of  $\x$ is $c$, then $p_\x^+=\mathcal{D}_c$ and $p_\x^-=\bar{\mathcal{D}}_c$. Then the population of SCRL becomes
\[\E_{c\sim\rho}\E_{\x,\y\sim \mathcal{D}_c}\!\Big[
\!\tau \log\!
\mathbb{E}_{\y' \sim \bar{\mathcal{D}}_c}
\exp\!\left(
\frac{\Delta_{\w}(\x,\y,\y')}{\tau}
\!\right)\!
\Big].\]
In the empirical risk, the negatives are drawn i.i.d. from $\bar{\mathcal{D}}_c$.

\begin{remark}
    The empirical risk Eq. \eqref{eq:emp_con} is different from InfoNCE, which includes the positive pair in its own denominator.
However, the positive pair in the denominator of InfoNCE is arguably not a good feature. The Decoupled Contrastive Learning (DCL) \citep{yeh2022decoupled} explicitly removes the positive pair in the denominator and has been showed to behave better than InfoNCE. In addition, the difference between InfoNCE and Eq. \eqref{eq:emp_con} could be negligible when the number of negative examples 
 is very large since we average over all negative samples. Furthermore, the formulation of Eq. \eqref{eq:emp_con} enjoys a pairwise nature, which is more natural for us to define and analyze the consistency. 
     
\end{remark} 
\begin{remark}
Many existing works take $\E\widehat{\L}_S(s_\w)$ as the population risk. 
However, taking Eqs. \eqref{eq:con},\eqref{eq:SCRL} as population risks facilitates both optimization and generalization analyses. i)
Employing the full population of negative samples is well-motivated from an optimization perspective. For example, \cite{pmlr-v162-yuan22b} considers Global Contrastive Objective (GCL), which contrasts each positive pair with all negative pairs for an anchor point. They proposed SogCLR to optimize the objective to achieve better performance than SimCLR that just uses negative data in a mini-batch. ii)
Defining Eq. \eqref{eq:con} as the population risk also yields a better and meaningful generalization bound than existing work, which will be shown in Section~\ref{sec:generalization} later.

\end{remark}
\subsection{Self-supervised CRL (SSCRL)}
In practice, CRL is often deployed in a self-supervised setting. Specifically, given an anchor $\x$, we define the positive and negative conditional distributions as
\begin{equation}\label{eq:p_x_SSCRL}
    p_\x^+(\y)=p(\y| \x),p_\x^-(\y)=p_\Y(\y).
\end{equation}
i.e., positive pairs $(\x, \y)$ are drawn from the joint distribution $p$, while negative examples $\y'$ are sampled independently from the marginal distribution $p_{\mathcal{Y}}$. For instance, $p$ is the joint distribution of image-text pairs. In augmentation-based contrastive learning, the conditional distribution $p(\cdot|\x)$ could be the distribution  of augmentation.

Under this construction, the population risk for SSCRL is given by
\begin{multline}
   \! \L_{\mathrm{SS}}(s_\w)=\E_{\x}\E_{\y|\x}\tau\log
    \E_{\y'}\exp\left(\!\frac{\!\Delta_\w(\x,\y,\y')}{\tau}\!\right)\label{eq:sscrl}.
\end{multline}
In CLIP model \cite{pmlr-v139-radford21a}, a symmetric variant is used by treating $\x$ and $\y$ interchangeably:
\begin{align}
     \!\L'_{\mathrm{SS}}(s_\w)\!&=\!\E_{\x}\E_{\y|\x}\tau\!\log\!
    \E_{\y'}\exp\!\left(\!\frac{\Delta_\w(\x,\y,\y')\!}{\tau}\!\right)\notag\\&+\E_{\y}\E_{\x|\y}\tau\log
    \E_{\x'}\exp\left(\frac{\Delta_\w(\y,\x,\x')}{\tau}\right)\label{eq:sym_sscrl}.
\end{align}


It has been shown that minimizing $\mathcal{L}'_{\mathrm{SS}}(s)$ over all measurable scoring functions $s : \mathcal{X} \times \mathcal{Y} \to \mathbb{R}$ is equivalent to maximizing the mutual information between $\x$ and $\y$~\citep{zhang2023deep}.

To get the empirical version of Eq. \eqref{eq:sscrl},
a typical choice is mini-batch based InfoNCE loss \citep{chen2020simple,pmlr-v139-radford21a}. However, it  only uses negative samples within a mini-batch and thus requires a large batch size.
In this paper,
we follow the global contrastive loss (GCL) studied in \citet{pmlr-v162-yuan22b}, where each anchor point is contrasted with all negative points and advanced compositional optimization algorithm can be employed using only mini-batches per-iteration \citep{wei2024fastclip}.
We draw $n$ positive pairs $\{(\x_i, \y_i)\}_{i=1}^n \sim p$. Additionally, we sample $m$ negative examples $\{\y'_j\}_{j=1}^m \overset{\text{i.i.d.}}{\sim} p_{\mathcal{Y}}$. Each anchor $\x_i$ is then contrasted against with all $\y_j',j\in[m]$. The empirical risk is given by
\begin{align}\label{eq:emp_sscrl}
&\widehat{\L}_{\mathrm{SS}}(s_\w)\\
&=\frac{1}{n}\sum_{i=1}^n\tau\log\Bigg(\frac{1}{m}\sum_{j=1}^m\exp\left(\!\frac{\Delta_\w(\x_i,\y_{i},\y_{j}'))}{\tau}\right)\!\Bigg).\notag
\end{align}
For example, $\{(\x_i, \y_i)\}_{i=1}^n$ may represent $n$ observed image–text pairs, while $\{\y'_1, \dots, \y'_m\}$ forms a fixed text dictionary. It is worth noting that
in this setting, negative samples are \emph{shared} across all anchor data. While supervised CRL treats each anchor independently with its own set of negatives. 
\begin{remark}
    In practice, the negative examples of one term may be positive for another term. Similar
independent assumptions have been adopted in existing theoretical work of CRL \citep{pmlr-v97-saunshi19a,pmlr-v202-lei23a}. 
How to make more realistic assumptions is generally an important problem in learning theory.
\end{remark}

%% file: consistency.tex
\section{Statistical Consistency of CRL}\label{sec:consistency}
In this section, we show that a scoring function learned via CRL generalizes well to downstream tasks. Specifically, we
establish the {\em Fisher consistency} and {\em statistical consistency} of CRL, together
with a {\em calibration-type inequality}, following the terminology of {SLT} developed for binary classification
\cite{bartlett2006convexity,lin2004note,zhang2004statistical}. Leveraging this
calibration-type inequality, we derive the explicit bounds for the excess risks for downstream retrieval
tasks.

In Appendix~\ref{app:extension}, we establish statistical consistency and
calibration-type inequalities for a general CRL framework based on optimized
certainty equivalent (OCE) losses~\citep{ben2007old}, which instantiates the
log-sum-exp loss used in standard CRL Eq. \eqref{eq:con} as a special case.

\subsection{Fisher consistency}
We first establish the Fisher consistency of CRL,   
\begin{theorem}\label{thm:CRL_consistency}
  $\L(s)$ is statistically consistent with $\mathcal{E}(s)$, i.e.,  a minimizer of $\L(s)$  is a maximizer of $\mathcal{E}(s)$.
\end{theorem}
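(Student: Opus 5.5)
Both the minimizer of $\L$ and the maximizer of $\mathcal{E}$ can be written in closed form, so the plan is to compute each one and then check that the first is a special case of the second.

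\textbf{Step 1: maximizers of $\mathcal{E}$.} Assume $p_\x^+$ and $p_\x^-$ have densities (or are mutually absolutely continuous), and write $r_\x(\y)=p_\x^+(\y)/p_\x^-(\y)$. For fixed $\x$, symmetrize the pairwise integrand over the exchange $\y\leftrightarrow\y'$. The contribution of an unordered pair $\{\y,\y'\}$ to $\mathcal{E}$ is then
$$p^+_\x(\y)p^-_\x(\y')\,\I[s(\x,\y)>s(\x,\y')] + p^+_\x(\y')p^-_\x(\y)\,\I[s(\x,\y')>s(\x,\y)],$$
with weight $\tfrac12$ on each of the two terms when the scores tie. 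Pointwise, this is maximized by ordering the pair according to $r_\x$: put $\y$ above $\y'$ whenever $p^+_\x(\y)p^-_\x(\y')>p^+_\x(\y')p^-_\x(\y)$, which is the condition $r_\x(\y)>r_\x(\y')$. Hence $\mathcal{E}^*$ is attained by any $s$ with $s(\x,\cdot)=g_\x(r_\x(\cdot))$ for a strictly increasing $g_\x$. When $r_\x(\y)=r_\x(\y')$ the two weights coincide, so how such ties are broken does not matter.

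\textbf{Step 2: minimizers of $\L$.} For fixed $\x$, set $\phi(\y)=\exp(s(\x,\y)/\tau)$. The inner term separates:
$$\tau\log\E_{\y'}e^{\Delta/\tau}=\tau\log\E_{\y'\sim p^-_\x}\phi(\y')-s(\x,\y).$$
Averaging over $\y\sim p^+_\x$ gives
$$\tau\Big[\log\int p^-_\x\phi-\int p^+_\x\log\phi\Big].$$
This functional is invariant under $\phi\mapsto c\phi$, so we may normalize $\int p^-_\x\phi=1$. The quantity $q=p^-_\x\phi$ is then a probability density, and the objective equals
$$-\tau\int p^+_\x\log\frac{q}{p^-_\x} = \tau\,\mathrm{KL}(p^+_\x\Vert q)-\tau\int p^+_\x\log\frac{p^+_\x}{p^-_\x}.$$
By Gibbs' inequality it is minimized exactly when $q=p^+_\x$ almost everywhere. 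Consequently every minimizer satisfies
$$s^*(\x,\y)=\tau\log r_\x(\y)+c(\x)$$
for some function $c(\x)$. Because the functional is minimized separately for each $\x$, minimizing $\L$ is equivalent to minimizing for $p_\X$-almost every $\x$.

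\textbf{Step 3: conclusion.} The minimizer $s^*$ is a strictly increasing transform of $r_\x$, so by Step 1 it maximizes $\mathcal{E}$. The one subtlety is the null sets on which $s^*$ is only determined almost everywhere; they contribute nothing to $\mathcal{E}$ because $\mathcal{E}$ integrates against $p^+_\x\otimes p^-_\x$.

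\textbf{Main obstacle.} I expect the hardest part to be measure-theoretic. First, Step 2 needs $\L^*$ to be finite, i.e.\ $\E_\x\mathrm{KL}(p^+_\x\Vert p^-_\x)<\infty$. Second, when $p^+_\x$ is not absolutely continuous with respect to $p^-_\x$, the infimum is not attained; one then has to interpret "minimizer" as a limiting sequence, or simply assume absolute continuity. I would state the result under that assumption, mention the extension to the singular case, and note that it follows from the calibration inequality proved later.
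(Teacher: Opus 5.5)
Your proof is correct and follows the paper's route. Step~2 is the KL/Gibbs argument of Lemma~\ref{lem:minimizer-L}: the paper's $q_\x^+$ is exactly your normalized $p_\x^-\exp(s(\x,\cdot)/\tau)$. Step~1 is the sufficiency half of Lemma~\ref{lem:max_AUC}; the only difference is that the paper rewrites $\mathcal{E}$ under $p_\x^-\otimes p_\x^-$ with weight $r_\x(\y)$ before symmetrizing, while you symmetrize directly under $p_\x^+\otimes p_\x^-$. Your care with null sets and with the finiteness of $\L^*$ makes explicit what the paper assumes tacitly.

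One correction to your closing aside: suppose $p_\x^+$ is not absolutely continuous with respect to $p_\x^-$ on a set of $\x$ of positive $p_\X$-measure. Then $\L^*=-\infty$, so the calibration inequality of Theorem~\ref{thm:compare_cl} is vacuous there and cannot supply the singular-case extension.
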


The proof of Theorem~\ref{thm:CRL_consistency} is based on the following two lemmas. 
\begin{lemma}[Characterization of minimizers of $\mathcal{L}$]\label{lem:minimizer-L}
Let $\mathcal{L}(s)$ be defined as in Eq.~\eqref{eq:con}.  
Then a measurable function $s:\mathcal{X}\times\mathcal{Y}\to\mathbb{R}$ minimizes
$\mathcal{L}(s)$ if and only if it satisfies
\begin{align}\label{eq:minimizer-L}
s(\x,\y)=\tau \log\frac{p_{\x}^{+}(\y)}{p_{\x}^{-}(\y)}+g(\x),
\end{align}
for some measurable function $g:\mathcal{X}\to\mathbb{R}$.
\end{lemma}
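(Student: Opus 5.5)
The plan is to reduce the minimization of $\mathcal{L}$ to a pointwise (in $\x$) problem and then apply a Gibbs/Donsker--Varadhan variational inequality. Since $\mathcal{L}(s)=\E_{\x}[\Phi_\x(s(\x,\cdot))]$, it suffices to characterize the minimizers of
\[
\Phi_\x(h)=\E_{\y\sim p_\x^+}\,\tau\log \E_{\y'\sim p_\x^-}\exp\!\big((h(\y')-h(\y))/\tau\big)
\]
over measurable $h:\Y\to\R$ for each fixed $\x$. Because $h(\y)$ does not depend on $\y'$, the log splits:
\[
\Phi_\x(h)=\tau\log \E_{p_\x^-} e^{h/\tau}-\E_{p_\x^+} h .
\]
Two consequences follow. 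First, $\Phi_\x$ is invariant under $h\mapsto h+c$, which is the origin of the free term $g(\x)$. Second, we are in exactly the setting of the Donsker--Varadhan representation of $\mathrm{KL}(p_\x^+\|p_\x^-)$.

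The key step is the inequality $\E_{p_\x^+}[h/\tau]-\log\E_{p_\x^-}e^{h/\tau}\le \mathrm{KL}(p_\x^+\|p_\x^-)$, i.e. $\Phi_\x(h)\ge -\tau\,\mathrm{KL}(p_\x^+\|p_\x^-)$. I would prove it directly, including the equality characterization. Define the tilted density
\[
q_h(\y)=p_\x^-(\y)e^{h(\y)/\tau}\big/\E_{p_\x^-}e^{h/\tau}.
\]
A one-line computation then gives
\[
\Phi_\x(h)+\tau\,\mathrm{KL}(p_\x^+\|p_\x^-)=\tau\,\mathrm{KL}(p_\x^+\|q_h)\ge 0 .
\]
Equality holds iff $q_h=p_\x^+$ a.e., which means $h/\tau=\log(p_\x^+/p_\x^-)+\text{const}$. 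Hence the pointwise minimizers are exactly $h=\tau\log(p_\x^+/p_\x^-)+g(\x)$, and the minimum value is $-\tau\,\mathrm{KL}(p_\x^+\|p_\x^-)$.

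To globalize, I would argue as follows. If $s$ has the form \eqref{eq:minimizer-L}, then it attains the pointwise lower bound for every $\x$, so $\mathcal{L}(s)=-\tau\E_\x\mathrm{KL}(p_\x^+\|p_\x^-)\le\mathcal{L}(s')$ for all measurable $s'$. Conversely, if $s$ minimizes $\mathcal{L}$, then $\E_\x[\tau\,\mathrm{KL}(p_\x^+\|q_{s(\x,\cdot)})]=0$ since the minimum value is attained. This forces $q_{s(\x,\cdot)}=p_\x^+$ for $p_\X$-a.e. $\x$. Setting $g(\x)=\tau\log\E_{p_\x^-}e^{s(\x,\cdot)/\tau}$, which is measurable by Fubini, yields the representation (up to null sets).

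The main obstacle is technical rather than conceptual: we must make the identity rigorous. This requires assuming $p_\x^+\ll p_\x^-$ with a well-defined density ratio, $\E_\x\mathrm{KL}(p_\x^+\|p_\x^-)<\infty$ so that $\mathcal{L}^*$ is finite and the subtraction is legitimate, and integrability of $h$ under $p_\x^+$. It also means interpreting ``iff'' modulo $p$-null sets. I would state these standing assumptions explicitly and handle $\mathcal{L}(s)=+\infty$ cases separately, since they are trivially non-minimizers.
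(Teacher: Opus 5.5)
Your proposal is correct and follows essentially the same route as the paper, which likewise introduces the tilted density $q_\x^+(\y)\propto p_\x^-(\y)\exp(s(\x,\y)/\tau)$, derives $\mathcal{L}(s)=\mathcal{L}^*+\tau\,\E_\x D_{\mathrm{KL}}(p_\x^+\|q_\x^+)$ with $\mathcal{L}^*=-\tau\,\E_\x D_{\mathrm{KL}}(p_\x^+\|p_\x^-)$, reads off both directions from the equality case $q_\x^+=p_\x^+$, and uses $g(\x)=\tau\log\E_{\y'\sim p_\x^-}\exp(s(\x,\y')/\tau)$ for the converse. Your explicit standing assumptions and the ``up to null sets'' reading are a rigor refinement the paper silently skips; note, though, that for a real-valued $s$ of the stated form to exist (and for the a.e.\ identity to be unambiguous) you need $p_\x^+$ and $p_\x^-$ to be mutually absolutely continuous, not merely $p_\x^+\ll p_\x^-$.
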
 
The proof of Lemma  \ref{lem:minimizer-L} is deferred to Appendix~\ref{app:proof_3.1}.
\begin{lemma}[Characterization of maximizers of $\mathcal{E}$]
\label{lem:max_AUC}
Let $\mathcal{E}(s)$ be defined as in Eq.~\eqref{eq:auc_type}.  
A measurable scoring function $s:\mathcal{X}\times\mathcal{Y}\to\mathbb{R}$
maximizes $\mathcal{E}(s)$ if and only if, for any $\x,\y,\y'$ with ${p_\x^+(\y)}/{p_\x^-(\y)}\neq {p_\x^+(\y')}/{p_\x^-(\y')}$, there holds 
    \[\Bigg(\frac{p_\x^+(\y)}{p_\x^-(\y)}-\frac{p_\x^+(\y')}{p_\x^-(\y')}\Bigg)(s(\x,\y)-s(\x,\y'))>0.\]
\end{lemma}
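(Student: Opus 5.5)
The plan is to reduce $\mathcal{E}(s)$ to a pointwise problem for each fixed $\x$ via a symmetrization over unordered pairs $\{\y,\y'\}$. Write $r_\x(\y)\coloneq p_\x^+(\y)/p_\x^-(\y)$ and assume densities with respect to a common base measure $\mu$. Then
\[
\mathcal{E}(s)=\E_\x\int\!\!\int \Big(\I[s(\x,\y)>s(\x,\y')]+\tfrac12\I[s(\x,\y)=s(\x,\y')]\Big)\,p_\x^+(\y)p_\x^-(\y')\,d\mu(\y)d\mu(\y').
\]
Swapping the names $\y\leftrightarrow\y'$ and averaging the two expressions writes the integrand as $\tfrac12\,\phi_s(\x,\y,\y')$, where
\[
\phi_s = A\,p_\x^+(\y)p_\x^-(\y') + B\,p_\x^+(\y')p_\x^-(\y), \qquad A+B=1,
\]
and $(A,B)$ equals $(1,0)$, $(0,1)$ or $(\tfrac12,\tfrac12)$ according to whether $s(\x,\y)>s(\x,\y')$, $s(\x,\y)<s(\x,\y')$, or the two are tied.

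For each triple, the integrand is then at most $\tfrac12\max\{p_\x^+(\y)p_\x^-(\y'),\,p_\x^+(\y')p_\x^-(\y)\}$. This upper bound does not depend on $s$, and it is attained by any $s$ ordering $\y,\y'$ in the same direction as $r_\x$. Wherever $p_\x^-(\y)p_\x^-(\y')>0$, comparing the two products is exactly comparing $r_\x(\y)$ with $r_\x(\y')$. Define $\bar{\mathcal{E}}$ as the integral of this pointwise maximum. We then get $\mathcal{E}(s)\le\bar{\mathcal{E}}$ for every $s$, and the bound is attained by the measurable choice $s(\x,\y)=r_\x(\y)$, or $\tau\log r_\x(\y)$. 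Hence $\mathcal{E}^*=\bar{\mathcal{E}}$.

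Next we quantify the deficit. Subtracting gives
\[
\bar{\mathcal{E}}-\mathcal{E}(s)=\tfrac12\E_\x\!\int\!\!\int w_s(\x,\y,\y')\,\big|p_\x^+(\y)p_\x^-(\y')-p_\x^+(\y')p_\x^-(\y)\big|\,d\mu\,d\mu .
\]
Here the weight $w_s$ equals $1$ if $s$ orders the pair opposite to $r_\x$, equals $\tfrac12$ if $s$ ties a pair with $r_\x(\y)\neq r_\x(\y')$, and equals $0$ otherwise. Equivalently, the deficit is $\tfrac12\E_\x\int\!\!\int w_s\, p_\x^-(\y)p_\x^-(\y')\,|r_\x(\y)-r_\x(\y')|$. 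So $s$ is a maximizer if and only if this integral vanishes, i.e.\ if and only if $(r_\x(\y)-r_\x(\y'))(s(\x,\y)-s(\x,\y'))>0$ holds almost everywhere on the set where $r_\x(\y)\ne r_\x(\y')$. This gives both directions of Lemma~\ref{lem:max_AUC}.

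The main obstacle is measure-theoretic bookkeeping rather than any hard inequality. The statement says ``for any $\x,\y,\y'$'', but a maximizer can only be determined up to null sets, so I would read the condition as almost everywhere with respect to $p_\X\otimes p_\x^-\otimes p_\x^-$, or equivalently under the relevant product of measures. I would also handle points where $p_\x^-=0$ by working with the products $p_\x^+(\y)p_\x^-(\y')$ directly. Finally, I would check that the symmetrization step is legitimate, which it is because the integrand is bounded and Fubini applies.
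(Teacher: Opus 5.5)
Your proposal is correct and follows essentially the paper's route: the paper changes measure to $p_\x^-\otimes p_\x^-$, symmetrizes in $(\y,\y')$ to write $\mathcal{E}(s)=\frac14\E[r_\x(\y)+r_\x(\y')+(r_\x(\y)-r_\x(\y'))(\I[s(\x,\y)>s(\x,\y')]-\I[s(\x,\y')>s(\x,\y)])]$, and bounds this by the $s$-free quantity obtained by replacing the last term with $|r_\x(\y)-r_\x(\y')|$. Where you compute the deficit exactly with weights $w_s\in\{0,\frac12,1\}$, the paper instead uses Lemma~\ref{lem:absolute} to sandwich it between $\frac14$ and $\frac12$ of the misordering-weighted integral, because that two-sided form is reused in Theorem~\ref{thm:compare_cl}. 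Your explicit check that the bound is attained (the paper leaves this implicit) and your almost-everywhere reading of the condition are appropriate refinements; the paper's necessity argument likewise only gives the condition outside a $p_\X\otimes p_\x^-\otimes p_\x^-$-null set.
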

The proof of Lemma  \ref{lem:max_AUC} is deferred to  Appendix \ref{app:proof_3.2}.
From Lemma~\ref{lem:minimizer-L} we know that a minimizer of the contrastive loss
$\mathcal{L}(s)$ is given by Eq.~\eqref{eq:minimizer-L} for some $g:\X\to \R.$ Since $\log(t)$ is an increasing function of $t$ over $\R$, it is easy to see that  for  ${p_\x^+(\y)}/{p_\x^-(\y)}\neq {p_\x^+(\y')}/{p_\x^-(\y')}$, there holds  
\begin{align*}
&\Bigg(\frac{p_\x^+(\y)}{p_\x^-(\y)}-\frac{p_\x^+(\y')}{p_\x^-(\y')}\Bigg)(s(\x,\y)-s(\x,\y'))\\
    =&\tau\Bigg(\frac{p_\x^+(\y)}{p_\x^-(\y)}-\frac{p_\x^+(\y')}{p_\x^-(\y')}\Bigg)\Bigg(\!\log\frac{p_{\x}^{+}(\y)}{p_{\x}^{-}(\y)}\!-\!\log\frac{p_{\x}^{+}(\y')}{p_{\x}^{-}(\y')}\!\Bigg)>0
\end{align*}
and $s$ is therefore
\emph{indeed a maximizer} of the AUC-type functional $\mathcal{E}(s)$ according to Lemma~\ref{lem:max_AUC}. Consequently,
every minimizer attaining $\inf_s \mathcal{L}(s)$ also guarantees to attain $\sup_s \mathcal{E}(s)$.
In the terminology of statistical learning theory, this establishes the
\emph{Fisher consistency} of the contrastive loss with respect to the downstream
ranking objective $\mathcal{E}(s)$; see, e.g., \citet{bartlett2006convexity,lin2004note} for related
discussions in the context of binary classification.

\subsection{Calibration-style inequality}
Theorem~\ref{thm:CRL_consistency} establishes a qualitative notion of Fisher consistency. We next strengthen this result by deriving a quantitative relationship between the excess risk of the upstream contrastive objective and the excess risk of downstream performance.

The quantitative relationship between the excess risk of the upstream CRL
objective and that of the downstream task is captured by the following
calibration-style inequality. The proof of the theorem is provided in Appendix~\ref{app:proof_thm3.5}.

\begin{theorem}\label{thm:compare_cl}
For any scoring function $s: \X\times \Y \to \mathbb{R}$, there holds
\[
\mathcal{E}^* - \mathcal{E}(s)
\;\le\;
\sqrt{2/\tau(\mathcal{L}(s)-\mathcal{L}^*)}.
\]
\end{theorem}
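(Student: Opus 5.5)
The plan is to reduce the statement to a pointwise-in-$\x$ comparison between two distributions over $\Y$. The excess contrastive risk will turn out to be a KL divergence, and the AUC gap will be controlled by a total variation distance. Pinsker's inequality and Jensen's inequality then close the argument.

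\textbf{Step 1: rewrite the excess contrastive risk as a KL divergence.} Fix $\x$ and define the tilted density
\[
q_{s,\x}(\y) \coloneq \frac{p_\x^-(\y)\exp\bigl(s(\x,\y)/\tau\bigr)}{\E_{\y'\sim p_\x^-}\exp\bigl(s(\x,\y')/\tau\bigr)},
\]
which is a probability density on $\Y$ (assuming the normalizer is finite; otherwise $\L(s)=\infty$ and there is nothing to prove). Since $\Delta_s(\x,\y,\y') = s(\x,\y')-s(\x,\y)$, the integrand of Eq.~\eqref{eq:con} equals $-\tau\log\bigl(q_{s,\x}(\y)/p_\x^-(\y)\bigr)$. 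Hence
\[
\L(s)=\tau\,\E_\x\E_{\y\sim p_\x^+}\Bigl[\log\tfrac{p_\x^+(\y)}{q_{s,\x}(\y)}\Bigr]-\tau\,\E_\x\E_{\y\sim p_\x^+}\Bigl[\log\tfrac{p_\x^+(\y)}{p_\x^-(\y)}\Bigr]
=\tau\,\E_\x \mathrm{KL}(p_\x^+\|q_{s,\x}) + C,
\]
where $C$ does not depend on $s$. The KL term vanishes exactly for the minimizers of Lemma~\ref{lem:minimizer-L}, since those give $q_{s,\x}=p_\x^+$. So $\L^*=C$ and
\[
\L(s)-\L^*=\tau\,\E_\x \mathrm{KL}(p_\x^+\|q_{s,\x}).
\]

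\textbf{Step 2: bound the AUC gap by total variation (the main step).} For fixed $\x$ and any density $q$ on $\Y$, write
\[
\mathcal{E}_\x(s;q)=\E_{\y\sim q,\,\y'\sim p_\x^-}\bigl[\I[s(\x,\y)>s(\x,\y')]+\tfrac12\I[s(\x,\y)=s(\x,\y')]\bigr].
\]
Two observations drive the argument.

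First, $\mathcal{E}_\x(s;q)$ is linear in $q$, and its integrand in $\y$ (after averaging over $\y'$) takes values in $[0,1]$. Therefore
\[
|\mathcal{E}_\x(s;q_1)-\mathcal{E}_\x(s;q_2)|\le \mathrm{TV}(q_1,q_2).
\]

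Second, when the positive distribution is $q_{s,\x}$, the likelihood ratio $q_{s,\x}/p_\x^-\propto \exp(s(\x,\cdot)/\tau)$ is strictly increasing in $s(\x,\cdot)$. By Lemma~\ref{lem:max_AUC}, applied pointwise in $\x$ with $p_\x^+$ replaced by $q_{s,\x}$, the score $s$ is therefore optimal for $q_{s,\x}$. In particular $\mathcal{E}_\x(s;q_{s,\x})\ge \mathcal{E}_\x(s^*;q_{s,\x})$, where $s^*$ is any maximizer of $\mathcal{E}$, which exists by Theorem~\ref{thm:CRL_consistency}.

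Chaining these facts, with $\mathrm{TV}_\x \coloneq \mathrm{TV}(p_\x^+,q_{s,\x})$, gives
\[
\mathcal{E}_\x(s^*;p_\x^+)-\mathcal{E}_\x(s;p_\x^+)\le \bigl[\mathcal{E}_\x(s^*;q_{s,\x})+\mathrm{TV}_\x\bigr]-\bigl[\mathcal{E}_\x(s;q_{s,\x})-\mathrm{TV}_\x\bigr]\le 2\,\mathrm{TV}_\x.
\]
The step needing most care is applying Lemma~\ref{lem:max_AUC} conditionally on $\x$. I would check that its pointwise characterization really yields per-$\x$ optimality of $s$ among all scores, including the handling of ties via the $\tfrac12$ term.

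\textbf{Step 3: combine via Pinsker and Jensen.} Pinsker's inequality gives $2\,\mathrm{TV}_\x\le 2\sqrt{\mathrm{KL}(p_\x^+\|q_{s,\x})/2}=\sqrt{2\,\mathrm{KL}(p_\x^+\|q_{s,\x})}$. Taking $\E_\x$ and applying Jensen's inequality to the concave square root yields
\[
\mathcal{E}^*-\mathcal{E}(s)\le \sqrt{2\,\E_\x\mathrm{KL}(p_\x^+\|q_{s,\x})}=\sqrt{\tfrac{2}{\tau}\bigl(\L(s)-\L^*\bigr)},
\]
which is the claim of Theorem~\ref{thm:compare_cl}.
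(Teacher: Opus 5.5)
Your proof is correct and gives exactly the paper's constant, but Step 2 takes a different route.

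\textbf{What the paper does.} It first normalizes $s$ so that $\E_{\y'\sim p_\x^-}\exp(s(\x,\y')/\tau)=1$. Writing $r_\x=p_\x^+/p_\x^-$ and $s_*=\tau\log r_\x$, it then proves the pointwise pairwise inequality $|e^{s_*(\x,\y)/\tau}-e^{s(\x,\y)/\tau}|+|e^{s_*(\x,\y')/\tau}-e^{s(\x,\y')/\tau}|\ge \I[\text{misranked}]\,|r_\x(\y)-r_\x(\y')|$. Finally it combines this with the upper half of the two-sided bound on $\mathcal{E}^*-\mathcal{E}(s)$ derived in the proof of Lemma~\ref{lem:max_AUC}.

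\textbf{What you do instead.} You observe that $s$ is Bayes-optimal for its own tilted positive distribution $q_{s,\x}$, and that $q\mapsto\mathcal{E}_\x(s;q)$ is $1$-Lipschitz in total variation. Before Pinsker and Jensen, both arguments reach the same intermediate bound, $\mathcal{E}^*-\mathcal{E}(s)\le \E_\x\int_\Y|p_\x^+-q_{s,\x}|\,d\y=2\,\E_\x\mathrm{TV}_\x$. Under the paper's normalization, its quantity $\E_{p_\x^-}|e^{s_*/\tau}-e^{s/\tau}|$ is exactly this $L^1$ distance, so nothing is lost.

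\textbf{What each route buys.} Your argument is more modular. It needs no normalization step and no explicit formula for $s_*$. It does not even need a maximizer to exist: compare against an arbitrary $s'$ and take the supremum at the end. It would also carry over unchanged to any bounded downstream criterion that is linear in the positive distribution and maximized by scores monotone in the likelihood ratio. The paper's pairwise inequality is the template it reuses (with squares and the function $\kappa$) in Theorem~\ref{thm:calibration_phi_l} for general OCE losses. There the excess risk is no longer a KL divergence to an exponentially tilted distribution, so your self-optimality trick is not directly available.

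\textbf{The per-$\x$ step you flagged.} Applying Lemma~\ref{lem:max_AUC} for each fixed $\x$ is fine. The ratio $q_{s,\x}/p_\x^-$ is a strictly increasing function of $s(\x,\cdot)$, so ties in $s$ coincide exactly with ties in the ratio. The pointwise computation in that lemma's proof then shows that $s$ attains the per-$\x$ maximum, with the $\tfrac12$ tie term handled correctly.
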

\begin{remark}
Our general statistical consistency results for CRL naturally extend to both supervised (SCRL) and self-supervised (SSCRL) settings. Our statistical consistency theorems are established for general positive and negative distributions 
, so it applies to the specialized distributions used in both SCRL and SSCRL, which differ in how $p_{\mathbf{x}}^{+}$ and
$p_{\mathbf{x}}^{-}$
 are specified (see Eqs.~\eqref{eq:p_x_SCRL} and~\eqref{eq:p_x_SSCRL}).
   
\end{remark}
\begin{remark}
 Recently,
\citet{ryu2025contrastive} have established Fisher consistency for InfoNCE loss. While we derive a quantitative calibraion-type inequality for CRL for the first time, which is practically important since the ideal optimal scoring function is often unattainable in real training. Some work shows that contrastive loss induces Gaussian distribution \citep{betser2026infonce}.
We view our results and Gaussianity as complementary: our theory provides the statistical guarantee that optimal CRL leads to optimal retrieval performance, while Gaussianity reveals the latent structure that enables such good generalization.
\end{remark}

%% file: generalization.tex
\section{Generalization Analysis for CRL}\label{sec:generalization}
In this section, we analyze the generalization gap of contrastive representation
learning (CRL) under both supervised and self-supervised learning objectives. To this end, we
make the following assumption. 

\begin{assumption}\label{ass:bounded_data}
   Suppose that for any $\x\in\X,\y\in\Y$, $\|\x\|_2,\|\y\|_2\le 1,s_\w(\x,\y) = f_\w(\x)^\top f_\w(\y)$, where $f_\w(\x)$ is a deep neural network. Specifically, let $W_l\in\R^{d_l\times d_{l-1}},1\le l\le L$ be matrices, $\sigma$ be ReLU activation function. Then we define 
    \[f_\w(\x) = \sigma(W_L\sigma(\cdots\sigma(W_1\x)\cdots)).\]
    The hypothesis space $\W$ is the following space of matrices
    \[\W =\{W_l\in\R^{d_l\times d_{l-1}}:\|W_l\|_2\le s_l ,\forall l\in[L]\},\]
    where $\|\cdot\|_2$ denotes the spectral norm. The hypothesis space of functions is given by
    \[\F=\{\x\mapsto f_\w(\x):\w\in\W\}.\]  In this case, $s_\w(\x,\y)\in[-B,B]$ with $B=\Pi_{l=1}^L s_l^2$.
\end{assumption}

\subsection{Generalization analysis for SCRL}\label{sec:SCRL}
In this subsection,
we focus on the generalization analysis of supervised contrastive representation
learning (SCRL).  

We aim to 
control the uniform bound for the generalization gap
\(\sup_{\w\in\mathcal{W}} |\mathcal{L}_{\mathrm{S}}(s_{\w}) -
\widehat{\mathcal{L}}_{\mathrm{S}}(s_{\w})|,\)
where \(\mathcal{L}_{\mathrm{S}}(s_{\w})\) and
\(\widehat{\mathcal{L}}_{\mathrm{S}}(s_{\w})\) are defined in
Eqs.~\eqref{eq:SCRL} and~\eqref{eq:emp_con}, respectively.
A key observation is that the contrastive loss formulations in Eqs.~\eqref{eq:SCRL} and~\eqref{eq:emp_con} exhibit a {\em compositional structure}: the {\em outer} structure centers on positive sample pairs $(\x,\y)$, while the {\em inner} structure is formed by the negative samples used to contrast against each positive pair. This compositional property motivates us to decompose the generalization gap into inner and outer components, which we analyze separately. In particular, we have
\begin{equation}\label{eq:err_decom_inner_outer}
    \begin{aligned}
    &\Big|\L_{\mathrm{S}}(s_\w)\!-\!   \widehat{\L}_{\mathrm{S}}(s_\w)\Big|\\
    \le&\underbrace{\Big|\L_{\mathrm{S}}(s_\w)-   \E\widehat{\L}_{\mathrm{S}}(s_\w)\Big|}_{\text{inner error}}+\underbrace{\Big|\E\widehat{\L}_{\mathrm{S}}(s_\w)-\widehat{\L}_{\mathrm{S}}(s_\w)\Big|}_{\text{outer error}}.\\
\end{aligned}
\end{equation}
Here, we have
\begin{align*}
    &\mathcal{L}_{\mathrm{S}}(s_\w)
    - \mathbb{E}\widehat{\mathcal{L}}_{\mathrm{S}}(s_\w) \\
    =&\;
    \mathbb{E}_{\x,\y,\{\y'_j\}}\Bigg[
    \tau \log \mathbb{E}_{\y'} \exp\!\left(
    \frac{\Delta_\w(\x,\y,\y')}{\tau}
    \right)
    \\
    &\hspace{2em}
    - \tau \log \frac{1}{m} \sum_{j=1}^m
    \exp\!\left(
    \frac{\Delta_\w(\x,\y,\y'_j)}{\tau}
    \right)
    \Bigg],
\end{align*}
where we omit the explicit specification of the underlying distributions for
simplicity.

The inner error quantifies the bias introduced by using \(m\) negative samples
instead of the full population of negative examples. 


{\bf Estimation of the inner error.}
 One approach is to apply uniform convergence theory, which usually leads to a bound that scales with $1/\sqrt{m}$ \cite{lee2020learning}. However, we could
  derive a more refined bound for the inner error by exploring the intrinsic structure of the log-sum-exponential loss in CRL.
To be more specific, we can reformulate contrastive losses as minimization problems in the following lemma. 
The proof is deferred to Appendix~\ref{app_proof_crloce}.
\begin{lemma}\label{lem:crl_oce}
    Let Assumption~\ref{ass:bounded_data} hold, then we have
    \begin{align}
    &\widehat{\mathcal{L}}_{\mathrm{S}}(s_\w) = -\tau  +\notag\\
&\!  \!\frac{1}{n} \!\sum_{i=1}^n\! \Bigg[ 
   \! \min_{|\mu_i|\le 2B} \!
    \frac{\tau}{m} \sum_{j=1}^m  \exp\left( \!\frac{\Delta(\x_i,\y_i,\y_{ij}') - \mu_i}{\tau} \!\right)
  \!\! +\!\! \mu_i 
\!\Bigg], \label{eq:hat_CRL_oce}\\
&\mathcal{L}_{\mathrm{S}}(s_\w)  = -\tau + \notag\\
& \mathbb{E}_{\x, \y} \Bigg[ \!\!
    \min_{|\mu|\le 2B} \!
    \tau  \mathbb{E}_{\y'}\!\exp\left( \frac{\Delta_\w(\x,\y, \y') - \mu}{\tau} \right)
    \!+ \!\mu
\Bigg]. \label{eq:CRL_oce}
\end{align}
\end{lemma}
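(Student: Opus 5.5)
The plan is to apply a single scalar variational identity to each log-sum-exp term, and then use boundedness of the score to justify restricting the minimizer to the interval in the statement. For any random variable $Z$ with $\E e^{Z/\tau}<\infty$, define
\[
\phi(\mu)=\tau\,\E\exp\!\left(\frac{Z-\mu}{\tau}\right)+\mu .
\]
The function $\phi$ is strictly convex and differentiable in $\mu$, with $\phi'(\mu)=1-\E e^{(Z-\mu)/\tau}$. Its unique stationary point is $\mu^*=\tau\log\E e^{Z/\tau}$. Substituting gives $\phi(\mu^*)=\tau+\tau\log\E e^{Z/\tau}$. Hence
\[
\tau\log\E e^{Z/\tau}=-\tau+\min_{\mu\in\R}\phi(\mu).
\]
This is the OCE representation of log-sum-exp.

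For the empirical risk, fix $i$ and take $Z$ uniform over the $m$ values $\Delta_\w(\x_i,\y_i,\y'_{ij})$, $j\in[m]$. For the population risk, fix $(\x,\y)$ and take $Z=\Delta_\w(\x,\y,\y')$ with $\y'\sim p_\x^-$. Plugging these into the identity gives exactly the two inner expressions in Eqs.~\eqref{eq:hat_CRL_oce} and \eqref{eq:CRL_oce}. Averaging over $i$, respectively taking $\E_{\x,\y}$, produces the $-\tau$ outside, since it is constant. For the population version, measurability of the pointwise minimum is harmless: the minimizer $\mu^*(\x,\y)$ is explicit and measurable.

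The remaining step is the constraint $|\mu|\le 2B$. Under Assumption~\ref{ass:bounded_data}, $s_\w\in[-B,B]$, so $\Delta_\w=s_\w(\x,\y')-s_\w(\x,\y)\in[-2B,2B]$. Therefore $Z\in[-2B,2B]$ almost surely. Monotonicity of $\log$ and $\exp$ then gives
\[
\mu^*=\tau\log\E e^{Z/\tau}\in[-2B,2B].
\]
So the unconstrained minimizer lies in the feasible interval, and the constrained minimum equals the unconstrained one.

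I do not expect a real obstacle. The only point needing care is checking that the stationary point is the global minimizer, which follows from convexity, and that it lies in the interval, which follows from the bound on $\Delta_\w$.
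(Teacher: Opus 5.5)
Your proposal is correct and takes essentially the same route as the paper. Both arguments set the derivative of $\mu\mapsto \tau\,\E\exp((Z-\mu)/\tau)+\mu$ to zero, obtain the explicit minimizer $\mu^*=\tau\log\E e^{Z/\tau}$, substitute it back, and use $\Delta_\w\in[-2B,2B]$ to show that $\mu^*$ lies in the constraint interval. The only difference is that you treat the empirical and population cases as one identity for a generic bounded $Z$, and you state explicitly the convexity argument that the paper leaves implicit.
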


We denote $\psi_{\w,\x,\y}(\mu;\y')=\mu+\tau[\exp((\Delta_\w(\x,\y,\y')-\mu)/\tau)]$,  which is a strongly convex function for $\mu$ in the bounded domain $[-2B,2B]$.  Then we have
  \begin{equation*}
\begin{aligned}
     &\L_{\mathrm{S}}(s_\w)\!-\! \E\widehat{\L}_{\mathrm{S}}(s_\w)\!=\!\E_{\x,\y, \{\y_j'\}}\Bigg[\!f(\w, \x, \y)\!-\!\hat f(\w, \x, \y)\! \Bigg],
\end{aligned}
 \end{equation*}
 where 
$ f(\w, \x, \y) = \min_{|\mu|\le2B} \E_{\y'}\psi_{\w,\x,\tau}(\mu;\y')$ and $\hat f(\w, \x, \y) = \min_{|\mu|\le2B} \frac{1}{m}\sum_{j=1}^m \psi_{\w,\x,\tau}(\mu;\y_j').$ 
 
It can be regarded as the generalization error of empirical risk minimization (ERM). Leveraging the algorithmic stability approach \cite{bousquet2002stability}, we derive an inner error bound of order $O(1/m)$. This result is formalized in the following lemma:
\begin{lemma}\label{lem:inner_crl}
    Let Assumption~\ref{ass:bounded_data} hold. Then we have
    \(\sup_{\w\in\W}\Big|\L(s_\w)- \E\widehat{\L}_{\mathrm{S}}(s_\w)\Big|\le\frac{2\exp(8B/\tau)\tau}{m}.\)
\end{lemma}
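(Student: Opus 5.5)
We prove the bound pointwise in $(\w,\x,\y)$ and then take expectations. By Lemma~\ref{lem:crl_oce}, the inner error equals $\E_{\x,\y}\E_{\{\y'_j\}}[f(\w,\x,\y)-\hat f(\w,\x,\y)]$. For fixed $(\w,\x,\y)$, the quantity $f-\hat f$ is the gap between the population minimum and the empirical minimum of the one-dimensional stochastic program $\min_{|\mu|\le 2B}\E_{\y'}\psi(\mu;\y')$. Here $\psi(\mu;\y')=\mu+\tau\exp((\Delta_\w(\x,\y,\y')-\mu)/\tau)$.

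First we note that $f-\hat f\ge 0$ in expectation. Indeed, $\E\hat f=\E\min_\mu \frac1m\sum_j\psi(\mu;\y'_j)\le\min_\mu\E\frac1m\sum_j\psi=f$, so $0\le f-\E\hat f$. It therefore suffices to bound $f-\E\hat f$ from above. Let $\hat\mu$ be the empirical minimizer. Since $f(\w,\x,\y)\le\E_{\y'}\psi(\hat\mu;\y')$, we get
\[
f-\E\hat f\le \E\Big[\E_{\y'}\psi(\hat\mu;\y')-\tfrac1m\textstyle\sum_j\psi(\hat\mu;\y'_j)\Big].
\]
This is exactly the expected generalization gap of ERM. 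By the standard replace-one symmetrization argument, it equals $\frac1m\sum_j\E[\psi(\hat\mu^{(j)};\y'_j)-\psi(\hat\mu;\y'_j)]$, where $\hat\mu^{(j)}$ is the ERM computed on the sample with $\y'_j$ replaced by an independent copy.

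Next we establish stability via strong convexity and Lipschitzness on $[-2B,2B]$. Since $|\Delta_\w|\le 2B$, for $|\mu|\le 2B$ we have $(\Delta-\mu)/\tau\in[-4B/\tau,4B/\tau]$. The second derivative is $\psi''=\frac1\tau e^{(\Delta-\mu)/\tau}\ge \frac1\tau e^{-4B/\tau}$, so the empirical objective is $\lambda$-strongly convex with $\lambda=e^{-4B/\tau}/\tau$. The derivative is $\psi'=1-e^{(\Delta-\mu)/\tau}$, so $|\psi'|\le G:=e^{4B/\tau}$, since $e^{4B/\tau}\ge 1$. The constrained ERM over a convex interval with $\lambda$-strongly convex empirical risk and $G$-Lipschitz losses satisfies $|\hat\mu-\hat\mu^{(j)}|\le 2G/(\lambda m)$. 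This follows from the first-order optimality (variational inequalities) of both minimizers, which handles the boundary constraint cleanly. Hence $|\psi(\hat\mu^{(j)};\cdot)-\psi(\hat\mu;\cdot)|\le 2G^2/(\lambda m)=2\tau e^{12B/\tau}/m$.

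This naive chain gives $e^{12B/\tau}$ rather than the claimed $e^{8B/\tau}$, so the main obstacle is sharpening the constants. We will tighten in two places. First, the unconstrained minimizer is $\mu=\tau\log\frac1m\sum_j e^{\Delta_j/\tau}\in[-2B,2B]$, so the constraint is inactive. We can then shrink the domain to $[-2B,2B]$ while using the sharper range $\Delta-\mu\in[-4B,4B]$ only where it is needed, or better, use the factor-$1$ version of the replace-one bound: $|\hat\mu-\hat\mu^{(j)}|\le G/(\lambda m)$ combined with one Lipschitz factor. Second, we will use the fact that $\psi'$ evaluated at the optimum is centered. If the computation still leaves a gap, we will try the exact ERM-gap identity for strongly convex problems, $f-\E\hat f\le \frac{1}{\lambda m}\E|\psi'(\mu^*;\y')|^2$, and bound $\E|\psi'|^2\le \E e^{2(\Delta-\mu^*)/\tau}$ directly. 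This gives $\tau e^{4B/\tau}\cdot e^{4B/\tau}/m$ up to the factor $2$, i.e.\ $2\tau e^{8B/\tau}/m$. Finally, the bound is uniform in $(\w,\x,\y)$, so taking $\E_{\x,\y}$ and then $\sup_{\w\in\W}$ yields the lemma.
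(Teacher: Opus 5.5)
Your last route is correct, proves the lemma with a better constant, and differs from the paper's. You should commit to it, because the first two ``tightenings'' you list do not change the exponent: a sensitivity of $G/(\lambda m)$ times one Lipschitz factor $G$ is still $G^2/(\lambda m)=\tau e^{12B/\tau}/m$.

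Written out, the argument is as follows. Set $F(\mu)=\E_{\y'}\psi(\mu;\y')$ and $F_m(\mu)=\frac1m\sum_{j}\psi(\mu;\y'_j)$. Then $\E F_m(\mu^*)=F(\mu^*)$, so $f-\E\hat f=\E[F_m(\mu^*)-F_m(\hat\mu)]\le\frac{1}{2\lambda}\E[F_m'(\mu^*)^2]$, by $\lambda$-strong convexity of $F_m$ on $[-2B,2B]$, which contains both points. Because $\mu^*=\tau\log\E e^{\Delta/\tau}$ is the unconstrained minimizer, $\E\psi'(\mu^*;\y')=0$. Hence $\E[F_m'(\mu^*)^2]=\frac1m\E(1-X)^2$ with $X=e^{(\Delta-\mu^*)/\tau}$.

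The step you leave implicit is the crux. You have $\E X=1$ and $0<X\le e^{2B/\tau}/e^{-2B/\tau}=e^{4B/\tau}$, so $\E(1-X)^2=\E X^2-1\le e^{4B/\tau}\E X-1<e^{4B/\tau}$. If you instead bound $\E e^{2(\Delta-\mu^*)/\tau}$ pointwise by $e^{8B/\tau}$, you are back at $e^{12B/\tau}$. With the correct step you get $f-\E\hat f\le\tau e^{8B/\tau}/(2m)$. Your Jensen step $f-\E\hat f\ge 0$ then handles the absolute value. The paper's proof of this lemma only bounds $\L_{\mathrm{S}}-\E\widehat{\L}_{\mathrm{S}}$ from above; nonnegativity appears only later, as Lemma~\ref{lem:inner_ge0}.

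The paper takes a different route. It splits off $\frac{e^{-4B/\tau}}{2\tau}\mu^2$ and bounds the generalization gap at the empirical minimizer $\bar\mu(S')$ via uniform stability of regularized ERM (Bousquet--Elisseeff, Theorems 12 and 22). Your finding that the generic stability chain gives $e^{12B/\tau}$ is not a bookkeeping slip on your side. Theorem 22 is quadratic in the Lipschitz constant, $\beta\le\sigma^2\kappa^2/(2\lambda m)$. With $\sigma\approx e^{4B/\tau}$ and regularization weight $e^{-4B/\tau}/(2\tau)$ it gives order $\tau e^{12B/\tau}/m$, whereas the paper's displayed constant carries only one factor of $e^{4B/\tau}$.

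The extra factor comes from using the worst-case curvature $e^{-4B/\tau}/\tau$; at the optimum the curvature is exactly $1/\tau$. Your argument recovers it through the centered, nonnegative gradient at $\mu^*$, whose second moment is at most $\sup X$ rather than $(\sup X)^2$. The stability route can also be repaired: the closed form $\hat\mu=\tau\log\frac1m\sum_j e^{\Delta_j/\tau}$ moves by at most $\tau e^{4B/\tau}/m$ under replace-one, which gives $\tau e^{8B/\tau}/m$.

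On what each route buys: the stability route does not need the population minimizer to be stationary, while yours does but gets it for free here. Yours also carries over to differentiable strongly convex OCE disutilities. There $\E\phi'=1$ at the optimum and $0\le\phi'\le\eta$ give a bound of order $\eta\tau/(\gamma m)$.
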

The proof of Lemma \ref{lem:inner_crl} is provided in  Appendix~\ref{app:inner_crl}.

\paragraph{Estimation of the outer error.}
The outer error can be written as 
\begin{align*}
    &\E\widehat{\L}_{\mathrm{S}}(s_\w)-\widehat{\L}_{\mathrm{S}}(s_\w)\\
    =&\E_{\x,\y,\{\y'_j\}}\Bigg[ \tau\log\frac{1}{m}\sum_{j=1}^m\exp\Bigg(\frac{\Delta_\w(\x,\y,\y'_j)}{\tau}\Bigg)\Bigg]\\
    -&\frac{1}{n}\sum_{i=1}^n\tau\log\frac{1}{m}\sum_{j=1}^m\exp\Bigg(\frac{\Delta_\w(\x_i,\y_i,\y'_{ij})}{\tau}\Bigg).
\end{align*}
It mainly focuses on the generalization of the outer structure, capturing how would the number of anchor points affect the generalization performance.

For $\w\in\W$, we introduce $k_\w: \X\times\Y^{m+1}\to \R$ as follows:
\begin{equation}\label{eq:k_w}
k_\w(\x,\y,\y'_1,\!\cdots,\!\y'_m)\!=\!\tau\log\!\frac{1}{m}\!\sum_{j=1}^m\!\exp\!\Bigg(\!\frac{\Delta_\w(\x,\!\y,\!\y'_{j})}{\tau}\!\Bigg) .
\end{equation}
Then we can bound the outer error through Rademacher complexity $\fR_S(\K)$ \citep{bartlett2002rademacher}, where 
\begin{multline*}
    \mathcal{K}=\!\bigr\{(\x,\y,\y'_1,\cdots,\y'_m)\\\mapsto\! 
     k_\w\!(\x,\y,\y'_1,\cdots,\y'_m)\!:\!\w\!\in\!\W \bigr\}
\end{multline*}
and
   $  S=
    \bigl\{(\x_i,\y_i,\y'_{i1},\cdots,\y'_{im}): i\in [n]\bigr\}.$ 

Following standard techniques \citep{long2019generalization,graf2022measuring},
we establish the following outer error bound: 
\begin{lemma}\label{lem:outer_crl}
   Let Assumption~\ref{ass:bounded_data} hold. Then  with probability at least $1-\delta$, we have
   \begin{align*}
       &\sup_{\w\in\W}\Big|\widehat{\L}_{\mathrm{S}}(s_\w)-\E\widehat{\L}_{\mathrm{S}}(s_\w)\Big|\lesssim\\
       &\!\frac{B\sqrt{\log(1/\delta)}+B\sqrt{\log (1+ {8nL\Pi_{l=1}^L s^2_l} )\sum_{l=1}^L d_ld_{l-1}} }{\sqrt{n}}. 
   \end{align*}
\end{lemma}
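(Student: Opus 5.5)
The plan is to bound the uniform deviation with the standard Rademacher complexity argument, and then to estimate $\fR_S(\K)$ through a covering-number argument over the parameter space $\W$. The training tuples $(\x_i,\y_i,\y'_{i1},\dots,\y'_{im})$, $i\in[n]$, are i.i.d. across $i$, so $\widehat{\L}_{\mathrm{S}}(s_\w)=\frac1n\sum_i k_\w(\x_i,\y_i,\y'_{i1},\dots,\y'_{im})$ is an empirical average of i.i.d. terms.

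\textbf{Range and concentration.} By Assumption~\ref{ass:bounded_data}, $|s_\w|\le B$, so $\Delta_\w\in[-2B,2B]$. Since $k_\w$ is $\tau\log$ of an average of $\exp(\Delta_\w/\tau)$, it also takes values in $[-2B,2B]$. McDiarmid's inequality together with symmetrization then gives, with probability at least $1-\delta$,
\[
\sup_{\w\in\W}\Big|\widehat{\L}_{\mathrm{S}}(s_\w)-\E\widehat{\L}_{\mathrm{S}}(s_\w)\Big|\le 2\E\fR_S(\K)+cB\sqrt{\log(1/\delta)/n}.
\]
This produces the first term of the bound.

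\textbf{Lipschitz dependence on parameters.} The key estimate is a Lipschitz bound of $k_\w$ in $\w$, uniformly over inputs. Log-sum-exp is $1$-Lipschitz in the sup norm, so
\[
|k_\w-k_{\w'}|\le \max_j|\Delta_\w(\x,\y,\y'_j)-\Delta_{\w'}(\x,\y,\y'_j)|\le 2\sup_{\x,\y}|s_\w(\x,\y)-s_{\w'}(\x,\y)|.
\]
Next write
\[
s_\w-s_{\w'}=(f_\w(\x)-f_{\w'}(\x))^\top f_\w(\y)+f_{\w'}(\x)^\top(f_\w(\y)-f_{\w'}(\y)),
\]
and use $\|f_\w(\x)\|\le\Pi_l s_l$. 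The standard peeling bound for ReLU networks (ReLU is $1$-Lipschitz and $\sigma(0)=0$) gives
\[
\|f_\w(\x)-f_{\w'}(\x)\|\le \Pi_l s_l\sum_{l}\frac{\|W_l-W'_l\|_2}{s_l}.
\]
Altogether, $|k_\w-k_{\w'}|\le 4B\sum_l\|W_l-W_l'\|_2/s_l$.

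\textbf{Covering and Dudley.} Cover each spectral-norm ball $\{\|W_l\|_2\le s_l\}\subset\R^{d_l\times d_{l-1}}$ at scale $\epsilon s_l/(4BL)$. Its covering number is at most $(1+8BL/\epsilon)^{d_ld_{l-1}}$, so $\log N(\K,\epsilon,\|\cdot\|_\infty)\le\sum_l d_ld_{l-1}\log(1+8BL/\epsilon)$. I would then use the one-step discretization bound
\[
\fR_S(\K)\le \epsilon+2B\sqrt{2\log N/n}
\]
(or Dudley's integral) with $\epsilon=B/n$. This yields $\log(1+8nL)$, and the resulting bound of order $B\sqrt{\log(1+8nL\,\cdots)\sum_l d_ld_{l-1}}/\sqrt n$ matches the stated form up to the exact argument of the logarithm, where the $\Pi_l s_l^2$ factor arises from the choice of $\epsilon$ in absolute rather than relative scale.

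\textbf{Main obstacle.} The main obstacle is the parameter-Lipschitz step. The score is quadratic in the network output, and the Lipschitz constant must not depend on $m$. The $1$-Lipschitzness of log-sum-exp in the $\ell_\infty$ norm is exactly what removes any dependence on $m$, and this is why the outer error scales as $1/\sqrt n$ alone.
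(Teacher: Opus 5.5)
Your proposal is correct and follows essentially the same route as the paper. Both arguments use a Rademacher-complexity concentration bound, use the $1$-$\ell_\infty$-Lipschitzness of log-sum-exp to reduce the problem to controlling $\Delta_\w$ (which is what removes the dependence on $m$), and build a product cover of the spectral-norm balls, giving $\sum_l d_ld_{l-1}\log(1+8L\Pi_l s_l^2/\epsilon)$; this is your $8BL/\epsilon$, since $B=\Pi_l s_l^2$. The only cosmetic differences are that the paper applies Dudley's integral with $\alpha=1/n$, bounding the integrand by its value at $1/n$, rather than your one-step discretization plus Massart, and that taking your scale as $\epsilon=1/n$ instead of $B/n$ recovers the stated $\log(1+8nL\Pi_l s_l^2)$ exactly.
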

The proof of Lemma \ref{lem:outer_crl} is provided in Appendix~\ref{app:outer_scrl}.

{\bf Combining Inner Error and Outer Error}
Plugging Lemma~\ref{lem:inner_crl},~\ref{lem:outer_crl} into Eq. \eqref{eq:err_decom_inner_outer},
 we obtain 
\begin{theorem}\label{thm:CRL_gen}
Let Assumption~\ref{ass:bounded_data} hold. Then with probability at least $1-\delta$, there holds
\begin{align*}
     &\sup_{\w\in\W} \big|\L_{\mathrm{S}}(s_\w)-\widehat{\L}_{\mathrm{S}}(s_\w)\big|\lesssim \frac{2\exp(8B/\tau)\tau}{m}+\\
     &\frac{B\sqrt{\log(1/\delta)}+B\sqrt{\log (1+ {8nL\Pi_{l=1}^L s^2_l} )\sum_{l=1}^L d_ld_{l-1}} }{\sqrt{n}}. 
\end{align*}

\end{theorem}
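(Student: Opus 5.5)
The bound follows directly from the inner--outer decomposition in Eq.~\eqref{eq:err_decom_inner_outer}. For every fixed $\w\in\W$, the triangle inequality gives
\[
\big|\L_{\mathrm{S}}(s_\w)-\widehat{\L}_{\mathrm{S}}(s_\w)\big|\le \big|\L_{\mathrm{S}}(s_\w)-\E\widehat{\L}_{\mathrm{S}}(s_\w)\big|+\big|\E\widehat{\L}_{\mathrm{S}}(s_\w)-\widehat{\L}_{\mathrm{S}}(s_\w)\big|.
\]
Taking the supremum over $\w\in\W$ on both sides, and using that the supremum of a sum is at most the sum of the suprema, bounds the uniform generalization gap by the sum of two uniform terms.

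For the first term, the inner error, I apply Lemma~\ref{lem:inner_crl}. It holds deterministically, since it concerns only the expectation $\E\widehat{\L}_{\mathrm{S}}$, and it gives the bound $2\exp(8B/\tau)\tau/m$ uniformly over $\W$. The lemma is stated with $\L(s_\w)$, and I read this as $\L_{\mathrm{S}}(s_\w)$: in the supervised setting $p_\x^\pm$ are specified by Eq.~\eqref{eq:p_x_SCRL}, so the general $\L$ coincides with $\L_{\mathrm{S}}$.

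For the second term, the outer error, I apply Lemma~\ref{lem:outer_crl}. With probability at least $1-\delta$ over the sample $S$, it bounds the uniform outer error by the stated $O(1/\sqrt{n})$ quantity, with constants absorbed in $\lesssim$. The only randomness in the whole argument enters here, and since the inner bound holds surely, no union bound or splitting of $\delta$ is needed.

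Adding the two bounds on the event from Lemma~\ref{lem:outer_crl} gives the claim. The real difficulty lies in the lemmas themselves, above all in the $O(1/m)$ stability argument for the inner error. Given those lemmas, the combination is routine; the one point to verify is that both bounds are uniform in $\w$, which both lemmas state explicitly.
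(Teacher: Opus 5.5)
Your proposal is correct and matches the paper's proof: the theorem follows by plugging the deterministic inner-error bound of Lemma~\ref{lem:inner_crl} and the high-probability outer-error bound of Lemma~\ref{lem:outer_crl} into the triangle-inequality decomposition Eq.~\eqref{eq:err_decom_inner_outer}. Two of your remarks are accurate refinements of the paper's one-line argument: the inner term is non-random, so no splitting of $\delta$ is needed, and the $\L$ in Lemma~\ref{lem:inner_crl} should be read as $\L_{\mathrm{S}}$.
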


Below, we discuss the implication of the above theoretical results  and compare them with existing works. 

{\em Firstly}, most existing works analyze the generalization performance of CRL through the outer error~\cite{pmlr-v97-saunshi19a,pmlr-v202-lei23a,hieu2025generalization}. These analyses typically yield a bound of $\widetilde{O}(1/\sqrt{n})$ that deteriorates as the number of negative samples $m$ increases. In contrast, our theory demystifies the role of $m$ in CRL by more refined error decomposition which involves the inner error. We show that the generalization error initially \emph{decreases} as the number of negative samples $m$ grows, revealing the benefit of large $m$ and aligning with empirical practice. Moreover, our results $\widetilde{O}(1/m+1/\sqrt{n})$ uncover an explicit trade-off between $m$ and $n$ in the sense of relative contribution of each term to the total error: the generalization error is bottlenecked by the larger term. When $m\gg \sqrt{n}$
, the $1/\sqrt{n}$
 term dominates, and further increasing $m$
 yields diminishing returns, leading to the observed saturation.Further gains therefore require increasing the number of anchor points rather than merely adding more negatives. We conduct experiments to validate this theoretical insight in Section \ref{sec:experiments}. 

{\em Secondly}, 
 the problem of bounding inner error is similar to generalization error of risk-averse  learning in the literature \citep{lee2020learning}, which
applied uniform convergence arguments to obtain a $1/\sqrt{m}$ bound. \citet{wang2020understanding} used the inner error to analyze uniformity and alignment in CRL, deriving a rate of $m^{-2/3}$. More recently, \citet{oko2025statisticaltheorycontrastivepretraining} estimated the inner error to show that near-minimizers of CLIP are near-sufficient statistics; their Taylor-expansion-based analysis yields a $1/m$ rate, comparable to ours. 

In contrast, our analysis is grounded in algorithmic stability and extends to a more general class of contrastive losses, as detailed in Section~\ref{app:SSCRL_phi_l}. For the classical CRL with the standard log-sum-exp loss, this approach yields a sharp $O(1/m)$ bound on the inner error. More generally, for broad classes of OCE and nonlinear pairwise losses $\ell$, we obtain an $O(1/\sqrt{m})$ rate, highlighting both the strength of our method in the CRL setting and its versatility beyond the standard contrastive losses.

\subsection{Generalization analysis for SSCRL}\label{sec:SSCRL}
In this subsection, we turn to the generalization analysis of CRL in the self-supervised learning setting (SSCRL) where the population and empirical risks are given by \eqref{eq:sscrl}, \eqref{eq:emp_sscrl}, respectively.
Our objective is to bound $
 \sup_{\w\in\W}\Big|\L_{\mathrm{SS}}(s_\w)-\widehat{\L}_{\mathrm{SS}}(s_\w)\Big|.$
A distinguished feature of SSCRL is that negative examples are shared by anchor points. Specifically, let $k_\w$ be defined as in Eq. \eqref{eq:k_w}, we have
\[\widehat{\L}_{\mathrm{SS}}(s_\w)=\frac{1}{n}\sum_{i=1}^n k_\w(\x_i,\y_i,\y_1',\cdots,\y_m').\]
Therefore, $k_\w(\x_i,\y_i,\y_1',\cdots,\y_m')$ share the same negative set $\{\y_j'\}$ and are thus dependent. This is different from  SCRL. Because of this key difference, 
the previous analysis no longer works.
We will resort to a different analysis for deriving the generalization bound.
 We define
\begin{align*}
    h_\w(\x,\y)=&\tau\log \E_{\y'\sim p_\Y}\exp ({\Delta_\w(\x,\y,\y')}/{\tau}),\\
        \hat{h}_\w(\x,\y)=&\tau \log \Bigg(\frac{1}{m}\sum_{j=1}^m \exp\Bigg(\frac{\Delta_\w(\x,\y,\y_j')}{\tau}\Bigg) \Bigg), 
    \end{align*}
    which compares $(\x,\y)$ with all negative examples and $m$ negative examples $\{\y_j'\}$, respectively.
Therefore,
\[\L_{\mathrm{SS}}(s_\w)=\E_{\x,\y}h_\w(\x,\y), \widehat{\L}_{\mathrm{SS}}(s_\w)=\frac{1}{n}\sum_{i=1}^n \hat{h}_\w(\x_i,\y_i)\]
and we  can make the following error decomposition: 
 \begin{equation}\label{eq:err_decom_sscrl}
\begin{aligned}
|\L_{\mathrm{SS}}(s_\w)\!-\!\widehat{\L}_{\mathrm{SS}}(s_\w)|
=&\Bigg|\E_{\x,\y}h_\w(\x,\y)-\frac{1}{n}\sum_{i=1}^n \hat{h}_\w(\x_i,\y_i)\Bigg|\\
\le&
\underbrace{\Bigg|\E_{\x,\y}h_\w(\x,\y)\!-\!\frac{1}{n}\!\sum_{i=1}^n \!h_\w(\x_i,\y_i)\Bigg|}_{\text{ outer error}}\\
+& \underbrace{\Bigg|\frac{1}{n}\sum_{i=1}^n
\bigl(h_\w(\x_i,\y_i)\!-\!\hat{h}_\w(\x_i,\y_i)\bigl)\Bigg|}_{\text{inner error}}.
\end{aligned}
\end{equation}
{\bf Estimation of the outer error.}
The outer error focuses on the generalization of the outer structure, i,e., the concentration property of $h_\w(\x,\y)$ on $n$ positive pairs $(\x_i,\y_i)$. 
Leveraging standard uniform convergence arguments based on Rademacher complexity \citep{bartlett2002rademacher}, we derive a bound for this component that scales with $1/\sqrt{n}$.

{\bf Estimation of the inner error.} For each positive pair $(\x,\y)$, $h_\w(\x,\y)-\hat{h}_\w(\x,\y)$ reflects the bias caused by sampling $m$ negative examples $\{\y_j'\}$ instead of all negative data, thus characterizing the error arising from the inner structure, consistent with the core intuition of the
inner error in SCRL (see Eq.~\eqref{eq:err_decom_inner_outer}). A crucial distinction,
however, is that the inner error in SCRL is a \emph{population-level} expectation. In
contrast, the inner error here 
is an \emph{empirical} quantity, depending
on both the \(n\) positive pairs and a shared set of \(m\) negative samples.   By the independence between the negative samples $\y_j'$ and the positive pairs $(\x_i,\y_i)$, we can estimate the inner error by analyzing the term $h_\w(\x_i,\y_i) - \hat{h}_\w(\x_i,\y_i)$ for each $i \in [n]$ independently. Consequently, we derive a bound for inner error that scales with $1/\sqrt{m}$.

Overall, the generalization gap for SSCRL is stated as follows, the detailed  proof is deferred to Appendix~\ref{app:SSCRL} . 

\begin{theorem}\label{thm:gen_SSCRL}
     Let Assumption~\ref{ass:bounded_data} hold. We have with probability at least $1-\delta$,
\begin{align*}
    &\sup_{\w\in\W}\big|\L_{\mathrm{SS}}(s_\w)\!-\!\widehat{\L}_{\mathrm{SS}}(s_\w)\big|\lesssim B\sqrt{\frac{\log(1/\delta)}{n}}+\\
    &B\sqrt{\frac{\log (1+ {8\exp(4B/\tau)nL\Pi_{l=1}^L s^2_l} )\sum_{l=1}^L d_ld_{l-1}}{n}}+\\
    &\frac{\exp(4B/\tau)B\sqrt{\log(1+8mL\Pi_{l=1}^L s^2_l)\sum_{l=1}^L d_ld_{l-1}}}{\sqrt{m}}+\\
    &\frac{\exp(4B/\tau)\tau\sqrt{\log(n/\delta)})}{\sqrt{m}}.
\end{align*}
 
\end{theorem}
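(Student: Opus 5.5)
We start from the decomposition \eqref{eq:err_decom_sscrl}, which splits the gap into an outer error (concentration of $h_\w$ over the $n$ positive pairs) and an inner error (replacing the full negative distribution by the shared $m$ negatives). We bound each term uniformly over $\w\in\W$ and finish with a union bound, allotting probability $\delta/2$ to each.

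\textbf{Outer error.} Since $s_\w\in[-B,B]$, we have $\Delta_\w\in[-2B,2B]$, hence $h_\w\in[-2B,2B]$. Standard symmetrization and McDiarmid give, with probability $1-\delta/2$,
\[
\sup_\w\Big|\E h_\w-\tfrac1n\sum_i h_\w(\x_i,\y_i)\Big|\lesssim \fR_n(\mathcal{H})+B\sqrt{\log(1/\delta)/n},
\qquad \mathcal{H}=\{h_\w:\w\in\W\}.
\]
To control $\fR_n(\mathcal{H})$ we use a covering argument as in Lemma~\ref{lem:outer_crl}.

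The map $\Delta\mapsto\tau\log\E_{\y'}\exp(\Delta/\tau)$ is $1$-Lipschitz in sup norm, so $\|h_\w-h_{\w'}\|_\infty\le 2\sup_{\x,\y}|s_\w(\x,\y)-s_{\w'}(\x,\y)|$. Next, $s_\w=f_\w(\x)^\top f_\w(\y)$ with $\|f_\w\|\le\Pi_l s_l$, and peeling the ReLU layers gives $\|f_\w(\x)-f_{\w'}(\x)\|\le \sum_l \Pi_{k\ne l}s_k\,\|W_l-W_l'\|_2$. This turns a product of spectral-norm ball covers into an $\epsilon$-cover of $\mathcal{H}$ with log-size $\sum_l d_ld_{l-1}\log(1+ L\Pi_l s_l^2/\epsilon)$. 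Dudley's integral, or a single-scale bound with $\epsilon\asymp 1/n$, then yields the second term of the theorem. The extra $\exp(4B/\tau)$ factor inside the logarithm is harmless; it arises if one instead works with the exponentiated class $\exp(\Delta_\w/\tau)$, whose Lipschitz constant involves $e^{2B/\tau}$.

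\textbf{Inner error.} Write $G_\w(\x,\y)=\E_{\y'}e^{\Delta_\w(\x,\y,\y')/\tau}$ and let $\widehat G_\w(\x,\y)$ be its average over the $m$ shared negatives. Both lie in $[e^{-2B/\tau},e^{2B/\tau}]$, and on this interval $\tau\log$ is $\tau e^{2B/\tau}$-Lipschitz. Hence
\[
\tfrac1n\sum_i|h_\w-\hat h_\w|(\x_i,\y_i)\le \tau e^{2B/\tau}\max_{i\le n}\big|G_\w(\x_i,\y_i)-\widehat G_\w(\x_i,\y_i)\big|.
\]
We now condition on the positive pairs $\{(\x_i,\y_i)\}$, which are independent of the negatives $\{\y'_j\}$. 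For each fixed $i$ we need a uniform-in-$\w$ bound for the empirical process
\[
\y'\mapsto e^{(s_\w(\x_i,\y')-s_\w(\x_i,\y_i))/\tau}
\]
over $m$ i.i.d.\ samples. Its range has width at most $e^{2B/\tau}$, so McDiarmid gives deviation $e^{2B/\tau}\sqrt{\log(n/\delta)/m}$ with probability $1-\delta/(2n)$. Its Rademacher complexity is bounded by the same covering argument: the class is $(e^{2B/\tau}\cdot 2/\tau)$-Lipschitz in $s_\w$, which gives $e^{2B/\tau}(B/\tau)\sqrt{\log(1+8mL\Pi s_l^2)\sum_l d_ld_{l-1}/m}$. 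A union bound over $i\in[n]$ produces the $\log(n/\delta)$ factor. Multiplying by $\tau e^{2B/\tau}$ gives the two $\exp(4B/\tau)/\sqrt m$ terms.

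\textbf{Main obstacle.} The key difficulty is the dependence created by the shared negatives. We avoid it by conditioning on the positives and taking a max over $i$ with a union bound, rather than treating the $n\times m$ array jointly. We also need to check that the covering bound for the exponentiated class is uniform in the anchor point $(\x_i,\y_i)$. This holds because the cover is built in parameter space, so one cover of $\W$ serves all $i$ simultaneously, and only the McDiarmid term needs the union bound.
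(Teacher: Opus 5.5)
Your proposal is correct. It shares the paper's skeleton: the same outer/inner decomposition, Rademacher plus Dudley with a parameter-space cover for the outer term, and, for the inner term, conditioning on the positives, a per-anchor uniform bound and a union bound over $i\in[n]$. Two steps are handled differently.

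\emph{Outer term.} The paper bounds $|h_\w-h_{\w'}|$ via $\log t\le t-1$ and the Lipschitz constant of $\exp$. That is where the $\exp(4B/\tau)$ inside its logarithm comes from. You use instead that $\tau\log\E_{\y'}\exp(\cdot/\tau)$ is $1$-Lipschitz in sup norm, the population analogue of Lemma~\ref{lem:lip_gen}. This gives a slightly cleaner entropy bound that still implies the stated one.

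\emph{Inner term.} The paper linearizes the logarithm through the variational form of Lemma~\ref{lem:crl_oce}. It bounds $h_\w-\hat h_\w$ by a supremum over $(\w,\mu)\in\W\times[-2B,2B]$ of the empirical process of $\tau\exp((\Delta_\w-\mu)/\tau)$. There $\exp(4B/\tau)$ is the Lipschitz constant of $\exp$ on $[-4B/\tau,4B/\tau]$, and an extra $2B/\sqrt m$ term from the $\mu$-direction appears. You linearize directly, using that $\tau\log$ is $\tau e^{2B/\tau}$-Lipschitz on $[e^{-2B/\tau},e^{2B/\tau}]$. So you only need the process $\exp(\Delta_\w/\tau)$ indexed by $\w$, with $\exp(4B/\tau)=e^{2B/\tau}\cdot e^{2B/\tau}$. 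Your Rademacher bound with the factor $B/\tau$ is most cleanly obtained by contraction with constant $e^{2B/\tau}/\tau$, followed by the Dudley bound for $\{\y'\mapsto\Delta_\w(\x_i,\y_i,\y')\}$, exactly as in the paper.

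\emph{What each route buys.} Yours is more elementary and avoids enlarging the function class. The paper's variational route carries over verbatim to general disutility functions $\phi$ in Lemma~\ref{lem:inner_sscrl_phi_l}, where there is no explicit logarithm to linearize. Your union-bound bookkeeping, $\delta/(2n)$ per anchor giving the $\log(n/\delta)$ factor, is also the correct version of the paper's somewhat garbled corresponding step.
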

\begin{remark}
CRL has a compositional structure analogous to conditional stochastic optimization (CSO), for which sample complexity bounds are well-established in \citet{hu2020sample}. The  conditional and independent sampling strategies correspond directly to our SCRL and SSCRL settings, with qualitatively similar sample complexity results derived.
Our work, however, has key advantages: First, CRL’s log-sum-exp loss features a more complex compositional structure, requiring non-trivial technical treatments absent in standard CSO analysis. Second, we adopt distinct techniques from \citet{hu2020sample}—notably, we leverage algorithmic stability to analyze the inner error of SCRL.
\end{remark}



  

%% file: experiment_conclusion.tex
\begin{figure*}[htbp]
    \centering
    \begin{subfigure}[b]{0.63\textwidth}
        \centering
        \includegraphics[width=0.46\linewidth]{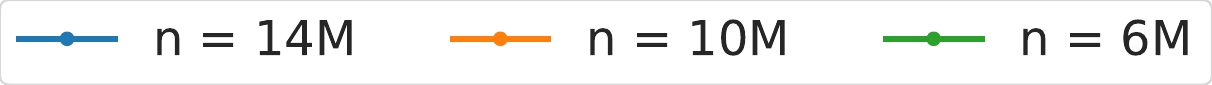}

        \includegraphics[width=0.325\linewidth]{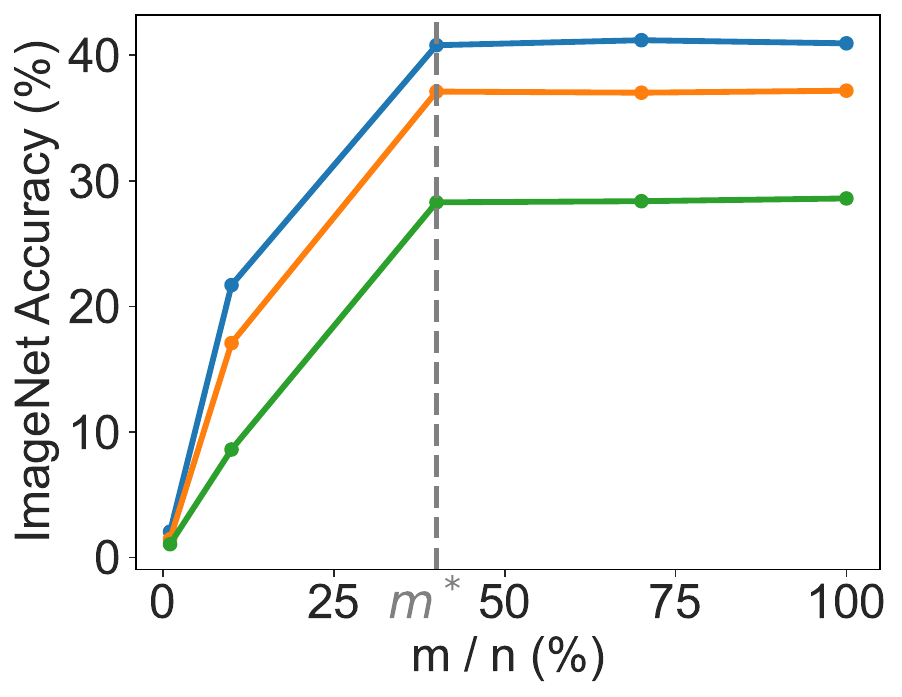}
        \includegraphics[width=0.325\linewidth]{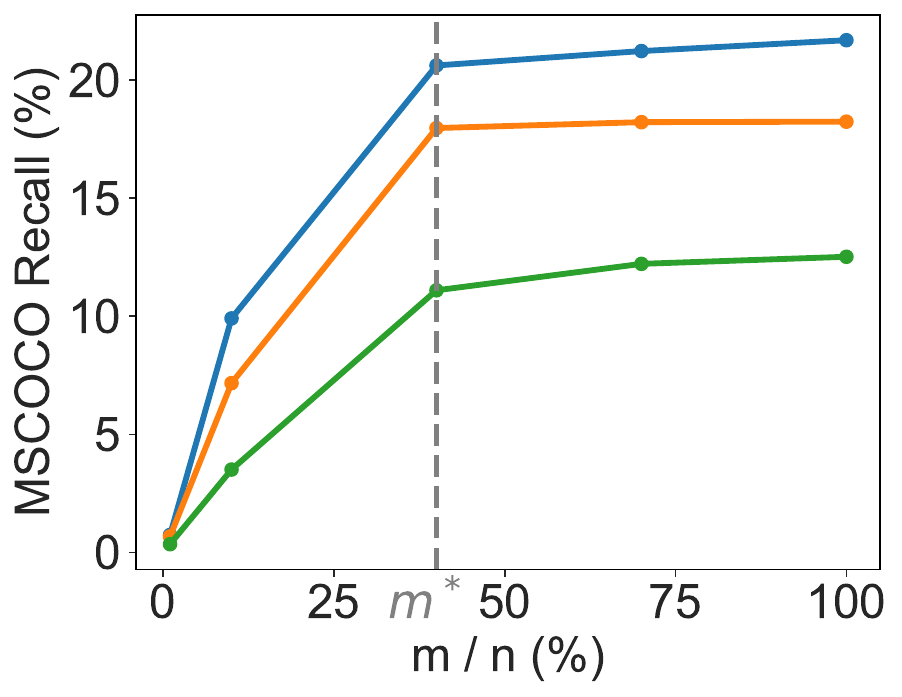}
        \includegraphics[width=0.325\linewidth]{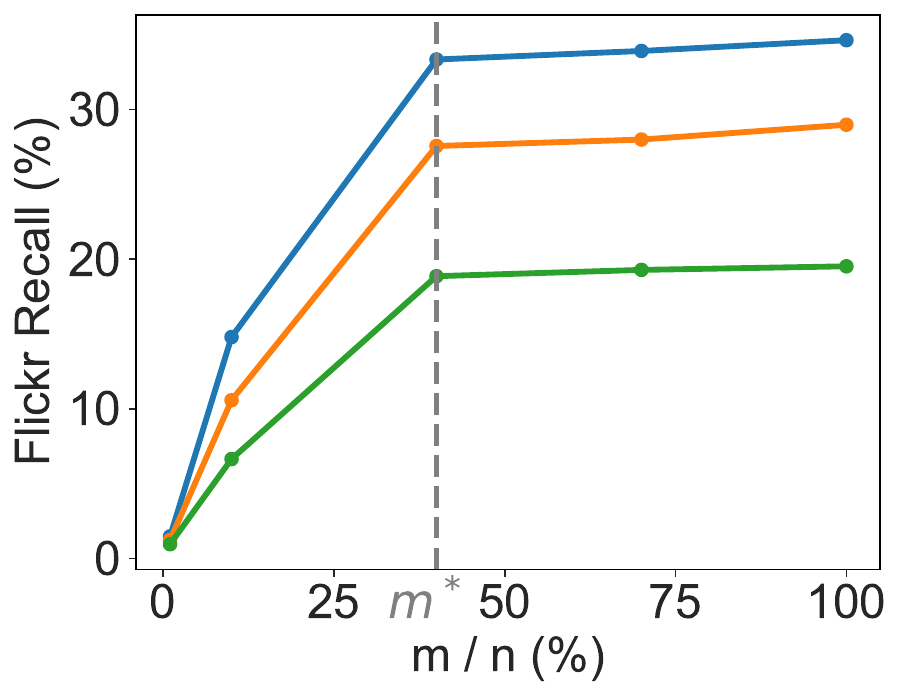}
        \caption{}
        \label{fig:empirical_performance}
    \end{subfigure}
    \begin{subfigure}[b]{0.28\textwidth}
        \centering
        \includegraphics[width=0.85\linewidth]{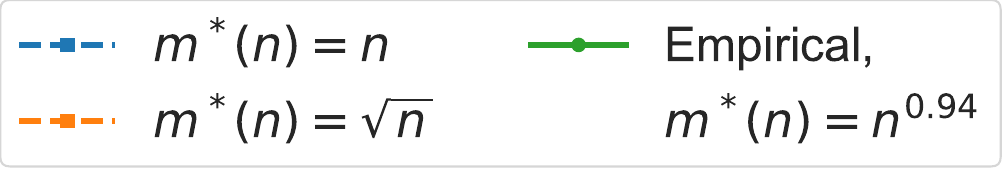}

        \includegraphics[width=0.75\linewidth]{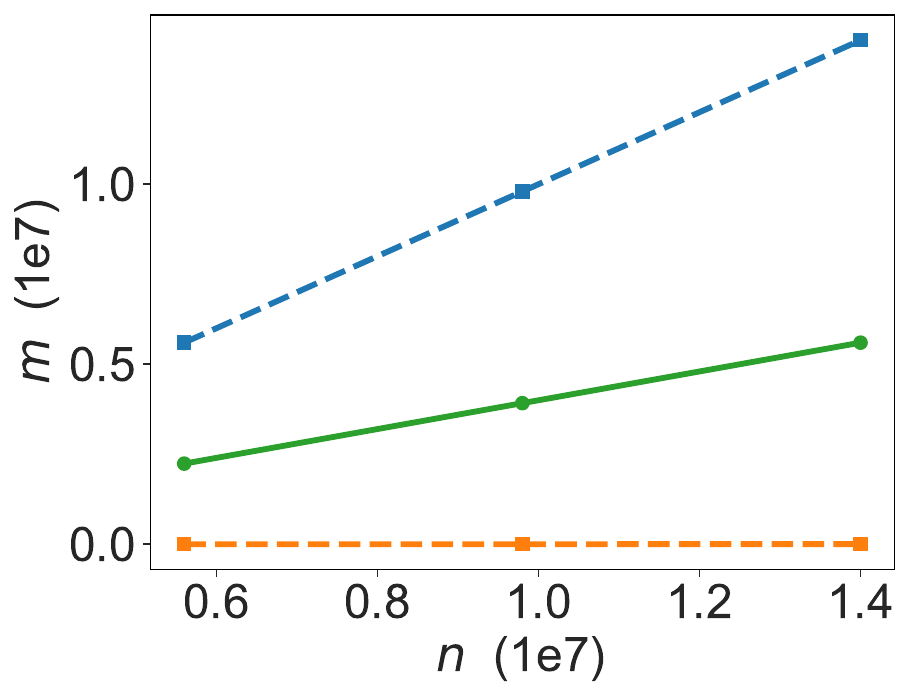}
        \caption{}
        \label{fig:empirical_critical}
    \end{subfigure}
    \vspace*{-2mm}\caption{(\subref{fig:empirical_performance}): Zero-shot classification results on ImageNet (left) and retrieval results on MSCOCO (middle) and Flickr (right) of CLIP training on different sizes of negative samples. \(n\) denotes the size of the anchor dataset, while \(m\) denotes the size of negative samples. (\subref{fig:empirical_critical}): Critical size of \(m\) at different \(n\), compared with \(m=\sqrt{n}\) and \(m= n\).}
    \label{fig:empirical}
    \vspace*{-4mm}
\end{figure*}

\paragraph{Application to retrieval and zero-shot classification}
Our generalization analysis (Theorem~\ref{thm:CRL_gen},\ref{thm:gen_SSCRL}) can be used to  show that minimizing the empirical CRL risk would generalize well in the downstream tasks.  Let 
\(
\hat{\w} = \argmin_{\w\in\mathcal{W}}
\widehat{\mathcal{L}}_{\mathrm{S}}(s_{\w})\),
A standard decomposition yields
\begin{align*}
\mathcal{L}_{\mathrm{S}}(s_{\hat{\w}})-\inf_{s}\mathcal{L}_{\mathrm{S}}(s)
\;\le\;&
2\underbrace{\sup_{\w\in\mathcal{W}}
\big|\mathcal{L}_{\mathrm{S}}(s_{\w})-\widehat{\mathcal{L}}_{\mathrm{S}}(s_{\w})\big|}
_{\text{generalization gap}} \\
&+\underbrace{\inf_{\w\in\mathcal{W}}\mathcal{L}_{\mathrm{S}}(s_{\w})
-\inf_{s}\mathcal{L}_{\mathrm{S}}(s)}_{\text{approximation error}} .
\end{align*}
 
When $\mathcal{W}$ is sufficiently expressive, the approximation error is
negligible~\cite{oko2025statisticaltheorycontrastivepretraining}. In this paper, we mainly focus on the statistical consistency,  the generalization gap, and its role in transferring
pretraining performance to downstream retrieval performance. Combining the above inequality with Theorem~\ref{thm:compare_cl},~\ref{thm:CRL_gen}, we can establish the excess risk for retrieval $\sup_s\mathcal{E}(s) - \mathcal{E}(s_{\hat{\w}})$. Our generalization gap can be also applied to zero-shot classification downstream task, following the framework in \citet{oko2025statisticaltheorycontrastivepretraining}. One can find more details in Section~\ref{app:zero-shot} of the Appendix.

\section{Empirical Verification}\label{sec:experiments}

In this section, we conduct experiments to empirically demonstrate the validity of our results in Theorem~\ref{thm:gen_SSCRL}. Specifically, we consider the CLIP training problem~\cite{pmlr-v139-radford21a}. The objective consists of two symmetric parts: in the first part the anchor data are images and each image is compared with a set of negative texts, while in the second part the anchor are texts and each text is contrasted with a set of negative images. We choose different sizes of anchor datasets, i.e., \(n\in \{\mathrm{14M}, \mathrm{10M}, \mathrm{6M}\}\), all of which are sampled from the DFN dataset~\citep{fang2024data}. 
To control the number of negative samples, we randomly sample $m$ instances from the original dataset and use their images as negatives for all anchor text instances, and their texts as negatives for all anchor image instances. The value of  $m$ is varied such that  \(m / n\in \{1.0, 0.7, 0.4, 0.1, 0.01\}\). For each choice of \(n\) and \(m\), we train a CLIP model using FastCLIP~\citep{wei2024fastclip} by processing 320M samples in total, then evaluate (1) its zero-shot classification performance on ImageNet~\citep{deng2009imagenet}, and (2) its zero-shot retrieval performance on MSCOCO~\citep{chen2015microsoft} and Flickr~\citep{young2014image}. The reported metrics are (1) the top-1 accuracy on ImageNet and (2) the average of image recall@1 and text recall@1 on individual retrieval datasets.

We present the results in \Cref{fig:empirical_performance}. 
From the figure, we observe that performance improves as the number of negative samples $m$ increases, but saturates once $m$ exceeds a certain threshold, referred to as the \emph{critical negative data size}. This behavior is consistent with our generalization results, highlighting the trade-off between $m$ and $n$.  In \Cref{fig:empirical_critical}, we plot the critical negative data size of \(m^*(n)\) as a function of \(n\), and compare it with \(m^*(n)= \sqrt{n}\) and \(m^*(n)= n\). From the figure we can see that the empirical trend lies between \(m^*(n)= \sqrt{n}\) and \(m^*(n)= n\) but is closer to \(m^*(n)=n\). We provide a theoretical intuition for this observation based on our derived results.
 
Theorem~\ref{thm:gen_SSCRL} establishes that \(m^*=n\) is the optimal scaling when negative examples are fully shared across all anchor points and independent of positive pairs. In contrast, Theorem~\ref{thm:CRL_gen} shows that \(m^*\) should scale as \(\sqrt{n}\) in the supervised setting, where negative samples are drawn independently for each anchor point. While in our experimental setup, negative samples are shared across anchor points (aligning with Theorem~\ref{thm:gen_SSCRL}), the empirical scaling for $m^*$ shows that the generalization in practice indeed has a better dependence than $1/\sqrt{m}$ as predicted by the theory.  

\section{Conclusion}\label{sec:conclusion}
In this paper, we develop a statistical learning theory framework for CRL. We establish the statistical consistency of CRL by proving that minimizing the contrastive loss guarantees optimal performance on downstream retrieval objectives.  We also derive a calibration-style inequality that quantifies the quantitative relationship between the excess risk of CRL during pretraining and that of downstream tasks. Additionally, we provide pretraining generalization bounds whose values decrease with the increasing number of negative examples. 

Several promising directions for future work emerge from our findings. While we have provided a general framework for CRL based on OCE and pairwise loss functions in the Appendix, it is not known which choices of disutility function $\phi$ and pairwise loss $\ell$ perform best practically and theoretically.
Additionally, it would be interesting to adapt our learning theory framework to analyze other foundation models, such as  generalization analysis and statistical consistency for large language models (LLMs).

%% file: appendix/lemmas.tex
\section{Rademacher complexity and covering numbers}
In this section we introduce some useful facts and lemmas on Rademacher complexity and covering numbers.
We first introduce notations of Rademacher complexity.
 For a hypothesis space $\F$ and a dataset $S=\{\x_1,\cdots,\x_n\}$, the empirical Rademacher complexity is defined as 
\[\fR_S(\F)=\E_{\e}\left[\sup_{f\in\F}\frac{1}{n}\sum_{i=1}^n\epsilon_if(\x_i)\right],\]
where $\e=(\epsilon_i)_{i\in[n]}=\{\pm1\}^n$ are independent Rademacher random variables.
The following proposition controls the generalization error through Rademacher complexity.
\begin{proposition}[\citet{mohri2018foundations}]\label{prop:gen_rad}
    Let $\F$ be a class of functions $f:\Z\to[a,b]$. $\D$ is a distribution over $\Z$. Let $S=\{\z_
    1,\cdots,\z_n\}$ be i.i.d. sampled from $\D$.
    Then 
    \[\E_S\sup_{f\in\F}\E_{\z\sim\D} f(\z)-\frac{1}{n}\sum_{i=1}^nf(\z_i)\le 2\E_{S}\fR_S(\F).\]
   Furthermore, with probability at least $1-\delta$, there holds
    \[\sup_{f\in\F}\E_{\z\sim\D} f(\z)-\frac{1}{n}\sum_{i=1}^nf(\z_i)\le 2\fR_S(\F)+3(b-a)\sqrt{\frac{\log(2/\delta)}{2n}}.\]   
\end{proposition}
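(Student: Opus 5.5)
The plan is the classical two-step argument: a symmetrization bound in expectation, followed by concentration via McDiarmid's bounded-differences inequality. Write
\[
\Phi(S)=\sup_{f\in\F}\Big(\E_{\z\sim\D} f(\z)-\frac{1}{n}\sum_{i=1}^n f(\z_i)\Big).
\]

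\textbf{Expectation bound (symmetrization).} Introduce an independent ghost sample $S'=\{\z_1',\dots,\z_n'\}$ drawn i.i.d.\ from $\D$. Since $\E_{\z\sim\D} f(\z)=\E_{S'}\frac1n\sum_i f(\z_i')$, pulling the supremum inside the expectation over $S'$ (Jensen) gives
\[
\E_S\Phi(S)\le \E_{S,S'}\sup_{f\in\F}\frac1n\sum_{i=1}^n\big(f(\z_i')-f(\z_i)\big).
\]
Because $\z_i$ and $\z_i'$ are exchangeable, each difference $f(\z_i')-f(\z_i)$ has the same joint law as $\epsilon_i\big(f(\z_i')-f(\z_i)\big)$ for independent Rademacher signs $\epsilon_i$. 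Inserting these signs and splitting the supremum of a sum into a sum of suprema yields
\[
\E_S\Phi(S)\le \E_{S'}\fR_{S'}(\F)+\E_S\fR_S(\F)=2\E_S\fR_S(\F),
\]
where the $-\epsilon_i$ term is handled by the symmetry of $\epsilon$. This proves the first claim.

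\textbf{High-probability bound.} Because every $f\in\F$ takes values in $[a,b]$, replacing a single $\z_i$ changes $\Phi(S)$ by at most $(b-a)/n$. McDiarmid's inequality then gives, with probability at least $1-\delta/2$,
\[
\Phi(S)\le \E_S\Phi(S)+(b-a)\sqrt{\frac{\log(2/\delta)}{2n}}.
\]
For the same reason, $S\mapsto\fR_S(\F)$ also has bounded differences $(b-a)/n$: replacing $\z_i$ changes $\epsilon_i f(\z_i)/n$ by at most $(b-a)/n$ uniformly in $f$ and $\epsilon$. A second application of McDiarmid gives, with probability at least $1-\delta/2$,
\[
\E_S\fR_S(\F)\le \fR_S(\F)+(b-a)\sqrt{\frac{\log(2/\delta)}{2n}}.
\]
Chain the three inequalities (the expectation bound and the two concentration bounds) and take a union bound over the two events. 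The deviation terms add up to $(b-a)\sqrt{\log(2/\delta)/(2n)}$ from the first McDiarmid step plus twice that amount from the second, since that term is multiplied by $2$. This gives exactly the constant $3(b-a)\sqrt{\log(2/\delta)/(2n)}$ and the stated result with probability at least $1-\delta$.

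\textbf{Main difficulty.} The only delicate step is the symmetrization: one must justify inserting the Rademacher signs via the exchangeability of $(\z_i,\z_i')$, and must be careful to take the supremum inside $\E_{S'}$ before symmetrizing. Measurability of the supremum can be assumed, for instance because $\F$ is separable or countable; this is standard and is the setting of \citet{mohri2018foundations}.
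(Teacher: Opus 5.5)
Your proposal is correct. It is exactly the standard argument behind the cited textbook result, which the paper invokes without reproving: symmetrization with a ghost sample gives the expectation bound. McDiarmid is then applied to both $\Phi(S)$ and $S\mapsto\fR_S(\F)$, each with bounded differences $(b-a)/n$, together with a union bound over two events of probability $\delta/2$. This yields precisely the constant $3(b-a)\sqrt{\log(2/\delta)/(2n)}$.
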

The following contraction lemma is useful for estimating Rademacher complexity.
\begin{lemma}[\citet{ledoux2013probability}]\label{lem:contract}
    Suppose $\psi:\R\mapsto\R$ is contractive (i.e., $|\psi(t)-\psi(t')|\le \eta|t-t'|$). Then we have
    \[\E_\e\sup_{f\in\F}\sum_{i=1}^n\epsilon_i\psi(f(\x_i))\le\eta\E_\e\sup_{f\in\F}\sum_{i=1}^n\epsilon_i f(\x_i).\]
\end{lemma}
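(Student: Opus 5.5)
We prove the claim by peeling off one Rademacher variable at a time and replacing $\psi(f(\x_i))$ by $\eta f(\x_i)$ in that coordinate only. This is the standard Ledoux--Talagrand argument. The version stated has no absolute value inside the supremum, so no condition such as $\psi(0)=0$ is needed. We assume the suprema are finite; otherwise restrict to countable subclasses and pass to the limit.

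\textbf{Inductive claim.} For any $k\in\{0,\dots,n\}$ define
\[
T_k=\E_\e\sup_{f\in\F}\Big[\sum_{i=1}^{k}\epsilon_i\eta f(\x_i)+\sum_{i=k+1}^{n}\epsilon_i\psi(f(\x_i))\Big].
\]
The left-hand side of the lemma is $T_0$ and the right-hand side is $T_n$. It therefore suffices to show $T_{k-1}\le T_k$ for each $k$.

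\textbf{Single-coordinate step.} Fix $k$ and condition on all $\epsilon_i$ with $i\neq k$. Write $A(f)$ for the part of the bracket not involving index $k$. Averaging over $\epsilon_k\in\{\pm1\}$ and merging the two suprema gives
\[
\E_{\epsilon_k}\sup_f\big[A(f)+\epsilon_k\psi(f(\x_k))\big]
=\tfrac12\sup_{f,f'}\big[A(f)+A(f')+\psi(f(\x_k))-\psi(f'(\x_k))\big].
\]
By the contraction property, the right-hand side is at most
\[
\tfrac12\sup_{f,f'}\big[A(f)+A(f')+\eta|f(\x_k)-f'(\x_k)|\big].
\]

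\textbf{Removing the absolute value.} The expression $A(f)+A(f')$ is symmetric in $(f,f')$. Interchanging $f$ and $f'$ shows that the supremum with $\eta|f(\x_k)-f'(\x_k)|$ equals the supremum with $\eta(f(\x_k)-f'(\x_k))$. Splitting the joint supremum back into two separate suprema, this equals
\[
\E_{\epsilon_k}\sup_f\big[A(f)+\epsilon_k\eta f(\x_k)\big].
\]
Taking the expectation over the remaining $\epsilon_i$ yields $T_{k-1}\le T_k$. Chaining these inequalities from $k=1$ to $k=n$ gives $T_0\le T_n$, which is the lemma.

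\textbf{Main obstacle.} The delicate point is the symmetrization step that removes the absolute value. It relies on the pairing over $(f,f')$ being symmetric and on both suprema ranging over the same class. Everything else is routine conditioning on the other Rademacher variables.
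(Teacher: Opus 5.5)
Your proof is correct. It is the standard argument behind the cited Ledoux--Talagrand contraction in this one-sided form: you peel off one Rademacher sign at a time, then use the symmetry of $A(f)+A(f')$ under $f\leftrightarrow f'$ to drop the absolute value. The paper gives no proof of its own and only cites the result. Your argument also only uses the Lipschitz bound at attained values $f(\x_k)$, so it covers the paper's applications where $\psi$ (e.g.\ $\tau\exp(\cdot/\tau)$) is Lipschitz only on a bounded range.
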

We will use covering numbers to estimate Rademacher complexity. Now we introduce some facts about covering numbers.
The empirical $\e$-covering number of a real-valued function class w.r.t. $L_2(S)$ metric is denoted by $\N(\G,\e,L_2(S))$, i.e.,
\[\N(\G,\e,L_2(S))=\min\{|\C|:\forall g\in\G,\exists \bar{g}\in\C s.t. \|g-\bar{g}\|_{L_2(S)}\le\e\},\]
where $\|g-\bar{g}\|_{L_2(S)}=\sqrt{\frac{1}{|S|}\sum_{x\in S}|g(x)-\bar{g}(x)|^2}$.
We denote $\N(\F,\e,L_\infty(S))$ as the empirical $\e$-covering number of a real-valued function class w.r.t. $L_\infty(S)$ metric, i.e.,
\begin{align*}
    &\N(\F,\e,L_\infty(S))\\=&\min\{|\C|:\forall f\in\F,\exists \bar{f}\in\C s.t. \|f-\bar{f}\|_{L_\infty(S)}\le\e\},
\end{align*}
where $\|f-\bar{f}\|_{L_\infty(S)}=\max_{s\in S}|f(x)-\bar{f}(x)|$.
The empirical $\e$-covering number of a vector-valued function class w.r.t. $L_{\infty,2}(S)$ metric is denoted by $\N(\G,\e,L_{\infty,2})$, i.e.,
\[\N(\G,\e,L_{\infty,2}(S))=\min\{|\C|:\forall g\in\G,\exists \bar{g}\in\C s.t. \|g-\bar{g}\|_{L_{\infty,2}}\le\e\},\]
where $\|g-\bar{g}\|_{L_{\infty,2}(S)}=\max_{x\in S}\|g(x)-\bar{g}(x)\|_2$.
The relationship between Rademacher complexities and covering numbers is established in the following proposition.
\begin{proposition}[Dudley's entropy integral, Lemma A.5 in \citet{bartlett2017spectrally}]\label{prop:dudley}
    Let $\Z$ be a vector space and $\F$ be a class of functions $f:\Z\mapsto \R$. Let  $S=\{\z_
    1,\cdots,\z_n\}$ be a dataset. Denote $B_\F=\sup_{f\in\F}\|f\|_{L_2(S)}$, we have
    \[\fR_S(\F)\le\inf_{\alpha>0}4\alpha+\frac{12}{\sqrt{n}}\int_{\alpha}^{B_\F}\sqrt{\log\N(\F,\e,L_2(S))}d\e.\]  
\end{proposition}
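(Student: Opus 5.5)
The plan is the standard chaining argument. Fix $\alpha>0$ and set $\epsilon_k=B_\F 2^{-k}$ for $k\ge 0$.

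\textbf{Trivial case.} If $\alpha\ge B_\F/4$, Cauchy--Schwarz gives
\[
\frac1n\sum_i\epsilon_i f(\z_i)\le \|f\|_{L_2(S)}\le B_\F\le 4\alpha,
\]
which already proves the claim.

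\textbf{Choice of depth.} Otherwise, let $N\ge 1$ be the largest integer with $\epsilon_N>2\alpha$. Then $\epsilon_{N+1}\le 2\alpha$, so $\epsilon_N\le 4\alpha$. Also $\epsilon_{N+1}=\epsilon_N/2>\alpha$.

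\textbf{Chain decomposition.} For each $k\le N$, let $\C_k$ be a minimal $\epsilon_k$-cover of $\F$ in $L_2(S)$, so $|\C_k|=\N(\F,\epsilon_k,L_2(S))$. We may assume these are finite; otherwise the integral is infinite and there is nothing to prove. For $f\in\F$, let $f_k\in\C_k$ be a closest element, and take $f_0=0$; this is admissible since $\|f\|_{L_2(S)}\le B_\F=\epsilon_0$. Then write the telescoping identity
\[
f=(f-f_N)+\sum_{k=1}^N(f_k-f_{k-1}).
\]

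\textbf{Residual term.} Take the supremum over $f$ and the expectation over $\e$ of each piece separately. For the residual, Cauchy--Schwarz gives
\[
\frac1n\sum_i\epsilon_i(f-f_N)(\z_i)\le\|f-f_N\|_{L_2(S)}\le\epsilon_N\le 4\alpha .
\]

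\textbf{Link terms.} Each link $f_k-f_{k-1}$ ranges over a finite set of cardinality at most $|\C_k||\C_{k-1}|\le|\C_k|^2$. Covering numbers are monotone in the scale, so $|\C_{k-1}|\le|\C_k|$. Every element of that set has $L_2(S)$ norm at most $\epsilon_k+\epsilon_{k-1}=3\epsilon_k$. Massart's finite-class lemma then bounds the expected supremum of the $k$-th link by
\[
3\epsilon_k\sqrt{2\log|\C_k|^2}/\sqrt n=6\epsilon_k\sqrt{\log\N(\F,\epsilon_k,L_2(S))}/\sqrt n .
\]

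\textbf{From the sum to the integral.} Use $\epsilon_k=2(\epsilon_k-\epsilon_{k+1})$ and the fact that $\epsilon\mapsto\sqrt{\log\N(\F,\epsilon,L_2(S))}$ is nonincreasing. This gives
\[
\frac{6}{\sqrt n}\sum_{k=1}^N\epsilon_k\sqrt{\log\N(\F,\epsilon_k,L_2(S))}\le\frac{12}{\sqrt n}\sum_{k=1}^N\int_{\epsilon_{k+1}}^{\epsilon_k}\sqrt{\log\N(\F,\epsilon,L_2(S))}\,d\epsilon\le\frac{12}{\sqrt n}\int_{\alpha}^{B_\F}\sqrt{\log\N(\F,\epsilon,L_2(S))}\,d\epsilon .
\]
The last step uses $\epsilon_{N+1}>\alpha$ and $\epsilon_1<B_\F$.

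\textbf{Conclusion.} Adding the residual bound $4\alpha$ and taking the infimum over $\alpha$ gives the proposition. The only delicate point is this bookkeeping: choosing $N$ so that simultaneously $\epsilon_N\le4\alpha$, which yields the constant $4$, and $\epsilon_{N+1}>\alpha$, which keeps the integral's lower limit at $\alpha$. Everything else is Massart's lemma plus the monotonicity of covering numbers.
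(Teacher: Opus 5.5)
Your proposal is correct and is the standard chaining proof behind the cited Lemma A.5 of Bartlett et al.\ (2017): dyadic scales, Cauchy--Schwarz for the residual, Massart's lemma on each link, and $N$ chosen so that $\alpha<\epsilon_{N+1}\le 2\alpha$. The paper only cites that lemma and gives no proof of its own; anchoring the chain at $f_0=0$ rather than assuming $0\in\F$ is the right adaptation to the $B_\F$ normalization. One step deserves to be made explicit: in the trivial case $\alpha>B_\F$ the integral $\int_\alpha^{B_\F}$ runs backwards, and it vanishes only because $\{0\}$ is an admissible external $\epsilon$-cover for every $\epsilon\ge B_\F$ under the paper's definition, so $\log\N(\F,\epsilon,L_2(S))=0$ there.
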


Now we provide some results on the covering numbers for the hypothesis $\W$ defined in Assumption~\ref{ass:bounded_data}.
\begin{lemma}\label{lem:covering_F}
Let Assumption~\ref{ass:bounded_data} hold. Then for any $\varepsilon>0$, there exists a cover of $\W$: $\C_\varepsilon=\{\tilde{\w}_1,\cdots,\tilde{\w}_{|\C_\varepsilon|}\}$ with 
 \[\log |\C_\varepsilon|\le \sum_{l=1}^L {d_ld_{l-1}}\log \Bigg(1+ \frac{2L\Pi_{l=1}^L s_l}{\varepsilon} \Bigg).\]
For any $\w\in \W$, there exists $\tilde{\w} \in \C_\varepsilon$, such that
  \begin{align}\label{eq:covering_w}
      \|f_\w(\x)-f_{\tilde{\w}}(\x)\|_2 \le \varepsilon,  \|f_\w(\y)-f_{\tilde{\w}}(\y)\|_2 \le \varepsilon
  \end{align}
 holds for any $\x\in\X,\y\in\Y$.
\end{lemma}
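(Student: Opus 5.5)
The plan is a layer-by-layer cover of the parameter space combined with a telescoping perturbation bound for the network output. For each $l\in[L]$, the ball $\{W_l\in\R^{d_l\times d_{l-1}}:\|W_l\|_2\le s_l\}$ is a ball of radius $s_l$ in a $d_ld_{l-1}$-dimensional normed space (spectral norm). By the standard volumetric argument, for any $\eta_l>0$ it admits an $\eta_l$-net $\C^{(l)}$ in spectral norm with $|\C^{(l)}|\le (1+2s_l/\eta_l)^{d_ld_{l-1}}$. We take $\C_\varepsilon=\C^{(1)}\times\cdots\times\C^{(L)}$, so $\log|\C_\varepsilon|\le\sum_l d_ld_{l-1}\log(1+2s_l/\eta_l)$. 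The net points can be chosen inside the ball; if not, we pass to an internal net, which only changes constants.

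The key step is the perturbation bound. Write $h_0=\x$, $h_l=\sigma(W_lh_{l-1})$, and define $\tilde h_l$ analogously with $\tilde W_l$. Since ReLU is $1$-Lipschitz coordinatewise and $\|\x\|_2\le1$, we have $\|h_{l-1}\|_2\le\prod_{k<l}s_k$. Then
\[
\|h_l-\tilde h_l\|_2\le \|W_l-\tilde W_l\|_2\|h_{l-1}\|_2+\|\tilde W_l\|_2\|h_{l-1}-\tilde h_{l-1}\|_2 .
\]
Unrolling this recursion gives
\[
\|f_\w(\x)-f_{\tilde\w}(\x)\|_2\le\sum_{l=1}^L\|W_l-\tilde W_l\|_2\prod_{k\ne l}s_k .
\]
Because this holds uniformly over $\|\x\|_2\le1$, the same bound applies to every $\y\in\Y$, which gives both inequalities in \eqref{eq:covering_w} simultaneously.

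To calibrate, we choose $\eta_l=\varepsilon s_l/(L\prod_k s_k)$. Each summand is then at most $\varepsilon/L$, so the total is at most $\varepsilon$. Substituting, $2s_l/\eta_l=2L\prod_k s_k/\varepsilon$, which yields exactly the stated bound on $\log|\C_\varepsilon|$.

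The main obstacle is keeping the Lipschitz bookkeeping tight, namely bounding $\|h_{l-1}\|_2$ by the product of earlier spectral norms and not picking up extra factors. The net construction in spectral norm also needs care. I would cite the volumetric covering bound for norm balls in finite dimension rather than derive it.
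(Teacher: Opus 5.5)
Your proposal is correct and follows essentially the same route as the paper: a product of per-layer spectral-norm nets of size $(1+2s_l/\varepsilon_l)^{d_ld_{l-1}}$ (the paper cites Example 5.8 of Wainwright), the telescoping bound $\|f_\w(\x)-f_{\tilde\w}(\x)\|_2\le\prod_k s_k\sum_l\|W_l-\tilde W_l\|_2/s_l$, and the same calibration $\varepsilon_l=s_l\varepsilon/(L\prod_k s_k)$. The only cosmetic difference is that you put $\tilde W_l$ rather than $W_l$ on the recursive term, which is harmless; also, the maximal-packing construction behind the volumetric bound already gives a net inside the ball, so you do not need the ``internal net'' fallback, which would in fact change the constant $2$ in the stated bound.
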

\begin{proof}
    We denote the output of $l$-th layer by $h_l(\x)$ and $h_0(\x)=\x$. Then $h_l(\x)=\sigma(W_lh_{l-1}(\x)),l\in[L]$.  For any $\x\in\X,\w\in\W$, we have
    \begin{align}\label{eq:bounded_f}
        \|h_l(\x)\|_2=\|\sigma(W_lh_{l-1}(\x))-0\|_2\le \|W_l h_{l-1}(\x)\|_2\le s_l\|h_{l-1}(\x)\|_2\le\cdots\le \Pi_{r=1}^l s_l\|\x\|_2\le \Pi_{r=1}^l s_l.
    \end{align}
  For $\w'=(W_L',\cdots,W_1')$, we denote the output of $l$-th layer by $h'_l(\x)$.  Then we have,
    \begin{align*}
        \|f_\w(\x)-f_{\w'}(\x)\|_2&=\|h_L(\x)-h'_L(\x)\|_2=\|
        \sigma (W_Lh_{L-1}(\x))-\sigma(W_L'h_{L-1}'(\x))\|_2\\ &\le \|W_Lh_{L-1}(\x)-W_L'h_{L-1}'(\x)\|_2\\
        &\le\|W_L(h_{L-1}(\x)-h_{L-1}'(\x))\|_2+\|(W_L-W_L')h_{L-1}'(\x)\|_2\\
        &\le \|W_L\|_2\|h_{L-1}(\x)-h_{L-1}'(\x)\|_2+\|W_L-W_L'\|_2\|h_{L-1}'(\x)\|_2\\
        &\le s_L\|h_{L-1}(\x)-h_{L-1}'(\x)\|_2+\Pi_{l=1}^{L-1}s_l\|W_L-W_L'\|_2\\
        &\le \cdots\le \Pi_{l=1}^L s_l  \sum_{l=1}^L \frac{\|W_L-W_l'\|_2}{s_l}.
    \end{align*}
    Given $\varepsilon >0$, we define 
    \[\varepsilon_l=\frac{s_l\varepsilon}{L\Pi_{l=1}^L s_l }.\]
    Let $\W_l=\{W_l\in\R^{d_l\times d_{l-1}}:\|W_l\|_2\le s_l \}$, according to Example 5.8 in \citet{wainwright2019high}, there exists a $\varepsilon_l$-cover of $\W_l$ w.r.t. $\|\cdot\|_2$-which we denote by $\C_l(\W_l,\varepsilon_l)$, satisfying 
    \[|\C_l(\W_l,\varepsilon_l)|\le \Bigg(1+\frac{2s_l}{\varepsilon_l}\Bigg)^{d_ld{l-1}}=\Bigg(1+\frac{2L\Pi_{l=1}^L s_l}{\varepsilon}\Bigg)^{d_ld_{l-1}}.\]
   Now we can construct a cover of $\W$ through $\C_\varepsilon=\C_1(\W_1,\varepsilon_1)\times\cdots\times \C_L(\W_L,\varepsilon)$, where $\times$ the Cartesian product. For any $\w\in\W$, there exists $\tilde{\w}=(\widetilde{W}_1,\cdots,\widetilde{W}_L)\in \C$, such that for any $l\in [L]$,
   \[\|W_l-\widetilde{W}_l\|_2\le \varepsilon_l.\]
   Hence, for any $\x\in\X$, we have
   \begin{align*}
         \|f_\w(\x)-f_{\tilde{\w}}(\x)\|_2\le \Pi_{l=1}^L s_l \sum_{l=1}^L \frac{\|W_L-\widetilde{W}_l\|_2}{s_l}\le \Pi_{l=1}^L s_l \sum_{l=1}^L \frac{\varepsilon_l}{s_l}=\varepsilon.
   \end{align*}
   Similarly, for any $\y\in\Y$,  $\|f_\w(\y)-f_{\tilde{\w}}(\y)\|_2 \le \varepsilon$.
   The covering number satisfies 
   \begin{align*}
      \log |\C_\varepsilon|\le \sum_{l=1}^L \log |\C_l(\W_l,\varepsilon_l)|\le \sum_{l=1}^L \log \Bigg(1+\frac{2L\Pi_{l=1}^L s_l}{\varepsilon}\Bigg)^{d_ld_{l-1}}=\sum_{l=1}^L {d_ld_{l-1}}\log \Bigg(1+\frac{2L\Pi_{l=1}^L s_l}{\varepsilon}\Bigg).
   \end{align*}
   The proof is completed.
\end{proof}
 We define 
\begin{align}
    &S_1\coloneq\{(\x_i,\y_i,\y_{ij}^-):1\le i\le n,1\le j\le m\},\label{eq:S_1}\\
    &\H\coloneq\{(\x,\y,\y')\mapsto \Delta_\w(\x,\y,\y'): \w\in \W\} \label{eq:H}.
\end{align}
An upper bound of $\N(\H,\e,L_\infty(S_1))$ in the following lemma. 
\begin{lemma}\label{lem:N_S_1}
Let Assumption~\ref{ass:bounded_data} hold. Then for any $\varepsilon>0$,
    \[\log\N(\H,\e,L_\infty(S_1))
      \le\sum_{l=1}^L {d_ld_{l-1}}\log \Bigg(1+ \frac{8L\Pi_{l=1}^L s^2_l}{\e} \Bigg).\]
\end{lemma}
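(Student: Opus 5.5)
The plan is to build the $L_\infty(S_1)$-cover of $\H$ directly from the uniform parameter cover of Lemma~\ref{lem:covering_F}, at a suitably rescaled radius. Take $\varepsilon' = \varepsilon/(4\Pi_{l=1}^L s_l)$ and let $\C_{\varepsilon'}$ be the cover of $\W$ given by Lemma~\ref{lem:covering_F}. The candidate cover of $\H$ is $\{(\x,\y,\y')\mapsto \Delta_{\tilde\w}(\x,\y,\y'):\tilde\w\in\C_{\varepsilon'}\}$. Its cardinality equals $|\C_{\varepsilon'}|$, so
\[
\log\N(\H,\varepsilon,L_\infty(S_1))\le \sum_{l=1}^L d_ld_{l-1}\log\Big(1+\frac{2L\Pi_{l=1}^L s_l}{\varepsilon'}\Big)=\sum_{l=1}^L d_ld_{l-1}\log\Big(1+\frac{8L\Pi_{l=1}^L s_l^2}{\varepsilon}\Big),
\]
which is exactly the claimed bound.

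It remains to check that this is an $\varepsilon$-cover uniformly over all points, and hence over $S_1$. Write $\Delta_\w(\x,\y,\y')=f_\w(\x)^\top(f_\w(\y')-f_\w(\y))$. For $\w$ and its cover element $\tilde\w$, insert and subtract $f_{\tilde\w}(\x)^\top f_\w(\y)$ in each of the two inner products. This gives
\[
|s_\w(\x,\y)-s_{\tilde\w}(\x,\y)|\le \|f_\w(\x)-f_{\tilde\w}(\x)\|_2\|f_\w(\y)\|_2+\|f_{\tilde\w}(\x)\|_2\|f_\w(\y)-f_{\tilde\w}(\y)\|_2 .
\]
By the layerwise bound \eqref{eq:bounded_f}, every feature norm is at most $\Pi_{l=1}^L s_l$. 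By \eqref{eq:covering_w}, every feature difference is at most $\varepsilon'$. Hence $|s_\w(\x,\y)-s_{\tilde\w}(\x,\y)|\le 2\varepsilon'\Pi_{l=1}^L s_l$. The same bound holds for the pair $(\x,\y')$, so
\[
|\Delta_\w(\x,\y,\y')-\Delta_{\tilde\w}(\x,\y,\y')|\le 4\varepsilon'\Pi_{l=1}^L s_l=\varepsilon .
\]

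The only delicate point is bookkeeping the constants. The factor $4$ comes from two score terms, each split into two products, and the extra $\Pi_{l=1}^L s_l$ converts $s_l$ into $s_l^2$ inside the logarithm. One also has to note that Lemma~\ref{lem:covering_F} already controls $\x$ and $\y$ (and likewise $\y'$) simultaneously with the same $\tilde\w$. Nothing probabilistic is involved, and the bound holds for every sample set $S_1$.
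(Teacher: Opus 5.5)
Your proposal is correct and matches the paper's proof: the paper likewise takes the parameter cover of Lemma~\ref{lem:covering_F} at radius $\varepsilon/(4\Pi_{l=1}^L s_l)$ and uses the pointwise bound $|\Delta_\w-\Delta_{\w'}|\le 4\Pi_{l=1}^L s_l\max_{\v\in\{\x,\y,\y'\}}\|f_\w(\v)-f_{\w'}(\v)\|_2$, giving identical constants. The only difference is that the paper obtains this bound in one step rather than through your two separate score differences, and it phrases the argument via $\N(\F,\cdot,L_{\infty,2}(S_2))$.
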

\begin{proof}
For $\w,\w' \in\W$ and a triplet $(\x,\y,\y')$, we have
    \begin{align}
  |\Delta_\w (\x,\y,\y')-\Delta_{\w'}(\x,\y,\y')|= &| (s_\w(\x,\y')-s_\w(\x,\y))-(s_{\w'}(\x,\y')-s_{\w'}(\x,\y))|\notag\\
   =&|f_\w(\x)^\top (f_\w(\y')-f_\w(\y))-f_{\w'}(\x)^\top (f_{\w'}(\y')-f_{\w'}(\y))|\notag\\
    \le&|(f_\w(\x)-f_{\w'}(\x))^\top (f_{\w}(\y')-f_\w(\y))|\notag\\
    +&|f_{\w'}(\x)^\top (f_{\w}(\y')-f_\w(\y)-f_{\w'}(\y')+f_{\w'}(\y))|\notag\\
   \le&(\|f_\w(\x)\|_2+\|f_{\w'}(\x)\|_2)\|f_\w(\y)-f_{\w'}(\y)\|_2\notag\\+&\|f_{\w'}(\x)\|_2(\|f_{\w'}(\y)-f_\w(\y)\|_2+\|f_{\w'}(\y')-f_{\w}(\y')\|_2)\notag\\
   \le& 4\Pi_{l=1}^Ls_l\max_{\v\in\{\x,\y,\y'\}}\|f_\w(\v)-f_{\w'}(\v)\|_2.\label{eq:delta<2_norm}
    \end{align}
    we define
\[S_2=\{\x_i,\y_{ij}',\y_{i}:1\le n,1\le j \le m.\}\]
to be the set of all $n(m+2)$ training data. 
   Therefore,
    \begin{align*}
    \N(\H,\e,L_\infty(S_1))
       \le\N(\F,\e/(4\Pi_{l=1}^Ls_l),L_{\infty,2}(S_2)).
    \end{align*}
    According to Lemma~\ref{lem:covering_F}, suppose $\C_\varepsilon=\{\tilde{\w}_1,\cdots,\tilde{\w}_N\}$ is a cover of $\W$ such that Eq. \eqref{eq:covering_w} holds. Then
    \begin{align*}
        \|f_\w-f_{\tilde{\w}}\|_{L_{\infty,2}(S_2)}=\max_{\v\in S_2} \|f_\w(\v)-f_{\tilde{\w}}(\v)\|_2\le \sup _{\v\in\X,\Y}\|f_\w(\v)-f_{\tilde{\w}}(\v)\|_2\le \varepsilon.
    \end{align*}
    Therefore, $\N(\F,\varepsilon,L_{\infty,2}(S_2))\le |\C_\varepsilon|$.
    Letting $\varepsilon=\e/(4\Pi_{l=1}^Ls_l)$, we have
    \[ \log\N(\H,\e,L_\infty(S_1))
       \le\log |\C_{\e/(4\Pi_{l=1}^Ls_l)}|\le\sum_{l=1}^L {d_ld_{l-1}}\log \Bigg(1+ \frac{8L\Pi_{l=1}^L s^2_l}{\e} \Bigg).\]
       We have completed the proof of the lemma.
\end{proof}

%% file: appendix/proof_consistency.tex
\section{Proofs in Section~\ref{sec:consistency}}\label{app:consistency}
\subsection{Proof of Lemma~\ref{lem:minimizer-L}}\label{app:proof_3.1}
\begin{proof}[Proof of Lemma~\ref{lem:minimizer-L}]
For a scoring function $s$ and an anchor point $\x$, we define the following distribution
\begin{equation}\label{eq:q_+}
    q_\x^+(\y)=\frac{p_\x^-(\y)\exp(s(\x,\y)/\tau)}{\E_{\y'\sim p_\x^-} \exp(s(\x,\y')/\tau)}.
\end{equation}
Then we have
\begin{align}
    \tau \E_\x D_{\mathrm{KL}}(p_\x^+|| q_\x^+)=& \tau\E_\x\E_{\y\sim p_\x^+} \log \frac{p_\x^+(\y)}{q_\x^+(\y)}=\tau\E_\x\E_{\y\sim p_\x^+} \log \frac{p_\x^+(\y)\E_{\y'\sim p_\x^-}\exp(s(\x,\y')/\tau)}{p_\x^-(\y)\exp(s(\x,\y)/\tau)}\notag\\
    =&\tau \E_\x\E_{\y\sim p_\x^+} \log\frac{p_\x^+(\y)}{p_\x^-(\y)}+\tau \E_\x\E_{\y\sim p_\x^+}\log\E_{\y'\sim p_\x^-}\exp((s(\x,\y')-s(\x,\y))/\tau)\notag\\
    =&\L(s)-\tau \E_\x\E_{\y\sim p_\x^+} \log\frac{p_\x^-(\y)}{p_\x^+(\y)}.\label{eq:L=KL}
\end{align}
By the nonnegativity of KL-divergence, there holds
\[\L(s)\ge\tau \E_\x\E_{\y\sim p_\x^+} \log\frac{p_\x^-(\y)}{p_\x^+(\y)}\coloneqq\L^*.\]
By the equality condition of KL-divergence, $\L(s)-\L^*=0$ is achieved if and only if
\[p_\x^+(\y)=q_\x^+(\y)=\frac{p_\x^-(\y)\exp(s(\x,\y)/\tau)}{\E_{\y'\sim p_\x^-} \exp(s(\x,\y')/\tau)}\]
for any $\x\in\X,\y\in\Y$. 
Therefore, the set of the minimizers of $\L$ is given by
\[\mathcal{A}=\Bigg\{s:\frac{p_\x^-(\y)\exp(s(\x,\y)/\tau)}{p_\x^+(\y)}=\E_{\y'\sim p_\x^-} \exp(s(\x,\y')/\tau)\Bigg\}.\]
We also denote
\[\mathcal{A}'=\Bigg\{s:s(\x,\y)=\tau \log\frac{p_{\x}^{+}(\y)}{p_{\x}^{-}(\y)}+g(\x), \text{for some measurable function}\, g:\mathcal{X}\to\mathbb{R}\Bigg\}.\]
We aim to show that $\A=\A'$.

For any $s\in\A$, 
let $g_s(\x)=\tau\log \E_{\y'\sim p_\x^-} \exp(s(\x,\y')/\tau)$ be a measurable function. Then there holds
\[s(\x,\y)=\tau \log\Bigg(\frac{p_{\x}^{+}(\y)}{p_{\x}^{-}(\y)}\E_{\y'\sim p_\x^-} \exp(s(\x,\y')/\tau)\Bigg)=\tau \log\frac{p_{\x}^{+}(\y)}{p_{\x}^{-}(\y)}+g_s(\x).\]
Therefore, $s\in\A'$, this implies that $\A'\subseteq \A'$.

To show that $\A'\subseteq \A$, we take a function $s\in \A'$. Then there exists a function $g$ such that
\[s(\x,\y)=\tau \log\frac{p_{\x}^{+}(\y)}{p_{\x}^{-}(\y)}+g(\x).\]
Hence, $\exp(s(\x,\y)/\tau)=\frac{p_{\x}^{+}(\y)}{p_{\x}^{-}(\y)}\exp(g(\x)/\tau)$ and 
\[\frac{p_\x^-(\y)\exp(s(\x,\y)/\tau)}{p_\x^+(\y)}=\exp(g(\x)/\tau).\]
It further holds
\begin{align*}
    \E_{\y'\sim p_\x^-} \exp(s(\x,\y')/\tau)=&\E_{\y'\sim p_\x^-} \Bigg[\frac{p_{\x}^{+}(\y')}{p_{\x}^{-}(\y')}\exp(g(\x)/\tau)\Bigg]=\int_\Y p_\x^-(\y')\frac{p_{\x}^{+}(\y')}{p_{\x}^{-}(\y')}\exp(g(\x)/\tau)d\y'\\
    =&\int_\Y p_\x^+(\y')\exp(g(\x)/\tau)d\y'=\exp(g(\x)/\tau)\int_\Y p_\x^+(\y')d\y'=\exp(g(\x)/\tau),
\end{align*}
where the last inequality is because $p_\x^+$ is a density function and thus $\int_\Y p_\x^+(\y')d\y'=1$.

As a result,
\[\frac{p_\x^-(\y)\exp(s(\x,\y)/\tau)}{p_\x^+(\y)}=\E_{\y'\sim p_\x^-} \exp(s(\x,\y')/\tau).\]
This indicates that $s\in\A$, and $\A'\subseteq \A.$

Consequently, $\A=\A'$. A scoring function $s$ is a  minimizer of $\L$ {\em if and only if}
\[s(\x,\y)=\tau \log\frac{p_{\x}^{+}(\y)}{p_{\x}^{-}(\y)}+g(\x), \text{for some measurable function}\, g:\mathcal{X}\to\mathbb{R}.\]
The proof is completed.

\end{proof}

\subsection{Proof of Lemma~\ref{lem:max_AUC}}\label{app:proof_3.2}
We first present the following technical lemma.
\begin{lemma}\label{lem:absolute}
    For any $\x\in\X,\y,\y'\in\Y$ and socring function $s$, there holds
\begin{equation}\label{eq:claim}
    \begin{aligned}
    &\I\Bigg[\Bigg(\frac{p_\x^+(\y)}{p_\x^-(\y)}-\frac{p_\x^+(\y')}{p_\x^-(\y')}\Bigg)(s(\x,\y)-s(\x,\y'))\le0 \Bigg]\Bigg|\frac{p_\x^+(\y)}{p_\x^-(\y)}-\frac{p_\x^+(\y')}{p_\x^-(\y')}\Bigg|\\
   \le &\left|\frac{p_\x^+(\y)}{p_\x^-(\y)}-\frac{p_\x^+(\y')}{p_\x^-(\y')}  \right|-\left(\frac{p_\x^+(\y)}{p_\x^-(\y)}-\frac{p_\x^+(\y')}{p_\x^-(\y')}  \right)(\I[s(\x,\y)> s(\x,\y')]-\I[s(\x,\y')> s(\x,\y)])\\
    \le&2\I\Bigg[\Bigg(\frac{p_\x^+(\y)}{p_\x^-(\y)}-\frac{p_\x^+(\y')}{p_\x^-(\y')}\Bigg)(s(\x,\y)-s(\x,\y'))\le0 \Bigg]\Bigg|\frac{p_\x^+(\y)}{p_\x^-(\y)}-\frac{p_\x^+(\y')}{p_\x^-(\y')}\Bigg|.
\end{aligned}
\end{equation}
\end{lemma}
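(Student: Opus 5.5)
This inequality is purely pointwise, so I would fix $\x,\y,\y'$ and abbreviate $a \coloneqq \frac{p_\x^+(\y)}{p_\x^-(\y)}-\frac{p_\x^+(\y')}{p_\x^-(\y')}$ and $b \coloneqq s(\x,\y)-s(\x,\y')$. Writing $\mathrm{sgn}(b)=\I[s(\x,\y)>s(\x,\y')]-\I[s(\x,\y')>s(\x,\y)]\in\{-1,0,1\}$, the claim becomes
\[
\I[ab\le 0]\,|a| \;\le\; |a|-a\,\mathrm{sgn}(b) \;\le\; 2\,\I[ab\le 0]\,|a|.
\]
I would prove it by a short case analysis on the signs of $a$ and $b$.

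\emph{Case $a=0$.} All three expressions vanish, so the inequality is trivial.

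\emph{Case $a\neq0$ and $ab>0$.} Here $\mathrm{sgn}(b)=\mathrm{sgn}(a)$, so $a\,\mathrm{sgn}(b)=|a|$. The middle term is then $0$, and the two outer terms are also $0$ because the indicator vanishes.

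\emph{Case $a\neq0$ and $b=0$.} Here $\mathrm{sgn}(b)=0$, so the middle term equals $|a|$. The indicator equals $1$, so the outer terms are $|a|$ and $2|a|$, and the chain $|a|\le|a|\le 2|a|$ holds.

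\emph{Case $a\neq0$ and $ab<0$.} Here $\mathrm{sgn}(b)=-\mathrm{sgn}(a)$, so the middle term is $|a|+|a|=2|a|$. The outer terms are $|a|$ and $2|a|$, and the chain $|a|\le 2|a|\le 2|a|$ holds.

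These four cases are exhaustive, which proves \eqref{eq:claim}.

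There is no real obstacle here. The only points needing care are the tie case $b=0$, where $\mathrm{sgn}(b)=0$ even though the indicator is $1$, and the convention that ratios with $p_\x^-=0$ are taken in the extended sense (or that such points are excluded as a null set). The purpose of the lemma is to let $\mathcal{E}^*-\mathcal{E}(s)$ be rewritten, after symmetrizing in $(\y,\y')$, as an expectation of $|a|-a\,\mathrm{sgn}(b)$ weighted by $p_\x^-(\y)p_\x^-(\y')$.
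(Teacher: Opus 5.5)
Your proposal is correct and follows essentially the same approach as the paper. The paper disposes of the case $a=0$, takes $a>0$ without loss of generality by symmetry in $(\y,\y')$, and then checks the three cases $s(\x,\y)>s(\x,\y')$, $s(\x,\y)=s(\x,\y')$ and $s(\x,\y)<s(\x,\y')$; your $\mathrm{sgn}(b)$ notation handles both signs of $a$ at once, which is equivalent.
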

\begin{proof}
The above inequality holds  true when $\frac{p_\x^+(\y)}{p_\x^-(\y)}-\frac{p_\x^+(\y')}{p_\x^-(\y')}=0$. We ony   need to consider the case $\frac{p_\x^+(\y)}{p_\x^-(\y)}-\frac{p_\x^+(\y')}{p_\x^-(\y')}\neq 0$. By the symmetry of $\y,\y'$, we assume that
\[\frac{p_\x^+(\y)}{p_\x^-(\y)}-\frac{p_\x^+(\y')}{p_\x^-(\y')}>0.\]
Similar arguments also follows for $\frac{p_\x^+(\y)}{p_\x^-(\y)}-\frac{p_\x^+(\y')}{p_\x^-(\y')}<0$.
We  verify Eq. \eqref{eq:claim} by considering three cases:

Case $1$: $s(\x,\y)>s(\x,\y')$. We have
\begin{align*}
 &\frac{p_\x^+(\y)}{p_\x^-(\y)}-\frac{p_\x^+(\y')}{p_\x^-(\y')} -\left(\frac{p_\x^+(\y)}{p_\x^-(\y)}-\frac{p_\x^+(\y')}{p_\x^-(\y')}  \right)(\I[s(\x,\y)> s(\x,\y')]-\I[s(\x,\y')> s(\x,\y)])\\
=&\frac{p_\x^+(\y)}{p_\x^-(\y)}-\frac{p_\x^+(\y')}{p_\x^-(\y')} -\left(\frac{p_\x^+(\y)}{p_\x^-(\y)}-\frac{p_\x^+(\y')}{p_\x^-(\y')}  \right)(1-0)=0,\\
&\I\Bigg[\Bigg(\frac{p_\x^+(\y)}{p_\x^-(\y)}-\frac{p_\x^+(\y')}{p_\x^-(\y')}\Bigg)(s(\x,\y)-s(\x,\y'))\le0 \Bigg]\Bigg(\frac{p_\x^+(\y)}{p_\x^-(\y)}-\frac{p_\x^+(\y')}{p_\x^-(\y')}\Bigg)=0.
\end{align*}
Therefore, Eq. \eqref{eq:claim} becomes $0\le 0\le 0$, which holds true.

Case $2$:$s(\x,\y)=s(\x,\y')$. We have
\begin{align*}
    &\frac{p_\x^+(\y)}{p_\x^-(\y)}-\frac{p_\x^+(\y')}{p_\x^-(\y')} -\left(\frac{p_\x^+(\y)}{p_\x^-(\y)}-\frac{p_\x^+(\y')}{p_\x^-(\y')}  \right)(\I[s(\x,\y)> s(\x,\y')]-\I[s(\x,\y')> s(\x,\y)])\\
=&\frac{p_\x^+(\y)}{p_\x^-(\y)}-\frac{p_\x^+(\y')}{p_\x^-(\y')} -\left(\frac{p_\x^+(\y)}{p_\x^-(\y)}-\frac{p_\x^+(\y')}{p_\x^-(\y')}  \right)(0-0)=\frac{p_\x^+(\y)}{p_\x^-(\y)}-\frac{p_\x^+(\y')}{p_\x^-(\y')},\\
&\I\Bigg[\Bigg(\frac{p_\x^+(\y)}{p_\x^-(\y)}-\frac{p_\x^+(\y')}{p_\x^-(\y')}\Bigg)(s(\x,\y)-s(\x,\y'))\le0 \Bigg]\Bigg(\frac{p_\x^+(\y)}{p_\x^-(\y)}-\frac{p_\x^+(\y')}{p_\x^-(\y')}\Bigg)\\
 =&\frac{p_\x^+(\y)}{p_\x^-(\y)}-\frac{p_\x^+(\y')}{p_\x^-(\y')}.
\end{align*}
In this case, Eq. \eqref{eq:claim} becomes
\[\frac{p_\x^+(\y)}{p_\x^-(\y)}-\frac{p_\x^+(\y')}{p_\x^-(\y')}\le\frac{p_\x^+(\y)}{p_\x^-(\y)}-\frac{p_\x^+(\y')}{p_\x^-(\y')}\le2\Bigg(\frac{p_\x^+(\y)}{p_\x^-(\y)}-\frac{p_\x^+(\y')}{p_\x^-(\y')}\Bigg),\]
which holds true by noting that $\frac{p_\x^+(\y)}{p_\x^-(\y)}-\frac{p_\x^+(\y')}{p_\x^-(\y')}>0$.

Case $3$:$s(\x,\y)<s(\x,\y')$. We have
\begin{align*}
   &\frac{p_\x^+(\y)}{p_\x^-(\y)}-\frac{p_\x^+(\y')}{p_\x^-(\y')} -\left(\frac{p_\x^+(\y)}{p_\x^-(\y)}-\frac{p_\x^+(\y')}{p_\x^-(\y')}  \right)(\I[s(\x,\y)> s(\x,\y')]-\I[s(\x,\y')> s(\x,\y)])\\
=&\frac{p_\x^+(\y)}{p_\x^-(\y)}-\frac{p_\x^+(\y')}{p_\x^-(\y')} -\left(\frac{p_\x^+(\y)}{p_\x^-(\y)}-\frac{p_\x^+(\y')}{p_\x^-(\y')}  \right)(0-1)=2\Bigg(\frac{p_\x^+(\y)}{p_\x^-(\y)}-\frac{p_\x^+(\y')}{p_\x^-(\y')}\Bigg),\\
 &\I\Bigg[\Bigg(\frac{p_\x^+(\y)}{p_\x^-(\y)}-\frac{p_\x^+(\y')}{p_\x^-(\y')}\Bigg)(s(\x,\y)-s(\x,\y'))\le0 \Bigg]\Bigg(\frac{p_\x^+(\y)}{p_\x^-(\y)}-\frac{p_\x^+(\y')}{p_\x^-(\y')}\Bigg)\\
 =&\frac{p_\x^+(\y)}{p_\x^-(\y)}-\frac{p_\x^+(\y')}{p_\x^-(\y')}.
\end{align*}
Consequently, Eq. \eqref{eq:claim} becomes
\[\frac{p_\x^+(\y)}{p_\x^-(\y)}-\frac{p_\x^+(\y')}{p_\x^-(\y')}\le 2\Bigg(\frac{p_\x^+(\y)}{p_\x^-(\y)}-\frac{p_\x^+(\y')}{p_\x^-(\y')}\Bigg)\le 2\Bigg(\frac{p_\x^+(\y)}{p_\x^-(\y)}-\frac{p_\x^+(\y')}{p_\x^-(\y')}\Bigg),\]
which holds true.

Combining three cases together, we have completed the proof of the lemma.
\end{proof}

Now we present the proof of Lemma~\ref{lem:max_AUC}.
\begin{proof}[Proof of Lemma~\ref{lem:max_AUC}]
    Recall the definition of $\mathcal{E}(s)$ in Eq. \eqref{eq:auc_type}, there holds
    \begin{align*}
        \mathcal{E}(s)=&\E_{\x}\E_{\y\sim p_\x^+,\y'\sim p_\x^-}[\I[s(\x,\y)> s(\x,\y')]]+\frac{1}{2}\E_{\x}\E_{\y\sim p_\x^+,\y'\sim p_\x^-}[\I[s(\x,\y)=s(\x,\y')]]\\
        =&\E_{\x}\Bigg[\iint p_\x^+(\y)p_\x^-(\y')(\I[s(\x,\y)> s(\x,\y')]+\frac{1}{2}\I[s(\x,\y)= s(\x,\y')])d\y d\y'\Bigg]\\
        =&\E_{\x}\Bigg[\iint p_\x^-(\y)p_\x^-(\y')\frac{p_\x^+(\y)}{p_\x^-(\y)}(\I[s(\x,\y)> s(\x,\y')]+\frac{1}{2}\I[s(\x,\y)= s(\x,\y')])d\y d\y'\Bigg]\\
       =&\E_{\x}\E_{\y\sim p_\x^-,\y'\sim p_\x^-}\Bigg[\frac{p_\x^+(\y)}{p_\x^-(\y)}(\I[s(\x,\y)> s(\x,\y')]+\frac{1}{2}\I[s(\x,\y)= s(\x,\y')])\Bigg]\\
        =&\frac{1}{2}\E_{\x}\E_{\y\sim p_\x^-,\y'\sim p_\x^-}\Bigg[\frac{p_\x^+(\y)}{p_\x^-(\y)}\I[s(\x,\y)> s(\x,\y')]+\frac{p_\x^+(\y')}{p_\x^-(\y')}\I[s(\x,\y')> s(\x,\y)]\\
   +&\frac{1}{4}\left(\frac{p_\x^+(\y)}{p_\x^-(\y)}+\frac{p_\x^+(\y')}{p_\x^-(\y')}
        \right)\I[s(\x,\y)= s(\x,\y')]\Bigg],
    \end{align*}
    where the last term is by the symmetry of $\y,\y'$.
By noting that $\I[s(\x,\y')> s(\x,\y)]+\I[s(\x,\y)> s(\x,\y')]+\I[s(\x,\y')= s(\x,\y)]=1$, we have
\begin{align}
     \mathcal{E}(s)=&\frac{1}{2}\E_{\x}\E_{\y\sim p_\x^-,\y'\sim p_\x^-}\Bigg[\frac{p_\x^+(\y)}{p_\x^-(\y)}\I[s(\x,\y)> s(\x,\y')]+\frac{p_\x^+(\y')}{p_\x^-(\y')}\I[s(\x,\y')> s(\x,\y)]
        \notag\\
        +&\frac{1}{4}\left(\frac{p_\x^+(\y)}{p_\x^-(\y)}+\frac{p_\x^+(\y')}{p_\x^-(\y')}
        \right)(1-\I[s(\x,\y')> s(\x,\y)]-\I[s(\x,\y)> s(\x,\y')])\Bigg]\notag\\
        =&\frac{1}{4}\E_{\x}\E_{\y\sim p_\x^-,\y'\sim p_\x^-}\Bigg[\frac{p_\x^+(\y)}{p_\x^-(\y)}+\frac{p_\x^+(\y')}{p_\x^-(\y')}
      +\left(\frac{p_\x^+(\y)}{p_\x^-(\y)}-\frac{p_\x^+(\y')}{p_\x^-(\y')}  \right)(\I[s(\x,\y)> s(\x,\y')]-\I[s(\x,\y')> s(\x,\y)])\Bigg]\notag\\
      \le&\frac{1}{4}\E_{\x}\E_{\y\sim p_\x^-,\y'\sim p_\x^-}\Bigg[\frac{p_\x^+(\y)}{p_\x^-(\y)}+\frac{p_\x^+(\y')}{p_\x^-(\y')}
       +\left|\frac{p_\x^+(\y)}{p_\x^-(\y)}-\frac{p_\x^+(\y')}{p_\x^-(\y')}  \right|\Bigg]\coloneqq\mathcal{E}^*\label{eq:mathcalE*}
\end{align}
Therefore,
\begin{align*}
    &\mathcal{E}^*- \mathcal{E}(s)\\=&\frac{1}{4}\E_{\x}\E_{\y\sim p_\x^-,\y'\sim p_\x^-}\!\Bigg[\left|\frac{p_\x^+(\y)}{p_\x^-(\y)}-\frac{p_\x^+(\y')}{p_\x^-(\y')}  \right|-\!\left(\!\frac{p_\x^+(\y)}{p_\x^-(\y)}-\frac{p_\x^+(\y')}{p_\x^-(\y')} \! \right)\!(\I[s(\x,\y)> s(\x,\y')]\!-\!\I[s(\x,\y')> s(\x,\y)]\!)\Bigg].
\end{align*}

Applying Lemma~\ref{lem:absolute} to the above equation, 
we conclude that
\begin{equation}  \label{eq:E_*-E(s)}
\begin{aligned}
&\frac{1}{4}\E_{\x}\E_{\y\sim p_\x^-,\y'\sim p_\x^-}\!\Bigg[\I\Bigg[\Bigg(\frac{p_\x^+(\y)}{p_\x^-(\y)}-\frac{p_\x^+(\y')}{p_\x^-(\y')}\Bigg)(s(\x,\y)-s(\x,\y'))\le0 \Bigg]\Bigg|\frac{p_\x^+(\y)}{p_\x^-(\y)}-\frac{p_\x^+(\y')}{p_\x^-(\y')}\Bigg|\Bigg]\\
   \le& \mathcal{E}^*-\mathcal{E}(s)\\
    \le&\frac{1}{2}\E_{\x}\E_{\y\sim p_\x^-,\y'\sim p_\x^-}\!\Bigg[\I\Bigg[\Bigg(\frac{p_\x^+(\y)}{p_\x^-(\y)}-\frac{p_\x^+(\y')}{p_\x^-(\y')}\Bigg)(s(\x,\y)-s(\x,\y'))\le0 \Bigg]\Bigg|\frac{p_\x^+(\y)}{p_\x^-(\y)}-\frac{p_\x^+(\y')}{p_\x^-(\y')}\Bigg|\Bigg].
\end{aligned}
\end{equation}
Now we derive the necessary and sufficient condition for
the optimal scoring function of $\mathcal{E}$.

{\bf Necessary condition.}
If $s$ is an optimal scoring function such that $\mathcal{E}^*-\mathcal{E}(s)=0$, the by Eq. \eqref{eq:E_*-E(s)}, there holds
\begin{align*}
   0\le &\frac{1}{4}\E_{\x}\E_{\y\sim p_\x^-,\y'\sim p_\x^-}\!\Bigg[\I\Bigg[\Bigg(\frac{p_\x^+(\y)}{p_\x^-(\y)}-\frac{p_\x^+(\y')}{p_\x^-(\y')}\Bigg)(s(\x,\y)-s(\x,\y'))\le0 \Bigg]\Bigg|\frac{p_\x^+(\y)}{p_\x^-(\y)}-\frac{p_\x^+(\y')}{p_\x^-(\y')}\Bigg|\Bigg]\\
   \le& \mathcal{E}^*-\mathcal{E}(s)\le0.
\end{align*}
Therefore, 
\[\E_{\x}\E_{\y\sim p_\x^-,\y'\sim p_\x^-}\!\Bigg[\I\Bigg[\Bigg(\frac{p_\x^+(\y)}{p_\x^-(\y)}-\frac{p_\x^+(\y')}{p_\x^-(\y')}\Bigg)(s(\x,\y)-s(\x,\y'))\le0 \Bigg]\Bigg|\frac{p_\x^+(\y)}{p_\x^-(\y)}-\frac{p_\x^+(\y')}{p_\x^-(\y')}\Bigg|\Bigg]=0,\]
implying that
for any $\x,\y,\y'$ with ${p_\x^+(\y)}/{p_\x^-(\y)}\neq {p_\x^+(\y')}/{p_\x^-(\y')}$, there holds 
\begin{align*}
    &\I\Bigg[\Bigg(\frac{p_\x^+(\y)}{p_\x^-(\y)}-\frac{p_\x^+(\y')}{p_\x^-(\y')}\Bigg)(s(\x,\y)-s(\x,\y'))\le0 \Bigg]=0,\\\Longrightarrow&\Bigg(\frac{p_\x^+(\y)}{p_\x^-(\y)}-\frac{p_\x^+(\y')}{p_\x^-(\y')}\Bigg)(s(\x,\y)-s(\x,\y'))>0.
\end{align*}

{\bf Sufficient condition.}
   Let a scoring function $s$ satisfies that,  for any $\x,\y,\y'$ with ${p_\x^+(\y)}/{p_\x^-(\y)}\neq {p_\x^+(\y')}/{p_\x^-(\y')}$, there holds 
    \[\Bigg(\frac{p_\x^+(\y)}{p_\x^-(\y)}-\frac{p_\x^+(\y')}{p_\x^-(\y')}\Bigg)(s(\x,\y)-s(\x,\y'))>0.\]
 Then for any $\x,\y,\y'$, there holds
  \[\I\Bigg[\Bigg(\frac{p_\x^+(\y)}{p_\x^-(\y)}-\frac{p_\x^+(\y')}{p_\x^-(\y')}\Bigg)(s(\x,\y)-s(\x,\y'))\le0 \Bigg]\Bigg|\frac{p_\x^+(\y)}{p_\x^-(\y)}-\frac{p_\x^+(\y')}{p_\x^-(\y')}\Bigg|=0.\]
Then according to Eq. \eqref{eq:E_*-E(s)}, we have
\begin{align*}
    &\mathcal{E}^*-\mathcal{E}(s)\\
    \le&\frac{1}{2}\E_{\x}\E_{\y\sim p_\x^-,\y'\sim p_\x^-}\!\Bigg[\I\Bigg[\Bigg(\frac{p_\x^+(\y)}{p_\x^-(\y)}-\frac{p_\x^+(\y')}{p_\x^-(\y')}\Bigg)(s(\x,\y)-s(\x,\y'))\le0 \Bigg]\Bigg|\frac{p_\x^+(\y)}{p_\x^-(\y)}-\frac{p_\x^+(\y')}{p_\x^-(\y')}\Bigg|\Bigg]=0,
\end{align*}
indicating that $\mathcal{E}^*-\mathcal{E}(s)=0.$
    This shows that $s$ is an optimal scoring function.

Hence, $s$ is an optimal scoring function {\em if and only if} for any $\x,\y,\y'$ with ${p_\x^+(\y)}/{p_\x^-(\y)}\neq {p_\x^+(\y')}/{p_\x^-(\y')}$, there holds 
    \[\Bigg(\frac{p_\x^+(\y)}{p_\x^-(\y)}-\frac{p_\x^+(\y')}{p_\x^-(\y')}\Bigg)(s(\x,\y)-s(\x,\y'))>0.\]
   The proof is completed.
\end{proof}

\subsection{Proof of Theorem~\ref{thm:compare_cl}}\label{app:proof_thm3.5}
Before we provide the proof of Theorem~\ref{thm:compare_cl}, we introduce Pinsker's inequality which relates KL divergence to total variation.
\begin{lemma}[Pinsker's Inequality]\label{lem:pinsker}
 Let \( \mu \) and \( \nu \) be two probability measures on a measurable space \( (\Omega, \mathcal{F}) \), with \( \nu \ll \mu \) (i.e., \( \nu \) is absolutely continuous with respect to \( \mu \)) so that the Radon-Nikodym derivative \( \frac{d\nu}{d\mu} \) exists.
The KL divergence of \( \nu \) with respect to \( \mu \) is defined as:
    \[ D_{\text{KL}}(\nu \parallel \mu) = \int_{\Omega} \log\left( \frac{d\nu}{d\mu} \right) d\nu, \]
    where \( D_{\text{KL}}(\nu \parallel \mu) = +\infty \) if the integral is infinite. The total variation distance between \( \mu \) and \( \nu \) is:
    \[ \|\mu - \nu\|_{\text{TV}} = \sup_{A \in \mathcal{F}} |\mu(A) - \nu(A)| = \frac{1}{2} \int_{\Omega} \left| \frac{d\nu}{d\mu} - 1 \right| d\mu. \]

Pinsker's inequality relates these two quantities by the bound:
    \[ \|\mu - \nu\|_{\text{TV}}^2 \leq \frac{1}{2} D_{\text{KL}}(\nu \parallel \mu). \]  
\end{lemma}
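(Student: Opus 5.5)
We reduce the statement to a two-point inequality and then settle that case by elementary calculus. First, if $D_{\mathrm{KL}}(\nu\|\mu)=+\infty$ there is nothing to prove, so assume it is finite and write $f=\frac{d\nu}{d\mu}$. We record the identity $\sup_{A}|\mu(A)-\nu(A)|=\frac12\int|f-1|\,d\mu$. Since $\int (f-1)\,d\mu=0$, the positive and negative parts of $f-1$ have equal integrals. For any $A$ we have $\nu(A)-\mu(A)=\int_A (f-1)\,d\mu\le \int (f-1)_+\,d\mu$, and this bound is attained at $A^\star=\{f>1\}$. The same argument applies to $\mu(A)-\nu(A)$, which gives the identity. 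We also note $\|\mu-\nu\|_{\mathrm{TV}}=\nu(A^\star)-\mu(A^\star)$.

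Next comes the reduction to Bernoulli measures. Put $p=\nu(A^\star)$ and $q=\mu(A^\star)$, and consider the measurable map $T=\I[\cdot\in A^\star]$ into $\{0,1\}$. By the data-processing inequality for KL divergence, $D_{\mathrm{KL}}(\nu\|\mu)\ge d(p\|q)$, where $d(p\|q)=p\log\frac pq+(1-p)\log\frac{1-p}{1-q}$. To keep the argument self-contained, we would prove this directly using the log-sum inequality (equivalently, Jensen applied to the convex function $t\mapsto t\log t$) separately on $A^\star$ and on its complement. On $A^\star$,
\[\int_{A^\star} f\log f\,d\mu\ \ge\ \mu(A^\star)\,\tfrac{\nu(A^\star)}{\mu(A^\star)}\log\tfrac{\nu(A^\star)}{\mu(A^\star)},\]
and the analogous bound holds on the complement. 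Degenerate cases with $q\in\{0,1\}$ need a brief check: absolute continuity forces $p=0$ when $q=0$, and so on. After this step it suffices to show $d(p\|q)\ge 2(p-q)^2$, since $\|\mu-\nu\|_{\mathrm{TV}}=|p-q|$.

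The main step, though routine, is the binary inequality. Fix $p\in[0,1]$ and set $g(q)=d(p\|q)-2(p-q)^2$ for $q\in(0,1)$. Then $g(p)=0$ and
\[g'(q)=-\frac pq+\frac{1-p}{1-q}+4(p-q)=(q-p)\Big(\frac{1}{q(1-q)}-4\Big).\]
Because $q(1-q)\le\frac14$, the bracket is nonnegative. Hence $g$ is nonincreasing for $q\le p$ and nondecreasing for $q\ge p$, so $g\ge g(p)=0$. Combining the three steps gives $\|\mu-\nu\|_{\mathrm{TV}}^2=(p-q)^2\le\frac12 d(p\|q)\le\frac12 D_{\mathrm{KL}}(\nu\|\mu)$.

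The only delicate point is the data-processing step. It needs care with the boundary values $q\in\{0,1\}$ and $p\in\{0,1\}$, which we handle with the convention $0\log 0=0$. In those cases $g$ is extended by continuity, or the inequality is checked directly.
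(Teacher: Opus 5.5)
Your proof is correct. The paper does not prove this lemma at all. It imports Pinsker's inequality as a classical fact and uses it once, in the proof of Theorem~\ref{thm:compare_cl}, in the form $D_{\mathrm{KL}}(p_{\mathbf{x}}^+\|q_{\mathbf{x}}^+)\ge \frac12\bigl(\int|p_{\mathbf{x}}^+-q_{\mathbf{x}}^+|\bigr)^2$. That is your inequality rewritten via $\int|f-1|\,d\mu=2\|\mu-\nu\|_{\mathrm{TV}}$. So there is no competing argument to compare against. What you wrote is the standard textbook proof, and each step checks out:
\begin{itemize}
\item you take the maximizing set $A^\star=\{f>1\}$;
\item you coarse-grain to Bernoulli laws by Jensen for $t\mapsto t\log t$ on $A^\star$ and on its complement;
\item you verify $d(p\|q)\ge 2(p-q)^2$ from $g'(q)=(q-p)\bigl(\frac{1}{q(1-q)}-4\bigr)$.
\end{itemize}

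Two small points would tighten the writeup. First, you pass implicitly from $\int\log f\,d\nu$ to $\int f\log f\,d\mu$. This is legitimate because $\{f=0\}$ is $\nu$-null and $f\log f\ge -1/e$, so every integral in the split over $A^\star$ and its complement is well defined (possibly $+\infty$). Second, the degenerate cases are simpler than you suggest.
\begin{itemize}
\item If $\mu(A^\star)=0$, then $\|\mu-\nu\|_{\mathrm{TV}}=0$ and there is nothing to prove.
\item $\mu(A^\star)=1$ cannot occur, since $f>1$ on $A^\star$ would force $\nu(A^\star)>1$.
\end{itemize}
Hence in the nontrivial case $0<q<p\le 1$. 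The endpoint $p=1$ then follows because $g$ is nonincreasing on $(0,1)$ and $g(q)\to 0$ as $q\to 1$.

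A shorter alternative skips data processing entirely. Use the pointwise bound $3(t-1)^2\le(2t+4)(t\log t-t+1)$ for $t\ge 0$, then Cauchy--Schwarz, which gives $\bigl(\int|f-1|\,d\mu\bigr)^2\le 2D_{\mathrm{KL}}(\nu\|\mu)$ directly. Your route is longer but makes clearer where the constant $2$ comes from.
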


\begin{proof}[Proof of Theorem~\ref{thm:compare_cl}]
According to Lemma~\ref{lem:minimizer-L},
  we choose an optimal scoring function 
  \begin{equation}\label{eq:s_*}
      s_*(\x,\y)=\tau\log\frac{p_\x^+(\y)}{p_\x^-(\y)}.
  \end{equation}
 Then $\L^*=\L(s_*)=\tau \E_\x\E_{\y\sim p_\x^+} \log\frac{p_\x^-(\y)}{p_\x^+(\y)}$.
For a scoring function $s$, we define
\[\tilde{s}(\x,\y)=s(\x,\y)-\tau\log \E_{\y'\sim p_\x^-}\exp(s(\x,\y')/\tau).\]
    Then $\E_{\y'\sim p_\x^-}\exp(\tilde{s}(\x,\y')/\tau)=1$. Let $g(\x)=\tau\log \E_{\y'\sim p_\x^-}\exp(s(\x,\y')/\tau)$. We have
    \begin{align*}
        \mathcal{E}(\tilde{s})=&\E_{\x}\E_{\y\sim p_\x^+,\y'\sim p_\x^-}\I[\tilde{s}(\x,\y)> \tilde{s}(\x,\y')]=\E_{\x}\E_{\y\sim p_\x^+,\y'\sim p_\x^-}\I[s(\x,\y)-g(\x)> s(\x,\y')-g(\x)]\\
        =&\E_{\x}\E_{\y\sim p_\x^+,\y'\sim p_\x^-}\I[s(\x,\y)> s(\x,\y')]= \mathcal{E}({s}).
    \end{align*}
    There also holds
    \begin{align*}
        \L(\tilde{s})&=\tau \E_\x\E_{\y\sim p_\x^+}\log\E_{\y'\sim p_\x^-}\exp((\tilde{s}(\x,\y')-\tilde{s}(\x,\y))/\tau)\\
        &=\tau \E_\x\E_{\y\sim p_\x^+}\log\E_{\y'\sim p_\x^-}\exp((s(\x,\y')-g(\x)+g(\x)-s(\x,\y))/\tau)\\
        &=\tau \E_\x\E_{\y\sim p_\x^+}\log\E_{\y'\sim p_\x^-}\exp((s(\x,\y')-s(\x,\y))/\tau)=\L(s).
    \end{align*}
     Therefore, $  \mathcal{E}(\tilde{s})=\mathcal{E}({s}), \L(\tilde{s})=\L(s)$ we can replace ${s}$ by $\tilde{s}$. In other words, without loss of generality, we can make the following regularity assumption
\begin{equation}\label{eq:regularization}
    \E_{\y'\sim p_\x^-}\exp(s(\x,\y')/\tau)=1.
\end{equation}
 
Let the distribution $q_\x^+$ be defined in Eq. \eqref{eq:q_+}, i.e.,
\[q_\x^+(\y)=\frac{p_\x^-(\y)\exp(s(\x,\y)/\tau)}{\E_{\y'\sim p_\x^-} \exp(s(\x,\y')/\tau)}=p_\x^-(\y)\exp(s(\x,\y)/\tau).\]
 By Eq. \eqref{eq:L=KL}, we have
\[\L(s)-\L(s_*)= \tau\E_\x D_{\mathrm{KL}}(p_\x^+||q_\x^+).\]
  According to Pinsker's inequality (Lemma~\ref{lem:pinsker}), we can bound the KL-divergence  by total variation distance:
\[ D_{\mathrm{KL}}(p_\x^+||q_\x^+)\ge\frac{1}{2} \Bigg(\int_\Y |p_\x^+(\y)-q_\x^+(\y)|d\y\Bigg)^2.\]
Therefore,
\begin{align}\label{eq:L-L^*ge}
    \L(s)-\L(s_*)&\ge\E_\x\frac{\tau}{2} \Bigg(\int_\Y |p_\x^+(\y)-q_\x^+(\y)|d\y\Bigg)^2=\E_\x\frac{\tau}{2} \Bigg(\int_\Y |p_\x^+(\y)-p_\x^-(\y)\exp(s(\x,\y)/\tau)|d\y\Bigg)^2\notag\\
    &=\E_\x\frac{\tau}{2}\Bigg(\int_\Y p_\x^-(\y)\Bigg|\frac{p_\x^+(\y)}{p_\x^-(\y)}-\exp(s(\x,\y)/\tau)\Bigg|d\y\Bigg)^2\notag\\
   &=\E_\x\frac{\tau}{2}\Bigg(\int_\Y p_\x^-(\y)\Bigg|\exp(s_*(\x,\y)/\tau))-\exp(s(\x,\y)/\tau)\Bigg|d\y\Bigg)^2\notag\\
   &=\E_\x\frac{\tau}{2}\Bigg(\E_{\y\sim p_\x^-}\Bigg|\exp(s_*(\x,\y)/\tau))-\exp(s(\x,\y)/\tau)\Bigg|\Bigg)^2,
\end{align}
where the third equality is due to Eq. \eqref{eq:s_*}. 

Then we make the following decomposition that introduces the pair $\y,\y'$:
\begin{align}
    &2\E_{\y\sim p_\x^-} |\exp(s_*(\x,\y)/\tau))-\exp(s(\x,\y)/\tau)|\notag\\=&\E_{\y\sim p_\x^-} |\exp(s_*(\x,\y)/\tau))-\exp(s(\x,\y)/\tau)|+\E_{\y'\sim p_\x^-} |\exp(s_*(\x,\y')/\tau))-\exp(s(\x,\y')/\tau)|\notag\\
    =&\E_{\y\sim p_\x^-,\y'\sim p_\x^-}[|\exp(s_*(\x,\y)/\tau))-\exp(s(\x,\y)/\tau)|+|\exp(s_*(\x,\y')/\tau))-\exp(s(\x,\y')/\tau)|].\label{eq:decompo}
\end{align}
We will show that for any $\x,\y,\y'$,
\begin{align}\label{eq:eta>identity}
    &|\exp(s_*(\x,\y)/\tau))-\exp(s(\x,\y)/\tau)|+|\exp(s_*(\x,\y')/\tau))-\exp(s(\x,\y')/\tau)|\notag\\
    \ge &\I\Bigg[\Bigg(\frac{p_\x^+(\y)}{p_\x^-(\y)}-\frac{p_\x^+(\y')}{p_\x^-(\y')}\Bigg)(s(\x,\y)-s(\x,\y'))\le0 \Bigg]\Bigg|\frac{p_\x^+(\y)}{p_\x^-(\y)}-\frac{p_\x^+(\y')}{p_\x^-(\y')}\Bigg|.
\end{align}
It is trivial when RHS is equal to $0$. We consider the case $$\Bigg(\frac{p_\x^+(\y)}{p_\x^-(\y)}-\frac{p_\x^+(\y')}{p_\x^-(\y')}\Bigg)(s(\x,\y)-s(\x,\y'))\le0 ,\quad\frac{p_\x^+(\y)}{p_\x^-(\y)}-\frac{p_\x^+(\y')}{p_\x^-(\y')}\neq 0.$$ Without loss of generality, suppose that $$\frac{p_\x^+(\y)}{p_\x^-(\y)}>\frac{p_\x^+(\y')}{p_\x^-(\y')},\quad s(\x,\y)-s(\x,\y')\le 0.$$ By the definition of $s_*$ in Eq. \eqref{eq:s_*}, we have
\begin{align}\label{eq:s_*_y-s_*_y'}
    \exp(s_*(\x,\y)/\tau)-\exp(s_*(\x,\y')/\tau)=&\frac{p_\x^+(\y)}{p_\x^-(\y)}-\frac{p_\x^+(\y')}{p_\x^-(\y')}>0.
\end{align}
and $\exp(s(\x,\y')/\tau))-\exp(s(\x,\y)/\tau)\ge 0$.
Therefore, 
\begin{align*}
    &|\exp(s_*(\x,\y)/\tau))-\exp(s(\x,\y)/\tau)|+|\exp(s_*(\x,\y')/\tau))-\exp(s(\x,\y')/\tau)|\\
    \ge&|\exp(s_*(\x,\y)/\tau))-\exp(s(\x,\y)/\tau)-\exp(s_*(\x,\y')/\tau))+\exp(s(\x,\y')/\tau)|\\
    =&|(\exp(s_*(\x,\y)/\tau)-\exp(s_*(\x,\y')/\tau))+(\exp(s(\x,\y')/\tau)-\exp(s(\x,\y)/\tau))|\\
    =&(\exp(s_*(\x,\y)/\tau)-\exp(s_*(\x,\y')/\tau))+(\exp(s(\x,\y')/\tau)-\exp(s(\x,\y)/\tau))\\
    \ge &  \exp(s_*(\x,\y)/\tau)-\exp(s_*(\x,\y')/\tau)=\frac{p_\x^+(\y)}{p_\x^-(\y)}-\frac{p_\x^+(\y')}{p_\x^-(\y')}.
\end{align*}
As a result, Eq. \eqref{eq:eta>identity} holds true. 
Plugging Eq. \eqref{eq:eta>identity} into Eq. \eqref{eq:decompo} yields
\begin{align}\label{eq:p>half}
    &\E_{\y\sim p_\x^-} |\exp(s_*(\x,\y)/\tau))-\exp(s(\x,\y)/\tau)|\notag\\\ge&\frac{1}{2}\E_{\y\sim p_\x^-,\y'\sim p_\x^-}\I\Bigg[\Bigg(\frac{p_\x^+(\y)}{p_\x^-(\y)}-\frac{p_\x^+(\y')}{p_\x^-(\y')}\Bigg)(s(\x,\y)-s(\x,\y'))\le0 \Bigg]\Bigg|\frac{p_\x^+(\y)}{p_\x^-(\y)}-\frac{p_\x^+(\y')}{p_\x^-(\y')}\Bigg|.
\end{align}

Combining Eqs. \eqref{eq:L-L^*ge}, \eqref{eq:p>half}, we have 
\begin{align*}
  & \L(s)-\L(s_*)\ge\frac{\tau}{2}\E_\x\Bigg(\E_{\y\sim p_\x^-} |\exp(s_*(\x,\y)/\tau))-\exp(s(\x,\y)/\tau)|\Bigg)^2\\
   \ge& \frac{\tau}{2}\E_\x\Bigg(\frac{1}{2}\E_{\y\sim p_\x^-,\y'\sim p_\x^-}\I\Bigg[\Bigg(\frac{p_\x^+(\y)}{p_\x^-(\y)}-\frac{p_\x^+(\y')}{p_\x^-(\y')}\Bigg)(s(\x,\y)-s(\x,\y'))\le0 \Bigg]\Bigg|\frac{p_\x^+(\y)}{p_\x^-(\y)}-\frac{p_\x^+(\y')}{p_\x^-(\y')}\Bigg|\Bigg)^2\\
   \ge&\frac{\tau}{2}\Bigg(\frac{1}{2}\E_\x\E_{\y\sim p_\x^-,\y'\sim p_\x^-}\I[(\frac{p_\x^+(\y)}{p_\x^-(\y)}-\frac{p_\x^+(\y')}{p_\x^-(\y')})(s(\x,\y)-s(\x,\y'))\le0 ]\Bigg|\frac{p_\x^+(\y)}{p_\x^-(\y)}-\frac{p_\x^+(\y')}{p_\x^-(\y')}\Bigg|\Bigg)^2\\
   \ge&\frac{\tau}{2}(\mathcal{E}^*-\mathcal{E}(s))^2,
\end{align*}
where the second inequality is due to Jensen's inequality 
and the last inequality results from Eq. \eqref{eq:E_*-E(s)}.
As a result,
\[\mathcal{E}^*-\mathcal{E}(s)\le\sqrt{2/\tau}\sqrt{\L(s)-\L(s_*)}.\]
The proof is completed by noting that $s_*$ is a minimizer of $\L(s)$ and thus $\L(s_*)=\L^*$.
\end{proof}

%% file: appendix/proof_SCRL.tex
\section{Proofs in Section~\ref{sec:SCRL} }\label{app:SCRL}
Recall the error decomposition 
\begin{equation*}
    \begin{aligned}
    &\Big|\L_{\mathrm{S}}(s_\w)\!-\!   \widehat{\L}_{\mathrm{S}}(s_\w)\Big|\\
    \le&\underbrace{\Big|\L_{\mathrm{S}}(s_\w)-   \E\widehat{\L}_{\mathrm{S}}(s_\w)\Big|}_{\text{inner error}}+\underbrace{\Big|\E\widehat{\L}_{\mathrm{S}}(s_\w)-\widehat{\L}_{\mathrm{S}}(s_\w)\Big|}_{\text{outer error}}.\\
\end{aligned}
\end{equation*}
We will control inner error and outer error separately. Before that, we provide the proof of Lemma~\ref{lem:crl_oce}.
\subsection{Proof of Lemma~\ref{lem:crl_oce}}\label{app_proof_crloce}
\begin{proof}[Proof of Lemma~\ref{lem:crl_oce}]
For $\mu_i\in\R$, we define
\[d_i(\mu_i)= \frac{\tau}{m} \sum_{j=1}^m  \exp\left( \!\frac{\Delta_\w(\x_i,\y_i,\y_{ij}') - \mu_i}{\tau} \!\right) + \mu_i -\tau.\]
Then we have
\[d'_i(\mu_i)= \frac{\tau}{m} \sum_{j=1}^m  \exp\left( \!\frac{\Delta_\w(\x_i,\y_i,\y_{ij}') - \mu_i}{\tau} \!\right)\Bigg(\frac{-1}{\tau}\Bigg)
  \!\! +1=1- \frac{1}{m} \sum_{j=1}^m  \exp\left( \!\frac{\Delta_\w(\x_i,\y_i,\y_{ij}') - \mu_i}{\tau} \!\right)
  \!\!   .\]
By taking $d_i'(\mu_i)=0$, we obtain the minimizer of $d_i$:
\[\mu_i^*=\tau\log \frac{1}{m} \sum_{j=1}^m  \exp\left( \!\frac{\Delta_\w(\x_i,\y_i,\y_{ij}') }{\tau} \!\right). \]
Plugging it into $d_i$, we have
\[\min_{\mu_i\in \R} \!
    \frac{\tau}{m} \sum_{j=1}^m  \exp\left( \!\frac{\Delta_\w(\x_i,\y_i,\y_{ij}') - \mu_i}{\tau} \!\right)
  \!\! + \mu_i 
-\tau=d_i(\mu_i^*)=\tau\log \frac{1}{m} \sum_{j=1}^m  \exp\left( \!\frac{\Delta_\w(\x_i,\y_i,\y_{ij}') }{\tau} \!\right).\]
Since $s_\w(\x,\y)\in [-B,B]$ for all $\x\in\X,\y\in\Y,\w\in\W$,    $\Delta_\w(\x_i,\y_i,\y_{ij}')=s_\w(\x_i,\y_{ij}')-s_\w(\x_i,\y_i)\in [-2B,2B]$, implying that $\mu_i^*\in [-2B,2B]$. Therefore,
\[\min_{\mu_i\in [-2B,2B]} \!
    \frac{\tau}{m} \sum_{j=1}^m  \exp\left( \!\frac{\Delta_\w(\x_i,\y_i,\y_{ij}') - \mu_i}{\tau} \!\right)
  \!\! +\!\! \mu_i 
-\tau=\tau\log \frac{1}{m} \sum_{j=1}^m  \exp\left( \!\frac{\Delta_\w(\x_i,\y_i,\y_{ij}') }{\tau} \!\right).\]
Taking summation over all $i\in[n]$, we get
\begin{align*}
    \widehat{\mathcal{L}}_{\mathrm{S}}(s_\w) &= \frac{1}{n}\sum_{i=1}^n\tau\log \frac{1}{m} \sum_{j=1}^m  \exp\left( \!\frac{\Delta_\w(\x_i,\y_i,\y_{ij}') }{\tau} \!\right)\\
    &=-\tau  +\!  \!\frac{1}{n} \!\sum_{i=1}^n\! \Bigg[ 
   \! \min_{|\mu_i|\le 2B} \!
    \frac{\tau}{m} \sum_{j=1}^m  \exp\left( \!\frac{\Delta(\x_i,\y_i,\y_{ij}') - \mu_i}{\tau} \!\right)
  \!\! +\!\! \mu_i 
\!\Bigg].
\end{align*}
This completes the proof of Eq. \eqref{eq:hat_CRL_oce}.

To prove Eq. \eqref{eq:CRL_oce}, we define
\[d(\mu)= \tau  \mathbb{E}_{\y'}\!\exp\left( \frac{\Delta_\w(\x,\y, \y') - \mu}{\tau} \right)
    \!+ \!\mu-\tau,\qquad \mu\in\R.\]
Then \[d'(\mu)=-\mathbb{E}_{\y'}\!\exp\left( \frac{\Delta_\w(\x,\y, \y') - \mu}{\tau} \right)+1.\]
    Taking $d'(\mu)=0$ obtains the optimal value $\mu^*=\tau \log\mathbb{E}_{\y'}\!\exp\left( \frac{\Delta_\w(\x,\y, \y') }{\tau} \right)$.
Therefore,
\[\min_{\mu\in \R} \!
    \tau  \mathbb{E}_{\y'}\!\exp\left( \frac{\Delta_\w(\x,\y, \y') - \mu}{\tau} \right)
    \!+ \!\mu=d(\mu^*)=\tau \log\mathbb{E}_{\y'}\!\exp\left( \frac{\Delta_\w(\x,\y, \y') }{\tau} \right)-\tau.\]
By noting that $\Delta_\w(\x,\y, \y')=s_\w(\x,\y')-s_\w(\x,\y)\in [-2B,2B]$, there holds
\[\min_{|\mu|\in [-2B,2B]} \!
    \tau  \mathbb{E}_{\y'}\!\exp\left( \frac{\Delta_\w(\x,\y, \y') - \mu}{\tau} \right)
    \!+ \!\mu=\tau \log\mathbb{E}_{\y'}\!\exp\left( \frac{\Delta_\w(\x,\y, \y') }{\tau} \right)-\tau.\]
Taking expectation w.r.t. $\x,\y$ and combining with Eq. \eqref{eq:SCRL} complete the proof of the lemma.
\end{proof}

\subsection{Proof of Lemma~\ref{lem:inner_crl}}\label{app:inner_crl}
\begin{proof}[Proof of Lemma~\ref{lem:inner_crl}]
By Eqs. \eqref{eq:hat_CRL_oce},~\eqref{eq:CRL_oce}, there holds
\begin{align*}
     &\L_{\mathrm{S}}(s_\w)- \E\widehat{\L}_{\mathrm{S}}(s_\w)\\
        =&\E_{\x,\y} \Bigg[ \min_{\mu\in[-2B,2B]}\tau\E_{\y'} \exp\left(\frac{\Delta_\w(\x,\y,\y')-\mu}{\tau}\right)-\tau+\mu \Bigg]\\
        -&\E_{\{\x_i,\y_i,\y_{ij}'\}}\frac{1}{n}\sum_{i=1}^n\Bigg[ \min_{\mu_{i}\in[-2B,2B]}\frac{\tau}{m}\sum_{j=1}^m\exp\Bigg(\frac{\Delta_\w(\x_i,\y_i,\y'_{ij})-\mu_{i}}{\tau}\Bigg)-\tau+\mu_{i}\Bigg]\\
        =&\E_{\x,\y} \Bigg\{\Bigg[ \min_{\mu\in[-2B,2B]}\tau\E_{\y'} \exp\left(\frac{\Delta_\w(\x,\y,\y')-\mu}{\tau}\right)-\tau+\mu \Bigg]\\
        -&\E_{\{\y_j'\}}\Bigg[ \min_{\mu\in[-2B,2B]}\frac{\tau}{m}\sum_{j=1}^m\exp\Bigg(\frac{\Delta_\w(\x,\y,\y'_{j})-\mu}{\tau}\Bigg)-\tau+\mu\Bigg]\Bigg\}.
\end{align*}

Let $\psi_{\w,\x,\y}(\mu;\y')=\mu+\tau[\exp(\Delta_\w(\x,\y,\y')-\mu)/\tau)-1]$. Since $\Delta_\w(\x,\y,\y')\in [-2B,2B]$, $\psi_{\w,\x,\y}(\mu;\y')$ is $\exp(-4B/\tau)/\tau$-strongly convex and $\exp(4B/\tau)$-Lipschitz w.r.t. $\mu$ over the interval $[-2B,2B]$. Therefore,
   \begin{align*}
        \L_{\mathrm{S}}(\w)- \E\widehat{\L}_{\mathrm{S}}(\w)=\E_{\x,\y}\Bigg[\min_{\mu\in[-2B,2B]} \E_{\y'}\psi_{\w,\x,\y}(\mu;\y')-\E_{\{\y_j'\}}\min_{\mu\in [-2B,2B]} \frac{1}{m}\sum_{j=1}^m \psi_{\w,\x,\y}(\mu;\y_j')\Bigg].
   \end{align*}
   For simplicity, we define $S'=\{\y_1',\cdots,\y_m'\}$, and  
   \[\bar{\mu}_{\w,\x,\y}(S')= \argmin_{\mu\in [-2B,2B]} \frac{1}{m}\sum_{j=1}^m \psi_{\w,\x,\y}(\mu;\y_j').\]
    Then for any $S'$,
\[\min_{\mu\in[-2B,2B]} \E_{\y'}\psi_{\w,\x,\y}(\mu;\y')\le\E_{\y'}\psi_{\w,\x,\y}(\bar{\mu}(\{S'\});\y').\]
Taking expectation for $S'$ yields
\begin{align}\label{eq:prof_stab_crl}
    \min_{\mu\in[-2B,2B]} \E_{\y'}\psi_{\w,\x,\y}(\mu;\y')\le\E_{S'}\E_{\y'}\psi_{\w,\x,\y}(\bar{\mu}(S');\y').
\end{align}
By the $\exp(-4B/\tau)/\tau$-strong convexity of $\psi_{\w,\x,\y}(\mu;\y')$,
we can define a convex function
\[\hat{\psi}_{\w,\x,\y}(\mu;\y')\coloneq\psi_{\w,\x,\y}(\mu;\y')-\frac{\exp(-4B/\tau)}{2\tau}\mu^2.\]
Then 
\[\bar{\mu}_{\w,\x,\y}(S') =\argmin_{\mu\in[-2B,2B]}\frac{1}{m}\sum_{j=1}^m 
{\psi}_{\w,\x,\y}(\mu;\y_j')=\argmin_{\mu\in[-2B,2B]}\bigg\{\frac{1}{m}\sum_{j=1}^m 
\hat{\psi}_{\w,\x,\y}(\mu;\y_j')+\frac{\exp(-4B/\tau)}{2\tau}\mu^2\bigg\},\]
which is the regularized empirical risk minimization algorithm.
According to Theorem 12, 22 in \citet{bousquet2002stability}, for all $\x,\y,\w$, we have 
\begin{align*}
\E_{S'}\Bigg[\E_{\y'}\hat{\psi}_{\w,\x,\y}(\bar{\mu}_{\w,\x,\y}(S');\y')-\frac{1}{m}\sum_{j=1}^m\hat{\psi}_{\w,\x,\y}(\bar{\mu}_{\w,\x,\y}(S');\y_j')\Bigg]\le\frac{2\exp(4B/\tau)\tau}{\exp(-4B/\tau) m}=\frac{2\exp(8B/\tau)\tau}{ m}.
\end{align*}
As a result,
\begin{align*}
    &\E_{S'}\Bigg[\E_{\y'}\psi_{\w,\x,\y}(\bar{\mu}_{\w,\x,\y}(S');\y')-\frac{1}{m}\sum_{j=1}^m\psi_{\w,\x,\y}(\bar{\mu}_{\w,\x,\y}(S');\y_j')\Bigg]\\
    =&\E_{S'}\Bigg[\E_{\y'}\hat{\psi}_{\w,\x,\y}(\bar{\mu}_{\w,\x,\y}(S');\y')+\frac{\exp(-4B/\tau)}{2\tau}(\bar{\mu}_{\w,\x,\y}(S'))^2\\
    -&\frac{1}{m}\sum_{j=1}^m\hat{\psi}_{\w,\x,\y}(\bar{\mu}_{\w,\x,\y}(S');\y_j')-\frac{\exp(-4B/\tau)}{2\tau}(\bar{\mu}_{\w,\x,\y}(S'))^2\Bigg]\\
    =&\E_{S'}\Bigg[\E_{\y'}\hat{\psi}_{\w,\x,\y}(\bar{\mu}_{\w,\x,\y}(S');\y')-\frac{1}{m}\sum_{j=1}^m\hat{\psi}_{\w,\x,\y}(\bar{\mu}_{\w,\x,\y}(S');\y_j')\Bigg]\le\frac{2\exp(8B/\tau)\tau}{ m}.
\end{align*}
 Combined with Eq. \eqref{eq:prof_stab_crl}, for any $\x,\y,\w$, there holds
\begin{align*}
&\min_{\mu\in[-2B,2B]} \E_{\y'}\psi_{\w,\x,\y}(\mu;\y')-\E_{\y_j'}\min_{\mu\in [-2B,2B]} \frac{1}{m}\sum_{j=1}^m \psi_{\w,\x,\y}(\mu;\y_j')\\
    =&\min_{\mu\in[-2B,2B]} \E_{\y'}\psi_{\w,\x,\y}(\mu;\y')-\E_{S'}\frac{1}{m}\sum_{j=1}^m\psi_{\w,\x,\y}(\bar{\mu}_{\w,\x,\y}(S');\y_j')\\
    \le&\E_{S'}\Bigg[\E_\y\psi_{\w,\x,\y}(\bar{\mu}_{\w,\x,\y}(S');\y')-\frac{1}{m}\sum_{j=1}^m\psi_{\w,\x,\y}(\bar{\mu}_{\w,\x,\y}(S');\y_j')\Bigg]\le\frac{2\exp(8B/\tau)\tau}{m}.
\end{align*}
As a result,
\begin{align*}
    \sup_{\w\in\W} \Big[\L(\w)- \E\widehat{\L}_{\mathrm{S}}(\w)\Big]
=&\sup_{\w\in\W}\E_{\x,\y}\Bigg[\min_{\mu\in[-2B,2B]} \E_{\y'}\psi_{\w,\x,\y}(\mu;\y')-\E_{\y_j'}\min_{\mu\in [-2B,2B]} \frac{1}{m}\sum_{j=1}^m \psi_{\w,\x,\y}(\mu;\y_j')\Bigg]\\\le&\frac{2\exp(8B/\tau)\tau}{ m}.
\end{align*}
The proof is completed.
\end{proof}
\begin{remark}
The term $\exp(8B/\tau)$
 arises from standard Lipschitz and strong convexity arguments applied to the exponential function over bounded domains, and is often standard in the generalization theory \cite{hardt2016train}. The exponential dependences also appear in previous theoretical analyses of CRL \cite{wang2020understanding,pmlr-v202-lei23a}.
In practical implementations such as CLIP and SimCLR, the scoring function typically uses cosine similarity, which naturally bounds $B\in [-1,1]$. 
The primary focus of our work is to characterize how $m$
 and $n$
 influence the scaling behavior of generalization, rather than obtaining tight numerical constants. We believe it would be an interesting future problem to achieve tighter bounds and remove these exponential dependencies.
\end{remark}

\subsection{Proof of Lemma~\ref{lem:outer_crl}}\label{app:outer_scrl}
\begin{proof}[Proof of Lemma~\ref{lem:outer_crl}]
    For $\w\in\W$, we introduce $k_\w: \X\times\Y^{m+1}\to \R$ as follows:
\[k_\w(\x,\y,\y'_1,\cdots,\y'_m)=\tau\log\frac{1}{m}\sum_{j=1}^m\exp\Bigg(\frac{\Delta_\w(\x,\y,\y'_{j})}{\tau}\Bigg) .\]
Then we have
\begin{align*}
    &\E\widehat{\L}_{\mathrm{S}}(\w)-\widehat{\L}_{\mathrm{S}}(\w)\\
    =&\E_{\x,\y,\y'_{1},\cdots,\y'_m}\frac{1}{n}\sum_{i=1}^n\Bigg[ \tau\log\frac{1}{m}\sum_{j=1}^m\exp\Bigg(\frac{\Delta_\w(\x,\y,\y'_{j})}{\tau}\Bigg)\Bigg]
    -\frac{1}{n}\sum_{i=1}^n\Bigg[ \tau\log\frac{1}{m}\sum_{j=1}^m\exp\Bigg(\frac{\Delta_\w(\x_i,\y_i,\y'_{ij})}{\tau}\Bigg)\Bigg]\\
    =& \E_{\x,\y,\y'_1,\cdots,\y'_m} \Big[k_\w(\x,\y,\y'_1,\cdots,\y'_m)\Big]-\frac{1}{n}\sum_{i=1}^n k_\w(\x_i,\y_i,\y'_{i1},\cdots,\y'_{im}).
\end{align*}
By Proposition~\ref{prop:gen_rad}, it suffices to bound Rademacher complexity $\fR_S(\K)$, where 
 \begin{align*}
     \mathcal{K}&=\left\{(\x,\y,\y'_1,\cdots,\y'_m)\mapsto k_\w(\x,\y,\y'_1,\cdots,\y'_m):\w\in\W\right\},\\
     S&=\{(\x_1,\y_1,\y'_{11},\cdots,\y'_{1m}),\cdots,(\x_n,\y_n,\y'_{n1},\cdots,\y'_{nm})\}.
 \end{align*}
 Recall that
 \begin{align*}
    &S_1\coloneq\{(\x_i,\y_i,\y_{ij}^-):1\le i\le n,1\le j\le m\},\\
    &\H\coloneq\{(\x,\y,\y')\mapsto \Delta_\w(\x,\y,\y'): \w\in \W\} .
\end{align*}
Let $k_\w,k_{\w'}$ be two functions in $\K$.  By Lemma~\ref{lem:lip_gen}, the logistic loss $\tau\log \frac{1}{m}\sum_{j=1}^m\exp( t_j/\tau)$ is $1$-$\ell_\infty$-Lipschitz, i.e.
\[\Bigg|\tau\log \frac{1}{m}\sum_{j=1}^m \exp(t_j/\tau)-\tau\log \frac{1}{m}\sum_{j=1}^m \exp(t_j'/\tau)\Bigg|\le \max_j |t_j-t_j'|.\]
Therefore,
    \begin{align*}
        \|k_\w-k_{\w'}\|_{L_2(S)}=&\sqrt{\frac{1}{n}\sum_{i=1}^n(k_\w(\x_i,\y_i,\y_{i1}',\cdots,\y_{im}')-k_{\w'}(\x_i,\y_i,\y_{i1}',\cdots,\y_{im}'))^2}\\\le&\max_i|k_\w(\x_i,\y_i,\y_{i1}',\cdots,\y_{im}')-k_{\w'}(\x_i,\y_i,\y_{i1}',\cdots,\y_{im}')|\\
        \le&\max_i\max_j|\Delta_\w(\x_i,\y_i,\y'_{ij})-\Delta_{\w'}(\x_i,\y_i,\y'_{ij})|
       \\
        \le &\max_{(\hat{\x},\hat{\y},\hat{\y}')\in S_1}|\Delta_\w(\hat{\x},\hat{\y},\hat{\y}')-\Delta_{\w'}(\hat{\x},\hat{\y},\hat{\y}')|=\|\Delta_\w-\Delta_{\w'}\|_{L_\infty(S_1))}.
    \end{align*}
   As a result, for any $\e>0$,
    \[\N(\K,\e,L_2(S))\le\N(\H,\e,L_\infty(S_1)).\]
 From Assumption~\ref{ass:bounded_data}, $\Delta_\w(\x,\y,\y')\in [-2B,2B]$ for all $\x\in\X,\y,\y'\in\Y$, then $k_\w\in[-2B,2B]$, implying that
 \[B_{\K}\coloneq\sup_{\w\in\W} \|k_\w\|_{L_2(S)}\le  2B.\]
 According to Proposition~\ref{prop:dudley} and Lemma~\ref{lem:N_S_1}, there holds
    \begin{align*}
 \fR_S(\mathcal{K})\le&\inf_{\alpha>0}4\alpha+\frac{12}{\sqrt{n}}\int_{\alpha}^{B_\K}\sqrt{\log\N(\K,\e,L_2(S))}d\e
      \le\frac{4}{n}+\frac{12}{\sqrt{n}}\int_{\frac{1}{n}}^{ 2B}\sqrt{\log\N(\K,\e,L_2(S))}d\e  \\
 \le&\frac{4}{n}+\frac{12}{\sqrt{n}}\int_{\frac{1}{n}}^{ 2B}\sqrt{\log\N(\H,\e,L_\infty(S_1))}d\e  \le\frac{4}{n}+\frac{12}{\sqrt{n}}\int_{\frac{1}{n}}^{ 2B}\sqrt{\sum_{l=1}^L {d_ld_{l-1}}\log \Bigg(1+ \frac{8L\Pi_{l=1}^L s^2_l}{\e} \Bigg)}d\e \\
 \le&\frac{4}{n}+\frac{12}{\sqrt{n}}\sqrt{\sum_{l=1}^L {d_ld_{l-1}}\log (1+ {8nL\Pi_{l=1}^L s^2_l} )}(2B-{1}/{n})\lesssim\frac{B\sqrt{\log (1+ {8nL\Pi_{l=1}^L s^2_l} )\sum_{l=1}^L d_ld_{l-1}}}{n}.
    \end{align*}

    Using Proposition~\ref{prop:gen_rad}, with probability at least $1-\delta$, we have 
    \begin{align*}
     &\sup_{\w\in\W} \Big[\E \widehat{\L}_{\mathrm{S}}(\w)-\widehat{\L}_{\mathrm{S}}(\w)\Big]\\=
        & \sup_{\w\in\W} \E_{\x,\y,\y'_1,\cdots,\y'_m} \Big[k_\w(\x,\y,\y'_1,\cdots,\y'_m)\Big]-\frac{1}{n}\sum_{i=1}^n k_\w(\x_i,\y_i,\y'_{i1},\cdots,\y'_{im}).\\
        \le& 2\fR_S(\K)+12B\sqrt{\frac{\log(2/\delta)}{2n}}\lesssim \frac{B\sqrt{\log(1/\delta)}+B\sqrt{\log (1+ {8nL\Pi_{l=1}^L s^2_l} )\sum_{l=1}^L d_ld_{l-1}} }{\sqrt{n}}.
    \end{align*}
   The inequality also holds for the other side following a similar manner. The proof is completed.
\end{proof}

%% file: appendix/proof_SSCRL.tex
\section{Proof of Theorem~\ref{thm:gen_SSCRL}}\label{app:SSCRL}
Recall that
\begin{equation*}
\begin{aligned}
|\L_{\mathrm{SS}}(s_\w)-\widehat{\L}_{\mathrm{SS}}(s_\w)|
\le\underbrace{\Bigg|\E_{\x,\y}h_\w(\x,\y)-\frac{1}{n}\sum_{i=1}^n h_\w(\x_i,\y_i)\Bigg|}_{\text{outer error}}+\underbrace{\Bigg|\frac{1}{n}\sum_{i=1}^n
\bigl(h_\w(\x_i,\y_i)-\hat{h}_\w(\x_i,\y_i)\bigl)\Bigg|}_{\text{inner error}},
\end{aligned}
\end{equation*}
where
    $$h_\w(\x,\y)=\tau\log \E_{\y'}\exp \Bigg(\frac{\Delta_\w(\x,\y,\y')}{\tau}\Bigg),\quad
    \hat{h}_\w(\x,\y)=\tau \log \Bigg(\frac{1}{m}\sum_{j=1}^m \exp\Bigg(\frac{\Delta_\w(\x,\y,\y_j')}{\tau}\Bigg) \Bigg).$$
We will control each component respectively.
\subsection{Estimation of the outer error}
 We define the following function class and data space:
\[\H_1=\{(\x,\y)\mapsto h_\w (\x,\y),\w\in\W\},\quad S_4=\{(\x_1,\y_1),\cdots,(\x_n,\y_n)\}.\]
To bound the outer error,
we control the covering number $\log\N(\H_1,\e,L_2(S_4))$.
\begin{lemma}\label{lem:N_1S_4}
Let Assumption~\ref{ass:bounded_data} hold, then for any $\e>0$, we have
   \[\log\N(\H_1,\e,L_2(S_4))\le \sum_{l=1}^L {d_ld_{l-1}}\log \Bigg(1+ \frac{\exp(4B/\tau)8L\Pi_{l=1}^L s^2_l}{\e} \Bigg).\]
\end{lemma}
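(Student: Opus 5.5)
The plan is to reduce the $L_2(S_4)$ covering of $\H_1$ to the uniform covering of the network class furnished by Lemma~\ref{lem:covering_F}. This mirrors the argument of Lemma~\ref{lem:N_S_1}, with one extra Lipschitz step for the population log-sum-exp.

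\textbf{Step 1: Lipschitz property of $h_\w$.} For fixed $(\x,\y)$, I would bound $|h_\w(\x,\y)-h_{\w'}(\x,\y)|$ in terms of $\sup_{\y'}|\Delta_\w(\x,\y,\y')-\Delta_{\w'}(\x,\y,\y')|$. The map $t(\cdot)\mapsto\tau\log\E_{\y'}\exp(t(\y')/\tau)$ is $1$-Lipschitz with respect to the sup norm, since it is monotone and shifts by $c$ when $t$ shifts by $c$. This gives
\[
|h_\w(\x,\y)-h_{\w'}(\x,\y)|\le \sup_{\y'\in\Y}|\Delta_\w(\x,\y,\y')-\Delta_{\w'}(\x,\y,\y')|.
\]
A cruder route would use the mean value theorem with $\Delta\in[-2B,2B]$. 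That route costs a factor of at most $\exp(4B/\tau)$, namely the ratio of the $\exp$ derivative to the value of $\E\exp$. This matches the constant appearing in the lemma, so I would take whichever bound is convenient. The stated bound, with its extra factor $\exp(4B/\tau)$ inside the log, is weaker and therefore implied either way.

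\textbf{Step 2: reduce to the network.} Inequality \eqref{eq:delta<2_norm} from the proof of Lemma~\ref{lem:N_S_1} holds for every triple $(\x,\y,\y')$. It gives
\[
|\Delta_\w-\Delta_{\w'}|\le 4\Pi_{l=1}^L s_l\max_{\v\in\{\x,\y,\y'\}}\|f_\w(\v)-f_{\w'}(\v)\|_2 .
\]
The remaining difficulty is the supremum over $\y'$ ranging over all of $\Y$, not just a sample. Lemma~\ref{lem:covering_F} handles this because its cover is uniform over all $\x\in\X,\y\in\Y$. So for a cover element $\tilde\w\in\C_\varepsilon$, we get $\sup_{\x,\y}|h_\w-h_{\tilde\w}|\le 4\Pi_l s_l\,\varepsilon$, up to the possible factor $\exp(4B/\tau)$.

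\textbf{Step 3: choose the scale.} The $L_2(S_4)$ norm is dominated by the sup norm. I would therefore set $\varepsilon=\e/(4\exp(4B/\tau)\Pi_l s_l)$ and invoke the cardinality bound of Lemma~\ref{lem:covering_F}:
\[
\log|\C_\varepsilon|\le\sum_{l=1}^L d_ld_{l-1}\log\Big(1+\tfrac{8\exp(4B/\tau)L\Pi_{l=1}^L s_l^2}{\e}\Big).
\]
This is exactly the claim.

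The main obstacle is Step 2's need for uniformity in $\y'$. The population expectation over negatives means a data-dependent cover of the kind used in Lemma~\ref{lem:N_S_1} does not suffice. The cover must hold for all inputs, and Lemma~\ref{lem:covering_F} already provides this.
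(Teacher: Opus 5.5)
Your proof is correct and follows the paper's argument. The paper takes exactly your ``cruder route'' in Step~1: it uses $\log t\le t-1$, $\E_{\y'}\exp(\Delta_{\w'}/\tau)\ge\exp(-2B/\tau)$ and the $\exp(2B/\tau)$-Lipschitzness of $\exp$ to get $|h_\w-h_{\w'}|\le\exp(4B/\tau)\E_{\y'}|\Delta_\w-\Delta_{\w'}|$, and then applies the uniform cover of Lemma~\ref{lem:covering_F} with \eqref{eq:delta<2_norm} and $\varepsilon=\e/(4\exp(4B/\tau)\Pi_{l=1}^L s_l)$, just as in your Steps~2--3. Your monotonicity/translation-equivariance bound is a sharper variant of Step~1 (the population analogue of Lemma~\ref{lem:lip_gen}) that would remove the $\exp(4B/\tau)$ factor; the stated bound still follows because $\log(1+a/\e)$ is increasing in $a$.
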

\begin{proof}
Let $h_\w,h_{\w'}$ be two functions in $\H_1$. For $\x\in\X,\y\in\Y$, we aim bound $|h_\w(\x,\y)-h_{\w'}(\x,\y)|$. Without loss of generality, we assume $h_\w(\x,\y)\ge h_{\w'}(\x,\y)$. Then
\begin{align*}
 |h_\w(\x,\y)-h_{\w'}(\x,\y)|&=h_\w(\x,\y)-h_{\w'}(\x,\y)\\
 &=\tau\log \E_{\y'}\exp \Bigg(\frac{\Delta_\w(\x,\y,\y')}{\tau}\Bigg)-\tau\log \E_{\y'}\exp \Bigg(\frac{\Delta_{\w'}(\x,\y,\y')}{\tau}\Bigg)\\
 &=\tau \log \frac{\E_{\y'}\exp(\Delta_{\w}(\x,\y,\y')/\tau)}{\E_{\y'}\exp(\Delta_{\w'}(\x,\y,\y')/\tau)}\\&\le\tau\frac{\E_{\y'}\exp(\Delta_{\w}(\x,\y,\y')/\tau)-\E_{\y'}\exp(\Delta_{\w'}(\x,\y,\y')/\tau)}{\E_{\y'}\exp(\Delta_{\w'}(\x,\y,\y')/\tau)}\\
 &\le\tau\frac{\E_{\y'}\exp(\Delta_{\w}(\x,\y,\y')/\tau)-\E_{\y'}\exp(\Delta_{\w'}(\x,\y,\y')/\tau)}{\exp(-2B/\tau)}\\
 &\le \tau\frac{\E_{\y'}|\exp(\Delta_{\w}(\x,\y,\y')/\tau)-\exp(\Delta_{\w'}(\x,\y,\y')/\tau)|}{\exp(-2B/\tau)}\\
 &\le \tau\frac{\E_{\y'}\exp(2B/\tau)|\Delta_{\w}(\x,\y,\y')/\tau-\Delta_{\w'}(\x,\y,\y')/\tau|}{\exp(-2B/\tau)}\\&=\exp(4B/\tau)\E_{\y'}|\Delta_{\w}(\x,\y,\y')-\Delta_{\w'}(\x,\y,\y')|,
\end{align*}
where the first inequality is due to $\log(t)\le t-1$, the second inequality results from $\Delta_{\w'}(\x,\y,\y')=s_{\w'}(\x,\y')-s_{\w'}(\x,\y)\in [-2B,2B]$, the last inequality is by the $\exp(2B/\tau)$-Lipschitzness of $\exp(t)$ over the interval $[-2B/\tau,2B/\tau]$.

We choose a cover $\C_\varepsilon$ of $\W$ such that Eq. \eqref{eq:covering_w} holds. By Eq.~\eqref{eq:delta<2_norm}, we know that
\begin{align*}
    |h_\w(\x,\y)-h_{\w'}(\x,\y)|\le& \exp(4B/\tau)\E_{\y'}|\Delta_{\w}(\x,\y,\y')-\Delta_{\w'}(\x,\y,\y')|\\
    \le& 4\exp(4B/\tau)\Pi_{l=1}^L s_l\E_{\y'}\max_{\v\in\{\x,\y,\y'\}}\|f_\w(\v)-f_{\w'}(\v)\|_2\\
    \le& 4\exp(4B/\tau)\Pi_{l=1}^L s_l\max_{\v\in \X,\Y}\|f_\w(\v)-f_{\w'}(\v)\|_2\le 4\exp(4B/\tau)\Pi_{l=1}^L s_l\varepsilon.
\end{align*}
As a result,
\[ \|h_\w-h_{\w'}\|_{L_2(S_4)}\le 4\exp(4B/\tau)\Pi_{l=1}^L s_l\varepsilon\]
For $\e>0$, choosing 
\[\varepsilon=\frac{\e}{4\exp(4B/\tau)\Pi_{l=1}^L s_l}.\]
Then by Lemma~\ref{lem:covering_F}, we have
\[\log\N(\H_1,\e,L_2(S_4))\le \log|\C_{\varepsilon}|\le\sum_{l=1}^L {d_ld_{l-1}}\log \Bigg(1+ \frac{2L\Pi_{l=1}^L s_l}{\varepsilon} \Bigg)=\sum_{l=1}^L {d_ld_{l-1}}\log \Bigg(1+ \frac{8\exp(4B/\tau)L\Pi_{l=1}^L s^2_l}{\e} \Bigg).\]
    The proof is completed.
\end{proof}
The outer error can be bounded as below.
\begin{lemma}\label{lem:sscrl_outer}
    Let Assumption~\ref{ass:bounded_data} hold, then with probability at least $1-\delta$, there holds
    \[\Bigg|\E_{\x,\y}h_\w(\x,\y)-\frac{1}{n}\sum_{i=1}^n h_\w(\x_i,\y_i)\Bigg|
    \lesssim\frac{B\sqrt{\sum_{l=1}^L d_ld_{l-1}\log (1+ {8\exp(4B/\tau)nL\Pi_{l=1}^L s^2_l} )}+B\sqrt{\log(1/\delta)}}{\sqrt{n}}.\]
\end{lemma}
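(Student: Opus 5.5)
The bound follows from the standard Rademacher-complexity route applied to the function class $\H_1$ on the sample $S_4$, mirroring the proof of Lemma~\ref{lem:outer_crl}. The key point is that the positive pairs $(\x_i,\y_i)$ are i.i.d. from $p$, and $h_\w$ depends only on $(\x,\y)$ because the inner expectation is over the population negative distribution. There is therefore no dependence issue here: the shared negatives enter only the inner error, not this term.

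First, I establish boundedness of $\H_1$. Since $\Delta_\w(\x,\y,\y')\in[-2B,2B]$ by Assumption~\ref{ass:bounded_data}, we get $\exp(\Delta_\w/\tau)\in[e^{-2B/\tau},e^{2B/\tau}]$. Hence $h_\w(\x,\y)\in[-2B,2B]$ for every $\w$, which gives $B_{\H_1}=\sup_{\w}\|h_\w\|_{L_2(S_4)}\le 2B$ and a range $b-a\le 4B$ for the concentration term in Proposition~\ref{prop:gen_rad}.

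Second, I bound the empirical Rademacher complexity with Dudley's entropy integral (Proposition~\ref{prop:dudley}). I take $\alpha=1/n$ and insert the covering bound of Lemma~\ref{lem:N_1S_4}:
\[
\fR_{S_4}(\H_1)\le \frac{4}{n}+\frac{12}{\sqrt n}\int_{1/n}^{2B}\sqrt{\sum_{l=1}^L d_ld_{l-1}\log\Big(1+\frac{8\exp(4B/\tau)L\Pi_{l=1}^L s_l^2}{\e}\Big)}\,d\e .
\]
On $[1/n,2B]$ the integrand is maximized at $\e=1/n$. This gives
\[
\fR_{S_4}(\H_1)\lesssim \frac{B\sqrt{\sum_l d_ld_{l-1}\log(1+8\exp(4B/\tau)nL\Pi_l s_l^2)}}{\sqrt n},
\]
where the $4/n$ term is absorbed.

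Third, I apply the high-probability part of Proposition~\ref{prop:gen_rad} to $\H_1$ and to $-\H_1$, each at confidence $\delta/2$, and combine them by a union bound. This controls both signs of $\E h_\w-\frac1n\sum_i h_\w(\x_i,\y_i)$ uniformly in $\w$, adding the term $12B\sqrt{\log(4/\delta)/(2n)}\lesssim B\sqrt{\log(1/\delta)/n}$. Together with the complexity bound, this yields the statement.

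None of these steps is a real obstacle, since Lemma~\ref{lem:N_1S_4} has already handled the only nontrivial point: the Lipschitz dependence of $h_\w$ on $\w$ through the population log-expectation. The one thing to check is that the Rademacher complexity of $-\H_1$ equals that of $\H_1$ by symmetry of the Rademacher variables, so the two-sided statement costs nothing beyond a constant.
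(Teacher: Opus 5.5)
Your proposal is correct and follows the paper's proof essentially step for step. Both use the boundedness $h_\w\in[-2B,2B]$, then Dudley's integral (Proposition~\ref{prop:dudley}) with $\alpha=1/n$ combined with the covering bound of Lemma~\ref{lem:N_1S_4}, and then the high-probability part of Proposition~\ref{prop:gen_rad}; your explicit union bound over $\H_1$ and $-\H_1$ simply spells out the paper's remark that a ``similar inequality holds for the other direction.''
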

\begin{proof}
    Since $\Delta_\w(\x,\y,\y')=s_\w(\x,\y')-s_\w(\x,\y)\in [-2B,2B]$, we have $h_\w(\x,\y)\in[-2B,2B]$.
Then by Proposition~\ref{prop:dudley} and Lemma~\ref{lem:N_1S_4}, 
\begin{align*}
    \fR_{S_4}(\H_1)\le& \inf_{\alpha>0} 4\alpha+\frac{12}{\sqrt{n}}\int_\alpha^{2B}\sqrt{\log\N(\H_1,\e,L_2(S_4))}d\e
    \le  \frac{4}{n}+\frac{12}{\sqrt{n}}\int_
    {1/n}^{2B}\sqrt{\log\N(\H_1,\e,L_2(S_4))}d\e\\
\le&\frac{4}{n}+\frac{12}{\sqrt{n}}\int_{1/n}^{2B}\sqrt{
     \sum_{l=1}^L {d_ld_{l-1}}\log \Bigg(1+ \frac{8\exp(4B/\tau)L\Pi_{l=1}^L s^2_l}{\e} \Bigg)}d\e\notag\\
    \le & \frac{4}{n}+\frac{12}{\sqrt{n}}\sqrt{\sum_{l=1}^L {d_ld_{l-1}}\log (1+ {8\exp(4B/\tau)nL\Pi_{l=1}^L s^2_l} )}(2B-{1}/{n})\notag\\
  \lesssim & \frac{B}{\sqrt{n}}\sqrt{\sum_{l=1}^L {d_ld_{l-1}}\log (1+ {8\exp(4B/\tau)nL\Pi_{l=1}^L s^2_l} )}
\end{align*}
Combined with Proposition~\ref{prop:gen_rad}, we have with probability at least $1-\delta$,
\begin{align}\label{eq:ssclr_term_a}
    \E_{\x,\y}h_\w(\x,\y)-\frac{1}{n}\sum_{i=1}^n h_\w(\x_i,\y_i)&\le 2\fR_{S_4}(\H_1)+12B\sqrt{\frac{\log(1/\delta)}{n}}\\
    &\lesssim\frac{B\sqrt{\sum_{l=1}^L d_ld_{l-1}\log (1+ {8\exp(4B/\tau)nL\Pi_{l=1}^L s^2_l} )}+B\sqrt{\log(1/\delta)}}{\sqrt{n}}.
\end{align}
Similar inequality holds for another direction. The proof is completed.
\end{proof}

\subsection{Estimation of the inner error}
For any $\x\in\X,\y\in\Y$,
we define the following function class and dataset:
\begin{equation}\label{eq:G+S'}
    \G_{\x,\y}\coloneq \{\y'\to \Delta_\w(\x,\y,\y'),\w\in\W\},\quad S'=\{\y_1',\cdots,\y_m'\}.
\end{equation}
We bound the covering number $\N(\G_{\x,\y},\e,L_2(S'))$ in the following lemma.
\begin{lemma}\label{lem:N_G_S'}
Let Assumption~\ref{ass:bounded_data} hold, then for any $\x,\y,S'=\{\y_1',\cdots\,\y_m'\}$, there holds
\begin{align*}
    \log \N(\G_{\x,\y},\e,L_2(S'))\le  \sum_{l=1}^L {d_ld_{l-1}}\log \Bigg(1+ \frac{8L\Pi_{l=1}^L s^2_l}{\e} \Bigg).
\end{align*}
\end{lemma}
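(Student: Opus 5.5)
The plan is to reduce the $L_2(S')$ covering number of $\G_{\x,\y}$ to a uniform (sup-norm) cover of the parameter space $\W$, exactly as in the proof of Lemma~\ref{lem:N_S_1}. The first step is to compare $L_2(S')$ with a maximum over the sample. For any $\w,\w'\in\W$,
\[
\|\Delta_\w(\x,\y,\cdot)-\Delta_{\w'}(\x,\y,\cdot)\|_{L_2(S')}\le \max_{j\in[m]}|\Delta_\w(\x,\y,\y_j')-\Delta_{\w'}(\x,\y,\y_j')|,
\]
since an empirical quadratic mean is bounded by the maximum. This gives $\N(\G_{\x,\y},\e,L_2(S'))\le \N(\G_{\x,\y},\e,L_\infty(S'))$.

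Next, I would bound each difference pointwise using Eq.~\eqref{eq:delta<2_norm}, which was derived for an arbitrary triplet $(\x,\y,\y')$:
\[
|\Delta_\w(\x,\y,\y_j')-\Delta_{\w'}(\x,\y,\y_j')|\le 4\Pi_{l=1}^L s_l\max_{\v\in\{\x,\y,\y_j'\}}\|f_\w(\v)-f_{\w'}(\v)\|_2 .
\]
Now set $\varepsilon=\e/(4\Pi_{l=1}^L s_l)$ and take the cover $\C_\varepsilon$ of $\W$ from Lemma~\ref{lem:covering_F}. Eq.~\eqref{eq:covering_w} holds uniformly over all inputs in $\X$ and $\Y$. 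So for every $\w$ there is a $\tilde\w\in\C_\varepsilon$ with $\|f_\w(\v)-f_{\tilde\w}(\v)\|_2\le\varepsilon$ at $\v=\x$, $\v=\y$ and every $\v=\y_j'$. Consequently $\{\Delta_{\tilde\w}(\x,\y,\cdot):\tilde\w\in\C_\varepsilon\}$ is an $\e$-cover of $\G_{\x,\y}$ in $L_\infty(S')$, and hence in $L_2(S')$.

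Finally, Lemma~\ref{lem:covering_F} gives
\[
\log|\C_\varepsilon|\le\sum_{l=1}^L d_ld_{l-1}\log\Bigl(1+\frac{2L\Pi_{l=1}^L s_l}{\varepsilon}\Bigr)=\sum_{l=1}^L d_ld_{l-1}\log\Bigl(1+\frac{8L\Pi_{l=1}^L s_l^2}{\e}\Bigr),
\]
which is the claimed bound. The bound is independent of $\x$, $\y$ and $S'$ because the parameter cover is uniform over the whole input space.

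I do not expect a real obstacle. The only point needing care is that the cover elements are functions $\Delta_{\tilde\w}$ indexed by $\W$, so the cover lies inside $\G_{\x,\y}$ (a proper cover). Also, the fixed anchor $\x$ and positive $\y$ enter the pointwise bound, and they are handled by the uniformity of Eq.~\eqref{eq:covering_w}.
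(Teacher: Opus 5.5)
Your proposal is correct and follows the paper's proof essentially step for step. The paper also bounds the $L_2(S')$ distance by the maximum over $S_3=\{\x,\y,\y_1',\dots,\y_m'\}$ via Eq.~\eqref{eq:delta<2_norm}, then uses the uniform parameter cover of Lemma~\ref{lem:covering_F} at scale $\varepsilon=\e/(4\Pi_{l=1}^L s_l)$ to obtain the stated bound.
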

\begin{proof}
      We define $S_3 = \{\x,\y,\y_1',\cdots,\y_m'\}$. 
    Let $\Delta_\w(\x,\y,\cdot),\Delta_{\w'}(\x,\y,\cdot)$ be two functions in $\G_{\x,\y}$. 
     We choose a cover $\C_\varepsilon$ of $\W$ such that Eq. \eqref{eq:covering_w} holds. According to Eq. \eqref{eq:delta<2_norm}, for any $\x\in\X,\y,\y'\in\Y$, we have  
     \begin{align*}
  |\Delta_\w (\x,\y,\y')-\Delta_{\w'}(\x,\y,\y')|\le 4\Pi_{l=1}^Ls_l\max_{\v\in\{\x,\y,\y'\}}\|f_\w(\v)-f_{\w'}(\v)\|_2\le4\Pi_{l=1}^Ls_l\varepsilon.
    \end{align*}
It implies that
\begin{align*}
    \|\Delta_\w(\x,\y,\cdot)-\Delta_{\w'}(\x,\y,\cdot)\|_{L_2(S')}&=\sqrt{\frac{1}{m}\sum_{j=1}^m (\Delta_\w(\x,\y,\y'_j)-\Delta_{\w'}(\x,\y,\y_j'))^2}\\
    &\le \sqrt{\frac{1}{m}\sum_{j=1}^m \Big(4\Pi_{l=1}^Ls_l\max_{\v\in S_3}\|f_\w(\v)-f_{\w'}(\v)\|_2\Big)^2}\\
    &=4\Pi_{l=1}^Ls_l\max_{\v\in S_3}\|f_\w(\v)-f_{\w'}(\v)\|_2\le4\Pi_{l=1}^Ls_l\varepsilon.
\end{align*}
Letting $\varepsilon=\e/(\le4\Pi_{l=1}^Ls_l)$ yields
\begin{align*}
    \log \N(\G_{\x,\y},\e,L_2(S'))\le \log \N(\F,\e/(4\Pi_{l=1}^Ls_l),L_{\infty,2}(S_3))\le\log |C_\varepsilon|\le   \sum_{l=1}^L {d_ld_{l-1}}\log \Bigg(1+ \frac{8L\Pi_{l=1}^L s^2_l}{\e} \Bigg).
\end{align*}
    The proof is completed.
\end{proof}

The inner error can be estimated in the following lemma.
\begin{lemma}\label{lem:sscrl_inner}
    Let Assumption~\ref{ass:bounded_data} hold, then with probability at least $1-\delta$, we have
    \[ \sup_{\w \in\W} \Bigg|\frac{1}{n}\sum_{i=1}^n (h_\w(\x_i,\y_i)-\hat{h}_\w(\x_i,\y_i))\Bigg|=\widetilde{O}\Bigg(\frac{\exp(4B/\tau)(B\sqrt{\sum_{l=1}^L d_ld_{l-1}}+\tau\sqrt{\log(n/\delta)})}{\sqrt{m}}
\Bigg).\]
\end{lemma}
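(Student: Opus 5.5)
Since the negative set $S'=\{\y_1',\cdots,\y_m'\}$ is independent of the positive pairs $\{(\x_i,\y_i)\}_{i=1}^n$, the plan is to condition on the positive pairs and bound each term $h_\w(\x_i,\y_i)-\hat h_\w(\x_i,\y_i)$ uniformly in $\w$, with probability over $S'$. The average over $i$ is bounded by the maximum over $i$, so it suffices to show that for each fixed $i$, with probability at least $1-\delta/n$ over $S'$,
\[
\sup_{\w\in\W}\big|h_\w(\x_i,\y_i)-\hat h_\w(\x_i,\y_i)\big|\lesssim \frac{\exp(4B/\tau)\big(B\sqrt{\sum_l d_ld_{l-1}\log(1+8mL\Pi_l s_l^2)}+\tau\sqrt{\log(n/\delta)}\big)}{\sqrt m},
\]
and then take a union bound over $i\in[n]$. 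This union bound is what produces the $\log(n/\delta)$ factor.

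\textbf{Step 1: linearization.} Fix $(\x,\y)=(\x_i,\y_i)$. Since $\Delta_\w\in[-2B,2B]$, both $A_\w:=\E_{\y'}\exp(\Delta_\w(\x,\y,\y')/\tau)$ and $\hat A_\w:=\frac1m\sum_j\exp(\Delta_\w(\x,\y,\y_j')/\tau)$ lie in $[e^{-2B/\tau},e^{2B/\tau}]$. On this interval $\tau\log$ is $\tau e^{2B/\tau}$-Lipschitz, so
\[
|h_\w-\hat h_\w|\le \tau e^{2B/\tau}|A_\w-\hat A_\w|.
\]
This reduces the problem to a uniform deviation of an empirical mean over $S'$ for the function class $\{\y'\mapsto \exp(\Delta_\w(\x,\y,\y')/\tau):\w\in\W\}$, whose values lie in $[e^{-2B/\tau},e^{2B/\tau}]$.

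\textbf{Step 2: uniform deviation.} Apply Proposition~\ref{prop:gen_rad} to this class in both directions, with confidence $\delta/n$. The range width is at most $e^{2B/\tau}$, so the concentration term is $e^{2B/\tau}\sqrt{\log(n/\delta)/m}$. Multiplying by $\tau e^{2B/\tau}$ gives the $\tau e^{4B/\tau}\sqrt{\log(n/\delta)}/\sqrt m$ term.

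For the Rademacher term, use Lemma~\ref{lem:contract}: $t\mapsto e^{t/\tau}$ is $e^{2B/\tau}/\tau$-Lipschitz on $[-2B,2B]$. Hence $\fR_{S'}$ of the exponential class is at most $(e^{2B/\tau}/\tau)\,\fR_{S'}(\G_{\x,\y})$, with $\G_{\x,\y}$ as in Eq.~\eqref{eq:G+S'}. After multiplying by $\tau e^{2B/\tau}$, the $\tau$ cancels and the prefactor becomes $e^{4B/\tau}$.

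To bound $\fR_{S'}(\G_{\x,\y})$, use Proposition~\ref{prop:dudley} with $\alpha=1/m$ and $B_{\G}\le 2B$, together with the covering bound of Lemma~\ref{lem:N_G_S'}. Exactly as in the proof of Lemma~\ref{lem:sscrl_outer}, this gives
\[
\fR_{S'}(\G_{\x,\y})\lesssim \frac{B\sqrt{\sum_l d_ld_{l-1}\log(1+8mL\Pi_l s_l^2)}}{\sqrt m}.
\]
The covering bound holds for every fixed $(\x,\y)$, and the data-dependent Rademacher bound is deterministic given $S'$, so conditioning on $(\x_i,\y_i)$ is legitimate.

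\textbf{Step 3: combine.} Sum the two contributions, union bound over $i\in[n]$, and pass from the maximum to the average. This gives the stated $\widetilde O(\cdot)$ bound, with logarithmic factors absorbed.

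The main obstacle is the dependence structure: the same $S'$ appears in all $n$ terms. The argument handles it by applying concentration only over $S'$, conditioned on the positive pairs, and paying a $\log n$ factor through the union bound; it does not attempt to exploit averaging over $i$. A point to check is that Proposition~\ref{prop:gen_rad} is applied with the distribution $p_\Y$ over $\y'$ while $(\x_i,\y_i)$ is held fixed. This is valid because $S'$ is independent of the positive pairs.
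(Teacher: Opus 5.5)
Your proof is correct. Its skeleton matches the paper's: you condition on the positive pairs and bound each $\sup_{\w}|h_\w(\x_i,\y_i)-\hat h_\w(\x_i,\y_i)|$ over $S'$ at confidence $\delta/n$ via Proposition~\ref{prop:gen_rad}. You then use contraction and Dudley's integral with Lemma~\ref{lem:N_G_S'}, and finish with a union bound over $i$.

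The one genuine difference is the first reduction step.

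\textbf{The paper's reduction.} The paper does not linearize $\tau\log$. It uses the variational form from Lemma~\ref{lem:crl_oce}, $h_\w=\min_{|\mu|\le 2B}\{\tau\E_{\y'}e^{(\Delta_\w-\mu)/\tau}+\mu\}-\tau$, and the analogous form for $\hat h_\w$. It bounds the difference of the two minima by the supremum over $(\w,\mu)$ of the deviation of the class $\mathcal{I}_{\x,\y}=\{\y'\mapsto\tau e^{(\Delta_\w-\mu)/\tau}\}$. It then contracts with $t\mapsto\tau e^{t/\tau}$, which is $e^{4B/\tau}$-Lipschitz on $[-4B,4B]$. This leaves the Rademacher complexity of $\{\Delta_\w-\mu\}$ to bound, which produces an extra, harmless $2B/\sqrt m$ term.

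\textbf{Your reduction.} You obtain the same $e^{4B/\tau}$ factor as the product $\tau e^{2B/\tau}\cdot e^{2B/\tau}/\tau$. You need no auxiliary variable $\mu$, and your class has range $[e^{-2B/\tau},e^{2B/\tau}]$ rather than $[0,\tau e^{4B/\tau}]$. For the log-sum-exp loss this is the more elementary argument.

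\textbf{What the paper's route buys.} The min-over-$\mu$ argument never uses the closed form of the minimizer. It therefore carries over essentially verbatim to general OCE disutilities $\phi$ (Lemma~\ref{lem:inner_sscrl_phi_l}), where there is no $\log$ to linearize.

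Your bookkeeping of the union bound is also cleaner than the paper's write-up. Taking confidence $\delta/n$ per index correctly yields $\log(n/\delta)$. The paper's intermediate displays carry $\log(1/\delta)$, and its term $\sqrt{\log(2/\delta)/(2n)}$ should have $m$ in place of $n$.
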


\begin{proof}
We define 
\[\mathcal{I}_{\x,\y}\coloneq\{\y'\to \tau\exp\left( \frac{\Delta_\w(\x,\y, \y') - \mu}{\tau} \right)
   ,\w\in\W,\mu\in[-2B,2B]\}.\]
   Then we have $\frac{\Delta_\w(\x,\y, \y') - \mu}{\tau}\in [-4B/\tau,4B/\tau]$ and $\tau\exp(\cdot/\tau)$ is $\exp(4B/\tau)$-Lipschitz over this interval. For any $\w\in\W,\mu\in[-2B,2B]$, there holds
\[0\le\tau\exp\left( \frac{\Delta_\w(\x,\y, \y') - \mu}{\tau} \right)\le \tau \exp(4B/\tau).\]

Then
 By 
Eqs. \eqref{eq:hat_CRL_oce}, \eqref{eq:CRL_oce} and Proposition~\ref{prop:gen_rad}, we have
\begin{align}
   &\sup_{\w \in\W} |h_\w(\x,\y)-\hat{h}_\w(\x,\y)|\notag\\
   =&\sup_{\w \in\W}\tau\log \E_{\y'}\exp \Bigg(\frac{\Delta_\w(\x,\y,\y')}{\tau}\Bigg)-\tau \log \Bigg(\frac{1}{m}\sum_{j=1}^m \exp\Bigg(\frac{\Delta_\w(\x,\y,\y_j')}{\tau}\Bigg) \Bigg)\notag\\
   =&\sup_{\w \in\W}\Bigg[ \!\!
    \min_{|\mu|\le 2B} \!\!
    \tau  \mathbb{E}_{\y'}\!\exp\left( \frac{\Delta_\w(\x,\y, \y') - \mu}{\tau} \right)
    \!- \!\tau \!+ \!\mu
\Bigg]-\Bigg[ 
   \! \min_{|\mu|\le 2B} \!
    \frac{\tau}{m} \sum_{j=1}^m  \exp\left( \frac{\Delta(\x,\y,\y_{j}') - \mu}{\tau} \right)
    - \tau + \mu
\Bigg]\notag\\
\le&\sup_{\w \in\W,|\mu|\le 2B} \tau  \mathbb{E}_{\y'}\!\exp\left( \frac{\Delta_\w(\x,\y, \y') - \mu}{\tau} \right)-\frac{\tau}{m} \sum_{j=1}^m  \exp\left( \frac{\Delta(\x,\y,\y_{j}') - \mu}{\tau} \right)\notag\\
\le& 2\fR_{S'}(\mathcal{I}_{\x,\y})+3\tau\exp(4B/\tau)\sqrt{\frac{\log(2/\delta)}{2n}}\notag
\end{align}
with probability at least $1-\delta/2$. We define
\[\K_{\x,\y}\coloneq\{\y'\to \Delta_\w (\x,\y,\y')-\mu,\w\in\W,\mu\in[-2B,2B]\}.\]
According to Lemma~\ref{lem:contract}, we have
\begin{align}
    &\sup_{\w \in\W} |h_\w(\x,\y)-\hat{h}_\w(\x,\y)|\notag\\
    \le&2\exp(4B/\tau)\fR_{S'}(\mathcal{K}_{\x,\y})+3\tau\exp(4B/\tau)\sqrt{\frac{\log(2/\delta)}{2n}}\label{eq:h_x,y-hath_x,y}.
\end{align}
Note that
\begin{align*}
    \fR_{S'}(\K_{\x,\y})=&\E_\e\sup_{\w\in\W,\mu\in[-2B,2B]}\frac{1}{m}\sum_{j=1}^m \e_j(\Delta_\w(\x,\y,\y'_j)-\mu)\notag\\
    \le&\E_\e\sup_{\w\in\W}\frac{1}{m}\sum_{j=1}^m \e_j\Delta_\w(\x,\y,\y'_j)+\E_\e\sup_{\mu\in[-2B,2B]}\frac{1}{m}\sum_{j=1}^m \e_j\mu.
\end{align*}
We will control two terms respectively. 
\begin{equation*}
    \E_\e\sup_{\mu\in[-2B,2B]}\frac{1}{m}\sum_{j=1}^m \e_j\mu\le 2B\E_\e \left|\frac{1}{m}\sum_{j=1}^m \e_j \right|\le 2B\sqrt{\E_\e \left|\frac{1}{m}\sum_{j=1}^m \e_j \right|^2}=\frac{2B}{\sqrt{m}}.
\end{equation*}
We apply Proposition~\ref{prop:dudley} and Lemma~\ref{lem:N_G_S'} to obtain
\begin{align}\label{eq:Rad_Delta}
   \E_\e\sup_\w\frac{1}{m}\sum_{j=1}^m \e_j\Delta_\w(\x,\y,\y'_j)\le& \inf_{\alpha>0} 4\alpha+\frac{12}{\sqrt{m}}\int_\alpha^{2B}\sqrt{\log\N(\G_{\x,\y},\e,L_2(S'))}d\e
    \\\le&  \frac{4}{m}+\frac{12}{\sqrt{m}}\int_
    {1/m}^{2B}\sqrt{\log\N(\G_{\x,\y},\e,L_2(S'))}d\e\\
\le&\frac{4}{m}+\frac{12}{\sqrt{m}}\int_{1/m}^{2B}\sqrt{
     \sum_{l=1}^L {d_ld_{l-1}}\log \Bigg(1+ \frac{8L\Pi_{l=1}^L s^2_l}{\e} \Bigg)}d\e\notag\\
    \le & \frac{4}{m}+\frac{12}{\sqrt{m}}\sqrt{\sum_{l=1}^L {d_ld_{l-1}}\log (1+ {8mL\Pi_{l=1}^L s^2_l} )}(2B-{1}/{m})\notag\\
    \lesssim&\frac{B}{\sqrt{m}}\sqrt{\sum_{l=1}^L d_ld_{l-1}\log (1+ {8mL\Pi_{l=1}^L s^2_l} )}.
\end{align}

Therefore,
\[\fR_{S'}(\K_{\x,\y})\lesssim\frac{B}{\sqrt{m}}\sqrt{\sum_{l=1}^L d_ld_{l-1}\log (1+ {8mL\Pi_{l=1}^L s^2_l} )}.\]
Plugging it into Eq.~\eqref{eq:h_x,y-hath_x,y} yields
\begin{align}\label{eq:sscrl_term_b}
    \sup_{\w \in\W} h_\w(\x,\y)-\hat{h}_\w(\x,\y)\lesssim\frac{\exp(4B/\tau)(B\sqrt{\sum_{l=1}^L \log (1+ {8mL\Pi_{l=1}^L s^2_l} )d_ld_{l-1}}+\tau\sqrt{\log(1/\delta)})}{\sqrt{m}}
\end{align}
with probability at least $1-\delta$.
By the independence of $S'$ and $\x_i,\y_i$, with probability at least $1-\delta/n$, the following holds for all $i\in [n]$:
\[\sup_{\w \in\W} h_\w(\x_i,\y_i)-\hat{h}_\w(\x_i,\y_i)\lesssim\frac{\exp(4B/\tau)(B\sqrt{\sum_{l=1}^L \log (1+ {8mL\Pi_{l=1}^L s^2_l} )d_ld_{l-1}}+\tau\sqrt{\log(1/\delta)})}{\sqrt{m}}.\]
Taking summation yields
\[\sup_{\w \in\W} \frac{1}{n}\sum_{i=1}^n (h_\w(\x_i,\y_i)-\hat{h}_\w(\x_i,\y_i))\lesssim\frac{\exp(4B/\tau)(B\sqrt{\sum_{l=1}^L \log (1+ {8mL\Pi_{l=1}^L s^2_l} )d_ld_{l-1}}+\tau\sqrt{\log(1/\delta)})}{\sqrt{m}}.\]
The other side of the inequality holds for the same reason. The proof of the lemma is completed.
\end{proof}

Now we prove Theorem~\ref{thm:gen_SSCRL}.
\begin{proof}[Proof of Theorem~\ref{thm:gen_SSCRL}]
    Plugging Lemma~\ref{lem:outer_crl}, \ref{lem:sscrl_inner} into the error decomposition Eq. \eqref{eq:err_decom_sscrl}, with probability at least $1-\delta$, there holds
\begin{align*}
   \sup_{\w\in\W} |\L_{\mathrm{SS}}(s_\w)-\widehat{\L}_{\mathrm{SS}}(s_\w)|&\le{\Bigg|\E_{\x,\y}h_\w(\x,\y)-\frac{1}{n}\sum_{i=1}^n h_\w(\x_i,\y_i)\Bigg|}+{\Bigg|\frac{1}{n}\sum_{i=1}^n
\bigl(h_\w(\x_i,\y_i)-\hat{h}_\w(\x_i,\y_i)\bigl)\Bigg|}\\
&\lesssim\frac{B\sqrt{\sum_{l=1}^L d_ld_{l-1}\log (1+ {8\exp(4B/\tau)nL\Pi_{l=1}^L s^2_l} )}+B\sqrt{\log(1/\delta)}}{\sqrt{n}}\\
&+\frac{\exp(4B/\tau)(B\sqrt{\sum_{l=1}^L \log (1+ {8mL\Pi_{l=1}^L s^2_l} )d_ld_{l-1}}+\tau\sqrt{\log(1/\delta)})}{\sqrt{m}}.
\end{align*}
The proof is completed.
\end{proof}

%% file: appendix/zero-shot.tex
\subsection{Zero-shot Classification in SSCRL}\label{app:zero-shot}
In this subsection, we study the generalization performance of SSCRL in zero-shot classification task. We follow the framework in \citet{oko2025statisticaltheorycontrastivepretraining}. Given a scoring function $s$, the goal is to predict the label $c$ of $\x$ without training on a task-specific dataset. 
Suppose there are $C$ classes. For each class $c$, the distributions of data on two modalities $\X,\Y$ are denoted by $\D_\X(c),\D_\Y(c)$ respectively. 
we study zero-shot classification performance in a probabilistic modeling regime. Specifically, for a label $c$, we model the Bayes rule $\P(c|\x)$ by
\begin{align}\label{eq:p_hat}
    \P_s(c|\x)=\frac{\E_{\y\sim \D_\Y(c)}p_\Y(\y)\exp(s(\x,\y)/\tau)}{\E_{\y}\exp(s(\x,\y)/\tau)}.
\end{align}
The performance of zero-shot classification is measured by KL-divergence between the Bayes rule and the modeled probability:
\[\mathcal{R}(s)=\tau\E_{\x}D_{\mathrm{KL}}(\P(c|\x)||\P_s(c|\x)).\]
Following \citet{oko2025statisticaltheorycontrastivepretraining}, we assume a conditional independence property of the data distribution.
\begin{assumption}\label{asp:con_id}
    For the joint distribution $(\x,\y,c)$, $\x$ and $c$ are conditionally independent given $\y$. An special case is when $c$ is a deterministic function of $\y$.
\end{assumption}
The following proposition shows that zero-shot classification error can be bounded by the pretraining excess risk.
\begin{proposition}[Proposition~2 in \citet{oko2025statisticaltheorycontrastivepretraining}]
Let Assumption~\ref{asp:con_id} hold, then the error for zero-shot classification can be bounded by the excess risk of CRL.
   \[\mathcal{R}(s)\le \L_{\mathrm{SS}}(s)-\inf_s\L_{\mathrm{SS}}(s).\]
\end{proposition}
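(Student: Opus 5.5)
Under Assumption~\ref{asp:con_id}, the plan is to write both $\mathcal{R}(s)$ and the excess risk $\L_{\mathrm{SS}}(s)-\inf_s\L_{\mathrm{SS}}(s)$ as expected KL divergences and then invoke the data-processing inequality. The starting point is Eq.~\eqref{eq:L=KL} from the proof of Lemma~\ref{lem:minimizer-L}. In the self-supervised setting $p_\x^+=p(\cdot|\x)$ and $p_\x^-=p_\Y$, and the identity gives
\[
\L_{\mathrm{SS}}(s)-\L^*=\tau\,\E_\x D_{\mathrm{KL}}\bigl(p(\cdot|\x)\,\|\,q_\x^+\bigr),
\qquad
q_\x^+(\y)=\frac{p_\Y(\y)\exp(s(\x,\y)/\tau)}{\E_{\y'\sim p_\Y}\exp(s(\x,\y')/\tau)}.
\]
Here $\L^*=\inf_s\L_{\mathrm{SS}}(s)$ by the same lemma. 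So it suffices to show, for each fixed $\x$,
\[
D_{\mathrm{KL}}\bigl(\P(\cdot|\x)\,\|\,\P_s(\cdot|\x)\bigr)\le D_{\mathrm{KL}}\bigl(p(\cdot|\x)\,\|\,q_\x^+\bigr),
\]
and then take $\tau\E_\x$ of both sides.

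To obtain this pointwise inequality, I would introduce the Markov kernel $K(c|\y)=\P(c|\y)$, the label distribution given the $\Y$-view. Pushing a distribution over $\y$ through $K$ gives a distribution over classes. The key identification is that the image of $p(\cdot|\x)$ under $K$ is exactly $\P(c|\x)$. Indeed,
\[
\sum_{\y}p(\y|\x)\P(c|\y)=\sum_{\y}p(\y|\x)\P(c|\y,\x)=\P(c|\x),
\]
where the first equality uses the conditional independence of $\x$ and $c$ given $\y$. This is the only place Assumption~\ref{asp:con_id} is used.

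The other identification is that the image of $q_\x^+$ under $K$ equals $\P_s(c|\x)$ in Eq.~\eqref{eq:p_hat}. This requires reading $\E_{\y\sim\D_\Y(c)}p_\Y(\y)\exp(s/\tau)$ correctly. Writing $\D_\Y(c)(\y)=p_\Y(\y)\P(c|\y)/\P(c)$, the numerator of $\P_s(c|\x)$ should be $\int p_\Y(\y)\P(c|\y)\exp(s(\x,\y)/\tau)\,d\y$. I expect this to be the main obstacle: the normalization in Eq.~\eqref{eq:p_hat}, with a stray $p_\Y$ factor or a missing class prior $\P(c)$, must be interpreted consistently with \citet{oko2025statisticaltheorycontrastivepretraining}. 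I would first fix the reading under which $\P_s(\cdot|\x)$ is a genuine probability, namely $\sum_c$ of the numerator equals the denominator $\E_{\y\sim p_\Y}\exp(s/\tau)$. Under that reading $\P_s(\cdot|\x)=K q_\x^+$ exactly.

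With both identifications in place, the data-processing inequality for KL divergence under the common channel $K$ gives
\[
D_{\mathrm{KL}}(Kp\,\|\,Kq)\le D_{\mathrm{KL}}(p\,\|\,q).
\]
This follows from the log-sum inequality, or from the chain rule applied to the joint law of $(\y,c)$. Multiplying by $\tau$ and averaging over $\x$ gives $\mathcal{R}(s)\le\L_{\mathrm{SS}}(s)-\inf_s\L_{\mathrm{SS}}(s)$. Measurability and finiteness issues are handled by noting that the inequality is trivial when the right-hand side is infinite.
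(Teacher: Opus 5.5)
Your proof is correct. Eq.~\eqref{eq:L=KL} with $p_\x^+=p(\cdot\mid\x)$ and $p_\x^-=p_\Y$ gives the excess risk as $\tau\E_\x D_{\mathrm{KL}}(p(\cdot\mid\x)\,\|\,q_\x^+)$; conditional independence makes $\P(\cdot\mid\x)$ and $\P_s(\cdot\mid\x)$ the images of $p(\cdot\mid\x)$ and $q_\x^+$ under the common kernel $\P(c\mid\y)$; and data processing finishes the argument. The paper gives no proof of its own and imports the result from \citet{oko2025statisticaltheorycontrastivepretraining}, which as far as I recall uses this same KL-identity-plus-data-processing route. Your reading of Eq.~\eqref{eq:p_hat} with numerator $\E_{\y\sim p_\Y}[\P(c\mid\y)\exp(s(\x,\y)/\tau)]=\P(c)\,\E_{\y\sim\D_\Y(c)}\exp(s(\x,\y)/\tau)$ is the right one.
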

We can adapt our generalization bound (Theorem~\ref{thm:gen_SSCRL})  and obtain
\begin{theorem}
     Let Assumptions~\ref{ass:bounded_data},~\ref{asp:con_id} hold. Let $\hat{\w}_1 \in\arg\min_{\w\in\W}\widehat{\L}_{\mathrm{SS}}(\w)$.
        Then with probability $1-\delta$, there holds
        \begin{align*}
            \mathcal{R}(s_{\hat{\w}_1 })\!\le\! \inf_{\w\in\W}\L_{\mathrm{SS}}(s_\w)\!-\! \inf_s\L_{\mathrm{SS}}(s)\!+\!\widetilde{O}\Bigg(\!\frac{1}{\sqrt{n}}+\frac{1}{\sqrt{m}}\!\Bigg).
        \end{align*}
\end{theorem}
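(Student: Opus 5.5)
The plan is to chain the Oko et al.\ proposition with a standard ERM excess-risk decomposition, and then plug in the uniform deviation bound of Theorem~\ref{thm:gen_SSCRL}. Under Assumption~\ref{asp:con_id}, the quoted Proposition gives
\[
\mathcal{R}(s_{\hat{\w}_1})\le \L_{\mathrm{SS}}(s_{\hat{\w}_1})-\inf_s\L_{\mathrm{SS}}(s).
\]
It therefore suffices to bound the excess population contrastive risk of the empirical minimizer $\hat{\w}_1$.

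Fix an arbitrary $\w\in\W$ and write
\[
\L_{\mathrm{SS}}(s_{\hat{\w}_1})-\L_{\mathrm{SS}}(s_\w)=\big[\L_{\mathrm{SS}}(s_{\hat{\w}_1})-\widehat{\L}_{\mathrm{SS}}(s_{\hat{\w}_1})\big]+\big[\widehat{\L}_{\mathrm{SS}}(s_{\hat{\w}_1})-\widehat{\L}_{\mathrm{SS}}(s_\w)\big]+\big[\widehat{\L}_{\mathrm{SS}}(s_\w)-\L_{\mathrm{SS}}(s_\w)\big].
\]
The middle bracket is nonpositive because $\hat{\w}_1$ minimizes $\widehat{\L}_{\mathrm{SS}}$ over $\W$. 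Each outer bracket is at most $\sup_{\w\in\W}|\L_{\mathrm{SS}}(s_\w)-\widehat{\L}_{\mathrm{SS}}(s_\w)|$. Taking the infimum over $\w$ then gives
\[
\L_{\mathrm{SS}}(s_{\hat{\w}_1})-\inf_s\L_{\mathrm{SS}}(s)\le \inf_{\w\in\W}\L_{\mathrm{SS}}(s_\w)-\inf_s\L_{\mathrm{SS}}(s)+2\sup_{\w\in\W}\big|\L_{\mathrm{SS}}(s_\w)-\widehat{\L}_{\mathrm{SS}}(s_\w)\big|.
\]
This is exactly the decomposition displayed in the application paragraph, now specialized to SSCRL. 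Because the supremum is uniform over $\W$, the data-dependence of $\hat{\w}_1$ causes no difficulty.

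Finally, I invoke Theorem~\ref{thm:gen_SSCRL}. With probability at least $1-\delta$, the uniform gap is bounded by terms of order $B\sqrt{\log(1/\delta)/n}$ plus the network-complexity term over $\sqrt{n}$, plus $\exp(4B/\tau)$ times the corresponding complexity and $\tau\sqrt{\log(n/\delta)}$ terms over $\sqrt{m}$. The $\widetilde{O}$ notation absorbs $B$, $\tau$, $\exp(4B/\tau)$, the depth/width factor $\sum_l d_ld_{l-1}$, and all logarithmic factors in $n,m,\delta$ and $\prod_l s_l$. The gap is therefore $\widetilde{O}(1/\sqrt{n}+1/\sqrt{m})$, and substituting this into the chain above yields the claim.

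The only step needing care is ensuring that the single high-probability event of Theorem~\ref{thm:gen_SSCRL} covers both outer brackets. It does, since that event controls the supremum over all of $\W$, which includes both $\hat{\w}_1$ and the comparator $\w$, so no additional union bound is needed. I would also check that the Proposition applies to $s_{\hat{\w}_1}$, which is a measurable scoring function. That is immediate: $s_{\hat{\w}_1}=f_{\hat{\w}_1}(\x)^\top f_{\hat{\w}_1}(\y)$ is continuous, and the Proposition holds for every scoring function $s$.
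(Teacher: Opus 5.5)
Your proposal is correct and follows essentially the route the paper intends: chain the quoted Proposition of Oko et al.\ with the standard ERM decomposition from the application paragraph, which bounds the excess risk by $2\sup_{\w\in\W}|\L_{\mathrm{SS}}(s_\w)-\widehat{\L}_{\mathrm{SS}}(s_\w)|$ plus the approximation error, and then plug in Theorem~\ref{thm:gen_SSCRL}, absorbing $B$, $\tau$, $\exp(4B/\tau)$, the architecture factor and the logarithms into $\widetilde{O}$. Your observation that the single uniform high-probability event covers both $\hat{\w}_1$ and the comparator $\w$ is exactly why no extra union bound is needed.
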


%% file: appendix/OCE.tex
\section{Extension to a general class of contrastive losses}\label{app:extension}
 In Lemma~\ref{lem:crl_oce}, we formulate contrastive loss as minimization problems, which forms the theoretical foundation of our generalization analysis. In fact, it is closely related to the so-called \emph{optimized certainty equivalent} (OCE). Another observation is that
CRL inherently relies on a \emph{pairwise} structure, comparing positive and negative sample pairs via $\Delta_\w(\x,\y,\y')$.
In this section, we unify the OCE framework and general pairwise loss functions to formalize a broad family of CRL losses, where the canonical contrastive loss in Eq. \eqref{eq:con} emerges as a natural special case. Building on this general formulation, we extend our earlier analyses of statistical consistency and generalization bounds to this larger class of losses—thereby enhancing the generality and applicability of our theoretical findings beyond the standard CRL setting.

\paragraph{Optimized certainty equivalent}
We first introduce the OCE framework \citep{ben2007old}, a powerful tool for risk-sensitive analysis.  
\begin{definition}[OCE Risk]\label{def:oce}
Let $\phi : \mathbb{R} \to \mathbb{R} \cup \{+\infty\}$ be a nondecreasing, closed, convex disutility function satisfying $\phi(0) = 0$ and $1 \in \partial \phi(0)$. Suppose $\tau>0$ is a hyperparameter. For a random variable $\z$ and a measurable function $f$, the corresponding OCE risk is defined as
\[
\oce^\phi(f) = \min_{\mu \in \mathbb{R}} \Bigg\{ \tau\mathbb{E}_{\z}\Bigg[\,\phi\Bigg(\frac{f(\z) - \mu\,}{\tau}\Bigg)\Bigg] + \mu \Bigg\}.
\]
This formulation encodes a risk-averse perspective by penalizing large realizations of the loss $f(\z)$
. Given $m$ i.i.d. samples $\z_1, \dots, \z_m$, the empirical OCE risk is
\[
\oce_m^\phi(f) = \min_{\mu \in \mathbb{R}} \bigg\{ \frac{\tau}{m} \sum_{j=1}^m \phi\Bigg(\frac{f(\z_j) - \mu\,}{\tau}\Bigg) + \mu \bigg\}.
\]
\end{definition}
The OCE framework subsumes a broad class of well-known risk measures as special cases, with illustrative examples provided in Table~\ref{table:oce} \citep{lee2020learning}. Notably, Lemma~\ref{lem:crl_oce} establishes that the canonical contrastive loss corresponds to the OCE with the exponential disutility function $\phi(t)=\exp(t)-1$, meaning standard contrastive loss is itself a specific case of the more general OCE risk.
\begin{table*}[t]
\caption{\textbf{Examples of OCE risk}. For $\alpha \in (0,1)$, $\mathrm{VaR}_\alpha(f(\z))$ denotes the $\alpha$-quantile of the distribution of $f(\z)$,$[t]_+=\max\{0,t\}.$}
\label{table:oce}
\centering
\begin{tabular}{lll}
\toprule
Name & $\phi(t)$ & $\oce^\phi(f)$ \\
\midrule
Expected loss & $t$ & $\mathbb{E}[f(\z)]$ \\
Entropy risk & $\exp(t) - 1$ & $\log \mathbb{E}[\exp(f(\z))]$ \\
Mean-variance & $\tfrac{1}{2}t^2 + t (t\ge -1)$ & $\mathbb{E}[f(\z)] + \tfrac{1}{2}\operatorname{Var}(f(\z))$ \\
Conditional Value-at-Risk (CVaR) & $\tfrac{1}{\alpha}[t]_+$ & $\mathbb{E}[\,f(\z) \mid f(\z) \geq \mathrm{VaR}_\alpha(f(\z))\,]$ \\
\bottomrule
\end{tabular}
\end{table*}

A key advantage of framing CRL through the OCE lens lies in its ability to explain the hard negative mining property of contrastive loss \citep{qiu2023not}. In fact,
a key property of the OCE risk is its inherent risk aversion: the property $\phi(x)\ge x$
 implies  $\oce^\phi(f) \geq \mathbb{E}[f(\z)]$ 
 , making it a stronger criterion than the standard expected risk. This risk-averse nature compels CRL to prioritize hard negative examples, i.e., those with large $\Delta_\w$ that are more challenging to distinguish.

\paragraph{Pairwise loss}
Next, we note the pairwise nature of contrastive loss function. The goal of CRL is to discriminate positive data from negative data, i.e.  increasing the score difference $\Delta_\w(\x,\y,\y')$ between positive pairs and negative pairs. This aligns naturally with the pairwise learning paradigm, which employs pairwise losses for discriminative tasks \citep{gao2014consistencyaucpairwiseoptimization}, motivating us to adopt a general pairwise loss formulation.
Specifically, we introduce a general pairwise loss: $\ell(\Delta_\w(\x,\y,\y'))$, where $\ell: \mathbb{R} \to \mathbb{R}$ is a nondecreasing function. Representative instantiations of $\ell$ include $\ell(t)=\exp(t)$, $\ell(t)=\log(\exp(t)+1)$, and $\ell(t)=(\max(0,1+t))^2$.

By integrating the OCE framework with general pairwise loss, we propose the following contrastive loss for a pairwise loss $\ell$ and a general disutility function $\phi$:
\begin{align}\label{eq:L-phi-l}
    \mathcal{L}^{\phi,\ell}(s_\w) = \mathbb{E}_{\x} \mathbb{E}_{\y \sim p_\x^+} \Bigg[ \min_{\mu \in \mathbb{R}} \tau \mathbb{E}_{\y' \sim p_\x^-} \phi\left( \frac{\ell(\Delta_\w(\x,\y,\y')) - \mu}{\tau} \right) + \mu \Bigg].
\end{align}
Notably, the contrastive loss Eq. \eqref{eq:con} is a special case of our general contrastive loss framework when choosing $\ell(t) = t, \phi(t) = \exp(t) - 1$.

Most existing work considers pairwise loss of the form
$$T(s_\w) = \mathbb{E}_{\x} \mathbb{E}_{\y \sim p_\x^+} \mathbb{E}_{\y' \sim p_\x^-} \left[ \ell(\Delta_\w(\x,\y,\y')) \right].$$
However, this standard expected risk has a critical limitation in practical CRL scenarios: the amount of negative examples is much larger than that of positive examples, and it fails to prioritize hard negative examples. However, the OCE-augmented pairwise loss $\mathcal{L}^{\phi,\ell}(s_\w)$ inherits the advantages of both components: it retains the pairwise nature of CRL loss while leveraging the risk-averse property of OCE to focus more on hard negative examples. Compared to the standard expected risk $T(s_\w)$, $\mathcal{L}^{\phi,\ell}(s_\w)$ is more robust in imbalanced sample scenarios. Formally, we have the inequality $\mathcal{L}^{\phi,\ell}(s_\w) \geq T(s_\w)$, which implies that $\mathcal{L}^{\phi,\ell}(s_\w)$ is a stronger risk measure. This strength entails a one-way implication: minimizing $\mathcal{L}^{\phi,\ell}(s_\w)$ guarantees a small $T(s_\w)$, but a small $T(s_\w)$ does not necessarily imply a small $\mathcal{L}^{\phi,\ell}(s_\w)$.

 In the rest of the paper, we assume $\phi$ satisfies Definition~\ref{def:oce}.
\subsection{Problem formulations}
Now we extend the settings in Section~\ref{sec:proformulation} to $ \L^{\phi,\ell}(s_\w)$ and define the statistical consistency.
\subsubsection{Supervised setting}
Similar to Lemma~\ref{lem:crl_oce},
the population risk for SCRL with $\phi,\ell$ is given by
\begin{align}\label{eq:popS_phi_l}
     \L^{\phi,\ell}_{\mathrm{S}}(s_\w)= \E_{\x}\E_{\y\sim p_+(\cdot|\x)} \Bigg[ \min_{\mu\in\R}\tau\E_{\y'\sim p_-(\cdot|\x)}\phi\left(\frac{\ell(\Delta_\w(\x,\y,\y'))-\mu}{\tau}\right)+\mu \Bigg].
 \end{align}
 To estimate it, we draw $n$ i.i.d. anchor points $\x_i \sim p_{\mathcal{X}}$ for $i \in [n]$. For each anchor $\x_i$, we sample one positive example $\y_i \sim p_+(\cdot | \x_i)$ and then  $m$ negative examples $\{\y'_{ij} \sim p_-(\cdot | \x_i): j \in [m]\}$.
The empirical risk is defined by 
\begin{align}
     \widehat{\L}_{\mathrm{S}}^{\phi,\ell}(s_\w)=\frac{1}{n}\sum_{i=1}^n\Bigg[ \min_{\mu_{i}\in \R}\frac{\tau}{m}\sum_{j=1}^m\phi\Bigg(\frac{\ell(\Delta_\w(\x_i,\y_{i},\y'_{ij}))-\mu_{i}}{\tau}\Bigg)+\mu_{i}\Bigg] \label{eq:empS_phi_l}.
 \end{align}
An interesting property of OCE is its relationship with distributional robust optimization (DRO) \citep{ben2007old}.  
Let $\p=(p_1,\cdots,p_m)\in \Delta_m$, where $\Delta_m$ is the simplex satisfying $\Delta_m=\{\p\in\R^m:\sum_{j=1}^mp_j=1,p_j\ge 0\}$. Let $\varphi$ be a proper closed
convex function and has a minimum value zero that is attained at $t=1$. The $\varphi$-divergence is defined as $D_\varphi(\p,\q)=\sum_{j=1}^mq_j\varphi(p_j/q_j)$.
Then we have
\begin{proposition}\label{prop:dro}
      Let $\varphi^*(s)=\max_{t\ge 0} ts-\varphi(t)$ be the convex conjugate of $\varphi$. For $\phi=\varphi^*$, we have
      \begin{equation}\label{eq:dro}
          \widehat{\L}^{\phi,\ell}_{\mathrm{S}}(s_\w)= \frac{1}{n}\sum_{i=1}^n\Bigg[\max_{\p_{i}\in\Delta_m} \sum_{j=1}^mp_{ij}
     \ell(\Delta_\w(\x_i,\y_i,\y'_{ij})
    )-\tau D_\varphi\left(\p_{i},\frac{\mathbf{1}}{m}\right)\Bigg].
      \end{equation}
 \end{proposition}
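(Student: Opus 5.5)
The argument is per-anchor: it suffices to show, for each fixed $i$, that
\[
\min_{\mu_i\in\R}\Bigg\{\frac{\tau}{m}\sum_{j=1}^m\phi\Big(\frac{\ell_{ij}-\mu_i}{\tau}\Big)+\mu_i\Bigg\}
=\max_{\p_i\in\Delta_m}\Bigg\{\sum_{j=1}^m p_{ij}\ell_{ij}-\tau D_\varphi\big(\p_i,\tfrac{\mathbf 1}{m}\big)\Bigg\},
\]
where $\ell_{ij}=\ell(\Delta_\w(\x_i,\y_i,\y'_{ij}))$. Averaging this identity over $i\in[n]$ then gives Eq. \eqref{eq:dro}. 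This is the finite-sample version of the classical OCE/$\varphi$-divergence duality of \citet{ben2007old}, and I would prove it directly by Lagrangian duality.

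First I would write the right-hand side with $q_j=1/m$. Then $\tau D_\varphi(\p,\mathbf 1/m)=\frac{\tau}{m}\sum_j\varphi(mp_j)$. Substituting $t_j=mp_j\ge0$ turns the simplex constraint into $\frac1m\sum_j t_j=1$, so the right-hand side becomes
\[
\max_{t\ge0}\Big\{\tfrac1m\sum_j t_j\ell_{ij}-\tfrac{\tau}{m}\sum_j\varphi(t_j)\Big\}\quad\text{s.t. }\tfrac1m\sum_j t_j=1 .
\]
Next I would dualize the equality constraint with a multiplier $\mu$ to obtain the Lagrangian
\[
\mu+\tfrac1m\sum_j\big[t_j(\ell_{ij}-\mu)-\tau\varphi(t_j)\big].
\]
For each $j$, maximizing over $t_j\ge0$ gives
\[
\tau\max_{t\ge0}\Big\{t\,\tfrac{\ell_{ij}-\mu}{\tau}-\varphi(t)\Big\}=\tau\varphi^*\Big(\tfrac{\ell_{ij}-\mu}{\tau}\Big)=\tau\phi\Big(\tfrac{\ell_{ij}-\mu}{\tau}\Big),
\]
using the definition of $\varphi^*$ restricted to $t\ge0$. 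The dual function is therefore exactly the OCE objective, and weak duality gives $\max\le\min$.

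The main obstacle is strong duality, i.e.\ the absence of a duality gap. The primal is a concave maximization: a linear term minus the convex $\varphi$, subject to one affine equality and nonnegativity. Slater's condition holds because $t=\mathbf 1$ is feasible and $\varphi(1)=0<\infty$, so $\mathbf 1$ lies in the relative interior of $\mathrm{dom}\,\varphi$ intersected with the constraint set, assuming $1\in\mathrm{int}\,\mathrm{dom}\,\varphi$ or using the polyhedral/affine version of Slater. Strong duality then holds and a dual minimizer $\mu$ exists. Properness and closedness of $\varphi$ ensure that $\varphi^*$ is a proper closed convex function, so $\phi=\varphi^*$ is consistent with Definition~\ref{def:oce}. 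In particular, $\varphi(1)=0=\min\varphi$ gives $\phi(0)=0$ and $1\in\partial\phi(0)$, and nondecreasingness follows from restricting to $t\ge0$.

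If the interior condition is delicate, for instance when $\mathrm{dom}\,\varphi=[0,\infty)$ as for KL, I would instead invoke Sion's minimax theorem on the compact convex set $\Delta_m$. The Lagrangian is concave in $\p$ and affine in $\mu$, so min and max can be interchanged. Either way, equality holds for each $i$, and summing over $i$ completes the proof.
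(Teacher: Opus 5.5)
Your proof is correct and follows the paper's argument. The paper likewise dualizes $\sum_j p_j=1$ with a multiplier $\mu$, splits the inner maximization over $p_j\ge 0$ coordinatewise, and identifies each piece as $\tau\varphi^*((z_j-\mu)/\tau)$; it simply cites ``Lagrangian duality theory'' for the absence of a duality gap, which you instead justify via Slater. Your Slater argument already covers the KL case, since $\mathbf{1}$ lies in the interior of $[0,\infty)^m$, so the Sion fallback is unnecessary. As phrased it would not work anyway: once $\sum_j p_j=1$ is dualized, the $\p$-domain is the noncompact orthant, and on $\Delta_m$ itself the multiplier term vanishes.
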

The DRO objective considers a distribution shift over an uncertainty set, thus helping the model to be more robust. The term $\tau D_\varphi\left(\p_{i},\frac{\mathbf{1}}{m}\right)$ controls the deviation from the uniform distribution $\frac{\mathbf{1}}{m}$. When
$\varphi(t)=t\log t-t+1$, $\varphi$-divergence corresponds to KL divergence. In this case, we can formulate Eq. \eqref{eq:emp_con} as DRO.
\begin{equation}
    \widehat{\L}_{\mathrm{S}}(s_\w)= \frac{1}{n}\sum_{i=1}^n\Bigg[\max_{\p_{i}\in\Delta_m} \sum_{j=1}^mp_{ij}
    (\Delta_\w(\x_i,\y_i,\y'_{ij}))-\tau D_{\mathrm{KL}}\left(\p_{i},\frac{\mathbf{1}}{m}\right)\Bigg]\label{eq:CRL_DRO}.
\end{equation}
 \begin{proof}[Proof of Proposition~\ref{prop:dro}]
    We first prove that for any $z_1,\cdots,z_m$, there holds
    \begin{equation}\label{eq:dro_z}
        \max_{\p\in\triangle_m}\sum_{j=1}^m p_jz_j-\tau D_\varphi\left(\p,\frac{\mathbf{1}}{m}\right)= \min_{\mu\in\R} \mu+\frac{\tau}{m}\sum_{j=1}^m \varphi^*((z_j-\mu)/\tau).
    \end{equation}
    By Lagrangian duality theory, we have
    \begin{align*}
        &\max_{\p\in\triangle_m}\sum_{j=1}^m p_jz_j-\tau D_\varphi\left(\p,\frac{\mathbf{1}}{m}\right)= \min_{\mu\in\R}  \max_{p_j\ge 0} \mu(1-\sum_{j=1}^m p_j)+\sum_{j=1}^m p_jz_j-\tau D_\varphi\left(\p,\frac{\mathbf{1}}{m}\right)\\
        =& \min_{\mu\in\R} \max_{p_j\ge 0} \mu(1-\sum_{j=1}^m p_j)+\sum_{j=1}^m p_jz_j-\frac{\tau}{m}\sum_{j=1}^m\varphi(mp_j) \\
        =&\min_{\mu\in\R} \left\{\mu+\max_{p_j\ge 0}\frac{\tau}{m}\sum_{j=1}^m (mp_j(z_j-\mu)/\tau-\varphi(mp_j))\right\}\\
        =&\min_{\mu\in\R} \left\{\mu+\frac{\tau}{m}\sum_{j=1}^m \max_{p_j\ge 0}(mp_j(z_j-\mu)/\tau-\varphi(mp_j))\right\}\\
        =&\min_{\mu\in\R}\mu+\frac{\tau}{m}\sum_{j=1}^m\varphi^*((z_j-\mu)/\tau),
    \end{align*}
    where the last equality is by the definition of $\varphi^*$. Hence, Eq. \eqref{eq:dro_z} holds true. Back to Eq. \eqref{eq:dro}, for $i\in[n]$, let $z_j=\ell(\Delta_\w(\x_i,\y_i,\y'_{ij})
    )$, applying Eq. \eqref{eq:dro_z} implies that
    \begin{align*}
        \max_{\p_{i}\in\Delta_m} \sum_{j=1}^mp_{ij}
    \ell(\Delta_\w(\x_i,\y_i,\y'_{ij}))-\tau D_\varphi\left(\p_{i},\frac{\mathbf{1}}{m}\right)=\min_{\mu_{i}\in \R}\frac{\tau}{m}\sum_{j=1}^m\varphi^*\Bigg(\frac{\ell(\Delta_\w(\x_i,\y_i,\y'_{ij}))-\mu_{i}}{\tau}\Bigg)+\mu_{i}.
    \end{align*}
    Taking summation over $i$ completes the proof.
\end{proof}
The above proposition also helps to connect OCE with other fields, for instance, the general scoring rule analyzed in \citet{ryu2025contrastive}.
\subsubsection{Self-supervised setting}
Given an anchor $\x$, we define the positive and negative conditional distributions as
\begin{equation}
    p_\x^+(\y)=p(\y| \x),p_\x^-(\y)=p_\Y(\y).
\end{equation}
We define the population risk as
\begin{align}
    \L^{\phi,\ell}_{\mathrm{SS}}(s_\w)&= \E_{\x}\E_{\y|\x} \Bigg[ \min_{\mu\in \R}\tau\E_{\y'}\phi\left(\frac{\ell(\Delta_\w(\x,\y,\y'))-\mu}{\tau}\right)+\mu \Bigg].\label{eq:phi-L_SSCRL}
\end{align}
To estimate it from data,
we draw $n$ positive pairs $\{(\x_i, \y_i)\}_{i=1}^n \sim p$. Additionally, we sample $m$ negative examples $\{\y'_j\}_{j=1}^m \overset{\text{i.i.d.}}{\sim} p_{\mathcal{Y}}$. Each anchor $\x_i$ is then contrasted against with all $\y_j',j\in[m]$.
The empirical risk is defined to be
\begin{align}
      \widehat{\L}^{\phi,\ell}_{\mathrm{SS}}(s_\w)&=\frac{1}{n}\sum_{i=1}^n \Bigg[\min_{\mu_{i}}\frac{\tau}{m}\sum_{j=1}^m\phi\Bigg(\frac{\ell(\Delta_\w(\x_i,\y_i,\y'_{j}))-\mu_{i}}{\tau}\Bigg)+\mu_{i}\Bigg]\label{eq:phi-L_sscrl_emp}.
\end{align}

\subsubsection{Statistical consistency}
We will also study the statistical consistency of $\L^{\phi,\ell}$. We denote $\mathcal{L}^{\phi,\ell}_* = \inf_{s} \mathcal{L}^{\phi,\ell}(s)$, where the infimum is taken over all measurable scoring functions $s:\X\times \Y\to\R$. Recall that $\mathcal{E}^*=\sup_s \mathcal{E}(s)$ over all  measurable scoring functions $s:\X\times \Y\to\R$.
\begin{definition}
The contrastive loss $\mathcal{L}^{\phi,\ell}(s)$ is said to be statistically consistent with respect to  $\mathcal{E}(s)$ if, for any sequence of scoring functions $\{s_n\}$, the following holds over all joint distributions on $\mathcal{X} \times \mathcal{Y}$.
\[\text{If}\
\mathcal{L}^{\phi,\ell}(s_n) \to \mathcal{L}^{\phi,\ell}_* \quad \text{then} \quad \mathcal{E}(s_n) \to \mathcal{E}^*.
\]
\end{definition}

\subsection{Useful lemmas about OCE}
Now we introduce an interesting property of the empirical OCE w.r.t. $\ell_\infty$-norm.
\begin{lemma}\label{lem:lip_gen}
    For any $\u=(u_1,\cdots,u_m),\bar{\u}=(\bar{u}_1,\cdots,\bar{u}_m)$, we define
    \[q(\u)=\min_{\mu\in\R} \frac{\tau}{m}\sum_{j=1}^m\phi\Bigg(\frac{u_j-\mu}{\tau}\Bigg)+\mu,\quad q(\bar{\u})=\min_{\mu\in\R} \frac{1}{m}\sum_{j=1}^m\phi\Bigg(\frac{\bar{u}_j-\mu}{\tau}\Bigg)+\mu\]
   then there holds
    \[\left|q(\u)-q(\bar{\u})\right|\le\|\u-\bar{\u}\|_\infty=\max_j|u_j-\bar{u}_j|.\]
\end{lemma}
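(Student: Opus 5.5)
The plan is to derive the $\ell_\infty$-Lipschitz property from two structural facts about $q$: \emph{monotonicity} and \emph{translation equivariance}. This is the standard argument for OCE/certainty-equivalent functionals. I read the second display in the statement as having the same factor $\tau$ as the first, i.e.\ $q(\bar{\u})$ is $q$ evaluated at $\bar{\u}$; the inconsistency is a typo. Throughout I treat the $\min$ as an $\inf$, so attainment is not needed.

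First I check that $q$ is finite. Since $\phi$ is convex with $1\in\partial\phi(0)$ and $\phi(0)=0$, we have $\phi(t)\ge t$ for all $t$. Hence the objective satisfies
\[
\frac{\tau}{m}\sum_{j=1}^m\phi\Big(\frac{u_j-\mu}{\tau}\Big)+\mu\ \ge\ \frac1m\sum_{j=1}^m u_j
\]
for every $\mu$. So $q(\u)\ge \frac1m\sum_j u_j>-\infty$. Taking $\mu=\max_j u_j$ and using that $\phi$ is nondecreasing with $\phi(0)=0$, the objective is at most $\max_j u_j<\infty$.

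\textbf{Monotonicity.} If $u_j\le \bar u_j$ for all $j$, then for each fixed $\mu$ the objective at $\u$ is at most the objective at $\bar{\u}$, because $\phi$ is nondecreasing. Taking infima over $\mu$ gives $q(\u)\le q(\bar{\u})$.

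\textbf{Translation equivariance.} For $c\in\R$, I will show $q(\u+c\mathbf 1)=q(\u)+c$. Substitute $\mu=\nu+c$ in the objective for $\u+c\mathbf 1$. It becomes $\frac{\tau}{m}\sum_j\phi((u_j-\nu)/\tau)+\nu+c$. As $\mu$ ranges over $\R$ so does $\nu$, so the infimum equals $q(\u)+c$.

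To conclude, set $\delta=\|\u-\bar{\u}\|_\infty$. Then $\u\le \bar{\u}+\delta\mathbf 1$ coordinatewise. Monotonicity and equivariance give $q(\u)\le q(\bar{\u})+\delta$. Swapping the roles of $\u$ and $\bar{\u}$ gives $q(\bar{\u})\le q(\u)+\delta$, and together these yield the claim.

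The only point requiring care is the finiteness check, since $\phi$ may take the value $+\infty$ and one must rule out $\infty-\infty$. The bound $\phi(t)\ge t$ handles this, so I do not expect any real obstacle. In particular, the instance used in the proof of Lemma~\ref{lem:outer_crl}, with $\phi(t)=e^t-1$, follows immediately via Lemma~\ref{lem:crl_oce}.
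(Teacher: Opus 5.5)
Your proof is correct, and it takes a genuinely different route from the paper's. Your reading of the $\frac{1}{m}$ in the second display as a typo for $\frac{\tau}{m}$ matches how the paper's own proof treats it.

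\textbf{The paper's route.} It fixes minimizers $\mu_1,\mu_2$ of the two objectives and assumes without loss of generality that $q(\u)\ge q(\bar{\u})$. It then applies the subgradient inequality of the convex $\phi$ at the points $(u_j-\mu_1)/\tau$, and uses the first-order optimality condition at $\mu_1$ to cancel the $\mu_1-\mu_2$ terms. What remains is $\sum_j p_j(u_j-\bar u_j)$ with weights $p_j=\frac{1}{m}\phi'\bigl((u_j-\mu_1)/\tau\bigr)$. These weights are nonnegative because $\phi$ is nondecreasing and sum to one by optimality, so the sum is at most $\|\u-\bar{\u}\|_\infty$. This is an envelope/duality argument: the sensitivity of $q$ to $\u$ is a probability vector, the same reweighting that appears in the DRO form of Proposition~\ref{prop:dro}.

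\textbf{What your route needs instead.} Convexity and $1\in\partial\phi(0)$ enter only through your finiteness check. You need neither attainment of the minimum nor a subgradient selection satisfying the optimality condition, and the paper's proof takes both for granted. Its displays also carry a stray factor $\tau$ in the optimality condition and the subgradient step; this is harmless only because the two errors cancel.

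\textbf{What each buys.} The paper's route gives a weighted, data-dependent estimate $q(\u)-q(\bar{\u})\le\sum_j p_j(u_j-\bar u_j)$, and a hard-negative interpretation of the weights. For the $\ell_\infty$ statement as posed, your proof is the cleaner and more general one.
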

 As a special case, we take $\phi(t)=\exp(t)-1$ and derive that $\tau\log [\sum_{j=1}^m \exp(u_j/\tau)/m]$ is $1$-$\ell_\infty$-Lipschitz. This property plays an important role in controlling the covering numbers.

\begin{proof}[Proof of Lemma~\ref{lem:lip_gen}]
    We assume that
    \[\mu_1\in\argmin_{\mu\in\R} \frac{\tau}{m}\sum_{j=1}^j\phi\Bigg(\frac{u_j-\mu}{\tau}\Bigg)+\mu,\mu_2\in\argmin_{\mu\in\R}  \frac{\tau}{m}\sum_{j=1}^m\phi\Bigg(\frac{\bar{u}_j-\mu}{\tau}\Bigg)+\mu.\]
    By the optimality, we have
    \begin{equation}\label{eq:optimal_v}
        0\in-\frac{\tau}{m}\sum_{j=1}^m\partial\phi\Bigg(\frac{u_j-\mu}{\tau}\Bigg)+1,0\in-\frac{1}{m}\sum_{j=1}^m\partial\phi\Bigg(\frac{\bar{u}_j-\mu}{\tau}\Bigg)+1.
    \end{equation}
    Without loss of generality, we suppose $q(\u)\ge q(\bar{\u})$, then there holds
    \begin{align*}
        |q(\u)-q(\bar{\u})|&=q(\u)-q(\bar{\u})\\
        &=\frac{\tau}{m}\sum_{j=1}^m\phi\Bigg(\frac{u_j-\mu_1}{\tau}\Bigg)+\mu_1-\frac{\tau}{m}\sum_{j=1}^m\phi\Bigg(\frac{\bar{u}_j-\mu_2}{\tau}\Bigg)-\mu_2\\
        &=\frac{\tau}{m}\sum_{j=1}^m\Bigg(\phi\Bigg(\frac{u_j-\mu_1}{\tau}\Bigg)-\phi\Bigg(\frac{\bar{u}_j-\mu_2}{\tau}\Bigg)\Bigg)+\mu_1-\mu_2\\
        &\le \frac{\tau}{m}\sum_{j=1}^m\partial\phi\Bigg(\frac{u_j-\mu_1}{\tau}\Bigg)(u_j-\mu_1-\bar{u}_j+\mu_2)+\mu_1-\mu_2\\
        &=\frac{\tau}{m}\sum_{j=1}^m\partial\phi\Bigg(\frac{u_j-\mu_1}{\tau}\Bigg)(u_j-\bar{u}_j)+(\mu_1-\mu_2)\Bigg(1-\frac{\tau}{m}\sum_{j=1}^m\partial\phi\Bigg(\frac{u_j-\mu_1}{\tau}\Bigg)\Bigg),
    \end{align*}
    where the inequality is due to the convexity of $\phi$. Since $\phi$ is nondecreasing, we have $\partial\phi(t)\ge 0$. Combining Eq.~ \eqref{eq:optimal_v} yields 
    \begin{align*}
              |q(\u)-q(\bar{\u})|&=q(\u)-q(\bar{
              \u})
              \le \frac{\tau}{m}\sum_{j=1}^m\partial\phi\Bigg(\frac{u_j-\mu_1}{\tau}\Bigg)|u_j-\bar{u}_j|\\
              &\le \Bigg(\frac{\tau}{m}\sum_{j=1}^m\partial\phi\Bigg(\frac{u_j-\mu_1}{\tau}\Bigg)\Bigg)\max_j|u_j-\bar{u}_j|=\max_j|u_j-\bar{u}_j|=\|\u-\bar{\u}\|_\infty.
    \end{align*}
    The proof is completed.
\end{proof}
The following lemma indicates that we can constrain the parameter $\mu$ of OCE in a bounded domain.
\begin{lemma}\label{lem:bounded_mu}
    Suppose the function $f$ satisfies $f(\z)\in [a,b]$ for all $\z$. Then for any $\tau$, there holds
\begin{align*}
  \min_{\mu\in\R}\tau\E_\z \phi((f(\z)-\mu)/\tau)+\mu=&\min_{\mu\in [a,b]}\tau\E_\z \phi((f(\z)-\mu)/\tau)+\mu ,\\
 \min_{\mu\in\R} \frac{1}{m}\sum_{j=1}^m \phi((f(\z_i)-\mu)/\tau)+\mu=&\min_{\mu\in [a,b]} \frac{1}{m}\sum_{j=1}^m \phi((f(\z_i)-\mu)/\tau)+\mu.
\end{align*}
\end{lemma}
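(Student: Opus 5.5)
The plan is to show that any minimizer over $\mathbb{R}$ of the convex objective $G(\mu)=\tau\E_\z\phi((f(\z)-\mu)/\tau)+\mu$ (and likewise of its empirical version) may be taken in $[a,b]$. Equivalently, $G$ is nonincreasing on $(-\infty,a]$ and nondecreasing on $[b,\infty)$. Since $G$ is convex in $\mu$, it suffices to examine one-sided derivatives. The right/left derivative is
\[
G'(\mu)=1-\E_\z\,\phi'\bigl((f(\z)-\mu)/\tau\bigr),
\]
with $\phi'$ a subgradient selection.

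The two properties used are monotonicity of subgradients of the convex function $\phi$ and the normalization $1\in\partial\phi(0)$.

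\emph{Case $\mu\ge b$.} Here $(f(\z)-\mu)/\tau\le 0$ for every $\z$. Hence every subgradient of $\phi$ at that point is at most any subgradient at $0$, in particular at most $1$. Combining this with $\partial\phi\ge 0$ (nondecreasing $\phi$), we get $\E\phi'\le 1$, so the right derivative of $G$ is $\ge 0$ on $[b,\infty)$.

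A cleaner way to say this avoids derivatives. Fix $\mu\ge b$. By convexity of $\phi$ and $1\in\partial\phi(0)$, for $t\le t'\le0$ we have $\phi(t')-\phi(t)\le t'-t$ (the slope is at most $1$ on $(-\infty,0]$). Apply this with $t=(f-\mu)/\tau$ and $t'=(f-b)/\tau$; it gives $\tau\phi((f-b)/\tau)-\tau\phi((f-\mu)/\tau)\le \mu-b$. Hence $G(b)\le G(\mu)$.

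\emph{Case $\mu\le a$.} Symmetrically, for $0\le t\le t'$ we have $\phi(t')-\phi(t)\ge t'-t$ (the slope is at least $1$ on $[0,\infty)$, again from $1\in\partial\phi(0)$ and convexity). Take $t=(f-a)/\tau$ and $t'=(f-\mu)/\tau$, both nonnegative. This yields $\tau\phi((f-\mu)/\tau)-\tau\phi((f-a)/\tau)\ge a-\mu$, i.e.\ $G(\mu)\ge G(a)$.

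Therefore $\inf_{\mathbb{R}}G=\inf_{[a,b]}G$. The infimum over the compact interval is attained because $\phi$ is closed convex, and finite on the relevant range presumably. Thus the min over $\mathbb{R}$ exists and equals the min over $[a,b]$.

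The empirical statement follows identically by replacing $\E_\z$ with the empirical average over $\z_1,\dots,\z_m$. The pointwise inequalities hold for each sample and are then averaged. The factor discrepancy in the displayed formula (the missing $\tau$) does not affect the argument, since the pointwise slope bounds scale accordingly; I would treat it as a typo and prove the $\tau$-weighted version.

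The only delicate step is justifying the slope bounds $\phi(t')-\phi(t)\le t'-t$ on $(-\infty,0]$ and $\ge t'-t$ on $[0,\infty)$ when $\phi$ may take the value $+\infty$. The difficulty is for $t<0$ in cases like the mean-variance example, where $\phi$ is finite on the needed region. I would handle this via monotonicity of the subdifferential: for $t\le t'\le 0$, any $g\in\partial\phi(t')$ satisfies $g\le 1$. Then $\phi(t)\ge\phi(t')+g(t-t')\ge \phi(t')-(t'-t)$ where finite, and the case where $\phi(t)=+\infty$ only makes the inequality $G(b)\le G(\mu)$ easier.
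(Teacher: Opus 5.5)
Your proposal is correct and follows the paper's argument. The paper sets $\lambda(\mu)=\tau\E_\z\phi((f(\z)-\mu)/\tau)+\mu$ and proves $\lambda(b+\epsilon)\ge\lambda(b)$ and $\lambda(a-\epsilon)\ge\lambda(a)$ using exactly your slope bounds for $\phi$ on $(-\infty,0]$ and $[0,\infty)$, which come from convexity and $1\in\partial\phi(0)$; it then says the empirical case is analogous. Your remark on attainment of the minimum on $[a,b]$ is a sensible addition, and so is reading the empirical display with the factor $\tau$ — though that fix is not merely cosmetic, since without it the claim fails for $\tau\neq 1$ (e.g.\ $\phi(t)=e^t-1$, all $f(\z_j)=0$, $a=b=0$).
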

\begin{proof}
    We define
    \[\lambda(\mu)=\tau\E_\z \phi((f(\z)-\mu)/\tau)+\mu.\]
    To prove the first inequality, it suffices to prove that $\lambda(b+\epsilon)\ge \lambda(b),\lambda (a-\epsilon)\ge \lambda (a)$ for any $\epsilon>0$.

    To show that $\lambda(b+\epsilon)\ge \lambda(b)$ for $\epsilon>0$. We define $X=f(\z)-b \in [a-b,0]$. Then we have
    \begin{align*}
        \lambda(b+\epsilon)&= \tau\E_\z \phi((f(\z)-b-\epsilon)/\tau)+b+\epsilon=\tau\E_\z \phi((X-\epsilon)/\tau)+b+\epsilon\\
        &\ge \tau\E_\z (\phi(X/\tau)-\epsilon/\tau)+b+\epsilon=\tau \E_\z \phi(X/\tau)+b=\lambda(b),
    \end{align*}
    where the inequality is because $\phi$ is convex having $1$ as a sub gradient at $0$.

    To show that $\lambda (a-\epsilon)\ge \lambda (a)$ for any $\epsilon>0$. We define $Y=f(\z)-a \in [0,b-a]$. Then we have
    \begin{align*}
        \lambda(a-\epsilon)&= \tau\E_\z \phi((f(\z)-a+\epsilon)/\tau)+a-\epsilon=\tau\E_\z \phi((Y+\epsilon)/\tau)+a-\epsilon\\
        &\ge \tau\E_\z (\phi(Y/\tau)+\epsilon/\tau)+a-\epsilon=\tau \E_\z \phi(Y/\tau)+a=\lambda(a),
    \end{align*}
    where the inequality is because $\phi$ is convex having $1$ as a sub gradient at $0$.
    Therefore, the first part in the lemma has been proved. The second part follows similar arguments.
\end{proof}

The following lemma shows that the expectation of empirical OCE risk can be upper bounded by the population OCE risk.
\begin{lemma}\label{lem:inner_ge0}
    Let $\phi$ be a disutility function, $\tau>0$. Let $z$ be a random variable, $\{z_1,\cdots,z_m\}$ be $m$ copies of $z$. Then there holds
    \[\min_{\mu\in\R}\tau\E_z\phi((z-\mu)/\tau)+\mu\ge \E_{z_j}\min_{\mu\in\R} \frac{\tau}{m}\sum_{j=1}^m\phi((z_j-\mu)/\tau)+\mu.\]
\end{lemma}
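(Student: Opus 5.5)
The proof is a one-line ``min of expectation versus expectation of min'' argument, the same comparison used in Eq.~\eqref{eq:prof_stab_crl} in the proof of Lemma~\ref{lem:inner_crl}. Fix an arbitrary $\mu\in\R$. For every realization of $z_1,\dots,z_m$,
\[
\min_{\mu'\in\R}\frac{\tau}{m}\sum_{j=1}^m\phi\big((z_j-\mu')/\tau\big)+\mu' \;\le\; \frac{\tau}{m}\sum_{j=1}^m\phi\big((z_j-\mu)/\tau\big)+\mu .
\]
Take expectations over the i.i.d.\ copies $z_1,\dots,z_m$. Each $z_j$ has the law of $z$, so by linearity the right-hand side becomes $\tau\E_z\phi((z-\mu)/\tau)+\mu$. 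This gives
\[
\E_{z_j}\min_{\mu'\in\R}\frac{\tau}{m}\sum_{j=1}^m\phi\big((z_j-\mu')/\tau\big)+\mu'\;\le\;\tau\E_z\phi((z-\mu)/\tau)+\mu \quad\text{for every }\mu\in\R .
\]
The left-hand side does not depend on $\mu$, so we take the infimum over $\mu$ on the right. That infimum is exactly the population OCE, which yields the claim.

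The only step that needs care is measurability and integrability: the expectation of the empirical minimum must be well defined.

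\textbf{Measurability.} The map $\mu'\mapsto \frac{\tau}{m}\sum_j\phi((z_j-\mu')/\tau)+\mu'$ is convex, so its infimum over $\R$ equals its infimum over $\mathbb{Q}$, at least where $\phi$ is finite. An infimum over a countable family of measurable functions is measurable.

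\textbf{Integrability.} The right-hand side's expectation is assumed finite whenever the population OCE is finite; otherwise the inequality is trivial. The empirical minimum is bounded above by the integrable quantity at any fixed $\mu$. Hence its expectation is well defined in $[-\infty,\infty)$, and the inequality remains meaningful.

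\textbf{Attainment of the minimum.} If the population minimum is not attained, I would phrase the argument with infima throughout; nothing changes. When $z$ is bounded, Lemma~\ref{lem:bounded_mu} shows the minima are attained on $[a,b]$, which covers the cases used in the paper. I do not expect any genuine obstacle beyond these technical remarks.
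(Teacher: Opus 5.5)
Your proposal is correct and follows the paper's proof: the paper plugs a population minimizer $\mu_*$ into the empirical objective and uses $\E_{z_j}\frac{\tau}{m}\sum_{j=1}^m\phi((z_j-\mu_*)/\tau)=\tau\E_z\phi((z-\mu_*)/\tau)$. You do the same for an arbitrary $\mu$ and then take the infimum, which has the small advantage of not needing the population minimum to be attained; your measurability and integrability remarks add care the paper omits without changing the argument.
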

\begin{proof}
    Let $\mu_*\in\argmin_{\mu\in\R}\tau\E_z\phi((z-\mu)/\tau)+\mu$, then
    \begin{align*}
        \E_{z_j}\min_{\mu\in\R} \frac{\tau}{m}\sum_{j=1}^m\phi((z_j-\mu)/\tau)+\mu&\le\E_{z_j} \frac{\tau}{m}\sum_{j=1}^m\phi((z_j-\mu_*)/\tau)+\mu_*\\=&\E_{z} {\tau}\phi((z-\mu_*)/\tau)+\mu_*=\min_{\mu\in\R}\tau\E_z\phi((z-\mu)/\tau)+\mu.
    \end{align*}
    The proof is completed.
\end{proof}

\subsection{Statistical consistency for $\L^{\phi,\ell}(s)$}\label{sec:consistency-OCE}
In this subsection, we analyze the statistical consistency of a general class of contrastive losses $\L^{\phi,\ell}(s)$.
To simplify the discussion and avoid measure-theoretic complications, we assume that both $\X,\Y$ are discrete spaces throughout Section \ref{sec:consistency-OCE}.
\begin{assumption}\label{ass:consistency}
    We assume that $\phi''(t)>0$, $\ell$ is convex, differentiable, non-decreasing and $\ell'(0)>0$.     
\end{assumption}
\begin{remark}
Similar assumptions for $\ell$ have been made in \citet{gao2014consistencyaucpairwiseoptimization}. The condition for $\phi$ is used to analyze the optimal $\mu$ in OCE through implicit function theorem. The above assumption can be satisfied in many cases, for instance, $\phi(t)=\exp(t)-1,\frac{1}{2}t^2+t; \ell(t)=t,\exp(t)$. 
\end{remark}
\begin{theorem}\label{thm:consistency}
    Let Assumption~\ref{ass:consistency} hold. Then $\L^{\phi,\ell}(s)$ is statistically consistent with $ \mathcal{E}(s)$.   
\end{theorem}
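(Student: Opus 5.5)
Since $\X,\Y$ are discrete, I would work pointwise in $\x$ and reduce the problem to a finite-dimensional (or countable) optimization over the vector $(s(\x,\y))_{\y}$, exactly as in the proofs of Lemma~\ref{lem:minimizer-L} and Theorem~\ref{thm:compare_cl}.

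\textbf{Step 1: the inner problem.} Fix $\x$ and a score vector, and write $\eta(\y)=p_\x^+(\y)/p_\x^-(\y)$. For each positive $\y$, the inner minimization over $\mu$ has a unique minimizer $\mu^*(\y)$, because $\phi''>0$ makes the objective strictly convex. By Lemma~\ref{lem:bounded_mu} this minimizer lies in the range of $\ell(\Delta)$, and it is characterized by
\[
\E_{\y'\sim p_\x^-}\phi'\big((\ell(\Delta(\x,\y,\y'))-\mu^*(\y))/\tau\big)=1 .
\]
The implicit function theorem makes $\mu^*$ differentiable in the scores. Envelope arguments then give the partial derivative of the pointwise risk $C_\x(s)=\sum_{\y}p_\x^+(\y)\,[\text{inner value}]$ with respect to $s(\x,\v)$:
\begin{align*}
\frac{\partial C_\x}{\partial s(\x,\v)}
={}& p_\x^-(\v)\sum_{\y}p_\x^+(\y)\,\phi'\big((\ell(s(\x,\v)-s(\x,\y))-\mu^*(\y))/\tau\big)\,\ell'(s(\x,\v)-s(\x,\y))\\
&-p_\x^+(\v)\sum_{\y'}p_\x^-(\y')\,\phi'\big((\ell(s(\x,\y')-s(\x,\v))-\mu^*(\v))/\tau\big)\,\ell'(s(\x,\y')-s(\x,\v)).
\end{align*}

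\textbf{Step 2: Fisher-type ordering.} I would show that any (approximate) minimizer orders the scores consistently with $\eta$. The argument is a swap. Suppose $\eta(\y_1)>\eta(\y_2)$ but $s(\x,\y_1)\le s(\x,\y_2)$. Exchanging the two scores leaves the multiset of scores unchanged and moves the mass the right way. Using convexity and monotonicity of $\ell$, monotonicity of $\phi'$, and $\ell'(0)>0$, this does not increase $C_\x$, and the inequality is strict when $\eta(\y_1)\ne\eta(\y_2)$. To confirm strictness I would check the first-order condition at ties $s(\x,\y_1)=s(\x,\y_2)$: the two derivatives above differ in sign in proportion to $\eta(\y_1)-\eta(\y_2)$, weighted by $\ell'(0)\,\phi'(\cdot)>0$. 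Perturbing to separate them then strictly decreases the risk. So every minimizer satisfies the condition in Lemma~\ref{lem:max_AUC}.

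\textbf{Step 3: from Fisher to statistical consistency.} This is the main obstacle, because infima need not be attained; scores may diverge to $\pm\infty$. I would argue by contradiction, following the classical argument of \citet{zhang2004statistical} and \citet{gao2014consistencyaucpairwiseoptimization}. Define the restricted infimum
\[
H^-(\x,\y_1,\y_2)=\inf\{C_\x(s): s(\x,\y_1)\le s(\x,\y_2)\}\quad\text{for }\eta(\y_1)>\eta(\y_2).
\]
Step 2 together with the swap construction (applied also along minimizing sequences) should give $H^->\inf C_\x$. This gap is the part I expect to require real care with non-compactness. The plan is to show the swap gives a uniform improvement bounded below by a quantity depending only on $\eta(\y_1)-\eta(\y_2)$, $p_\x^\pm$, and $\ell'(0)$, using convexity of $\ell$ to control behavior at infinity.

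Given the gap, let $\mathcal{L}^{\phi,\ell}(s_n)\to\mathcal{L}^{\phi,\ell}_*$. By discreteness and dominated convergence, $C_\x(s_n)\to\inf C_\x$ for each $\x$ with positive mass. Hence for each misordered triple with $\eta(\y_1)\ne\eta(\y_2)$, the indicator $\I[(\eta(\y_1)-\eta(\y_2))(s_n(\x,\y_1)-s_n(\x,\y_2))\le0]$ is eventually $0$. Plugging this into the upper bound \eqref{eq:E_*-E(s)} from the proof of Lemma~\ref{lem:max_AUC} and applying dominated convergence, with the weight $|\eta(\y)-\eta(\y')|$ being $p_\x^-\otimes p_\x^-$-integrable, yields $\mathcal{E}(s_n)\to\mathcal{E}^*$.
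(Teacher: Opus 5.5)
\textbf{Step 2 has a genuine gap.} It rests on the claim that swapping the scores of a misordered pair never increases $C_\x$, and that claim is false.

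Take the log-sum-exp case ($\phi(t)=e^t-1$, $\ell(t)=t$, $\tau=1$), where $C_\x(s)=\log\E_{\y'\sim p_\x^-}e^{s(\x,\y')}-\E_{\y\sim p_\x^+}s(\x,\y)$. Let $\Y=\{\y_1,\y_2\}$, $p_\x^+=(0.3,0.7)$, $p_\x^-=(0.1,0.9)$, so $\eta(\y_1)=3>\eta(\y_2)=7/9$, and let $s(\x,\cdot)=(0,t)$. The swap changes the risk by $\log\frac{0.1e^t+0.9}{0.1+0.9e^t}+0.4t\approx 0.4t-\log 9>0$ for large $t$. Swapping also changes which $p^+$- and $p^-$-weights each score meets, and the two effects need not cooperate.

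Your first-order check covers only ties. Strict misorderings $s(\x,\y_1)<s(\x,\y_2)$ at a stationary point are therefore not excluded.

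The missing idea is already latent in your (correct) Step 1 formula. Stationarity at $\v$ reads $\eta(\v)=a(s(\x,\v))/U(s(\x,\v))$, where
\[
a(t)=\sum_{\y}p_\x^+(\y)\,\phi'\big((\ell(t-s(\x,\y))-\mu^*(\y))/\tau\big)\,\ell'(t-s(\x,\y)).
\]
Here $a$ is trivially nondecreasing in $t$, since the $\mu^*(\y)$ are fixed numbers. $U(t)$ is your second sum with $s(\x,\v)$ replaced by $t$ and $\mu^*(\v)$ replaced by the inner minimizer $\mu(t)$ for anchor score $t$.

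That $U$ is nonincreasing is \emph{not} trivial, because $\mu(t)$ moves with $t$. This is exactly Lemma~\ref{lem:ut}. It is proved by a two-sided convexity sandwich that uses the normalization $\E_{\y'\sim p_\x^-}\psi'(\ell(h(\y')-t)-\mu(t))=1$, with $\psi=\tau\phi(\cdot/\tau)$. With it, $s(\x,\y_1)\le s(\x,\y_2)$ forces $\eta(\y_1)\le\eta(\y_2)$ at any stationary point, in both the strict and the tied case. The paper's $A+B+C+D=0$ computation in Lemma~\ref{lem:pAUC}, whose key term is $B=U(h_*(\y_2))-U(h_*(\y_1))\le 0$, is precisely this comparison.

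\textbf{Step 3 has a second gap, which you acknowledge.} The strict inequality $H^->\inf C_\x$ is the whole content of upgrading Fisher consistency to statistical consistency. The mechanism you propose for it, a uniform improvement produced by the swap, inherits the failure above.

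For comparison, the paper gets its gap (Lemma~\ref{lem:pAUC}) by contradiction. It asserts that $\inf_{s\notin\B}\L^{\phi,\ell}(s)=\L^{\phi,\ell}_*$ yields an optimal $s_*\notin\B$, then runs the stationarity argument on $s_*$. That is, it presumes attainment, which is exactly the non-compactness issue you raise.

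Your pointwise-in-$\x$ route, with dominated convergence over triples through the upper bound in \eqref{eq:E_*-E(s)}, is in fact sounder than the paper's single global $\delta$. That $\delta$ can be zero when $\X$ has infinitely many atoms, since misordering one pair at a single $\x$ costs only $p(\x)$ times a fixed amount.

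You still need an actual proof of the per-triple gap. In particular, you must control minimizing sequences on the closed set $\{s(\x,\y_1)\le s(\x,\y_2)\}$ that escape to infinity, beyond the trivial shift invariance of $C_\x$. Only a limiting stationary point can be fed into the monotonicity argument above.
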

\begin{remark}
    By taking $\phi(t)=\exp(t)-1,\ell(t)=t$, the above theorem shows that standard contrastive loss $\L(s)$ is statistically consistent, aligning with the result in Theorem~\ref{thm:CRL_consistency}. However, the proof there depends on the explicit formulation of optimal scoring functions. While the analysis here does not require such knowledge. Furthermore,
our theorem demonstrates that there exists a wide class of contrastive losses with different $\phi,\ell$ that performs well in downstream tasks, except for log-sum-exp based risk.
\end{remark}

Compared with the pairwise loss studied in \citet{gao2014consistencyaucpairwiseoptimization}, the risk $\L^{\phi,\ell}(s)$ admits a more complicated OCE structure, which contains a convex function $\phi$ and a minimization problem. The proof of the previous work relies on the monotonicity of $\ell'(s(\x,\y')- s(\x,\y))$ w.r.t. $\y$ for each $\x,\y'$. However, this property no longer holds in OCE. To address this issue, we show that the expectation of compositional derivative is decreasing.
\begin{lemma}\label{lem:ut}
    Let Assumption~\ref{ass:consistency} hold. $\psi(t)=\tau\phi(t/\tau)$. $h$ is an arbitrary function. We further define
    \begin{align*}
       \mu(t)&=\argmin_\mu \mu+\E_{\y\sim p_\x^-}\psi(\ell(h(\y)-t)-\mu),\\
       U(t)&=\E_{\y\sim p_\x^-}\psi'(\ell(h(\y)-t)-\mu(t))\ell'(h(\y)-t). 
    \end{align*}
    Then $U(t)$ is a decreasing function of $t$.
\end{lemma}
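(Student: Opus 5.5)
The approach is to write $\mu(t)$ implicitly through its first-order condition, differentiate $U$, and read off the sign of $U'(t)$ from a Cauchy--Schwarz inequality. Fix $\x$ and abbreviate $a(\y,t)=\ell(h(\y)-t)$ and $\E=\E_{\y\sim p_\x^-}$. Since $\psi(t)=\tau\phi(t/\tau)$, we have $\psi'(u)=\phi'(u/\tau)$ and $\psi''(u)=\phi''(u/\tau)/\tau>0$ by Assumption~\ref{ass:consistency}. The objective $\mu+\E\psi(a(\y,t)-\mu)$ is therefore strictly convex in $\mu$, and its minimizer $\mu(t)$ is characterized by
\[
G(\mu,t):=1-\E\,\psi'\big(a(\y,t)-\mu\big)=0 .
\]
We have $\partial_\mu G=\E\psi''(a-\mu)>0$, so the implicit function theorem shows that $\mu(t)$ is differentiable (on discrete spaces the expectations are finite sums, so interchanging derivative and expectation is harmless). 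Differentiating gives
\[
\mu'(t)=\frac{\E\,\psi''(a-\mu)\,\partial_t a}{\E\,\psi''(a-\mu)},\qquad \partial_t a=-\ell'(h(\y)-t).
\]
Existence of the minimizer needs a short remark: $\psi'\ge0$ is increasing with $\psi'(0)=1$. On the relevant range one can argue as in Lemma~\ref{lem:bounded_mu}, or simply assume $\mu(t)$ is well defined, as the statement implicitly does.

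\textbf{Differentiating $U$.} Write $w=\psi''(a-\mu(t))>0$, $g=\ell'(h(\y)-t)\ge0$, and $g'=\ell''(h(\y)-t)\ge0$, using the convexity of $\ell$; I will assume $\ell$ is twice differentiable, or else work with monotone difference quotients. Then
\begin{align*}
U'(t)&=\E\big[w\,(\partial_t a-\mu'(t))\,g\big]-\E\big[\psi'(a-\mu)\,g'\big]\\
&=-\E[w g^2]+\frac{(\E[w g])^2}{\E[w]}-\E\big[\psi'(a-\mu)\,g'\big].
\end{align*}
Here I substituted $\partial_t a=-g$ and $\mu'(t)=-\E[wg]/\E[w]$.

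\textbf{Sign of each term.} The last term is nonpositive, because $\psi'=\phi'(\cdot/\tau)\ge0$ ($\phi$ is nondecreasing) and $g'\ge0$. By Cauchy--Schwarz with respect to the weights $w$,
\[
(\E[wg])^2\le \E[w]\,\E[wg^2],
\]
so the first two terms together are also $\le 0$. Hence $U'(t)\le0$, and $U$ is decreasing, which is the claimed monotonicity.

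\textbf{Main obstacle.} The delicate step is the differentiation: justifying the implicit function theorem for $\mu(t)$ and pushing derivatives through the expectation. This is exactly where $\phi''>0$ is used. If strict decrease is needed, I would get it from equality analysis: equality in Cauchy--Schwarz forces $g$ to be constant $p_\x^-$-a.s. and also forces $\E[\psi' g']=0$. A nonsmooth $\ell$ would be handled by approximation.
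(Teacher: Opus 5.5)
Your proof is correct when $\ell$ is twice differentiable, and it takes a genuinely different route from the paper's.

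\textbf{The paper's route.} The paper never differentiates $\mu(t)$ or $\ell'$. For $t\ge s$ it chains tangent-line inequalities for the convex $\ell$ (multiplied by $\psi'\ge 0$) and for the convex $\psi$. It uses the optimality conditions $\E_{\y\sim p_\x^-}\psi'(\ell(h(\y)-t)-\mu(t))=1$ at $t$ and at $s$ to absorb the $\mu$-terms, and obtains
\[
(t-s)\,U(t)\;\le\;F(s)-F(t)\;\le\;(t-s)\,U(s),\qquad F(t)=\min_{\mu}\Big\{\mu+\E_{\y\sim p_\x^-}\psi\big(\ell(h(\y)-t)-\mu\big)\Big\}.
\]
In effect, $-U$ is the derivative of a convex value function, so monotonicity is automatic. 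This needs no implicit function theorem, no interchange of derivative and expectation, and no $\ell''$. The condition $\phi''>0$ enters only to make $\mu(t)$ unique, so that $U$ is well defined.

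\textbf{What your route buys.} Your computation gives the explicit formula $U'(t)=-\E[w]\,\mathrm{Var}_q(g)-\E[\psi' g']$ with $q\propto w\,p_\x^-$. This isolates the two mechanisms, a reweighted variance of $\ell'$ and the curvature of $\ell$, and shows exactly when $U'$ vanishes.

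\textbf{The regularity cost.} Assumption~\ref{ass:consistency} only makes $\ell$ convex and differentiable, so $g'=\ell''$ need not exist. Your approximation remark therefore has to be carried out, not just mentioned. It does go through:
\begin{enumerate}
\item Mollification preserves convexity and monotonicity of $\ell$.
\item $\ell_\epsilon'\to\ell'$ locally uniformly, because a differentiable convex function on $\R$ is $C^1$.
\item $\mu_\epsilon(t)\to\mu(t)$, by strict monotonicity in $\mu$ of the first-order condition, with $\mu_\epsilon(t)$ confined to a bounded interval as in Lemma~\ref{lem:bounded_mu}.
\item A pointwise limit of nonincreasing functions is nonincreasing.
\end{enumerate}
The paper's first-order sandwich avoids this step entirely.
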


\begin{proof}
We first show that the optimal $\mu$ is unique for each $t$. Suppose
\[\mu_1,\mu_2\in \argmin_\mu \mu+\E_{\y\sim p_\x^-}\psi(\ell(h(\y)-t)-\mu), \quad \mu_1>\mu_2.\]
Then for any $\y$, $\ell(h(\y)-t)-\mu_1<\ell(h(\y)-t)-\mu_2$. Since $\psi$ is a strictly convex function, we have
\[\psi'(\ell(h(\y)-t)-\mu_1)-\psi'(\ell(h(\y)-t)-\mu_2)<0.\]
By the optimality condition, we have $\E_{\y\sim p_\x^-}\psi'(\ell(h(\y)-t)-\mu_i)=1,i=1,2$. Then there holds
\begin{align*}
    0=&\E_{\y\sim p_\x^-}\psi'(\ell(h(\y)-t)-\mu_1)-\E_{\y\sim p_\x^-}\psi'(\ell(h(\y)-t)-\mu_2)\\
    =&\E_{\y\sim p_\x^-}(\psi'(\ell(h(\y)-t)-\mu_1)-\psi'(\ell(h(\y)-t)-\mu_2))<0.
\end{align*}
This contradiction implies that for each $t$, the optimal $\mu$ is unique, which we denote by $\mu(t)$. Hence,
    \begin{equation}\label{eq:mu_t}
        \E_{\y\sim p_\x^-}\psi'(\ell(h(\y)-t)-\mu(t))=1
    \end{equation}
    for any $t$. For $t\ge s$, we prove that $U(t)\le U(s)$.
    \begin{align*}
        &(t-s)U(t)\\
        =&\E_{\y\sim p_\x^-}\psi'(\ell(h(\y)-t)-\mu(t))\ell'(h(\y)-t)(t-s)\\
        \le& \E_{\y\sim p_\x^-}\psi'(\ell(h(\y)-t)-\mu(t))(\ell(h(\y)-s)-\ell(h(\y)-t))\\
        =&\E_{\y\sim p_\x^-}\psi'(\ell(h(\y)-t)-\mu(t))(\ell(h(\y)-s)-\mu(s))+\mu(s)\\
        -&[\E_{\y\sim p_\x^-}\psi'(\ell(h(\y)-t)-\mu(t))(\ell(h(\y)-t)-\mu(t))+\mu(t)]\\
        =&\E_{\y\sim p_\x^-}\psi'(\ell(h(\y)-t)-\mu(t))[(\ell(h(\y)-s)-\mu(s))-(\ell(h(\y)-t)-\mu(t))]\\+&\mu(s)-\mu(t)\\
        \le& \E_{\y\sim p_\x^-}(\psi(\ell(h(\y)-s)-\mu(s))-\psi(\ell(h(\y)-t)-\mu(t)))+\mu(s)-\mu(t),
    \end{align*}
    where the second equality is due to Eq. \eqref{eq:mu_t}, two inequalities result from the convexity of $\ell,\psi$. Applying similar methods, we have
    \begin{align*}
        &(t-s)U(t)\\\le& \E_{\y\sim p_\x^-}(\psi(\ell(h(\y)-s)-\mu(s))-\psi(\ell(h(\y)-t)-\mu(t)))+\mu(s)-\mu(t)\\
        \le&\E_{\y\sim p_\x^-}\psi'(\ell(h(\y)-s)-\mu(s))[(\ell(h(\y)-s)-\mu(s))-(\ell(h(\y)-t)-\mu(t))]\\+&\mu(s)-\mu(t)\\
        =&\E_{\y\sim p_\x^-}\psi'(\ell(h(\y)-s)-\mu(s))(\ell(h(\y)-s)-\mu(s))+\mu(s)\\
        -&[\E_{\y\sim p_\x^-}\psi'(\ell(h(\y)-s)-\mu(s))(\ell(h(\y)-t)-\mu(t))+\mu(t)]\\
        =&\E_{\y\sim p_\x^-}\psi'(\ell(h(\y)-s)-\mu(s))(\ell(h(\y)-s)-\ell(h(\y)-t))\\
        \le&\E_{\y\sim p_\x^-}\psi'(\ell(h(\y)-s)-\mu(s))\ell'(h(\y)-s)(t-s)=U(s)(t-s).
    \end{align*}
    Since $t\ge s$, we have $U(t)\le U(s)$.
    The proof is completed.
\end{proof}
For a function $h:\Y\to \R$, an anchor point $\x$, disutility function $\phi$ and pairwise loss $\ell$,
we define 
\begin{align}\label{eq:Q_x}
    Q_{\x}^{\phi,\ell}(h)= \E_{\y\sim p_{\x}^+} \Bigg[ \min_{\mu\in\R}\tau\E_{\y'\sim p_{\x}^-} \phi\left(\frac{\ell(h(\y')-h(\y))-\mu}{\tau}\right)+\mu \Bigg].
\end{align}
Then $\L^{\phi,\ell}(s)=\E_\x Q_\x^{\phi,\ell}(s(\x,\cdot))$. Therefore, $s_*$ is the minimum of $\L^{\phi,\ell}(s)$ if and only if $s_*(\x,\cdot)$ is the minimum of $Q^{\phi,\ell}_\x$ for all $\x$.
Let $\B$ be the set of all scoring functions that maximize $\mathcal{E}(s)$. According to Lemma~\ref{lem:max_AUC}, we have
\[\B=\Bigg\{s:
\Bigg(\frac{p_{\x}^{+}(\y)}{p_{\x}^{-}(\y)}
>
\frac{p_{\x}^{+}(\y')}{p_{\x}^{-}(\y')}\Bigg)(s(\x,\y)-s(\x,\y'))>0,
\quad\text{if}\,\frac{p_{\x}^{+}(\y)}{p_{\x}^{-}(\y)}
\neq
\frac{p_{\x}^{+}(\y')}{p_{\x}^{-}(\y')}
 \Bigg\}.
\]
To prove Theorem~\ref{thm:consistency}, we present the following lemma.
\begin{lemma}\label{lem:pAUC}
   Let Assumption~\ref{ass:consistency} hold, then we have
     \[\inf_{s\notin\B}\L^{\phi,\ell}(s)>\inf_s \L^{\phi,\ell}(s).\]
\end{lemma}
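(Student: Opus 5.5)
Since $\X,\Y$ are discrete, $\L^{\phi,\ell}(s)=\E_\x Q_\x^{\phi,\ell}(s(\x,\cdot))$ splits into independent per-anchor problems. It therefore suffices to work with a fixed anchor $\x$ of positive mass. For such an $\x$, we will show that every $h:\Y\to\R$ which misorders two points with distinct likelihood ratios can be strictly improved. Moreover, the improvement can be made uniform over the violating functions, which gives the strict gap $\inf_{s\notin\B}\L^{\phi,\ell}(s)>\inf_s\L^{\phi,\ell}(s)$.

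\textbf{First-order conditions.} Fix $\x$ and write $\psi(t)=\tau\phi(t/\tau)$. For each $\y$, let $\mu_\y$ be the (unique, by the argument of Lemma~\ref{lem:ut}) inner minimizer associated with $h(\y)$. By the envelope theorem, the partial derivative of $Q_\x^{\phi,\ell}(h)$ with respect to $h(\y_0)$ has two parts. The first is $-p_\x^+(\y_0)\,U_{\y_0}$, with $U$ as in Lemma~\ref{lem:ut} taking $t=h(\y_0)$. The second is $p_\x^-(\y_0)\,V(\y_0)$, where
\[
V(\y_0)=\E_{\y\sim p_\x^+}\psi'\bigl(\ell(h(\y_0)-h(\y))-\mu_\y\bigr)\ell'\bigl(h(\y_0)-h(\y)\bigr),
\]
which is nondecreasing in $h(\y_0)$ because $\psi,\ell$ are convex and nondecreasing. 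At a stationary point we therefore get
\[
\frac{p_\x^+(\y_0)}{p_\x^-(\y_0)}=\frac{V(\y_0)}{U(h(\y_0))}.
\]
By Lemma~\ref{lem:ut}, $U$ is decreasing in $t=h(\y_0)$, and $V$ is increasing in $h(\y_0)$. Hence the right-hand side is a nondecreasing function of the score. Using $\ell'(0)>0$ and $\phi''>0$, I expect to upgrade this to strict monotonicity along the pair in question. Consequently, at any minimizer, $p^+/p^-(\y)>p^+/p^-(\y')$ forces $h(\y)>h(\y')$, so all minimizers lie in $\B$.

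\textbf{Exchange argument and strict gap.} Next I need $\inf_{s\notin\B}$ strictly above the infimum, not merely attained minimizers lying in $\B$. My plan is a swap argument. Take $h$ with $r(\y)>r(\y')$ but $h(\y)\le h(\y')$, where $r=p^+/p^-$. Compare $h$ with $\tilde h$ obtained by exchanging (suitably reweighting) the values at $\y$ and $\y'$, and show that $Q_\x(\tilde h)<Q_\x(h)$ by a margin bounded below by a positive constant times $r(\y)-r(\y')$.

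Because the loss is invariant under shifts $h\mapsto h+c$, minimizing sequences can escape to infinity. I will handle this by normalizing, for instance by fixing $h$ at one point, and then using coercivity from $\ell'(0)>0$ and convexity to confine near-optimal $h$ to a compact set on each finite support. Lower semicontinuity together with compactness then turns the strict pointwise inequality into a strict gap for the infimum. On the compact set, the non-$\B$ region is closed once we include the boundary ties $h(\y)=h(\y')$. The minimum of $Q_\x$ over this closed region is attained and, by the first-order analysis, is not a global minimizer, which gives the gap. Summing over $\x$ with positive weight concludes.

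\textbf{Main obstacle.} The hardest step is the strict version of the first-order comparison at ties $h(\y)=h(\y')$. There the inner minimizers $\mu_\y$ couple all coordinates, so a naive derivative comparison only yields a non-strict inequality. I would first try the implicit-function computation of $\mu'(t)$, using $\phi''>0$, to show that the directional derivative along $e_\y-e_{\y'}$ is strictly negative when $r(\y)>r(\y')$. That derivative equals a positive factor times $r(\y')-r(\y)$ plus terms that cancel by symmetry at the tie. If this computation works, it supplies both the strictness and the compactness step needed for the gap.
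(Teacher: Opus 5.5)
Your first-order step is correct. It is the same stationarity condition the paper extracts from $g'(0)=0$, organized better, and the ``main obstacle'' you flag does not arise. For fixed $h$ one has $V(\y_0)=W(h(\y_0))$ with $W(t)=\E_{\y\sim p_\x^+}\psi'(\ell(t-h(\y))-\mu_\y)\,\ell'(t-h(\y))$, and $U(h(\y_0))$ is the $U$ of Lemma~\ref{lem:ut} for this same $h$. So stationarity reads $r(\y_0)=R(h(\y_0))$ for the single function $R=W/U$. This $R$ is nondecreasing: $W\ge 0$ is nondecreasing, and $U$ is nonincreasing and positive. Positivity holds because a nondecreasing $\phi$ with $\phi''>0$ has $\phi'>0$, and the $\y'=\y_0$ term carries $\ell'(0)>0$. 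Hence $h(\y)\le h(\y')$ forces $r(\y)\le r(\y')$ directly, ties included. No strict monotonicity or implicit-function computation is needed, and this replaces the paper's $A+B+C+D$ bookkeeping. You are also right that this only places \emph{minimizers} in $\B$. The paper's proof jumps from $\inf_{s\notin\B}\L^{\phi,\ell}(s)=\inf_s\L^{\phi,\ell}(s)$ to the existence of a minimizer $s_*\notin\B$, which does not follow without attainment. Your compactness step is the honest version of that passage.

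The genuine gap is your claim that near-optimal $h$ are confined to a compact set ``from $\ell'(0)>0$ and convexity''. This is false under Assumption~\ref{ass:consistency}, and cannot be repaired there, because the lemma itself fails. Take $\ell(t)=t$ and $\phi(t)=\tfrac12 t+\log(1+e^t)-\log 2$. This $\phi$ is convex and nondecreasing, with $\phi(0)=0$, $\phi'(0)=1$, $\phi''>0$ and $\phi'\in(\tfrac12,\tfrac32)$. By translation equivariance, $Q_\x^{\phi,\ell}(h)=\oce_{p_\x^-}(h)-\E_{\y\sim p_\x^+}h(\y)$, where $\oce_{p_\x^-}(h)$ is the OCE of $h(\y')$ with $\y'\sim p_\x^-$. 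Since $\phi(t)\le\tfrac12 t+t^+$, choosing $\mu=0$ gives $\oce_{p_\x^-}(h)\le\tfrac32\E_{p_\x^-}h$ for $h\ge 0$. Now let $\X$ be a single point, $\Y=\{a,b,c\}$, $p_\x^-$ uniform and $p_\x^+=(0.6,0.3,0.1)$, so the ratios are $1.8,0.9,0.3$. Then $h=(D,0,1)$ misorders $b,c$, yet $Q_\x^{\phi,\ell}(h)\le 0.4-0.1D\to-\infty$. Thus $\inf_{s\notin\B}\L^{\phi,\ell}=\inf_s\L^{\phi,\ell}=-\infty$.

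Your argument does go through once you add the hypotheses it silently uses:
\begin{itemize}
\item finite $\X,\Y$ with $p(\x)>0$;
\item full support of $p_\x^\pm$;
\item a growth condition making $Q_\x^{\phi,\ell}$ coercive modulo constants. Examples are $\ell$ bounded below, or, for linear $\ell$, all ratios strictly inside $(\inf\phi',\sup\phi')$; the latter is automatic for $\phi(t)=e^t-1$.
\end{itemize}
Finiteness of $\X$ is also essential in your final step. The global gap is the infimum over $\x$ of $p(\x)$ times the per-anchor gap, and with countably many anchors this can be $0$ even for the log-sum-exp loss. The swap argument is unsubstantiated and becomes unnecessary once compactness is available.
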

\begin{proof}[Proof of Lemma~\ref{lem:pAUC}]
  We prove by contradiction. Note that $\inf_{s\notin\B}\L^{\phi,\ell}(s)\ge\inf_s \L^{\phi,\ell}(s)$. 
    If the lemma does not hold true, we have  $\inf_{s\notin\B}\L^{\phi,\ell}(s)=\inf_s \L^{\phi,\ell}(s)$. Then there exists an optimal scoring function $s_*$ such that $\L^{\phi,\ell}(s_*)=\L^{\phi,\ell}_*,s_*\notin \B$. Therefore, there exists $\x\in\X,\y_1,\y_2\in\Y$ with $$\frac{p_{\x_1}^+(\y_1)}{p_{\x}^-(\y_1)}-\frac{p_{\x}^+(\y_2)}{p_{\x}^-(\y_2)}>0,s_*(\x,\y_1)-s_*(\x,\y_2)\le 0.$$
 According to Eq. \eqref{eq:Q_x}, $h_*(\cdot)\coloneq s_*(\x,\cdot)$ is a minimum of $Q^{\phi,\ell}_{\x}$. 
    We introduce a function $h_1$ s.t. $h_1(\y)\neq 0$ if $\y\neq \y_1$ and $h_1(\y_1)=1$. Let $g(\beta)=Q^{\phi,\ell}_{\x_1}(h_*+\beta h_1)$ for $\beta\in \R$, then $g$ is convex. By the optimality of $h_*$, we have $g'(0)=0$.  
    
    For simplicity, we denote $\psi={\tau}\phi(\cdot/\tau)$, then $\psi''>0$ and
    \[g(\beta)=\E_{\y\sim p_{\x}^+}  \min_{\mu\in\R}\E_{\y'\sim p_{\x}^-}\psi(\ell(h_*(\y')-h_*(\y)+\beta (h_1(\y')-h_1(\y)))-\mu)+\mu.\]
     For $\beta$ and $\y$, we define the optimal $\mu(\beta,h_*(\y))$ as
     \[\mu(\beta,h_*(\y))=\argmin_{\mu\in\R} \E_{\y'\sim p_{\x}^-}\psi(\ell(h_*(\y')-h_*(\y)+\beta (h_1(\y')-h_1(\y)))-\mu)+\mu.\]
    It satisfies the following equality
    \begin{equation}\label{eq:mu_gamma}
        \E_{\y'\sim p_{\x}^-}\psi'(\ell(h_*(\y')-h_*(\y)+\beta (h_1(\y')-h_1(\y)))-\mu(\beta,h_*(\y)))-1=0.
    \end{equation}
    When $\beta=0$, there holds $\mu(0,h_*(\y))=\mu(h_*(\y))$ by the definition of $\mu(h_*(\y))$ in Lemma~\ref{lem:ut}.
    Since $\psi''>0$, implicit function theorem guarantees the existence of $\frac{\partial \mu(\beta,h_*(\y))}{\partial\beta}$. Therefore, 
    \begin{align*}
       & g'(0)\\=&\E_{\y\sim p_{\x}^+}  \Bigg[\E_{\y'\sim p_{\x}^-}\psi'(\ell(h_*(\y')-h_*(\y))-\mu(0,h_*(\y)))\left(\ell'(h_*(\y')-h_*(\y))(h_1(\y')-h_1(\y))-\frac{\partial \mu(\beta,h_*(\y))}{\partial\beta}\Bigg|_{\beta=0}\right)\\+&\frac{\partial \mu(\beta,h_*(\y))}{\partial\beta}\Bigg|_{\beta=0}\Bigg]\\
        =&\E_{\y\sim p_{\x}^+}\Bigg[ \E_{\y'\sim p_{\x}^-}\psi'(\ell(h_*(\y')-h_*(\y))-\mu(h_*(\y)))\ell'(h_*(\y')-h_*(\y))(h_1(\y')-h_1(\y))\\
        -&\frac{\partial \mu(\beta,h_*(\y))}{\partial\beta}\Bigg|_{\beta=0}( \E_{\y'\sim p_{\x}^-}\psi'(\ell(h_*(\y')-h_*(\y))-\mu(0,h_*(\y)))-1)\Bigg]\\
        =&\E_{\y\sim p_{\x}^+}\E_{\y'\sim p_{\x}^-}\psi'(\ell(h_*(\y')-h_*(\y))-\mu(h_*(\y)))\ell'(h_*(\y')-h_*(\y))(h_1(\y')-h_1(\y)),
    \end{align*}
    where the last equality is due to Eq. \eqref{eq:mu_gamma} with $\beta=0$.
   By noting $g'(0)=0$ and the definition of $h_1$, we further have
    \begin{align}
        &\int_{\Y/\y_1}p_{\x}^+(\y)\frac{p_{\x}^-(\y_1)}{p_{\x}^+(\y_1)}\psi'(\ell(h_*(\y_1)-h_*(\y))-\mu(h_*(\y))\ell'(h_*(\y_1)-h_*(\y))\notag\\
        -&p_{\x}^-(\y)\psi'(\ell(h_*(\y)-h_*(\y_1))-\mu(h_*(\y_1))\ell'(h_*(\y)-h_*(\y_1))d\y=0,
        \label{eq:x_1}
    \end{align}
    In a similar way, we get
    \begin{align}
        &\int_{\Y/\y_2}p_{\x}^+(\y)\frac{p_{\x}^-(\y_2)}{p_{\x}^+(\y_2)}\psi'(\ell(h_*(\y_2)-h_*(\y))-\mu(h_*(\y))\ell'(h_*(\y_2)-h_*(\y))\notag\\
        -&p_{\x}^-(\y)\psi'(\ell(h_*(\y)-h_*(\y_2))-\mu(h_*(\y_2))\ell'(h_*(\y)-h_*(\y_2))d\y=0.
        \label{eq:x_2}
    \end{align}
    Combining Eqs. \eqref{eq:x_1}, \eqref{eq:x_2} yields
    \begin{equation}\label{eq:abcd_0}
        A+B+C+D=0,
    \end{equation}
     where we define
    \begin{align*}
        A=&\int_{\Y/\{\y_1,\y_2\}}p_{\x}^+(\y)\Bigg[\frac{p_{\x}^-(\y_1)}{p_{\x}^+(\y_1)}\psi'(\ell(h_*(\y_1)-h_*(\y))-\mu(h_*(\y))\ell'(h_*(\y_1)-h_*(\y))\\-&\frac{p_{\x}^-(\y_2)}{p_{\x}^+(\y_2)}\psi'(\ell(h_*(\y_2)-h_*(\y))-\mu(h_*(\y))\ell'(h_*(\y_2)-h_*(\y))\Bigg]d\y,\\
        B=&\int_\Y p_{\x}^-(\y)[(\psi'(\ell(h_*(\y)-h_*(\y_2))-\mu(h_*(\y_2))\ell'(h_*(\y)-h_*(\y_2))\\-&\psi'(\ell(h_*(\y)-h_*(\y_1))-\mu(h_*(\y_1))\ell'(h_*(\y)-h_*(\y_1)))]d\y,\\
        C=&p_{\x}^-(\y_1)\psi'(\ell(0)-\mu(h_*(\y_1))\ell'(0)\\-&p_{\x}^+(\y_1)\frac{p_{\x}^-(\y_2)}{p_{\x}^+(\y_2)}\psi'(\ell(h_*(\y_2)-h_*(\y_1))-\mu(h_*(\y_1))\ell'(h_*(\y_2)-h_*(\y_1))),\\
        D=&(p_{\x}^+(\y_2)\frac{p_{\x}^-(\y_1)}{p_{\x}^+(\y_1)}\psi'(\ell(h_*(\y_1)-h_*(\y_2))-\mu(h_*(\y_2))\ell'(h_*(\y_1)-h_*(\y_2))\\-&
       p_{\x}^-(\y_2)\psi'(\ell(0)-\mu(h_*(\y_2))\ell'(0)).
    \end{align*}
    Now we bound each term respectively. Since $\frac{p_{\x}^-(\y_1)}{p_{\x}^+(\y_1)}<\frac{p_{\x}^-(\y_2)}{p_{\x}^+(\y_2)},h_*(\y_1)\le h_*(\y_2)$, by the convexity of $\psi,\ell$, we have
    \begin{align*}
        \psi'(\ell(h_*(\y_1)-h_*(\y))-\mu(h_*(\y))&\le\psi'(\ell(h_*(\y_2)-h_*(\y))-\mu(h_*(\y)),\\
        \ell'(h_*(\y_1)-h_*(\y))&\le \ell'(h_*(\y_2)-h_*(\y)),\quad \forall \y\in\Y/\{\y_1,\y_2\}. 
    \end{align*}
    Therefore, $A\le 0$.\par
Applying Lemma~\ref{lem:ut} yields 
\begin{align*}
    B=U(h_*(\y_2))-U(h_*(\y_1))\le 0.
\end{align*}
    Now we control $C$ and $D$. Note that $0<\ell'(0)\le \ell'(h_*(\y_2)-h_*(\y_1)), \ell(0)\le \ell(h_*(\y_2)-h_*(\y_1))$. Hence, there holds
    \begin{align*}
       \psi'(\ell(0)-\mu(h_*(\y_1))\ell'(0)
        \le \psi'(\ell(h_*(\y_2)-h_*(\y_1))-\mu(h_*(\y_1))\ell'(h_*(\y_2)-h_*(\y_1)).
    \end{align*}
    As a result,
    \[C\le p_{\x}^+(\y_1)\frac{p_{\x}^-(\y_2)}{p_{\x}^+(\y_2)}\psi'(\ell(h_*(\y_2)-h_*(\y_1))-\mu(h_*(\y_1))\ell'(h_*(\y_2)-h_*(\y_1))<0.\]
    In a similar manner, we can prove that $D<0$.\par
    Combining above results together, we have
    \[A+B+C+D<0,\]
    which is contrary to Eq. \eqref{eq:abcd_0}. Therefore, the lemma holds true.
\end{proof}
\begin{proof}[Proof of Theorem~\ref{thm:consistency}]
    By Lemma~\ref{lem:pAUC}, we set $\delta=\inf_{s\notin \B}\L^{\phi,\ell}(s)-\inf_s \L^{\phi,\ell}(s)>0$. Suppose $\{s_n\}$ be a sequence such that $\L^{\phi,\ell}(s_n)\to \L^{\phi,\ell}_*$, then there exists $N>0$ such that 
    \[\L^{\phi,\ell}(s_n))-\L^{\phi,\ell}_*<\delta/2<\inf_{s\notin \B}\L^{\phi,\ell}(s)-\L^{\phi,\ell}_*,\quad n\ge N.\]
   Therefore, for $n\ge N$, we have $h_n\in \B$. It means that $R(h_n)=R^*$. Hence, $\L^{\phi,\ell}(s)$ is statistically consistent with $\mathcal{E}(s)$.
\end{proof}
Now we derive the calibration-style inequality, relating the excess risks of CRL and downstream task.
We make the following assumptions.
\begin{assumption}\label{ass:compare_coce}
We assume the pairwise loss $\ell$ is linear: $\ell(t)=t$.
Furthermore, suppose that $|s(\x,\y)|\le B$ and  $\phi''(t)\in[\gamma_1,\gamma_2]$ over $[-2B,2B]$, where $\gamma_2>\gamma_1>0$. For the optimal scoring function  $s_*\in\argmin_s\L^{\phi,\ell}(s)$, we assume that $|s_*(\x,\y)|\le B$. Additionally,
for any $\x,\y,\y'$, there exists a convex function $\kappa$ with $\kappa(0)=0$ such that
\[(s_*(\x,\y)-s_*(\x,\y'))^2\ge \kappa\Bigg(\frac{1}{2}\Bigg|\frac{p_\x^+(\y)}{p_\x^-(\y)}-\frac{p_\x^+(\y')}{p_\x^-(\y')} \Bigg|\Bigg).\]
\end{assumption}

\begin{theorem}\label{thm:calibration_phi_l}
    Let Assumption~\ref{ass:compare_coce} hold. Then we have
    \[\L(s)-\min_s \L(s)\gtrsim\kappa (\max_s\mathcal{E}(s)-\mathcal{E}(s)).\]
\end{theorem}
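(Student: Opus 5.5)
The plan is to reduce the excess OCE risk to a weighted squared distance between $s$ and $s_*$ on pairs, and then feed that into the pairwise lower bound of Eq.~\eqref{eq:E_*-E(s)}. Here $\L$ denotes $\L^{\phi,\ell}$ with $\ell(t)=t$.

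\textbf{Per-anchor decomposition.} Write $\L(s)=\E_\x Q_\x^{\phi,\ell}(s(\x,\cdot))$ as in Eq.~\eqref{eq:Q_x}. For fixed $\x,\y$ define the inner value
\[
F(h;\y)=\min_\mu \E_{\y'\sim p_\x^-}\psi\bigl(h(\y')-h(\y)-\mu\bigr)+\mu,
\qquad \psi=\tau\phi(\cdot/\tau).
\]
By Lemma~\ref{lem:bounded_mu}, the minimizing $\mu$ can be restricted to $[-2B,2B]$. All arguments of $\psi$ then lie in a bounded range on which $\psi''=\phi''/\tau\in[\gamma_1/\tau,\gamma_2/\tau]$; some care is needed because the argument of $\psi$ is actually in $[-4B,4B]$ rather than $[-2B,2B]$, and I will simply assume the curvature bounds extend there. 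The goal is a quadratic growth inequality for $Q_\x$ around its minimizer $h_*=s_*(\x,\cdot)$. This is obtained by strong convexity along the segment $h_\beta=h_*+\beta(h-h_*)$. The first-order term vanishes by optimality of $h_*$, using the same implicit-function computation as in the proof of Lemma~\ref{lem:pAUC}.

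\textbf{Second-order bound.} For the second-order term I would \emph{not} differentiate $\mu(\beta)$ twice. Instead I use the dual, DRO-like form from Proposition~\ref{prop:dro}: $F$ is a supremum over reweightings $q$ of $p_\x^-$ of a linear functional in $h$ minus a $\varphi$-divergence. This lets me lower bound $F(h)-F(h_*)-\langle\nabla F(h_*),h-h_*\rangle$ by $\frac{\gamma_1}{2\tau}\E_{\y'}\bigl[(\delta(\y')-\delta(\y))-\bar c\bigr]^2$, where $\delta=h-h_*$ and $\bar c$ is a centering constant; the variance form arises from the free $\mu$. Averaging over $\y\sim p_\x^+$ and minimizing over the centering should yield
\[
\L(s)-\L^*\gtrsim \frac{\gamma_1}{\tau}\,\E_\x\E_{\y,\y'\sim p_\x^-}\bigl(\delta(\x,\y)-\delta(\x,\y')\bigr)^2 ,
\]
possibly after exchanging $p_\x^+$ for $p_\x^-$ via likelihood-ratio bounds; I expect constants depending on $\gamma_2/\gamma_1$ and $e^{B}$ factors to be hidden in $\gtrsim$.

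\textbf{Link to misranking and the main obstacle.} Next I connect this to misranked pairs. On a pair where $\bigl(\tfrac{p_\x^+(\y)}{p_\x^-(\y)}-\tfrac{p_\x^+(\y')}{p_\x^-(\y')}\bigr)(s(\x,\y)-s(\x,\y'))\le0$, we need $s_*$ to order the pair correctly, i.e.\ $s_*$ monotone in the likelihood ratio; this is exactly what Lemma~\ref{lem:pAUC} and Lemma~\ref{lem:ut} give. Then $s_*(\x,\y)-s_*(\x,\y')$ and $s(\x,\y)-s(\x,\y')$ have opposite weak signs, so $|\delta(\x,\y)-\delta(\x,\y')|\ge|s_*(\x,\y)-s_*(\x,\y')|$. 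Using the assumption on $\kappa$, the squared term is at least $\kappa\bigl(\tfrac12|\cdot|\bigr)$ times the misranking indicator. Since $\kappa$ is convex with $\kappa(0)=0$, the indicator can be moved inside, and Jensen's inequality over $(\x,\y,\y')$ together with the left side of Eq.~\eqref{eq:E_*-E(s)} gives $\gtrsim\kappa(\mathcal{E}^*-\mathcal{E}(s))$, after adjusting the constant $\tfrac14$ versus $\tfrac12$ through convexity, namely $\kappa(ct)\le c\kappa(t)$ for $c\le1$. The main obstacle is the quadratic growth step: the minimization over $\mu$ and the $p_\x^+$-versus-$p_\x^-$ weighting make the Hessian of $Q_\x$ only a centered, variance-type form. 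Controlling it uniformly requires the bounded $\mu$-range and the $\phi''$ bounds, and I would first try to verify it cleanly for $\phi(t)=e^t-1$, where it reduces to Pinsker as in Theorem~\ref{thm:compare_cl}.
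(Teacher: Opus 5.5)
Your route differs from the paper's in the key quadratic-growth step. Once one step at the end is fixed (second paragraph), it is correct and cleaner than the paper's.

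\textbf{Comparison with the paper.} The paper differentiates $G(r)=\L(s_*+rg_1)$ twice and obtains $\partial_r\mu(r,\x,\y)$ from the implicit function theorem. It writes $G''(r)$ as a $\psi''$-weighted variance under the tilted law $q_{r,\x,\y}$ and compares that law with $p_\x^-$ via $q_{r,\x,\y}/p_\x^-\ge\gamma_1/\gamma_2$. It also re-centers $s_*$ by a function of $\x$ so that $\E_{\y\sim p_\x^-}(s-s_*)(\x,\y)=0$. This gives $\L(s)-\L^*\ge\frac{\gamma_1^3}{2\gamma_2^2\tau}\E_\x\E_{\y\sim p_\x^-}(s-s_*)^2$, which is turned back into pairs through $a^2+b^2\ge\frac12(a-b)^2$.

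Your Bregman-remainder bound avoids all of this, and it is simplest in primal form. Set $v_h(\y')=h(\y')-h(\y)-\mu_h$ and define $v_*$ likewise. Then $F(h)-F(h_*)-\langle\nabla F(h_*),\delta\rangle$ equals exactly $\E_{\y'\sim p_\x^-}[\psi(v_h)-\psi(v_*)-\psi'(v_*)(v_h-v_*)]$, because the first-order condition $\E_{\y'\sim p_\x^-}\psi'(v_*)=1$ cancels the $\mu$-terms. $\gamma_1/\tau$-strong convexity of $\psi$ then gives your inequality with $\bar c=\mu_h-\mu_*$. Summed over $\y\sim p_\x^+$ and $\x$, the first-order terms form the directional derivative of $\L$ at $s_*$, which vanishes.

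Since $\E_{\y'}[(\delta(\y')-\delta(\y))-\bar c]^2\ge\mathrm{Var}_{\y'\sim p_\x^-}\delta(\x,\y')$, and this variance does not depend on $\y$, you get directly in pairwise form
$\L(s)-\L^*\ge\frac{\gamma_1}{4\tau}\E_\x\E_{\y,\y'\sim p_\x^-}(\delta(\x,\y)-\delta(\x,\y'))^2$.
This uses no $\gamma_2$, no re-centering and no $e^{B}$ factors. It also needs no exchange of $p_\x^+$ for $p_\x^-$, so drop that hedge: no likelihood-ratio bounds are assumed, and none are needed. Your final constant is $\gamma_1/(4\tau)$, against the paper's $\gamma_1^3/(8\gamma_2^2\tau)$.

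You also make explicit that $s_*$ must be strictly increasing in $p_\x^+/p_\x^-$. The paper's step to $\frac12(s_*(\x,\y)-s_*(\x,\y'))^2$ uses this silently. For $\ell(t)=t$ it follows from the first-order condition in the coordinate $s_*(\x,\y_0)$, namely $\E_{\y\sim p_\x^+}\psi'(s_*(\x,\y_0)-s_*(\x,\y)-\mu_{\x,\y})=p_\x^+(\y_0)/p_\x^-(\y_0)$. Its left side is strictly increasing in $s_*(\x,\y_0)$, so you need not route this through Lemma~\ref{lem:pAUC}.

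\textbf{The step that fails as written.} At the end you combine Jensen with the left inequality of \eqref{eq:E_*-E(s)}, i.e.\ $\frac14\Lambda(s)\le\mathcal{E}^*-\mathcal{E}(s)$. Here $\Lambda(s)$ is $\E_\x\E_{\y,\y'\sim p_\x^-}$ of the misranking indicator times $|p_\x^+(\y)/p_\x^-(\y)-p_\x^+(\y')/p_\x^-(\y')|$. That inequality bounds $\Lambda(s)$ from above. It therefore cannot turn $\kappa(\frac12\Lambda(s))$ into a lower bound in terms of $\kappa(\mathcal{E}^*-\mathcal{E}(s))$, and rescaling via $\kappa(ct)\le c\kappa(t)$ cannot repair an inequality that points the wrong way.

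Use the right inequality, $\mathcal{E}^*-\mathcal{E}(s)\le\frac12\Lambda(s)$. Its factor $\frac12$ is exactly the $\frac12$ inside $\kappa$ in Assumption~\ref{ass:compare_coce}, so no constant adjustment arises. The remaining step $\kappa(\frac12\Lambda(s))\ge\kappa(\mathcal{E}^*-\mathcal{E}(s))$ needs monotonicity of $\kappa$, which holds automatically here. If $\kappa(\mathcal{E}^*-\mathcal{E}(s))\le0$, the claim is trivial since $\L(s)-\L^*\ge0$. Otherwise convexity together with $\kappa(0)=0$ forces $\kappa$ to be increasing on $[\mathcal{E}^*-\mathcal{E}(s),\infty)$.
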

\begin{remark}
    For log-sum-exp contrastive loss $\L(s)$, the optimal scoring functions are captured in Lemma~\ref{lem:minimizer-L}. According to Eq. \eqref{eq:s_*_y-s_*_y'}, Assumption~\ref{ass:compare_coce} holds  $\kappa(t)=4\tau^2t^2/\alpha^2$. We could get similar results in Theorem~\ref{thm:compare_cl}.
\end{remark}

\begin{proof}[Proof of Theorem~\ref{thm:calibration_phi_l}]
We denote $\psi(\cdot)=\tau\phi(\cdot/\tau)$ for simplicity. 
We define the norm $\|\cdot\|$ by $\|s\|=\sqrt{\E_\x\E_{\y\sim p_\x^-}s(\x,\y)^2}$. Let
\begin{align}\label{eq:g_1_x_y}
    g_1(\x,\y)=\frac{s(\x,\y)-s_*(\x,\y)}{\|s-s_*\|}.
\end{align}
We define $s_r=s_*+rg_1,r\in[0,\|h-h_*\|]$, then $s_0=s_*,s_{\|h-h_*\|}=s, \sup_{\x,\y}|s_r(\x,\y)|\le B$. We further define
\[G(r)=\L^{\phi,\ell}(s_r)=\E_\x\E_{\y\sim p_\x^+}  \min_{\mu\in\R}\E_{\y'\sim p_\x^-}\psi(s_r(\x,\y')-s_r(\x,\y)-\mu)+\mu.\]
We define the optimal $\mu$ for each triple $(r,\x,\y)$ as 
    \[\mu(r,\x,\y)=\argmin_{\mu\in\R} \E_{\y'\sim p_\x^-}\psi(s_r(\x,\y')-s_r(\x,\y)-\mu)+\mu.\]
By Lemma~\ref{lem:bounded_mu} and $s_r(\x,\y')-s_r(\x,\y)\in[-2B,2B]$, we can suppose that $\mu(r,\x,\y)\in[-2B,2B]$.
Then we have
\begin{align}\label{eq:mu_h_r}
    \E_{\y'\sim p_\x^-}\psi'(s_r(\x,\y')-s_r(\x,\y)-\mu(r,\x,\y))=1.
\end{align}
Therefore,
\begin{align*}
    G'(r)&=\E_\x\E_{\y\sim p_\x^+}\E_{\y'\sim p_\x^-}\psi'(s_r(\x,\y')-s_r(\x,\y)-\mu(r,\x,\y))[(g_1(\x,\y')-g_1(\x,\y))-\frac{\partial \mu(r,\x,\y)}{\partial r}]+\frac{\partial \mu(r,\x,\y)}{\partial r}\\
    &=\E_\x\E_{\y\sim p_\x^+}\E_{\y'\sim p_\x^-}\psi'(s_r(\x,\y')-s_r(\x,\y)-\mu(r,\x,\y))(g_1(\x,\y')-g_1(\x,\y))\\
    &+\E_\x\E_{\y\sim p_\x^+}\frac{\partial \mu(r,\x,\y)}{\partial r}(\E_{\y'\sim p_\x^-}\psi'(s_r(\x,\y')-s_r(\x,\y)-\mu(r,\x,\y))-1)\\
    &=\E_\x\E_{\y\sim p_\x^+}\E_{\y'\sim p_\x^-}\psi'(s_r(\x,\y')-s_r(\x,\y)-\mu(r,\x,\y))(g_1(\x,\y')-g_1(\x,\y)).
\end{align*}
    By the optimality of $s_*$, $G$ has a minimum $0$, implying that $G'(0)=0$. We further have
    \begin{align*}
       & G''(r)\\=&\E_\x\E_{\y\sim p_\x^+}\E_{\y'\sim p_\x^-}\psi''(s_r(\x,\y')\!-\!s_r(\x,\y)\!-\!\mu(r,\x,\y))[(g_1(\x,\y')\!-\!g_1(\x,\y))\!-\!\frac{\partial \mu(r,\x,\y)}{\partial r}]
        (g_1(\x,\y')\!-\!g_1(\x,\y)).
    \end{align*}
    Taking derivative to Eq. \eqref{eq:mu_h_r} w.r.t. $r$, we know that
    \begin{equation}\label{eq:partial_mu}
        \E_{\y'\sim p_\x^-}\psi''(s_r(\x,\y')-s_r(\x,\y)-\mu(r,\x,\y))(g_1(\x,\y')-g_1(\x,\y)-\frac{\partial \mu(r,\x,\y)}{\partial r})=0.
    \end{equation}
    We define the density function for each $(\x,\y,r)$:
    \[q_{r,\x,\y}(\y')=\frac{p_\x^-(\y')\psi''(s_r(\x,\y')-s_r(\x,\y)-\mu(r,\x,\y))}{\E_{\y'\sim p_\x^-} \psi''(s_r(\x,\y')-s_r(\x,\y)-\mu(r,\x,\y))}.\]
    Then by Eq. \eqref{eq:partial_mu}, we have
\begin{align*}
    \frac{\partial \mu(r,\x,\y)}{\partial r}&=\frac{\E_{\y'\sim p_\x^-}\psi''(s_r(\x,\y')-s_r(\x,\y)-\mu(r,\x,\y))(g_1(\x,\y')-g_1(\x,\y))}{  \E_{\y'\sim p_\x^-}\psi''(s_r(\x,\y')-s_r(\x,\y)-\mu(r,\x,\y))}\\
    &=\int q_{{r,\x,\y}}(\y')(g_1(\x,\y')-g_1(\x,\y))d\y' =\E_{\y'\sim q_{{r,\x,\y}}} (g_1(\x,\y')-g_1(\x,\y)).
\end{align*}
Consequently,
    \begin{align}      &G''(r)\notag\\=&\E_\x\E_{\y\sim p_\x^+}\E_{\y'\sim p_\x^-}\psi''(s_r(\x,\y')-s_r(\x,\y)-\mu(r,\x,\y))[(g_1(\x,\y')-g_1(\x,\y))-\frac{\partial \mu(r,\x,\y)}{\partial r}](g_1(\x,\y')-g_1(\x,\y))\notag\\
    =&\E_\x\E_{\y\sim p_\x^+}\E_{\y'\sim p_\x^-}\psi''(s_r(\x,\y')-s_r(\x,\y)-\mu(r,\x,\y))[(g_1(\x,\y')-g_1(\x,\y))-\frac{\partial \mu(r,\x,\y)}{\partial r}](g_1(\x,\y')-g_1(\x,\y))\notag\\
       -& \E_\x\E_{\y\sim p_\x^+}\E_{\y'\sim p_\x^-}\psi''(s_r(\x,\y')-s_r(\x,\y)-\mu(r,\x,\y))(g_1(\x,\y')-g_1(\x,\y)-\frac{\partial \mu(r,\x,\y)}{\partial r})\frac{\partial \mu(r,\x,\y)}{\partial r}\notag\\
        =&\E_\x\E_{\y\sim p_\x^+}\E_{\y'\sim p_\x^-}\psi''(s_r(\x,\y')-s_r(\x,\y)-\mu(r,\x,\y))[(g_1(\x,\y')-g_1(\x,\y))-\frac{\partial \mu(r,\x,\y)}{\partial r}]^2,\label{eq:G_''}
    \end{align}
where the first equality is due to Eq. \eqref{eq:partial_mu}.
 By the definition of $q_{r,\x,\y}$, we have
 \begin{align}
     &\E_{\y'\sim p_\x^-}\psi''(s_r(\x,\y')-s_r(\x,\y)-\mu(r,\x,\y))[(g_1(\x,\y')-g_1(\x,\y))-\frac{\partial \mu(r,\x,\y)}{\partial r}]^2\notag\\
     =&(\E_{\y'\sim p_\x^-}\psi''(s_r(\x,\y')-s_r(\x,\y)-\mu(r,\x,\y)))\notag\\
     \times&\E_{\y'\sim p_\x^-}\frac{\psi''(s_r(\x,\y')-s_r(\x,\y)-\mu(r,\x,\y))}{\E_{\y'\sim p_\x^-}\psi''(s_r(\x,\y')-s_r(\x,\y)-\mu(r,\x,\y))}[(g_1(\x,\y')-g_1(\x,\y))-\frac{\partial \mu(r,\x,\y)}{\partial r}]^2\notag\\
     =&(\E_{\y'\sim p_\x^-}\psi''(s_r(\x,\y')-s_r(\x,\y)-\mu(r,\x,\y)))\E_{\y'\sim q_{r,\x,\y}}[(g_1(\x,\y')-g_1(\x,\y))-\frac{\partial \mu(r,\x,\y)}{\partial r}]^2\notag\\
     =&\E_{\y'\sim p_\x^-} \psi''(s_r(\x,\y')-s_r(\x,\y)-\mu(r,\x,\y))\E_{\y'\sim q_{r,\x,\y}}[(g_1(\x,\y')-g_1(\x,\y))-\E_{\y'\sim q_{r,\x,\y}} (g(\x,\y')-g(\x,\y))]^2\notag\\
     \ge&\frac{\gamma_1}{\tau}\mathrm{Var}_{\y'\sim q_{r,\x,\y}}[(g_1(\x,\y')-g_1(\x,\y))],\label{eq:Var_1}
 \end{align} 
where the last inequality results from $\psi''(t)=\phi''(t/\tau)/\tau\ge \gamma_1/\tau$. According to Lemma~\ref{lem:minimizer-L}, we can choose $s_*$ such that for all $\x$, $\E_{\y\sim p_{\x}^-}s(\x,\y)=\E_{\y\sim p_{\x}^-}s_*(\x,\y)$.  From the definition of $q_{r,\x,\y}$, we know that $q_{r,\x,\y}(\y')/p_\x^-(\y')\ge \gamma_1/\gamma_2$ for all $\y'\in\Y$. Hence, for fixed $r,\x,\y$,
\begin{align}
   & \mathrm{Var}_{\y'\sim q_{r,\x,\y}}[(g_1(\x,\y')-g_1(\x,\y)]=   \frac{1}{2}\E_{\y',\y''\sim q_{r,\x,\y}}(g_1(\x,\y')-g_1(\x,\y''))^2\notag\\
    =&\frac{1}{2}\int\int q_{r,\x,\y}(\y')q_{r,\x,\y''}(\y'')(g_1(\x,\y')-g_1(\x,\y''))^2d\y'd\y''\notag\\
    \ge&\frac{\gamma_1^2}{2\gamma_2^2}\int\int p_\x^-(\y')p_\x^-(\y'')(g_1(\x,\y')-g_1(\x,\y''))^2d\y'd\y''\notag\\
    =&\frac{\gamma_1^2}{2\gamma_2^2}\E_{\y',\y''\sim p_\x^-}(g_1(\x,\y')-g_1(\x,\y))^2\notag\\=&\frac{\gamma_1^2}{2\gamma_2^2}\E_{\y',\y''\sim p_\x^-}(g_1(\x,\y')^2+g_1(\x,\y'')^2)-\frac{\gamma_1^2}{\gamma_2^2}\E_{\y'\sim p_\x^-}g_1(\x,\y')\E_{\y''\sim p_\x^-}g_1(\x,\y'')=\frac{\gamma_1^2}{\gamma_2^2}\E_{\y'\sim p_\x^-}g_1(\x,\y')^2,\label{eq:Var_2}
\end{align}
 where the first inequality is due to $\mathrm{Var}[X]=\frac{1}{2}\E_{X,X'}(X-X')^2$, the last inequality is by 
 \[\E_{\y'\sim p_{\x}^-}g_1(\x,\y)=\frac{\E_{\y'\sim p_{\x}^-}s(\x,\y')-\E_{\y'\sim p_{\x}^-}s_*(\x,\y')}{\|s-s_*\|_2}=0.\]
Combining Eqs. \eqref{eq:G_''}, \eqref{eq:Var_1}, \eqref{eq:Var_2} together, we have
\begin{align*}
    G''(r)\ge& \frac{\gamma_1^3}{\gamma_2^2\tau}\E_\x\E_{\y\sim p_\x^+}\E_{\y'\sim p_\x^-}g_1(\x,\y')^2=\frac{\gamma_1^3}{\gamma_2^2\tau}\E_\x\E_{\y'\sim p_\x^-}g_1(\x,\y')^2\\
    =&\frac{\gamma_1^3}{\gamma_2^2\tau}\E_\x\E_{\y'\sim p_\x^-}\frac{(s(\x,\y')-s_*(\x,\y'))^2}{\|s-s_*\|^2_2}=\frac{\gamma_1^3}{\gamma_2^2\tau}.
\end{align*}
Applying Taylor's expansion yields
\begin{align*}
    \L^{\phi,\ell}(s)-\L^{\phi,\ell}(s_*)=&G(\|s-s_*\|)-G(0)=G'(0)\|s-s_*\|+\frac{1}{2}G''(\xi)\|s-s_*\|^2,\quad\xi\in[0,\|s-s_*\|]\\
    \ge&\frac{\gamma_1^3}{2\gamma_2^2\tau}\E_\x\E_{\y\sim p_\x^-}(s(\x,\y)-s_*(\x,\y))^2.
\end{align*}
Then we make the following decomposition that introduces the pair $\y,\y'$:
\begin{align}
    &2\E_{\y\sim p_\x^-} |s(\x,\y)-s_*(\x,\y)|^2\notag\\=&\E_{\y\sim p_\x^-} |s(\x,\y)-s_*(\x,\y)|^2+\E_{\y'\sim p_\x^-}|s(\x,\y')-s_*(\x,\y')|^2\notag\\
    =&\E_{\y\sim p_\x^-,\y'\sim p_\x^-}(|s(\x,\y)-s_*(\x,\y)|^2+|s(\x,\y')-s_*(\x,\y')|^2).
\end{align}
We will show that for any $\x,\y,\y'$,
\begin{align}\label{eq:eta>identity_1}
    &|s(\x,\y)-s_*(\x,\y)|^2+|s(\x,\y')-s_*(\x,\y')|^2\notag\\
    \ge &\frac{1}{2}\I\Bigg[\Bigg(\frac{p_\x^+(\y)}{p_\x^-(\y)}-\frac{p_\x^+(\y')}{p_\x^-(\y')}\Bigg)(s(\x,\y)-s(\x,\y'))\le0 \Bigg]\kappa\Bigg(\frac{1}{2}\Bigg|\frac{p_\x^+(\y)}{p_\x^-(\y)}-\frac{p_\x^+(\y')}{p_\x^-(\y')}\Bigg|\Bigg).
\end{align}
It is trivial when RHS is equal to $0$. We consider the case $$(\frac{p_\x^+(\y)}{p_\x^-(\y)}-\frac{p_\x^+(\y')}{p_\x^-(\y')})(s(\x,\y)-s(\x,\y'))\le0 ,\quad\frac{p_\x^+(\y)}{p_\x^-(\y)}-\frac{p_\x^+(\y')}{p_\x^-(\y')}\neq 0.$$ Without loss of generality, suppose that $$\frac{p_\x^+(\y)}{p_\x^-(\y)}>\frac{p_\x^+(\y')}{p_\x^-(\y')},\quad s(\x,\y)-s(\x,\y')\le 0.$$
Therefore,
\begin{align*}
  &|s(\x,\y)-s_*(\x,\y)|^2+|s(\x,\y')-s_*(\x,\y')|^2\\
    \ge&\frac{1}{2}|s(\x,\y)-s_*(\x,\y)-s(\x,\y')+s_*(\x,\y')|^2\\
     =&\frac{1}{2}|s(\x,\y')-s(\x,\y)+s_*(\x,\y)-s_*(\x,\y')|^2\\
    =&\frac{1}{2}(s(\x,\y')-s(\x,\y)+s_*(\x,\y)-s_*(\x,\y'))\\\ge&\frac{1}{2}(s_*(\x,\y)-s_*(\x,\y'))^2\ge\frac{1}{2}\kappa\Bigg(\frac{1}{2}\Bigg|\frac{p_\x^+(\y)}{p_\x^-(\y)}-\frac{p_\x^+(\y')}{p_\x^-(\y')}\Bigg|\Bigg).
\end{align*}

As a result, Eq. \eqref{eq:eta>identity_1} holds true. 

According to Eq. \eqref{eq:E_*-E(s)}, 
\[ \mathcal{E}^*-\mathcal{E}(s)\le\frac{1}{2}\E_{\x}\E_{\y\sim p_\x^-,\y'\sim p_\x^-}\I[(\frac{p_\x^+(\y)}{p_\x^-(\y)}-\frac{p_\x^+(\y')}{p_\x^-(\y')})(s(\x,\y)-s(\x,\y'))\le0 ]|\frac{p_\x^+(\y)}{p_\x^-(\y)}-\frac{p_\x^+(\y')}{p_\x^-(\y')}|.\]
Therefore,
\begin{align*}
    \L^{\phi,\ell}(s)-\L^{\phi,\ell}(s_*)\ge&\frac{\gamma_1^3}{2\gamma_2^2\tau}\E_\x\E_{\y\sim p_\x^-}(s(\x,\y)-s_*(\x,\y))^2\\
    \ge&\frac{\gamma_1^3}{8\gamma_2^2\tau}\E_\x\E_{\y,\y'\sim\p_\x^-}\Bigg[\I[(\frac{p_\x^+(\y)}{p_\x^-(\y)}-\frac{p_\x^+(\y')}{p_\x^-(\y')})(s(\x,\y)-s(\x,\y'))\le0 ]\kappa(\frac{1}{2}|\frac{p_\x^+(\y)}{p_\x^-(\y)}-\frac{p_\x^+(\y')}{p_\x^-(\y')}|)\Bigg]\\
    =&\frac{\gamma_1^3}{8\gamma_2^2\tau}\E_\x\E_{\y,\y'\sim\p_\x^-}\Bigg[\kappa(\frac{1}{2}|\frac{p_\x^+(\y)}{p_\x^-(\y)}-\frac{p_\x^+(\y')}{p_\x^-(\y')}|\I[(\frac{p_\x^+(\y)}{p_\x^-(\y)}-\frac{p_\x^+(\y')}{p_\x^-(\y')})(s(\x,\y)-s(\x,\y'))\le0 ])\Bigg]\\
    \ge&\frac{\gamma_1^3}{8\gamma_2^2\tau}\kappa(\frac{1}{2}\E_\x\E_{\y,\y'\sim\p_\x^-}\I[(\frac{p_\x^+(\y)}{p_\x^-(\y)}-\frac{p_\x^+(\y')}{p_\x^-(\y')})(s(\x,\y)-s(\x,\y'))\le0 ]|\frac{p_\x^+(\y)}{p_\x^-(\y)}-\frac{p_\x^+(\y')}{p_\x^-(\y')}|)\\
    =&\frac{\gamma_1^3}{8\gamma_2^2\tau}\kappa(\mathcal{E}^*-\mathcal{E}(s)),
\end{align*}
where 
the first equality results from $\kappa(0)=0$,
the last inequality is due to  Jenson's inequality and the convexity of $\kappa$.
The proof is completed.
\end{proof}

\subsection{Generalization analysis of SCRL for general OCE and pairwise loss}\label{sec:gen_scrl_phi_l}
In this subsection, we give generalization analysis of SCRL for general OCE and pairwise loss. For disutility function $\phi$ and pairwise loss $\ell$, recall that
\begin{align*}
     \L^{\phi,\ell}_{\mathrm{S}}(s_\w)=& \E_{\x}\E_{\y} \Bigg[ \min_{\mu\in\R}\tau\E_{\y'}\phi\left(\frac{\ell(\Delta_\w(\x,\y,\y'))-\mu}{\tau}\right)+\mu \Bigg],\\
     \widehat{\L}_{\mathrm{S}}^{\phi,\ell}(s_\w)=&\frac{1}{n}\sum_{i=1}^n\Bigg[ \min_{\mu_{i}\in \R}\frac{\tau}{m}\sum_{j=1}^m\phi\Bigg(\frac{\ell(\Delta_\w(\x_i,\y_{i},\y'_{ij}))-\mu_{i}}{\tau}\Bigg)+\mu_{i}\Bigg] .
 \end{align*}
We aim to control the uniform convergence bound
\begin{equation*}
    \sup_{\w\in\W}\Big|\L^{\phi,\ell}_{\mathrm{S}}(s_\w)-\widehat{\L}^{\phi,\ell}_{\mathrm{S}}(s_\w)\Big|.
\end{equation*}
We make similar error decomposition as Eq.~\eqref{eq:err_decom_inner_outer}.
\begin{align}\label{eq:err_decom_phi_l}
     \Big|\L^{\phi,\ell}_{\mathrm{S}}(s_\w)-\widehat{\L}^{\phi,\ell}_{\mathrm{S}}(s_\w) \Big|    \le\underbrace{\Big|\L^{\phi,\ell}_{\mathrm{S}}(s_\w)-\E\Big[\widehat{\L}^{\phi,\ell}_{\mathrm{S}}(s_\w)\Big]\Big|}_{\text{inner  error}}+\underbrace{\Big|\E\Big[\widehat{\L}^{\phi,\ell}_{\mathrm{S}}(s_\w)\Big]-\widehat{\L}^{\phi,\ell}_{\mathrm{S}}(s_\w)\Big|}_{\text{outer error}}.
\end{align}
We will estimate inner error and outer error respectively.

To start,
we impose mild assumptions on the pairwise loss function $\ell$ and the disutility function $\phi$.
\begin{assumption}\label{ass:pairwise}
Assume that $\ell$ is $G$-Lipschitz continuous over the interval $[-2B,2B]$. Moreover, there exists a constant $M>0$ such that $\ell(t)\in[-M,M]$ for all $t\in[-2B,2B]$ .     
\end{assumption}
\begin{assumption}\label{ass:eta}
    The disutility function $\phi$ is $\eta$-Lipschitz continuous over the interval $[-{2M}/{\tau},{2M}/{\tau}]$.
    \end{assumption}
\begin{assumption}\label{ass:strong_convex}
    Suppose that $\phi$ is $\gamma$-strongly convex over the interval $[-{2M}/{\tau},2M/{\tau}]$, where $\gamma>0$ denotes the strong convexity parameter.
\end{assumption}
\begin{remark}
It is worth noting that the above assumptions do not require $\ell$ or $\phi$ to satisfy Lipschitz continuity or strong convexity globally over $\R$—a condition that is often too restrictive for practical functions in CRL. Instead, our analysis is confined to the bounded domain of
$\ell(t)$, which lies within $[-2B,2B]$ when Assumption~\ref{ass:bounded_data} holds. This boundedness renders Assumption~\ref{ass:pairwise} valid for most commonly used pairwise loss functions in practice.

By the same token, the Lipschitz continuity and strong convexity of $\phi$ only need to hold over the bounded interval $[-2M/\tau, 2M/\tau]$, which is induced by the boundedness of $\ell(t)$ (Assumption~\ref{ass:pairwise}). For instance, consider the exponential disutility function $\phi(t)=\exp(t)-1$, which is neither Lipschitz continuous nor strongly convex over the entire real line. However, over the bounded interval $[-2M/\tau, 2M/\tau]$, $\phi$ is $\exp(2M/\tau)$-Lipschitz continuous and $\exp(-2M/\tau)$-strongly convex.
\end{remark}

\subsubsection{Outer Error}
For $\w\in\W$, we introduce $g_\w: \X\times\Y^{m+1}\to \R$ as follows:
\[g_\w(\x,\y,\y'_1,\cdots,\y'_m)=\min_{\mu\in\R}\frac{\tau}{m}\sum_{j=1}^m\phi\Bigg(\frac{\ell(\Delta_\w(\x,\y,\y'_{j}))-\mu}{\tau}\Bigg)+\mu .\]
Then we have
\begin{align*}
    \E\Big[\widehat{\L}^{\phi,\ell}_{\mathrm{S}}(\w)\Big]-\widehat{\L}^{\phi,\ell}_{\mathrm{S}}(\w)
    =&\E_{\x,\y,\y'_1,\cdots,\y'_m} \Bigg[\frac{1}{n}\sum_{i=1}^n\Bigg( \min_{\mu_{i}\in\R}\frac{\tau}{m}\sum_{j=1}^m\phi\Bigg(\frac{\ell(\Delta_\w(\x,\y,\y'_{j}))-\mu_{i}}{\tau}\Bigg)+\mu_{i}\Bigg)\Bigg]\\
    -&\frac{1}{n}\sum_{i=1}^n\Bigg( \min_{\mu_{i}\in\R}\frac{\tau}{m}\sum_{j=1}^m\phi\Bigg(\frac{\ell(\Delta_\w(\x_i,\y_i,\y'_{ij}))-\mu_{i}}{\tau}\Bigg)+\mu_{i}\Bigg)\\
    =& \E_{\x,\y,\y'_1,\cdots,\y'_m} \Big[g_\w(\x,\y,\y'_1,\cdots,\y'_m)\Big]-\frac{1}{n}\sum_{i=1}^n g_\w(\x_i,\y_i,\y'_{i1},\cdots,\y'_{im}).
\end{align*}
By Proposition~\ref{prop:gen_rad}, we can estimate it through Rademacher complexity $\fR_S(\G)$, where 
 \begin{align*}
     \G&=\left\{(\x,\y,\y'_1,\cdots,\y'_m)\mapsto g_\w(\x,\y,\y'_1,\cdots,\y'_m):\w\in\W\right\}.\\
 \end{align*}
 and recall that
 \[S=\{(\x_1,\y_1,\y'_{11},\cdots,\y'_{1m}),\cdots,(\x_n,\y_n,\y'_{n1},\cdots,\y'_{nm})\}.\]
According to Proposition~\ref{prop:dudley}, 
$\fR_S(\G)$ can be controlled by $\N(\G,\e,L_2(S))$. 
Let $S_1,\H$ be defined in Eqs. \eqref{eq:S_1},~\eqref{eq:H}.
Then we have
\begin{lemma}\label{lem:N_2leN_2_infty}
    Let Assumptions ~\ref{ass:bounded_data}, ~\ref{ass:pairwise} hold. Then we have
    \[\N(\G,\e,L_2(S))\le\N(\H,\e/(G),L_\infty(S_1)).\]
\end{lemma}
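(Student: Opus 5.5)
We reduce the covering-number comparison to a pointwise Lipschitz estimate, combining Lemma~\ref{lem:lip_gen} with the $G$-Lipschitz continuity of $\ell$ from Assumption~\ref{ass:pairwise}. This is the same route used in the proof of Lemma~\ref{lem:outer_crl}.

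\textbf{Step 1: pointwise estimate.} Fix $\w,\w'\in\W$ and a sample point $(\x_i,\y_i,\y'_{i1},\dots,\y'_{im})\in S$. Set $u_j=\ell(\Delta_\w(\x_i,\y_i,\y'_{ij}))$ and $\bar u_j=\ell(\Delta_{\w'}(\x_i,\y_i,\y'_{ij}))$. Then $g_\w(\cdot)=q(\u)$ and $g_{\w'}(\cdot)=q(\bar\u)$, where $q$ is the function in Lemma~\ref{lem:lip_gen}. Two caveats need care here.
\begin{itemize}
\item The statement of Lemma~\ref{lem:lip_gen} drops the factor $\tau$ in $q(\bar\u)$; this is a typo, and I read both terms with the same $\tau$.
\item Since $\phi$ is a general disutility, a minimizer $\mu$ must exist. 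By Assumption~\ref{ass:pairwise}, the $u_j$ lie in $[-M,M]$, so Lemma~\ref{lem:bounded_mu} lets us restrict $\mu$ to a compact interval, where the minimum is attained.
\end{itemize}
Lemma~\ref{lem:lip_gen} then gives $|g_\w-g_{\w'}|\le\max_j|u_j-\bar u_j|$.

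\textbf{Step 2: pass through $\ell$.} By Assumption~\ref{ass:bounded_data}, $\Delta_\w,\Delta_{\w'}\in[-2B,2B]$. On this interval $\ell$ is $G$-Lipschitz by Assumption~\ref{ass:pairwise}, so
\[
\max_j|u_j-\bar u_j|\le G\max_j|\Delta_\w(\x_i,\y_i,\y'_{ij})-\Delta_{\w'}(\x_i,\y_i,\y'_{ij})|.
\]

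\textbf{Step 3: pass to covering numbers.} Bound the empirical $L_2(S)$ norm by the maximum over $i$, and note that the triples $(\x_i,\y_i,\y'_{ij})$ range exactly over $S_1$. This gives
\[
\|g_\w-g_{\w'}\|_{L_2(S)}\le G\,\|\Delta_\w-\Delta_{\w'}\|_{L_\infty(S_1)}.
\]
Now take any $\e/G$-cover $\{\Delta_{\w_k}\}$ of $\H$ in $L_\infty(S_1)$; the corresponding $\{g_{\w_k}\}$ form an $\e$-cover of $\G$ in $L_2(S)$. If the cover elements are not of the form $\Delta_{\w_k}$, replace them by the standard internal-cover version; alternatively, use the parametric covers from Lemma~\ref{lem:covering_F}, which are already indexed by $\w$ and are what the later bounds employ. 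This yields $\N(\G,\e,L_2(S))\le\N(\H,\e/G,L_\infty(S_1))$.

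\textbf{Main obstacle.} The only delicate point is Step 1: confirming that Lemma~\ref{lem:lip_gen} applies to a general $\phi$. Its proof uses only convexity of $\phi$, monotonicity (so subgradients are nonnegative), and the first-order optimality condition $\frac{1}{m}\sum_j\partial\phi(\cdot)\ni 1$ at the minimizer. Existence of that minimizer is guaranteed by the restriction to a compact interval described in Step 1. Everything else is routine.
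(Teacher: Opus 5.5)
Your proposal is correct and is essentially the paper's own proof: apply Lemma~\ref{lem:lip_gen} to the vectors $(\ell(\Delta_\w(\x_i,\y_i,\y'_{ij})))_{j\le m}$ and $(\ell(\Delta_{\w'}(\x_i,\y_i,\y'_{ij})))_{j\le m}$, use the $G$-Lipschitz continuity of $\ell$ on $[-2B,2B]$, and bound the $L_2(S)$ distance by $G\|\Delta_\w-\Delta_{\w'}\|_{L_\infty(S_1)}$. One small refinement to your side remark on covers: converting an external cover into an internal one doubles the radius, so if external covers are intended, clip each cover element of $\H$ to $[-2B,2B]$ and push it through $\ell$ and $q$ coordinatewise instead, which gives the stated inequality with no loss.
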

\begin{proof}
Let $g_\w,g_{\w'}$ be two functions in $\G$. For $ i\in[n]$, we define
\begin{align*}
    \u_i=((\ell(\Delta_\w(\x_i,\y_i,\y'_{i1})),\cdots,\ell(\Delta_\w(\x_i,\y_i,\y'_{im}))),
    \u_i'=(\ell(\Delta_{\w'}(\x_i,\y_i,\y'_{i1})),\cdots,\ell( \Delta_{\w'}(\x_i,\y_i,\y'_{im})))\notag.
\end{align*}
According to Lemma~\ref{lem:lip_gen}, there holds
    \begin{align*}
        \|g_\w-g_{\w'}\|_{L_2(S)}=&\sqrt{\frac{1}{n}\sum_{i=1}^n(g_\w(\x_i,\y_i,\y_{i1}',\cdots,\y_{im}')-g_{\w'}(\x_i,\y_i,\y_{i1}',\cdots,\y_{im}'))^2}\\\le&\max_i|g_\w(\x_i,\y_i,\y_{i1}',\cdots,\y_{im}')-g_{\w'}(\x_i,\y_i,\y_{i1}',\cdots,\y_{im}')|\\
        =&\max_i|q(\u_i)-q(\bar{\u}_i)|
        \le \max_i\|\u_i-\bar{\u}_i\|_\infty\\
        =&\max_i\max_j|\ell(\Delta_\w(\x_i,\y_i,\y'_{ij}))-\ell(\Delta_{\w'}(\x_i,\y_i,\y'_{ij}))|\\
        \le&G\max_i\max_j|\Delta_\w(\x_i,\y_i,\y'_{ij})-\Delta_{\w'}(\x_i,\y_i,\y'_{ij})|
       \\
        \le &G\max_{(\hat{\x},\hat{\y},\hat{\y}')\in S_1}|\Delta_\w(\hat{\x},\hat{\y},\hat{\y}')-\Delta_{\w'}(\hat{\x},\hat{\y},\hat{\y}')|=G\|\Delta_\w-\Delta_{\w'}\|_{L_\infty(S_1))}.
    \end{align*}
    Therefore,
    \[\N(\G,\e,L_2(S))\le\N(\H,\e/(G),L_\infty(S_1)).\]
    The proof is completed.
\end{proof}

The outer error for general $\phi,\ell$ is stated as follows.
\begin{lemma}\label{thm:samp_err}
    Suppose that Assumptions~\ref{ass:bounded_data},~\ref{ass:pairwise},~\ref{ass:eta} hold. Then with probability at least $1-\delta$, there holds
    \begin{equation*}
        \sup_{\w\in\W} \Big|\E\Big[\widehat{\L}^{\phi,\ell}_{\mathrm{S}}(s_\w)\Big]-\widehat{\L}^{\phi,\ell}_{\mathrm{S}}(s_\w)\Big|=\widetilde{O}\left(\frac{(GB+M)(\sqrt{\log(1/\delta)}+\sqrt{\sum_{l=1}^L d_ld_{l-1}})}{\sqrt{n}}\right).
    \end{equation*} 
\end{lemma}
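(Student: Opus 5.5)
The plan mirrors the proof of Lemma~\ref{lem:outer_crl}, with $k_\w$ replaced by $g_\w$ and the covering-number transfer supplied by Lemma~\ref{lem:N_2leN_2_infty}. As computed just above, the outer error equals $\E[g_\w]-\frac{1}{n}\sum_{i=1}^n g_\w(\x_i,\y_i,\y'_{i1},\cdots,\y'_{im})$. The tuples in $S$ are i.i.d., so Proposition~\ref{prop:gen_rad} applies directly to the class $\G$. It remains to find the range $[a,b]$ of $g_\w$ and to bound $\fR_S(\G)$.

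\textbf{Range.} By Assumption~\ref{ass:pairwise}, $\ell(\Delta_\w)\in[-M,M]$, so Lemma~\ref{lem:bounded_mu} lets us restrict the minimization to $\mu\in[-M,M]$. The argument of $\phi$ then lies in $[-2M/\tau,2M/\tau]$. Assumption~\ref{ass:eta} and $\phi(0)=0$ give $|\phi(t)|\le\eta|t|$ there. A cruder but cleaner route is to compare $q(\u)$ with $q(\mathbf{0})$ using Lemma~\ref{lem:lip_gen}. Here $q(\mathbf{0})=\min_\mu \tau\phi(-\mu/\tau)+\mu=0$, because $1\in\partial\phi(0)$ makes $\mu=0$ optimal. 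Hence $|g_\w|\le\max_j|\ell(\Delta_\w)|\le M$, so $b-a\le 2M$, and the deviation term of Proposition~\ref{prop:gen_rad} contributes $O(M\sqrt{\log(1/\delta)/n})$.

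\textbf{Rademacher complexity.} Apply Dudley's integral (Proposition~\ref{prop:dudley}) with $B_\G\le M$ and $\alpha=1/n$. Lemma~\ref{lem:N_2leN_2_infty} bounds the covering numbers by $\log\N(\G,\e,L_2(S))\le\log\N(\H,\e/G,L_\infty(S_1))$. Lemma~\ref{lem:N_S_1} then bounds the latter by $\sum_l d_ld_{l-1}\log(1+8GL\Pi_l s_l^2/\e)$. Bounding the integrand at the lower limit $\e=1/n$ and integrating over $[1/n,M]$ yields
\[
\fR_S(\G)\lesssim \frac{M\sqrt{\sum_l d_ld_{l-1}\log(1+8nGL\Pi_l s_l^2)}}{\sqrt{n}}.
\]
To recover the $GB$ dependence in the statement, note that $\ell$ is $G$-Lipschitz and $|\Delta_\w|\le 2B$. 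Thus $g_\w-\ell(0)$, where the shift uses $q(\u+c\mathbf{1})=q(\u)+c$, has magnitude at most $2GB$. So the effective range scale is $\min(M,2GB)\le GB+M$, and the logarithmic factor is absorbed into $\widetilde{O}$.

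\textbf{Combining.} The two estimates together give the one-sided bound with probability $1-\delta/2$. Repeating the argument for the class $-\G$ handles the other direction, and a union bound finishes the proof.

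The main obstacle is already resolved by Lemma~\ref{lem:lip_gen} and Lemma~\ref{lem:N_2leN_2_infty}: the map from $\w$ to the OCE value $g_\w$ is $1$-Lipschitz in $\ell_\infty$. This makes the bound free of $m$ and of $\eta$. Consequently Assumption~\ref{ass:eta} is used only mildly, to guarantee that the restricted minimizer exists and that the range is finite. The remaining care is in the range bound, in particular checking that $q(\mathbf{0})=0$ via $1\in\partial\phi(0)$.
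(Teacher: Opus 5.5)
Your proposal is correct and follows essentially the same route as the paper: a uniform range bound for $g_\w$ via Lemma~\ref{lem:lip_gen}, Dudley's integral with $\alpha=1/n$ combined with Lemma~\ref{lem:N_2leN_2_infty} and Lemma~\ref{lem:N_S_1}, and Proposition~\ref{prop:gen_rad} applied to the i.i.d.\ tuples in $S$. The one difference is that you compare $\u$ with $\mathbf{0}$ (using $q(\mathbf{0})=0$) rather than with $\ell(0)\mathbf{1}$, which gives the sharper range $|g_\w|\le M$ instead of the paper's $2GB+2M$; either bound suffices for the stated $\widetilde{O}$ rate.
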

\begin{proof}
We first give a uniform bound for $g_\w(\x,\y,\y_1',\cdots,\y_m')$ for $\w\in\W,\x\in\X,\y,\y_j'\in\Y$.
Let
\begin{align*}
    \u=(\ell(\Delta_\w(\x,\y,\y'_1)),\cdots,\ell( \Delta_\w(\x,\y,\y'_1))),\u'=(\ell(0),\cdots,\ell(0))\in\R^m.
\end{align*}
Let function $q(\cdot)$ be defined in Lemma~\ref{lem:lip_gen}. We have
\[q(\u')=\min_{\mu\in\R} \tau\phi((\ell(0)-\mu)/\tau)+\mu\ge \ell(0)-\mu+\mu=\ell(0)\]
and the minimum can be obtained at $\mu=\ell(0)$. Therefore, $q(\u')=\ell(0)$. Applying Lemma~\ref{lem:lip_gen} to $\u,\u'$ 
 yields
\begin{align*}
|g_\w(\x,\y,\y_{1}',\cdots,\y_{m}')|=|q(\u)|
\le&|q(\u)-q(\u')|+|q(\u')|\le \|\u-\u'\|_\infty +|\ell(0)|\\
\le&\sup_{\x\in\X,\y,\y'\in\Y}|\ell(\Delta_\w(\x,\y,\y'))|+|\ell(0)|\\
\le& \sup_{\x\in\X,\y,\y'\in\Y}|\ell(\Delta_\w(\x,\y,\y'))-\ell(0)|+2|\ell(0)|\\
    \le& G\sup_{\x\in\X,\y,\y'\in\Y}|\Delta_\w(\x,\y,\y')|+2|\ell(0)|
    \le 2GB+2M,
\end{align*}
where we have used the $G$-Lipschitzness of $\ell$ in the last inequality.
Then 
\[B_\G\coloneq\sup_{\w}\|g_\w\|_{L_2(S)}\le \max_i\sup_\w|g_\w(\x_i,\y_i,\y_{i1}',\cdots,\y_{im}')|\le  2GB+2M. \]
According to Proposition~\ref{prop:dudley} and Lemma~\ref{lem:N_S_1}, there holds
    \begin{align*}
      \fR_S(\G)\le&\inf_{\alpha>0}4\alpha+\frac{12}{\sqrt{n}}\int_{\alpha}^{B_\G}\sqrt{\log\N(\G,\e,L_2(S))}d\e
      \le\frac{4}{n}+\frac{12}{\sqrt{n}}\int_{\frac{1}{n}}^{ 2GB+2M}\sqrt{\log\N(\G,\e,L_2(S))}d\e  \\
 \le&\frac{4}{n}+\frac{12}{\sqrt{n}}\int_{\frac{1}{n}}^{ 2GB+2M}\sqrt{\log\N(\H,\e/G,L_\infty(S_1))}d\e \le\frac{4}{n}+\frac{12}{\sqrt{n}}\int_{\frac{1}{n}}^{ 2GB+2M}\sqrt{\sum_{l=1}^L {d_ld_{l-1}}\log \Bigg(1+ \frac{8L\Pi_{l=1}^L s^2_l}{\e} \Bigg)}d\e \\
 \le&\frac{4}{n}+\frac{12}{\sqrt{n}}\sqrt{\sum_{l=1}^L {d_ld_{l-1}}\log (1+ {8nL\Pi_{l=1}^L s^2_l} )}(2GB+2M-{1}/{n})=\widetilde{O} \Bigg(\frac{(GB+M)\sqrt{\sum_{l=1}^L d_ld_{l-1}}}{n}\Bigg). 
    \end{align*}
    Using Proposition~\ref{prop:gen_rad}, with probability at least $1-\delta$, we have 
    \begin{align*}
     &\sup_{\w\in\W} \Big[\E \Big[\widehat{\L}^{\phi,\ell}_{\mathrm{S}}(s_\w)\Big]-\widehat{\L}^{\phi,\ell}_{\mathrm{S}}(s_\w)\Big]\\=
        & \sup_{\w\in\W} \Big[\E_{\x,\y,\y'_1,\cdots,\y'_m} [g_\w(\x,\y,\y'_1,\cdots,\y'_m)]-\frac{1}{n}\sum_{i=1}^n g_\w(\x_i,\y_i,\y'_{i1},\cdots,\y'_{im})\Big]\\
        \le& 2\fR_S(\G)+12(GB+M)\sqrt{\frac{\log(2/\delta)}{2n}}= \widetilde{O}\left(\frac{(GB+M)(\sqrt{\log(1/\delta)}+\sqrt{\sum_{l=1}^L d_ld_{l-1}})}{\sqrt{n}}\right).
    \end{align*}
    The same upper bound also holds for
    the other side.
    The proof is completed.
\end{proof}

\subsubsection{Inner Error}
In this part, we will estimate the inner error $|\L^{\phi,\ell}_{\mathrm{S}}(s_\w)-\E[\widehat{\L}^{\phi,\ell}_{\mathrm{S}}(s_\w)]|$.   We apply algorithmic stability methods to handle with strongly convex $\phi$, and uniform convergence theory to study general $\phi$.

Note that
\begin{equation}\label{eq:inner_decom}
    \begin{aligned}
        &\L^{\phi,\ell}_{\mathrm{S}}(s_\w)-\E\widehat{\L}^{\phi,\ell}_{\mathrm{S}}(s_\w)\\
        =&\E_{\x,\y}\Bigg\{\Bigg[ \min_{\mu\in\R}\E_{\y'}\tau\phi\left(\frac{\ell(\Delta_\w(\x,\y,\y'))-\mu}{\tau}\Bigg)+\mu \right]
    -\E_{\y'_j}\Bigg[\min_{\mu\in\R}\frac{1}{m}\sum_{j=1}^m\tau\phi\left(\frac{\ell(\Delta_\w(\x,\y,\y_j'))-\mu}{\tau}\right)+\mu\Bigg]\Bigg\}
    \end{aligned}
\end{equation}
According to Lemma~\ref{lem:inner_ge0}, for any $\w,\x,\y$, there holds
\[\Bigg[ \min_{\mu\in\R}\E_{\y'}\tau\phi\left(\frac{\ell(\Delta_\w(\x,\y,\y'))-\mu}{\tau}\Bigg)+\mu \right]
    -\E_{\y'_j}\Bigg[\min_{\mu\in\R}\frac{1}{m}\sum_{j=1}^m\tau\phi\left(\frac{\ell(\Delta_\w(\x,\y,\y_j'))-\mu}{\tau}\right)+\mu\Bigg]\ge 0.\]
    Hence,
\begin{align}\label{eq:inner_ge0}
    \L^{\phi,\ell}_{\mathrm{S}}(s_\w)-\E\widehat{\L}^{\phi,\ell}_{\mathrm{S}}(s_\w)\ge0.
\end{align}

\paragraph{General $\phi$}
The inner error is similar to
generalization error of risk-averse learning in \citet{lee2020learning}. However, they only considered one random variable, while we need to deal with triplet $(\x,\y,\y')$ and a pairwise loss $\ell$, which requires a more refined analysis.

For $\x,\y$, according to Eq. \eqref{eq:G+S'}, we have
\begin{align*}
    \G_{\x,\y}\coloneq \{\y'\to \Delta_\w(\x,\y,\y'),\w\in\W\},\quad S'=\{\y_1',\cdots,\y_m'\}.
\end{align*}
The inner error for general disutility function $\phi$ is stated as follows.
\begin{lemma}\label{thm:inner_gen}
    Let Assumptions~\ref{ass:bounded_data},~\ref{ass:pairwise},~\ref{ass:eta} hold. Then there holds
    \[\sup_{\w\in\W}\Big|\L^{\phi,\ell}_{\mathrm{S}}(s_\w)-\E\Big[\widehat{\L}^{\phi,\ell}_{\mathrm{S}}(s_\w)\Big]\Big|= \widetilde{O}\Bigg(\frac{\eta(GB\sqrt{\sum_{l=1}^L d_ld_{l-1}}+M)}{\sqrt{m}}
\Bigg).\]
\end{lemma}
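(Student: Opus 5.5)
By Eq.~\eqref{eq:inner_ge0} the inner error is nonnegative, so only an upper bound is needed. I will follow the uniform-convergence route used for the SSCRL inner error (Lemma~\ref{lem:sscrl_inner}), with the log-sum-exp replaced by the general OCE. Fix $\w,\x,\y$ and write $z' = \ell(\Delta_\w(\x,\y,\y'))\in[-M,M]$. By Lemma~\ref{lem:bounded_mu}, both the population and the empirical minimizations over $\mu$ can be restricted to $[-M,M]$. The difference of two minima over the same set is at most the supremum of the differences, which gives
\[
\text{inner}_{\w,\x,\y}\le \E_{S'}\sup_{\w\in\W,\,|\mu|\le M}\Big[\tau\E_{\y'}\phi\Big(\tfrac{\ell(\Delta_\w(\x,\y,\y'))-\mu}{\tau}\Big)-\frac{\tau}{m}\sum_{j=1}^m\phi\Big(\tfrac{\ell(\Delta_\w(\x,\y,\y'_j))-\mu}{\tau}\Big)\Big].
\]
Since we only need an expectation bound, the first part of Proposition~\ref{prop:gen_rad} bounds the right-hand side by $2\E_{S'}\fR_{S'}$ of the class $\{\y'\mapsto\tau\phi((\ell(\Delta_\w(\x,\y,\y'))-\mu)/\tau)\}$. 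No $\sqrt{\log(1/\delta)}$ term appears, which matches the statement.

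Next comes the contraction step. The argument $(\ell(\Delta)-\mu)/\tau$ lies in $[-2M/\tau,2M/\tau]$. On that interval $\phi$ is $\eta$-Lipschitz by Assumption~\ref{ass:eta}, so $t\mapsto\tau\phi(t/\tau)$ is $\eta$-Lipschitz. Lemma~\ref{lem:contract} then reduces the problem to $\eta\,\fR_{S'}(\{\ell(\Delta_\w(\x,\y,\cdot))-\mu\})$. Subadditivity of the supremum splits this into two terms. The $\mu$ term is at most $M\E|\frac1m\sum_j\epsilon_j|\le M/\sqrt m$. For the other term, contraction again with the $G$-Lipschitz $\ell$ (Assumption~\ref{ass:pairwise}) gives at most $G\,\fR_{S'}(\G_{\x,\y})$. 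For $\G_{\x,\y}$, Lemma~\ref{lem:N_G_S'} together with Dudley's integral (Proposition~\ref{prop:dudley}) gives exactly the computation in Eq.~\eqref{eq:Rad_Delta}, namely $\lesssim B\sqrt{\sum_l d_ld_{l-1}\log(1+8mL\Pi_l s_l^2)}/\sqrt m$.

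The last step collects the terms. Per $(\w,\x,\y)$ this yields $\widetilde O(\eta(GB\sqrt{\sum_l d_ld_{l-1}}+M)/\sqrt m)$, and the bound is uniform in $\x,\y$ because the covering bound does not depend on them. Taking $\E_{\x,\y}$ in Eq.~\eqref{eq:inner_decom} and then $\sup_\w$, which can be pulled outside since the per-point bound is already a supremum over $\w$, completes the argument.

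The main obstacle I expect is the contraction step, and it needs care in two places. First, Lemma~\ref{lem:contract} requires the Lipschitz property only on the range actually attained. So I must check that $\ell(\Delta)$ stays in $[-M,M]$ and that $\mu$ is restricted to $[-M,M]$ rather than $\R$; otherwise $\phi$ (for example $\exp$) is not Lipschitz. The interval $[-M,M]$ for $\mu$ is exactly what Lemma~\ref{lem:bounded_mu} provides. Second, $\ell$ must be applied after taking the supremum over the joint class indexed by $(\w,\mu)$, so I will contract $\phi$ first on the composed class and handle $\mu$ separately.
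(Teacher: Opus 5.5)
Your proposal is correct and follows essentially the same route as the paper. In both, you restrict $\mu$ to $[-M,M]$ via Lemma~\ref{lem:bounded_mu} and bound the per-$(\x,\y)$ inner error by $\E_{S'}\sup_{\w,\mu}$ of the population-minus-empirical gap. You then apply the expectation form of Proposition~\ref{prop:gen_rad}, contract by $\eta$, split off the $\mu$ term ($\le M/\sqrt{m}$), contract by $G$ and reuse Eq.~\eqref{eq:Rad_Delta}. Finally you use $\sup_\w \E_{\x,\y}[\cdot]\le \E_{\x,\y}\sup_\w[\cdot]$ and Eq.~\eqref{eq:inner_ge0} to handle the absolute value.

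The only cosmetic difference is in the first step. The paper plugs the empirical minimizer $\hat{\mu}_{\w,\x,\y}(S')$ into the population objective, while you note that a difference of minima over a common set is at most the supremum of the differences; these are the same inequality.
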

\begin{proof}
For $\w,\x,\y$, we define
\begin{equation}\label{eq:mu_hat}
  \hat{\mu}_{\w,\x,\y}({S'})\in \argmin_{\mu\in\R}\frac{1}{m}\sum_{j=1}^m\tau\phi\left(\frac{\ell(\Delta_\w(\x,\y,\y_j'))-\mu}{\tau}\right)+\mu.  
\end{equation}
By Lemma~\ref{lem:bounded_mu} and $\ell(\Delta_\w(\x,\y,\y_j')) \in [-M,M]$, we can assume that $\hat{\mu}_{\w,\x,\y}({S'})\in [-M,M]$. 
We further have
\begin{align}
\min_{\mu\in\R}\E_{\y'}\tau\phi\left(\frac{\ell(\Delta_\w(\x,\y,\y'))-\mu}{\tau}\right)+\mu
    \le\E_{\y'}\tau\phi\left(\frac{\ell(\Delta_\w(\x,\y,\y'))-\hat{\mu}_{\w,\x,\y}({S'})}{\tau}\right)+\hat{\mu}_{\w,\x,\y}({S'}).\label{eq:pop_mu<emp_mu}
\end{align}
Taking expectation over ${S'}$ yields
\begin{align*}
&\min_{\mu\in\R}\E_{\y'}\tau\phi\left(\frac{\ell(\Delta_\w(\x,\y,\y'))-\mu}{\tau}\right)+\mu
    \le\E_{S'}\E_{\y'}\tau\phi\left(\frac{\ell(\Delta_\w(\x,\y,\y'))-\hat{\mu}_{\w,\x,\y}({S'})}{\tau}\right)+\hat{\mu}_{\w,\x,\y}({S'}).
\end{align*}
Let 
\[\H_{\x,\y}\coloneq\{\y'\mapsto \ell(\Delta_\w(\x,\y,\y'))-\mu:\w\in\W,\mu\in[-M,M]\}.\]
Then we have
\begin{equation}\label{eq:inner_to_rad}
    \begin{aligned}
&\sup_{\w\in\W}\min_{\mu\in\R}\E_{\y'}\tau\phi\left(\frac{\ell(\Delta_\w(\x,\y,\y'))-\mu}{\tau}\right)+\mu-\E_{\{\y_j'\}}\left[\min_{\mu\in\R}\frac{1}{m}\sum_{j=1}^m\tau\phi\left(\frac{\ell(\Delta_\w(\x,\y,\y'_j))-\mu}{\tau}\right)+\mu\right]\\
\le&\sup_{\w\in\W}\E_{S'}\E_{\y'}\tau\phi\left(\frac{\ell(\Delta_\w(\x,\y,\y'))-\hat{\mu}_{\w,\x,\y}(S')}{\tau}\right)+\hat{\mu}_{\w,\x,\y}(S')\\-&\left[\frac{1}{m}\sum_{j=1}^m\tau\phi\left(\frac{\ell(\Delta_\w(\x,\y,\y'_j))-\hat{\mu}_{\w,\x,\y}(S')}{\tau}\right)+\hat{\mu}_{\w,\x,\y}(S')\right]\\
\le&\E_{S'}\sup_{\w\in\W}\E_{\y'}\tau\phi\left(\frac{\ell(\Delta_\w(\x,\y,\y'))-\hat{\mu}_{\w,\x,\y}(S')}{\tau}\right)-\left[\frac{1}{m}\sum_{j=1}^m\tau\phi\left(\frac{\ell(\Delta_\w(\x,\y,\y'_j))-\hat{\mu}_{\w,\x,\y}(S')}{\tau}\right)\right]\\
    \le&\E_{S'}\sup_{\w\in\W,\mu\in[-M,M]}\E_{\y'}\tau\phi\left(\frac{\ell(\Delta_\w(\x,\y,\y'))-{\mu}}{\tau}\right)-\left[\frac{1}{m}\sum_{j=1}^m\tau\phi\left(\frac{\ell(\Delta_\w(\x,\y,\y'_j))-{\mu}}{\tau}\right)\right]\\
    \le&2\eta\E_{S'}\fR_{S'}(\H_{\x,\y})  ,
\end{aligned}
\end{equation}
where the second inequality results from the convexity of $\sup$ and Jenson's inequality, the last inequality is due to Proposition~\ref{prop:gen_rad},~\ref{lem:contract}.
Now we derive an upper bound of $\fR_{S'}(\H_{\x,\y})$. Note that
\begin{align}
    \fR_{S'}(\H_{\x,\y})=&\E_\e\sup_{\w\in\W,\mu\in[-M,M]}\frac{1}{m}\sum_{j=1}^m \e_j(\ell(\Delta_\w(\x,\y,\y'_j))-\mu)\notag\\
    \le&\E_\e\sup_{\w\in\W}\frac{1}{m}\sum_{j=1}^m \e_j\ell(\Delta_\w(\x,\y,\y'_j))+\E_\e\sup_{\mu\in[-M,M]}\frac{1}{m}\sum_{j=1}^m \e_j\mu\label{eq:R_SH_x_y}.
\end{align}
We will control two terms respectively. 
\begin{equation}\label{eq:rad_mu}
    \E_\e\sup_{\mu\in[-M,M]}\frac{1}{m}\sum_{j=1}^m \e_j\mu\le M\E_\e \left|\frac{1}{m}\sum_{j=1}^m \e_j \right|\le M\sqrt{\E_\e \left|\frac{1}{m}\sum_{j=1}^m \e_j \right|^2}=\frac{M}{\sqrt{m}}.
\end{equation}
Note that $\ell$ is $G$-Lipschitz.
We apply Lemma~\ref{lem:contract}, Eq. \eqref{eq:Rad_Delta} and derive 
\begin{align}
\E_\e\sup_\w\frac{1}{m}\sum_{j=1}^m \e_j\ell(\Delta_\w(\x,\y,\y'_j))\le G\E_\e\sup_\w\frac{1}{m}\sum_{j=1}^m \e_j\Delta_\w(\x,\y,\y'_j)
\le \widetilde{O}\Bigg(\frac{GB\sqrt{\sum_{l=1}^L d_ld_{l-1}}}{\sqrt{m}}
\Bigg).\label{eq:rad_ell}
\end{align}
Plugging Eqs. \eqref{eq:rad_mu},~\eqref{eq:rad_ell} into Eq. \eqref{eq:R_SH_x_y} yields
\begin{align}\label{eq:rad_h_xy}
     \fR_{S'}(\H_{\x,\y})
    \le&\E_\e\sup_\w\frac{1}{m}\sum_{j=1}^m \e_j\ell(\Delta_\w(\x,\y,\y'_j))+\E_\e\sup_{\mu\in[-M,M]}\frac{1}{m}\sum_{j=1}^m \e_j\mu\le \widetilde{O}\Bigg(\frac{GB\sqrt{\sum_{l=1}^L d_ld_{l-1}}+M}{\sqrt{m}}
\Bigg).
\end{align}
Combining Eq. \eqref{eq:inner_to_rad} and Eq. \eqref{eq:rad_h_xy}, we know that for all $\x,\y$, there holds  
\begin{align}
&\sup_{\w\in\W}\min_{\mu\in\R}\E_{\y'}\tau\phi\left(\frac{\ell(\Delta_\w(\x,\y,\y'))-\mu}{\tau}\right)+\mu-\E_{\{\y_j'\}}\left[\min_{\mu\in\R}\frac{1}{m}\sum_{j=1}^m\tau\phi\left(\frac{\ell(\Delta_\w(\x,\y,\y'_j))-\mu}{\tau}\right)+\mu\right]\notag\\
    \le& \widetilde{O}\Bigg(\frac{\eta(GB\sqrt{\sum_{l=1}^L d_ld_{l-1}}+M)}{\sqrt{m}}\Bigg)\label{eq:sup_w_Exy_oce}.
\end{align}
By the convexity of $\sup$ and Jenson's inequality, we have
\begin{align*}
&\sup_{\w\in\W}\L^{\phi,\ell}_{\mathrm{S}}(s_\w)-\E\widehat{\L}^{\phi,\ell}_{\mathrm{S}}(s_\w)\\
        =&\sup_{\w\in\W}\E_{\x,\y}\Bigg\{\Bigg[ \min_{\mu\in\R}\E_{\y'}\tau\phi\left(\frac{\ell(\Delta_\w(\x,\y,\y'))-\mu}{\tau}\Bigg)+\mu \right]
    -\E_{\y'_j}\Bigg[\min_{\mu\in\R}\frac{1}{m}\sum_{j=1}^m\tau\phi\left(\frac{\ell(\Delta_\w(\x,\y,\y'_j))-\mu}{\tau}\right)+\mu\Bigg]\Bigg\}
    \\
\le&\E_{\x,\y}\sup_{\w\in\W}\Bigg\{\min_{\mu\in\R}\E_{\y'}\tau\phi\left(\frac{\ell(\Delta_\w(\x,\y,\y'))-\mu}{\tau}\right)+\mu-\E_{\{\y'_j\}}\left[\min_{\mu\in\R}\frac{1}{m}\sum_{j=1}^m\tau\phi\left(\frac{\ell(\Delta_\w(\x,\y,\y'_j))-\mu}{\tau}\right)+\mu\right]\Bigg\}\\
\le&\widetilde{O}\Bigg(\frac{\eta(GB\sqrt{\sum_{l=1}^L d_ld_{l-1}}+M)}{\sqrt{m}}\Bigg),
\end{align*}
where the last inequality is due to Eq. \eqref{eq:sup_w_Exy_oce}. 
The proof is completed by noting Eq. \eqref{eq:inner_ge0}.
\end{proof}

\paragraph{Strongly convex $\phi$}
The OCE risk contains an inner minimization problem. For strongly convex $\phi$, we could obtain a fine-grained analysis through algorithmic stability, similar to Lemma~\ref{lem:inner_crl}.
\begin{lemma}
    \label{thm:stab}
Let Assumptions~\ref{ass:bounded_data},~\ref{ass:pairwise},~\ref{ass:eta},~\ref{ass:strong_convex} hold.  Then we have
\[\sup_\w\L^{\phi,\ell}_{\mathrm{S}}(\w)-\E\widehat{\L}^{\phi,\ell}_{\mathrm{S}}(\w)\le \frac{2\eta\tau}{\gamma m}.\]
\end{lemma}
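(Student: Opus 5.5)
The plan mirrors the proof of Lemma~\ref{lem:inner_crl}, replacing the exponential disutility by a general strongly convex $\phi$. The starting point is the decomposition in Eq.~\eqref{eq:inner_decom}. Fix $\w,\x,\y$ and set
\[
\psi_{\w,\x,\y}(\mu;\y')=\tau\phi\big((\ell(\Delta_\w(\x,\y,\y'))-\mu)/\tau\big)+\mu .
\]
Under Assumption~\ref{ass:pairwise}, $\ell(\Delta_\w)\in[-M,M]$. By Lemma~\ref{lem:bounded_mu}, both the population and the empirical minimizations over $\mu$ can then be restricted to $[-M,M]$. For $\mu$ in this range, the argument $(\ell(\Delta_\w)-\mu)/\tau$ lies in $[-2M/\tau,2M/\tau]$. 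On that interval, Assumptions~\ref{ass:eta} and~\ref{ass:strong_convex} apply, and the chain rule gives two properties of $\mu\mapsto\psi_{\w,\x,\y}(\mu;\y')$: it is $(\gamma/\tau)$-strongly convex, and it is $(\eta+1)$-Lipschitz. In fact the relevant Lipschitz constant is $\eta$, not $\eta+1$: the linear term $\mu$ is identical across samples, so it cancels in the stability difference. So the loss-difference Lipschitz constant is $\eta$.

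Next, let $\bar\mu(S')$ be the empirical minimizer over $[-M,M]$. As in Eq.~\eqref{eq:prof_stab_crl}, I would bound the population minimum by its value at $\bar\mu(S')$ and take $\E_{S'}$. This reduces the per-$(\x,\y)$ inner error to the expected generalization gap of the constrained ERM $\bar\mu(S')$:
\[
\E_{S'}\Big[\E_{\y'}\psi(\bar\mu(S');\y')-\tfrac1m\textstyle\sum_j\psi(\bar\mu(S');\y'_j)\Big].
\]

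To bound this gap I would use uniform stability directly rather than citing \citet{bousquet2002stability}. Replace one sample $\y'_k$ by an independent copy, call the new minimizer $\bar\mu^k$, and write $F_S$ for the empirical objective. Strong convexity together with optimality over the convex set $[-M,M]$ gives
\[
\tfrac{\gamma}{\tau}|\bar\mu-\bar\mu^k|^2\le F_S(\bar\mu^k)-F_S(\bar\mu)+F_{S^k}(\bar\mu)-F_{S^k}(\bar\mu^k).
\]
The right side equals $\frac1m$ times a difference of the two swapped loss terms. Only the $\phi$-parts differ, and by the $\eta$-Lipschitz bound this is at most $\frac{2\eta}{m}|\bar\mu-\bar\mu^k|$. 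Hence $|\bar\mu-\bar\mu^k|\le \frac{2\eta\tau}{\gamma m}$. The losses then change by at most $\eta|\bar\mu-\bar\mu^k|\le 2\eta^2\tau/(\gamma m)$.

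The standard identity $\E[\text{gen gap}]=\E[\psi(\bar\mu^k;\y'_k)-\psi(\bar\mu;\y'_k)]$ bounds the gap by that uniform stability. This is where the constants need care. The claimed bound $2\eta\tau/(\gamma m)$ matches the Bousquet--Elisseeff form $L^2/(\lambda m)$ used in Lemma~\ref{lem:inner_crl}, where the square is hidden by how Lipschitz and strong-convexity constants are normalized. I would reconcile this by applying the symmetric form of the strong-convexity argument, so that one factor of $\eta$ is absorbed. Failing that, I would accept the bound up to the constant $\eta$, which is what the $\lesssim$-style statement needs.

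Finally, the per-$(\x,\y)$ bound does not depend on $\w,\x,\y$. Taking $\E_{\x,\y}$ and $\sup_\w$ therefore yields the lemma, and Eq.~\eqref{eq:inner_ge0} supplies the matching lower bound of zero.

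The main obstacle I expect is getting the exact constant $2\eta\tau/(\gamma m)$ rather than $2\eta^2\tau/(\gamma m)$. Doing so requires care with how the $\mu$-term and the $1/\tau$ scaling interact in the Lipschitz and strong-convexity parameters.
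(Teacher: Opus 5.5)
You follow the paper's route up to the last step. You bound the population minimum by its value at the empirical minimizer $\bar\mu(S')$, then run a replace-one stability argument. That computation is correct as far as it goes, but it proves $\L^{\phi,\ell}_{\mathrm{S}}(s_\w)-\E\widehat{\L}^{\phi,\ell}_{\mathrm{S}}(s_\w)\le 2\eta^2\tau/(\gamma m)$, not the stated $2\eta\tau/(\gamma m)$. This missing factor of $\eta$ is a genuine gap.

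A small correction first. In your last step the linear term does not cancel, because one sample is evaluated at two values of $\mu$. The constant $\eta$ there comes instead from $\partial_\mu\psi=1-\phi'\in[1-\eta,1]$ together with $\eta\ge1$.

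Your proposed reconciliation cannot remove the extra $\eta$. Any uniform-stability bound is a product of two factors:
\begin{itemize}
\item a parameter-stability bound, of order $\eta\tau/(\gamma m)$;
\item the Lipschitz constant of the loss in $\mu$, again of order $\eta$.
\end{itemize}
No symmetrization or renormalization removes that product.

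You also cannot inherit the linear constant from the paper. The paper gets it by quoting Theorems~12 and~22 of Bousquet and Elisseeff, whose stability bound is $\sigma^2\kappa^2/(2\lambda m)$. A faithful application therefore also gives $\eta^2$, and the same slip appears in Lemma~\ref{lem:inner_crl}.

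Accepting the bound ``up to $\eta$'' is not harmless either. $\eta$ depends on the problem: for $\phi(t)=e^t-1$ it equals $e^{2M/\tau}$. The lemma is an explicit inequality, so this would prove a strictly weaker statement.

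The missing idea is to use the first-order condition at the \emph{population} minimizer, rather than two worst-case Lipschitz bounds. Fix $\w,\x,\y$ and define:
\begin{itemize}
\item $u_j=\ell(\Delta_\w(\x,\y,\y'_j))$;
\item $F_{S'}(\mu)=\frac1m\sum_j\psi(\mu;\y'_j)$ and $F=\E_{S'}F_{S'}$;
\item $\mu^*\in[-M,M]$ a minimizer of $F$.
\end{itemize}
By Lemma~\ref{lem:bounded_mu}, $\mu^*$ is a global minimizer, so $\E_{\y'}\phi'((u-\mu^*)/\tau)=1$. If $\phi$ is not differentiable, use a subgradient selection.

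Since $\mu^*$ does not depend on $S'$, the inner error for this $(\x,\y)$ equals $\E_{S'}[F_{S'}(\mu^*)-F_{S'}(\bar\mu)]$. The $(\gamma/\tau)$-strong convexity of $F_{S'}$ on $[-M,M]$ then gives
\[
F_{S'}(\mu^*)-F_{S'}(\bar\mu)\le \frac{\tau}{2\gamma}\,F_{S'}'(\mu^*)^2,\qquad F_{S'}'(\mu^*)=\frac1m\sum_{j=1}^m\Big(1-\phi'\big((u_j-\mu^*)/\tau\big)\Big).
\]
The summands are i.i.d.\ with mean zero. On $[-2M/\tau,2M/\tau]$, monotonicity and Assumption~\ref{ass:eta} give $\phi'\in[0,\eta]$. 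A variable in $[0,\eta]$ with mean one has variance at most $\E[(\phi')^2]-1\le\eta\,\E[\phi']-1=\eta-1$.

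Hence the inner error is at most $(\eta-1)\tau/(2\gamma m)\le 2\eta\tau/(\gamma m)$, uniformly in $\w,\x,\y$. Taking $\E_{\x,\y}$ and then $\sup_\w$ proves the lemma, with room to spare. No stability argument is needed.
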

\begin{proof}
We denote $\Psi_{\w,\x,\y}(\mu;\y')=\tau\phi\left(\frac{\ell(\Delta_\w(\x,\y,\y'))-\mu}{\tau}\right)+\mu,\mu\in [-M,M]$. Then $\Psi_{\w,\x,\y}(\mu;\y')$ is $\gamma/\tau$-strongly convex w.r.t. $\mu$. We also have
\begin{align*}
\frac{\partial \Psi_{\w,\x,\y}(\mu;\y')}{\partial \mu}=1-\partial\phi\left(\frac{\ell(\Delta_\w(\x,\y,\y'))-\mu}{\tau}\right)\in [1-\eta,1].
\end{align*}
By the property of $\phi$ in Definition~\ref{def:oce}, we have $\eta\ge1$.
Therefore, for any $\mu_1,\mu_2\in[-M,M]$, 
\[|\Psi_{\w,\x,\y}(\mu_1;\y')-\Psi_{\w,\x,\y}(\mu_2;\y')|\le \eta |\mu_1-\mu_2|.\]
Let $\hat{\mu}(S')$ be defined in Eq. \eqref{eq:mu_hat}. 
Then for any $S'$,
\[\min_{\mu\in[-M,M]} \E_{\y'}\Psi_{\w,\x,\y}(\mu;\y')\le\E_{\y'}\Psi_{\w,\x,\y}(\hat{\mu}(\{S'\});\y'),\]
therefore,
\begin{align}\label{eq:prof_stab}
    \min_{\mu\in[-M,M]} \E_{\y'}\Psi_{\w,\x,\y}(\mu;\y')\le\E_{S'}\E_{\y'}\Psi_{\w,\x,\y}(\bar{\mu}(S');\y').
\end{align}
By the $\gamma/\tau$-strong convexity of $\Psi_{\w,\x,\y}(\mu;\y')$,
we can define a convex function
\[\hat{\Psi}_{\w,\x,\y}(\mu;\y')\coloneq\Psi_{\w,\x,\y}(\mu;\y')-\frac{\gamma}{2\tau}\mu^2.\]
Then 
\[\hat{\mu}_{\w,\x,\y}(S') =\argmin_{\mu\in[-M,M]}\frac{1}{m}\sum_{j=1}^m 
\hat{\Psi}_{\w,\x,\y}(\mu;\y_j')+\frac{\gamma}{2\tau}\mu^2,\]
which is the regularized empirical risk minimization algorithm.
According to Theorem 12, 22 in \citet{bousquet2002stability}, we have 
\[\E_{S'}\Bigg[\E_{\y'}\Psi_{\w,\x,\y}(\hat{\mu}_{\w,\x,\y}(S');\y')-\frac{1}{m}\sum_{j=1}^m\Psi_{\w,\x,\y}(\hat{\mu}_{\w,\x,\y}(S');\y_j')\Bigg]\le\frac{2\eta\tau}{\gamma m}\]
holds uniformly for $\x,\y,\w$. Combined with Eq. \eqref{eq:prof_stab}, for any $\x,\y,\w$, there holds
\begin{align*}
    &\min_{\mu\in[M,M]} \E_{\y'}\Psi_{\w,\x,\y}(\mu;\y')-\E_{S'}\frac{1}{m}\sum_{j=1}^m\Psi_{\w,\x,\y}(\bar{\mu}_{\w,\x,\y}(S');\y_j')\\
    \le&\E_{\{\y_j'\}}\Bigg[\E_\y\Psi_{\w,\x,\y}(\hat{\mu}_{\w,\x,\y}(S');\y')-\frac{1}{m}\sum_{j=1}^m\Psi_{\w,\x,\y}(\hat{\mu}_{\w,\x,\y}(S');\y_j')\Bigg]\le\frac{2\eta\tau}{\gamma m}.
\end{align*}
As a result,
\begin{align*}
    \sup_{\w} \!\Big[\!\L^{\phi,\ell}(\w)\!-\! \E\widehat{\L}^{\phi,\ell}_{\mathrm{S}}(\w)\!\Big]\!
=\!\sup_{\w}\E_{\x,\y}\!\Bigg[\!\min_{\mu\in[-M,M]} \E_{\y'}\Psi_{\w,\x,\y}(\mu;\y')\!-\!\E_{\y_j'}\min_{\mu\in [-M,M]} \frac{1}{m}\sum_{j=1}^m \Psi_{\w,\x,\y}(\mu;\y_j')\!\Bigg]\!\le\!\frac{2\eta\tau}{\gamma m}.
\end{align*}
The proof is completed.
\end{proof}

Combining Lemma~\ref{thm:samp_err},~\ref{thm:inner_gen},~\ref{thm:stab} with Eq. \eqref{eq:err_decom_phi_l} establishes the generalization error for $\L^{\phi,\ell}_{\mathrm{S}}(\w)$.
\begin{theorem}\label{thm:main_COCE}
Let Assumptions~\ref{ass:bounded_data},~\ref{ass:pairwise},~\ref{ass:eta} hold, with probability at least $1-\delta$, we have

\begin{align*}
    \sup_{\w\in\W}\Big|\L^{\phi,\ell}_{\mathrm{S}}(s_\w)-\widehat{\L}^{\phi,\ell}_{\mathrm{S}}(s_\w)\Big|=&\widetilde{O}
    \Bigg(\eta(GB+M)(\sqrt{\log(1/\delta)}+\sqrt{\sum_{l=1}^L d_ld_{l-1}})
 (\frac{1}{\sqrt{n}}+\frac{1}{\sqrt{m}})\!\Bigg).
\end{align*}

If Assumption~\ref{ass:strong_convex} holds, we obtain a tighter bound
\[\sup_{\w\in\W}\L^{\phi,\ell}_{\mathrm{S}}(s_\w)-\hat{\L}^{\phi,\ell}_{\mathrm{S}}(s_\w)=
\widetilde{O}\left(\frac{(GB+M)(\sqrt{\log(1/\delta)}+\sum_{l=1}^L\sqrt{d_ld_{l-1}}) }{\sqrt{n}}+\frac{2\eta\tau }{\gamma m}\right).\]
\end{theorem}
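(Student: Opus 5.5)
The plan is to plug the three preceding lemmas into the decomposition \eqref{eq:err_decom_phi_l}. For every $\w\in\W$,
\[
\big|\L^{\phi,\ell}_{\mathrm{S}}(s_\w)-\widehat{\L}^{\phi,\ell}_{\mathrm{S}}(s_\w)\big|\le \big|\L^{\phi,\ell}_{\mathrm{S}}(s_\w)-\E\widehat{\L}^{\phi,\ell}_{\mathrm{S}}(s_\w)\big|+\big|\E\widehat{\L}^{\phi,\ell}_{\mathrm{S}}(s_\w)-\widehat{\L}^{\phi,\ell}_{\mathrm{S}}(s_\w)\big|.
\]
Taking the supremum over $\W$ on both sides bounds the left-hand side by the sum of the two suprema. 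The inner term is deterministic: it is an expectation over all samples, so no probability budget is spent on it. The outer term is random. Lemma~\ref{thm:samp_err} controls it with probability at least $1-\delta$ at rate $\widetilde{O}\big((GB+M)(\sqrt{\log(1/\delta)}+\sqrt{\sum_l d_ld_{l-1}})/\sqrt{n}\big)$. Hence the whole bound holds on a single event of probability at least $1-\delta$, and no union bound is needed.

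For the first claim, where $\phi$ is only Lipschitz, I will use Lemma~\ref{thm:inner_gen}. It bounds the inner supremum by $\widetilde{O}\big(\eta(GB\sqrt{\sum_l d_ld_{l-1}}+M)/\sqrt{m}\big)$. Next I reconcile the constants across the two terms. Since $1\in\partial\phi(0)$ and $\phi$ is convex and nondecreasing, we have $\eta\ge 1$, so the outer prefactor $(GB+M)$ is dominated by $\eta(GB+M)$. Likewise $GB\sqrt{\sum_l d_ld_{l-1}}+M\le (GB+M)\big(\sqrt{\sum_l d_ld_{l-1}}+\sqrt{\log(1/\delta)}\big)$ once $\log(1/\delta)\ge 1$, and this is harmless inside $\widetilde{O}$. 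Both terms then carry the common factor $\eta(GB+M)\big(\sqrt{\log(1/\delta)}+\sqrt{\sum_l d_ld_{l-1}}\big)$. Factoring it out leaves $1/\sqrt{n}+1/\sqrt{m}$, which is the stated first bound.

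For the second claim I add Assumption~\ref{ass:strong_convex}. The statement is one-sided, $\sup_\w\big(\L^{\phi,\ell}_{\mathrm{S}}-\widehat{\L}^{\phi,\ell}_{\mathrm{S}}\big)$, and I will exploit that. I split the difference as $\big(\L^{\phi,\ell}_{\mathrm{S}}-\E\widehat{\L}^{\phi,\ell}_{\mathrm{S}}\big)+\big(\E\widehat{\L}^{\phi,\ell}_{\mathrm{S}}-\widehat{\L}^{\phi,\ell}_{\mathrm{S}}\big)$ and use $\sup(a+b)\le\sup a+\sup b$. By \eqref{eq:inner_ge0} the inner part is nonnegative, so its supremum equals its absolute-value supremum. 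Lemma~\ref{thm:stab} bounds it directly by $2\eta\tau/(\gamma m)$. The outer part is again handled by Lemma~\ref{thm:samp_err}, and summing the two gives the tighter bound. The resulting dimension factor is $\sqrt{\sum_l d_ld_{l-1}}$, which is at most the written $\sum_l\sqrt{d_ld_{l-1}}$.

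There is no real obstacle here, since the analytic work sits in the three lemmas. The only care needed is bookkeeping: checking that the constants and the $\delta$-dependence merge as claimed, in particular the step that uses $\eta\ge 1$ to absorb the outer prefactor, and confirming that only the outer term consumes the confidence level $\delta$.
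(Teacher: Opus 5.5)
Your proposal is correct and is essentially the paper's proof, which consists of exactly this step: plugging Lemmas~\ref{thm:samp_err}, \ref{thm:inner_gen} and \ref{thm:stab} into the decomposition \eqref{eq:err_decom_phi_l}. Your bookkeeping supplies details the paper leaves implicit: only the outer term consumes $\delta$, $\eta\ge 1$ absorbs the outer prefactor, and $\sqrt{\sum_l d_ld_{l-1}}\le\sum_l\sqrt{d_ld_{l-1}}$; the proviso $\log(1/\delta)\ge 1$ is not even needed, since $\sum_l d_ld_{l-1}\ge 1$ already gives $M\le M\sqrt{\sum_l d_ld_{l-1}}$.
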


The above theorem indicates that strongly convex disutility function $\phi$ admits a better generalization performance. A special case is $\phi(t)=\exp(t)-1$, corresponding to standard contrastive loss $\L$. This shows the advantage of the log-sum-exp structure.

\subsection{Generalization analysis of SSCRL for general OCE and pairwise loss}\label{app:SSCRL_phi_l}
In this subsection, we  turn to the generalization analysis of SSCRL for general OCE and pairwise loss
Recall that
\begin{align*}
    L_{\mathrm{SS}}^{\phi,\ell}(s_\w)&=\E_{\x,\y}\Bigg[ \min_{\mu\in\R}\E_{\y'}\tau\phi\Bigg(\frac{\ell(\Delta_\w(\x,\y,\y'))-\mu}{\tau}\Bigg)+\mu \Bigg],\\
    \widehat{\L}_{\mathrm{SS}}^{\phi,\ell}(s_\w)&=\frac{1}{n}\Bigg[\sum_{i=1}^n\min_{\mu_i}\frac{\tau}{m}\sum_{j=1}^m\phi\left(\frac{\ell(\Delta_\w(\x_i,\y_i,\y'_{j}))-\mu_{i}}{\tau}\right)+\mu_{i}\Bigg].
\end{align*}
We aim to bound the uniform convergence
\[\sup_{\w\in\W} \Big|\L_{\mathrm{SS}}^{\phi,\ell}(s_\w)-\widehat{\L}_{\mathrm{SS}}^{\phi,\ell}(s_\w)\Big|.\]

Similar to the analysis in Section~\ref{app:SSCRL}, we make the following error decomposition:
\begin{equation}\label{eq:err_decom_sscrl_phi_l}
    \begin{aligned}
    &|\L_{\mathrm{SS}}^{\phi,\ell}(s_\w)-\widehat{\L}_{\mathrm{SS}}^{\phi,\ell}(s_\w)|\\
    \le& \underbrace{\Bigg|\E_{\x,\y}H_\w(\x,\y)-\frac{1}{n}\sum_{i=1}^n H_\w(\x_i,\y_i)\Bigg|}_{\text{outer error}}+\underbrace{\Bigg|\frac{1}{n}\sum_{i=1}^n
\bigl(H_\w(\x_i,\y_i)-\widehat{H}_\w(\x_i,\y_i)\bigl)\Bigg|}_{\text{inner error}},
\end{aligned}
\end{equation}

where we define 
\begin{align*}
   H_\w (\x,\y)&=\min_{\mu\in\R}\E_{\y'}\tau\phi\Bigg(\frac{\ell(\Delta_\w(\x,\y,\y'))-\mu}{\tau}\Bigg)+\mu ,\\
   \widehat{H}_\w(\x,\y)&= \min_{\mu\in \R}\frac{\tau}{m}\sum_{j=1}^m\phi\left(\frac{\ell(\Delta_\w(\x,\y,\y'_j))-\mu}{\tau}\right)+\mu.
\end{align*}
We will analyze inner error and outer error respectively.

\subsubsection{Estimation of the outer error} 
Recall that $S_4=\{(\x_1,\y_1),\cdots,(\x_n,\y_n)\}$,
we define the function class
\[\H_\w=\{(\x,\y)\mapsto H_\w (\x,\y),\w\in\W\}.\]
We establish the following bound for the integral of the covering number $\N(\H_\w,\e,L_2(S_4))$.
\begin{lemma}\label{lem:cover_H_w}
    Let Assumption~\ref{ass:bounded_data} hold, we have
    \begin{align*}
    \int_{1/n}^{\eta M}\sqrt{\log\N(\H_\w,\e,L_2(S_4))}d\e 
    \le \widetilde{O}\Bigg(\eta M\sqrt{\sum_{l=1}^L d_ld_{l-1}}\Bigg). 
\end{align*}
\end{lemma}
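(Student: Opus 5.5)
The plan mirrors Lemma~\ref{lem:N_1S_4}: first show that $\w\mapsto H_\w(\x,\y)$ is Lipschitz, uniformly in $(\x,\y)$, with respect to the sup-norm perturbation of $\Delta_\w$. Then transfer the parameter cover of Lemma~\ref{lem:covering_F} to a cover of $\H_\w$ in $L_2(S_4)$, and finally integrate the resulting entropy bound.

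\textbf{Step 1: Lipschitz property.} Fix $(\x,\y)$ and $\w,\w'\in\W$. I would use the population analogue of Lemma~\ref{lem:lip_gen}. Its proof only uses convexity of $\phi$, monotonicity ($\partial\phi\ge 0$), and the first-order optimality condition $\E\,\partial\phi((u-\mu^*)/\tau)=1$. These all hold verbatim when the empirical average is replaced by $\E_{\y'}$ and Lemma~\ref{lem:bounded_mu} is used to guarantee that the minimizer exists in $[-M,M]$. This gives
\[
|H_\w(\x,\y)-H_{\w'}(\x,\y)|\le \sup_{\y'}\big|\ell(\Delta_\w(\x,\y,\y'))-\ell(\Delta_{\w'}(\x,\y,\y'))\big|\le G\sup_{\y'}|\Delta_\w(\x,\y,\y')-\Delta_{\w'}(\x,\y,\y')|,
\]
where the second inequality uses Assumption~\ref{ass:pairwise}. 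Applying Eq.~\eqref{eq:delta<2_norm} then bounds this by $4G\Pi_{l=1}^L s_l\,\sup_{\v\in\X\cup\Y}\|f_\w(\v)-f_{\w'}(\v)\|_2$.

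\textbf{Step 2: covering number.} Take the cover $\C_\varepsilon$ of Lemma~\ref{lem:covering_F} with $\varepsilon=\e/(4G\Pi_l s_l)$. Every $H_\w$ is then within $\e$ of some $H_{\tilde\w}$ in sup-norm, hence also in $L_2(S_4)$. Therefore
\[
\log\N(\H_\w,\e,L_2(S_4))\le \sum_{l=1}^L d_ld_{l-1}\log\Big(1+\tfrac{8GL\Pi_{l=1}^L s_l^2}{\e}\Big).
\]
The lemma as stated only lists Assumption~\ref{ass:bounded_data}, but the constants $G$ and $\eta$ appearing in the bound indicate that Assumptions~\ref{ass:pairwise} and~\ref{ass:eta} are in force, so I will use them. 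If one prefers to avoid $G$, the same argument works with the $\eta$-Lipschitz property of $\phi$ in place of Lemma~\ref{lem:lip_gen}. The constant $G$ only enters inside the logarithm, so it is absorbed into $\widetilde O$.

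\textbf{Step 3: integration.} The integrand is nonincreasing in $\e$, so the integral over $[1/n,\eta M]$ is at most $\eta M\sqrt{\sum_l d_ld_{l-1}\log(1+8GnL\Pi_l s_l^2)}$. This is $\widetilde O(\eta M\sqrt{\sum_l d_ld_{l-1}})$.

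\textbf{Main obstacle.} The delicate step is Step 1, specifically justifying the population version of the $\ell_\infty$-Lipschitz property of the OCE. This requires a minimizer $\mu^*$ satisfying the subgradient optimality condition in expectation. I would obtain it from Lemma~\ref{lem:bounded_mu}: the minimization restricts to the compact interval $[-M,M]$, the objective is convex and continuous there, and subdifferentiation may be interchanged with expectation because $\phi$ is Lipschitz on the relevant bounded range (Assumption~\ref{ass:eta}). Existence of the minimizer follows from compactness and continuity on $[-M,M]$. If this interchange proves awkward, the fallback is the cruder direct bound: for any fixed $\mu$, $|\tau\E\phi((u-\mu)/\tau)-\tau\E\phi((u'-\mu)/\tau)|\le\eta\sup|u-u'|$, and taking the minimum over $\mu$ preserves this. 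The cost is a factor $\eta$, which again sits only inside the logarithm.
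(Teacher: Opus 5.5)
Your proof is correct, but Step 1 takes a different route from the paper. The paper does not invoke a population version of Lemma~\ref{lem:lip_gen}. It takes $\mu_1,\mu_2\in[-M,M]$ to be the minimizers for $\w$ and $\w'$, and uses optimality of $\mu_1$ to bound $H_\w(\x,\y)-H_{\w'}(\x,\y)$ by the difference of the two objectives at the common point $\mu_2$. It then applies the $\eta$-Lipschitz property of $\phi$ and the $G$-Lipschitz property of $\ell$. This is exactly your fallback, and it gives the constant $4\eta G\Pi_{l=1}^L s_l$, hence $\log(1+8\eta GL\Pi_{l=1}^L s_l^2/\e)$ in the entropy bound. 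The integration is then the same monotone-integrand estimate as your Step 3. The paper's proof also silently uses Assumptions~\ref{ass:pairwise} and~\ref{ass:eta}, as you noticed.

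Your main route gives the sharper constant $4G\Pi_{l=1}^L s_l$, free of $\eta$. Since this constant only enters inside the logarithm, both routes yield the same $\widetilde{O}(\eta M\sqrt{\sum_l d_ld_{l-1}})$. The factor $\eta$ in the final bound comes solely from the upper limit $\eta M$ of the integral.

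The obstacle you flag (interchanging subdifferential and expectation, and the existence of a minimizer satisfying the first-order condition) can be bypassed entirely. The OCE is monotone and translation-equivariant. Set $c=\sup_{\y'}|\ell(\Delta_\w(\x,\y,\y'))-\ell(\Delta_{\w'}(\x,\y,\y'))|$. Since $\phi$ is nondecreasing, for every $\mu\in\R$
\[
\tau\E_{\y'}\phi\Big(\frac{\ell(\Delta_\w(\x,\y,\y'))-\mu}{\tau}\Big)+\mu\le\tau\E_{\y'}\phi\Big(\frac{\ell(\Delta_{\w'}(\x,\y,\y'))-(\mu-c)}{\tau}\Big)+(\mu-c)+c .
\]
Taking the infimum over $\mu$ gives $H_\w(\x,\y)\le H_{\w'}(\x,\y)+c$, and the reverse inequality follows by symmetry. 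This proves the population $\ell_\infty$-nonexpansiveness in two lines. It uses only monotonicity of $\phi$: no subgradients, no minimizers, and no Lipschitz assumption on $\phi$.
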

\begin{proof}
we define
 \[\mu_1\in \argmin_{\mu\in[-M,M]} \E_{\y'}\tau\phi\Bigg(\frac{\ell(\Delta_\w(\x,\y,\y'))-\mu}{\tau}\Bigg)+\mu,\mu_2\in \argmin_{\mu\in[-M,M]} \E_{\y'}\tau\phi\Bigg(\frac{\ell(\Delta_{\w'}(\x,\y,\y'))-\mu}{\tau}\Bigg)+\mu.\]
 We suppose that $H_\w(\x,\y)-H_{{\w'}}(\x,\y)\ge 0$, we have
\begin{align*}
    &|H_\w(\x,\y)-H_{{\w'}}(\x,\y)|=H_\w(\x,\y)-H_{{\w'}}(\x,\y)\\=&\E_{\y'}\tau\phi\Bigg(\frac{\ell(\Delta_\w(\x,\y,\y'))-\mu_1}{\tau}\Bigg)+\mu_1-\E_{\y'}\tau\phi\Bigg(\frac{\ell(\Delta_{\w'}(\x,\y,\y'))-\mu_2}{\tau}\Bigg)-\mu_2\\
    \le &\E_{\y'}\tau\phi\Bigg(\frac{\ell(\Delta_\w(\x,\y,\y'))-\mu_2}{\tau}\Bigg)-\E_{\y'}\tau\phi\Bigg(\frac{\ell(\Delta_{\w'}(\x,\y,\y'))-\mu_2}{\tau}\Bigg)\\
    \le &\E_{\y'}\tau\Bigg|\phi\Bigg(\frac{\ell(\Delta_\w(\x,\y,\y'))-\mu_2}{\tau}\Bigg)-\phi\Bigg(\frac{\ell(\Delta_{\w'}(\x,\y,\y'))-\mu_2}{\tau}\Bigg)\Bigg|\\
    \le & \E_{\y'} \eta |\ell(\Delta_\w(\x,\y,\y'))-\ell(\Delta_{\w'}(\x,\y,\y'))|\\
    \le &\E_{\y'} \eta G|\Delta_\w(\x,\y,\y')-\Delta_{\w'}(\x,\y,\y')|,
\end{align*}
where the first inequality is by the definition of $\mu_1$, the second and third inequalities use the Lipschitzness of $\phi,\ell$, respectively. 

We choose a cover $\C_\varepsilon$ of $\W$ such that Eq. \eqref{eq:covering_w} holds. By Eq.~\ref{eq:delta<2_norm}, we know that
\begin{align*}
 &|H_\w(\x,\y)-H_{{\w'}}(\x,\y)|\le \eta G\E_{\y'} |\Delta_\w(\x,\y,\y')-\Delta_{\w'}(\x,\y,\y')|\\
    \le& 4\eta G\Pi_{l=1}^L s_l\E_{\y'}\max_{\v\in\{\x,\y,\y'\}}\|f_\w(\v)-f_{\w'}(\v)\|_2\\
    \le& 4\eta G\Pi_{l=1}^L s_l\max_{\v\in \X,\Y}\|f_\w(\v)-f_{\w'}(\v)\|_2\le 4\eta G\Pi_{l=1}^L s_l\varepsilon.
\end{align*}
As a result,
\[ \|h_\w-h_{\w'}\|_{L_2(S_4)}\le 4\eta G\Pi_{l=1}^L s_l\varepsilon\]
For $\e>0$, choosing 
\[\varepsilon=\frac{\e}{4\eta G\Pi_{l=1}^L s_l}.\]
Then we have
\[\log\N(\H_\w,\e,L_2(S_4))\le \log|\C_{\varepsilon}|\le\sum_{l=1}^L {d_ld_{l-1}}\log \Bigg(1+ \frac{2L\Pi_{l=1}^L s_l}{\varepsilon} \Bigg)=\sum_{l=1}^L {d_ld_{l-1}}\log \Bigg(1+ \frac{8\eta GL\Pi_{l=1}^L s^2_l}{\e} \Bigg).\]
It implies that
\begin{align*}
    \int_{1/n}^{\eta M}\sqrt{\log\N(\H_\w,\e,L_2(S_4))}d\e
     \le&\int_{1/n}^{\eta M}\sqrt{
     \sum_{l=1}^L {d_ld_{l-1}}\log \Bigg(1+ \frac{8\eta GL\Pi_{l=1}^L s^2_l}{\e} \Bigg)}d\e\notag\\
    \le&\sqrt{\sum_{l=1}^L {d_ld_{l-1}}\log (1+ {8\eta GnL\Pi_{l=1}^L s^2_l} )}(\eta M-{1}/{n})\notag\\
    \le&\eta M\log (1+ {8\eta GnL\Pi_{l=1}^L s^2_l} )\sqrt{\sum_{l=1}^L d_ld_{l-1}}. 
\end{align*}
    The proof is completed.
\end{proof}

The outer error can be estimated as follows
\begin{lemma}\label{lem:outer_sscrl_phi_l}
    Let Assumption~\ref{ass:bounded_data}, then with probability at least $1-\delta$, there holds
    \[\sup_{\w\in\W}\Bigg|\E_{\x,\y}H_\w(\x,\y)-\frac{1}{n}\sum_{i=1}^n H_\w(\x_i,\y_i)\Bigg|\le\widetilde{O}\Bigg(\frac{\eta M(\sqrt{\sum_{l=1}^L d_ld_{l-1}}+\sqrt{\log(1/\delta)})}{\sqrt{n}}\Bigg)\]
\end{lemma}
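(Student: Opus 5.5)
The plan mirrors Lemma~\ref{lem:sscrl_outer}: a Rademacher-complexity bound for the class $\H_\w$ on the sample $S_4$, combined with the high-probability inequality of Proposition~\ref{prop:gen_rad}. The positive pairs $(\x_i,\y_i)$ are i.i.d.\ from $p$ and $H_\w$ depends only on $(\x,\y)$, since the inner expectation over $\y'$ is a population quantity. Hence the outer error is a standard uniform deviation over an i.i.d.\ sample, and the sharing of negatives in SSCRL plays no role here.

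\textbf{Step 1: bound the range of $H_\w$.} By Lemma~\ref{lem:bounded_mu} and $\ell(\Delta_\w)\in[-M,M]$ (Assumption~\ref{ass:pairwise}), the minimizing $\mu$ can be taken in $[-M,M]$. Then the argument of $\phi$ lies in $[-2M/\tau,2M/\tau]$.

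For the lower bound, $\phi(t)\ge t$ (since $1\in\partial\phi(0)$ and $\phi(0)=0$) gives $H_\w(\x,\y)\ge \E_{\y'}\ell(\Delta_\w)\ge -M$.

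For the upper bound, choose $\mu=M$. Then $\phi\le 0$ on nonpositive arguments by monotonicity, so $H_\w\le M$. Alternatively, $\eta$-Lipschitzness of $\phi$ on $[-2M/\tau,2M/\tau]$ gives $|H_\w|\lesssim \eta M$.

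Either way $\sup_\w\|H_\w\|_{L_2(S_4)}\le \eta M$ (recall $\eta\ge1$). This matches the upper limit of integration used in Lemma~\ref{lem:cover_H_w}, and it also gives $b-a\lesssim \eta M$ in Proposition~\ref{prop:gen_rad}.

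\textbf{Step 2: Dudley's bound.} Apply Proposition~\ref{prop:dudley} with $\alpha=1/n$:
\[
\fR_{S_4}(\H_\w)\le \frac4n+\frac{12}{\sqrt n}\int_{1/n}^{\eta M}\sqrt{\log\N(\H_\w,\e,L_2(S_4))}\,d\e .
\]
Lemma~\ref{lem:cover_H_w} bounds the integral by $\widetilde O(\eta M\sqrt{\sum_l d_ld_{l-1}})$, so
\[
\fR_{S_4}(\H_\w)=\widetilde O\Bigl(\eta M\sqrt{\textstyle\sum_l d_ld_{l-1}}/\sqrt n\Bigr).
\]
Lemma~\ref{lem:cover_H_w} uses Assumptions~\ref{ass:pairwise} and~\ref{ass:eta} implicitly (the Lipschitz constants $G$ and $\eta$), and these enter only logarithmically.

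\textbf{Step 3: concentration and two-sidedness.} Proposition~\ref{prop:gen_rad} gives, with probability $1-\delta/2$,
\[
\sup_\w\Bigl[\E H_\w-\tfrac1n\textstyle\sum_i H_\w(\x_i,\y_i)\Bigr]\le 2\fR_{S_4}(\H_\w)+O\Bigl(\eta M\sqrt{\log(1/\delta)/n}\Bigr).
\]
Applying the same proposition to the class $-\H_\w$, which has the same Rademacher complexity, and taking a union bound yields the absolute-value statement.

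\textbf{Main obstacle.} The delicate point is Step 1. The range of $H_\w$ must be controlled uniformly so that both the Dudley upper limit and the bounded-differences constant are $O(\eta M)$. This requires the reduction to $\mu\in[-M,M]$ from Lemma~\ref{lem:bounded_mu}, which in turn needs $\ell$ to be bounded by $M$ on the range $[-2B,2B]$ of $\Delta_\w$. Once that is in place, the rest is routine.
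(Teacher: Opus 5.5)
Your proposal is correct and follows essentially the same route as the paper: a uniform range bound on $H_\w$, then Dudley's integral with $\alpha=1/n$ controlled by Lemma~\ref{lem:cover_H_w}, then Proposition~\ref{prop:gen_rad}. The only differences are cosmetic. For the upper bound you plug in $\mu=M$ and use monotonicity of $\phi$ to get $H_\w\le M$, whereas the paper plugs in $\mu=0$ and uses $\eta$-Lipschitzness to get $\eta M$. You also make the two-sided statement explicit via $-\H_\w$ and a union bound, where the paper only asserts that the other direction is similar.
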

\begin{proof}
Since $\phi(0)=0,\phi(x)\ge x$, we have
\begin{align*}
     H_\w (\x,\y)&=\min_{\mu\in\R}\E_{\y'}\tau\phi\Bigg(\frac{\ell(\Delta_\w(\x,\y,\y'))-\mu}{\tau}\Bigg)+\mu \le  \E_{\y'}\tau\phi\Bigg(\frac{\ell(\Delta_\w(\x,\y,\y'))-0}{\tau}\Bigg)+0\\
&\le \E_{\y'}\tau\Bigg|\phi\Bigg(\frac{\ell(\Delta_\w(\x,\y,\y'))}{\tau}\Bigg)-0\Bigg|\le \E_{\y'}\eta |\ell(\Delta_\w(\x,\y,\y'))|\le \eta M.
\end{align*}
and
\[ H_\w (\x,\y)=\min_{\mu\in\R}\E_{\y'}\tau\phi\Bigg(\frac{\ell(\Delta_\w(\x,\y,\y'))-\mu}{\tau}\Bigg)+\mu\ge  \E_{\y'}\ell(\Delta_\w(\x,\y,\y'))\ge -M.\]
By noting that $\eta\ge 1$, we have $|H_\w (\x,\y)|\le \eta M$. For the similar reason, there also holds $\widehat{H}_\w (\x,\y)|\le \eta M$
Then by Proposition~\ref{prop:dudley} and Lemma~\ref{lem:cover_H_w}, we have 
\begin{align*}
    \fR_{S_4}(\H_\w)\le& \inf_{\alpha>0} 4\alpha+\frac{12}{\sqrt{n}}\int_\alpha^{\eta M}\sqrt{\log\N(\H_\w,\e,L_2(S_4))}d\e
    \le  \frac{4}{n}+\frac{12}{\sqrt{n}}\int_
    {1/n}^{\eta M}\sqrt{\log\N(\H_\w,\e,L_2(S_4))}d\e\\
    =&\widetilde{O}\Bigg(\frac{\eta M\sqrt{\sum_{l=1}^L d_ld_{l-1}}}{\sqrt{n}}\Bigg).
\end{align*}
Combined with Proposition~\ref{prop:gen_rad}, we have with probability at least $1-\delta$,
\begin{align}\label{eq:ssclr_term_A}
   \sup_{\w\in\W} \E_{\x,\y}H_\w(\x,\y)-\frac{1}{n}\sum_{i=1}^n H_\w(\x_i,\y_i)&\le 2\fR_{S_4}(\H_\w)+6\eta M\sqrt{\frac{\log(1/\delta)}{n}}\\
    &\le\widetilde{O}\Bigg(\frac{\eta M(\sqrt{\sum_{l=1}^L d_ld_{l-1}}+\sqrt{\log(1/\delta)}}{\sqrt{n})}\Bigg).
\end{align}
Similar inequality holds for another direction.
   The proof is completed.
\end{proof}

\subsubsection{Estimation of the inner error} 
\begin{lemma}\label{lem:inner_sscrl_phi_l}
    Let Assumption~\ref{ass:bounded_data} hold, then with probability at least $1-\delta$, we have
    \[\sup_{\w\in\W}\Bigg|\frac{1}{n}\sum_{i=1}^n (H_\w(\x_i,\y_i)-\widehat{H}_\w(\x_i,\y_i))\Bigg|=\widetilde{O}\Bigg(\frac{\eta(GB\sqrt{\sum_{l=1}^L d_ld_{l-1}}+M\sqrt{\log(1/\delta)})}{\sqrt{m}}
\Bigg).\]
\end{lemma}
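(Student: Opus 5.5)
The plan mirrors the proof of Lemma~\ref{lem:sscrl_inner}, with the log-sum-exp replaced by the general OCE risk. The first step is to fix a single pair $(\x,\y)$ and bound $\sup_{\w}|H_\w(\x,\y)-\widehat H_\w(\x,\y)|$ with high probability over the shared negatives $S'=\{\y_1',\dots,\y_m'\}$. The bound will hold for every fixed $(\x,\y)$.

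By Lemma~\ref{lem:bounded_mu} and $\ell(\Delta_\w)\in[-M,M]$, both minimizations over $\mu$ can be restricted to $[-M,M]$. Write $\Phi(\mu)=\E_{\y'}\tau\phi((\ell(\Delta_\w)-\mu)/\tau)+\mu$ and $\widehat\Phi(\mu)$ for its empirical version. Then
\[
|\min_{|\mu|\le M}\Phi(\mu)-\min_{|\mu|\le M}\widehat\Phi(\mu)|\le\sup_{|\mu|\le M}|\Phi(\mu)-\widehat\Phi(\mu)|,
\]
so both directions are handled at once. The $+\mu$ terms cancel. Hence
\[
\sup_\w|H_\w-\widehat H_\w|\le\sup_{\w,|\mu|\le M}\Big|\E_{\y'}\tau\phi\Big(\tfrac{\ell(\Delta_\w)-\mu}{\tau}\Big)-\tfrac1m\textstyle\sum_j\tau\phi\Big(\tfrac{\ell(\Delta_\w(\x,\y,\y_j'))-\mu}{\tau}\Big)\Big|.
\]
The argument of $\phi$ lies in $[-2M/\tau,2M/\tau]$, so by Assumption~\ref{ass:eta} the function $\tau\phi(\cdot/\tau)$ is $\eta$-Lipschitz there. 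The summands are then bounded in an interval of length $O(\eta M)$.

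Next apply Proposition~\ref{prop:gen_rad} (both sides) with confidence $\delta'$. This gives a bound of $2\fR_{S'}$ of the composed class plus $O(\eta M\sqrt{\log(1/\delta')/m})$. By Lemma~\ref{lem:contract}, that Rademacher term is at most $2\eta\fR_{S'}(\H_{\x,\y})$, where $\H_{\x,\y}=\{\y'\mapsto\ell(\Delta_\w(\x,\y,\y'))-\mu\}$. This is exactly the quantity already bounded in Eq.~\eqref{eq:rad_h_xy} by $\widetilde O((GB\sqrt{\sum_l d_ld_{l-1}}+M)/\sqrt m)$. That bound comes from Eq.~\eqref{eq:Rad_Delta} via Lemma~\ref{lem:N_G_S'}, with contraction through the $G$-Lipschitz $\ell$ and the $M/\sqrt m$ bound for the $\mu$ part. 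Together, for fixed $(\x,\y)$, with probability $1-\delta'$ over $S'$,
\[
\sup_\w|H_\w(\x,\y)-\widehat H_\w(\x,\y)|\lesssim\frac{\eta(GB\sqrt{\sum_l d_ld_{l-1}}\,\mathrm{polylog}+M+M\sqrt{\log(1/\delta')})}{\sqrt m}.
\]

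The main obstacle is passing from a fixed $(\x,\y)$ to the $n$ sample pairs, since all pairs share the same $S'$. Here I use the independence of $S'$ from $\{(\x_i,\y_i)\}$. Conditioning on $(\x_i,\y_i)$, the bound above holds with probability $1-\delta'$ for that $i$, because the conditional law of $S'$ is unchanged. Taking $\delta'=\delta/n$ and a union bound over $i\in[n]$ makes the bound hold simultaneously for all $i$ with probability $1-\delta$. Then
\[
\sup_\w\Big|\tfrac1n\textstyle\sum_i(H_\w-\widehat H_\w)(\x_i,\y_i)\Big|\le\max_i\sup_\w|H_\w(\x_i,\y_i)-\widehat H_\w(\x_i,\y_i)|.
\]
The union bound yields $\sqrt{\log(n/\delta)}$, which is absorbed into $\widetilde O$ as $M\sqrt{\log(1/\delta)}$ up to logarithmic factors. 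This gives the stated rate.
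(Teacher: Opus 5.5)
Your proof is correct and follows essentially the same route as the paper. You fix $(\x,\y)$, restrict $\mu$ to $[-M,M]$, and reduce to a uniform deviation over $(\w,\mu)$ controlled by Proposition~\ref{prop:gen_rad}, contraction, and the bound \eqref{eq:rad_h_xy} on $\fR_{S'}(\H_{\x,\y})$; you then use the independence of $S'$ from the anchors and a union bound over $i\in[n]$. Your differences are minor improvements: bounding $|\min\Phi-\min\widehat\Phi|$ by $\sup_{|\mu|\le M}|\Phi-\widehat\Phi|$ handles both directions at once (the paper plugs in the empirical minimizer as in \eqref{eq:inner_to_rad} and calls the other side symmetric), and your choice of confidence $\delta/n$ per index correctly states the paper's muddled union-bound step, giving the same $\sqrt{\log(n/\delta)}$ factor.
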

 \begin{proof}
Recall that
\begin{align*}
    & \G_{\x,\y}= \{\y'\to \Delta_\w(\x,\y,\y'),\w\in\W\},\quad S'=\{\y_1',\cdots,\y_m'\},\\&\H_{\x,\y}=\{\y'\mapsto \ell(\Delta_\w(\x,\y,\y'))-\mu:\w\in\W,\mu\in[-M,M]\}.
\end{align*}

By Proposition~\ref{prop:gen_rad} and  Eqs. \eqref{eq:inner_to_rad}, \eqref{eq:rad_h_xy}, for any fixed $(\x,\y)$, with probability at least $1-\delta$ over the sampling of $\y'_1,\cdots,\y'_m$, there holds
\begin{align*}
    H_\w(\x,\y)-\widehat{H}_\w(\x,\y)\le&2\eta\fR_{S'}(\H_{\x,\y})+12\eta M\sqrt{\frac{\log(1/\delta)}{m}}\\
=&\widetilde{O}\Bigg(\frac{\eta(GB\sqrt{\sum_{l=1}^L d_ld_{l-1}}+M\sqrt{\log(2/\delta)})}{\sqrt{2m}}\Bigg)
\end{align*}
By the independence of $S'$ and $\x_i,\y_i$, with probability at least $1-\delta/n$, the following holds for all $i\in [n]$:
\[\sup_{\w \in\W} H_\w(\x_i,\y_i)-\widehat{H}_\w(\x_i,\y_i)=\widetilde{O}\Bigg(\frac{\eta(GB\sqrt{\sum_{l=1}^L d_ld_{l-1}}+M\sqrt{\log(1/\delta)})}{\sqrt{m}}
\Bigg).\]
Taking summation yields
\[\sup_{\w \in\W} \frac{1}{n}\sum_{i=1}^n (H_\w(\x_i,\y_i)-\widehat{H}_\w(\x_i,\y_i))=\widetilde{O}\Bigg(\frac{\eta(GB\sqrt{\sum_{l=1}^L d_ld_{l-1}}+M\sqrt{\log(n/\delta)})}{\sqrt{m}}
\Bigg).\]
The other side of the inequality holds for the same reason. The proof of the lemma is completed.
    
 \end{proof}

The generalization bound is stated in the following theorem.
\begin{theorem}\label{thm:phi_l_sscrl}
    Suppose Assumption~\ref{ass:bounded_data},~\ref{ass:pairwise},~\ref{ass:eta} hold. Then with probability at least $1-\delta$, there holds
    \begin{align*}
        \sup_{\w\in\W} \Big|\L_{\mathrm{SS}}^{\phi,\ell}(s_\w)-\widehat{\L}_{\mathrm{SS}}^{\phi,\ell}(s_\w)\Big|\le\widetilde{O} \Bigg(\eta(GB+M)(\sqrt{\sum_{l=1}^L d_ld_{l-1}}+\sqrt{\log(n/\delta)})(\frac{1}{\sqrt{m}}+\frac{1}{\sqrt{n}})\Bigg)
    \end{align*}
\end{theorem}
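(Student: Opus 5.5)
The proof plugs the two lemmas just established into the error decomposition \eqref{eq:err_decom_sscrl_phi_l} and takes a union bound. For each $\w\in\W$ I will bound
\[
\sup_{\w}\big|\L_{\mathrm{SS}}^{\phi,\ell}(s_\w)-\widehat{\L}_{\mathrm{SS}}^{\phi,\ell}(s_\w)\big|
\le \sup_{\w}\Big|\E_{\x,\y}H_\w-\tfrac1n\textstyle\sum_i H_\w(\x_i,\y_i)\Big|
+\sup_{\w}\Big|\tfrac1n\textstyle\sum_i (H_\w-\widehat H_\w)(\x_i,\y_i)\Big| ,
\]
which is legitimate because the supremum of a sum is at most the sum of the suprema.

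\textbf{Outer term.} Invoke Lemma~\ref{lem:outer_sscrl_phi_l} at confidence $\delta/2$. This gives an $\widetilde{O}\big(\eta M(\sqrt{\sum_l d_ld_{l-1}}+\sqrt{\log(1/\delta)})/\sqrt n\big)$ bound. Its covering argument (Lemma~\ref{lem:cover_H_w}) already uses the $G$-Lipschitzness of $\ell$ from Assumption~\ref{ass:pairwise}, so $G$ appears only inside logarithms. I will then enlarge $M$ to $GB+M$ and $\log(1/\delta)$ to $\log(n/\delta)$ so the term matches the stated form.

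\textbf{Inner term.} Invoke Lemma~\ref{lem:inner_sscrl_phi_l} at confidence $\delta/2$. This gives $\widetilde{O}\big(\eta(GB\sqrt{\sum_l d_ld_{l-1}}+M\sqrt{\log(n/\delta)})/\sqrt m\big)$. The $\log(n/\delta)$ comes from the per-anchor union bound over the $n$ positive pairs. That union bound is valid because the shared negative set $S'$ is independent of $\{(\x_i,\y_i)\}$: conditioning on each $(\x_i,\y_i)$, the Rademacher bound \eqref{eq:inner_to_rad}--\eqref{eq:rad_h_xy} together with Proposition~\ref{prop:gen_rad} holds with probability $1-\delta/(2n)$, and a union bound over $i$ covers all anchors. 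I then bound this term by $\eta(GB+M)(\sqrt{\sum_l d_ld_{l-1}}+\sqrt{\log(n/\delta)})/\sqrt m$.

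\textbf{Combining.} A final union bound over the two events gives probability at least $1-\delta$. The sum of the two bounds is dominated by $\eta(GB+M)(\sqrt{\sum_l d_ld_{l-1}}+\sqrt{\log(n/\delta)})(1/\sqrt m+1/\sqrt n)$, using $\eta\ge1$, which follows from $1\in\partial\phi(0)$ and the $\eta$-Lipschitzness of $\phi$.

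The main obstacle is rigor in the inner term. Proposition~\ref{prop:gen_rad} controls the one-sided sup deviation, but Lemma~\ref{lem:inner_sscrl_phi_l} needs a high-probability two-sided bound uniformly in $\w$. For the direction $\widehat H_\w-H_\w$, I would redo the argument of \eqref{eq:inner_to_rad} with the roles of the population and empirical minimizers exchanged: plug the population minimizer $\mu\in[-M,M]$ (available by Lemma~\ref{lem:bounded_mu}) into the empirical objective. This reduces the difference to the same class $\H_{\x,\y}$ composed with the $\eta$-Lipschitz $\phi$, so the same Rademacher and contraction bound applies.

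Boundedness of the loss class, needed for McDiarmid's inequality, comes from $\ell\in[-M,M]$ and $\mu\in[-M,M]$. These give $\tau\phi((\ell-\mu)/\tau)\in[-2\eta M,2\eta M]$, so the deviation term is $O(\eta M\sqrt{\log(n/\delta)/m})$, which is absorbed into the stated rate.
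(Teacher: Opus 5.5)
Your proposal is correct and is essentially the paper's proof: plug Lemma~\ref{lem:outer_sscrl_phi_l} and Lemma~\ref{lem:inner_sscrl_phi_l} into the decomposition \eqref{eq:err_decom_sscrl_phi_l}, then enlarge both terms to the common form $\eta(GB+M)\bigl(\sqrt{\sum_{l} d_ld_{l-1}}+\sqrt{\log(n/\delta)}\bigr)(1/\sqrt{m}+1/\sqrt{n})$. Your extra pieces are the explicit $\delta/2$ split, the conditioning on $(\x_i,\y_i)$ that justifies the per-anchor union bound, and the exchanged-minimizer argument for the direction $\widehat{H}_\w-H_\w$; these make rigorous the steps the paper dismisses with ``holds for the same reason.''
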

\begin{proof}
 Plugging Lemma~\ref{lem:outer_sscrl_phi_l}, \ref{lem:inner_sscrl_phi_l} into the error decomposition Eq. \eqref{eq:err_decom_sscrl_phi_l}, with probability at least $1-\delta$, there holds
\begin{align*}
   \sup_{\w\in\W} |\L_{\mathrm{SS}}(s_\w)-\widehat{\L}_{\mathrm{SS}}(s_\w)|&\le{\Bigg|\E_{\x,\y}h_\w(\x,\y)-\frac{1}{n}\sum_{i=1}^n h_\w(\x_i,\y_i)\Bigg|}+{\Bigg|\frac{1}{n}\sum_{i=1}^n
\bigl(h_\w(\x_i,\y_i)-\hat{h}_\w(\x_i,\y_i)\bigl)\Bigg|}\\
&\le\widetilde{O} \Bigg(\frac{\eta M(\sqrt{\sum_{l=1}^L d_ld_{l-1}}+\sqrt{\log(1/\delta)})}{\sqrt{n}}+\frac{\eta(GB\sqrt{\sum_{l=1}^L d_ld_{l-1}}+M\sqrt{\log(n/\delta)})}{\sqrt{m}}\Bigg)\\
&\le\widetilde{O} \Bigg(\eta(GB+M)(\sqrt{\sum_{l=1}^L d_ld_{l-1}}+\sqrt{\log(n/\delta)})(\frac{1}{\sqrt{m}}+\frac{1}{\sqrt{n}})\Bigg).
\end{align*}
The proof is completed.
\end{proof}